\documentclass[11pt]{article}
\usepackage{fullpage}
\usepackage{mathrsfs}

\usepackage{comment}
\usepackage{authblk}

\usepackage[round]{natbib}
\usepackage{times}
\usepackage{url}
\usepackage{multirow}
\usepackage{xr}

\usepackage{textcomp}

\usepackage{bbm}
\usepackage{todonotes}
\usepackage{listings}
\usepackage{textgreek} 
\usepackage{amsmath} 
\usepackage{amssymb,amsthm,mathtools}

\usepackage[colorlinks=true,citecolor=blue,urlcolor=blue,linkcolor=blue]{hyperref}
\usepackage[capitalise]{cleveref}
\usepackage{stfloats} 
\usepackage{float}
\usepackage{placeins} 
\usepackage{xcolor}

\usepackage{tablefootnote} 
\usepackage[roman]{parnotes} 
\usepackage{tabulary}
\usepackage{booktabs}
\usepackage{tabularx}

\usepackage{tikz}

\usetikzlibrary{shapes,decorations,arrows,calc,arrows.meta,fit,positioning}
\tikzset{
    -Latex,auto,node distance =1 cm and 1 cm,semithick,
    state/.style ={ellipse, draw, minimum width = 0.7 cm},
    point/.style = {circle, draw, inner sep=0.04cm,fill,node contents={}},
    bidirected/.style={Latex-Latex,dashed},
    el/.style = {inner sep=2pt, align=left, sloped}
}

\usepackage{changepage}

\usepackage{algorithm}  
\usepackage{algorithmicx}
\usepackage[noend]{algpseudocode}
\definecolor{mygray}{gray}{0.5}
\definecolor{cblue}{RGB}{8, 85, 153}
\definecolor{darkblue}{RGB}{31, 64, 96}

\definecolor{cgreen}{RGB}{8, 153, 83}
\definecolor{green}{RGB}{8, 200, 50}
\definecolor{cmaroon}{RGB}{128, 0, 0}
\algdef{SE}[DOWHILE]{Do}{doWhile}{\algorithmicdo}[1]{\algorithmicwhile\ #1}%

\renewcommand\algorithmicdo{}

\Crefname{section}{Sec}{Secs.}
\Crefname{figure}{Fig}{Figs.}
\Crefname{theorem}{Thm}{Thms.}

\usepackage{amsfonts}

\usepackage{tikzsymbols}
\usepackage[definitionLists,hashEnumerators,smartEllipses,hybrid]{markdown}

\usepackage{optidef}

\newtheorem{theorem}{Theorem}[section]

\newtheorem{lemma}[theorem]{Lemma}

\newtheorem{corollary}[theorem]{Corollary}
\newtheorem{definition}[theorem]{Definition}
\newtheorem{assumption}[theorem]{Assumption}
\newtheorem{remark}[theorem]{Remark}
\newtheorem{example}[theorem]{Example}

\newcommand{\be}{\mathbf{e}}

\newcommand{\bx}{\mathbf{x}}
\newcommand{\by}{\mathbf{y}}
\newcommand{\bz}{\mathbf{z}}

\newcommand{\bX}{\mathbf{X}}

\newcommand{\balpha}{\boldsymbol{\alpha}}

\newcommand{\bLambda}{\boldsymbol{\Lambda}}

\DeclarePairedDelimiter{\braces}{\lbrace}{\rbrace}
\DeclarePairedDelimiter{\paren}{(}{)}

\DeclarePairedDelimiter{\abs}{|}{|}

\DeclarePairedDelimiter{\norm}{\|}{\|}
\DeclarePairedDelimiter{\inprod}{\langle}{\rangle}

\newcommand{\R}{\mathbb{R}}
\renewcommand{\P}{\mathbb{P}}
\newcommand{\E}{\mathbb{E}}

\newcommand{\cell}{A}

\newcommand{\partition}{P}
\newcommand{\partitions}[1]{\mathcal{P}_{#1}}
\newcommand{\emppartitions}[1]{\mathcal{P}_{#1}^\sample}

\newcommand{\partfunc}{\mathcal{F}}

\newcommand{\data}{\mathcal{D}}

\newcommand{\funcclass}{\mathcal{F}}

\newcommand{\x}{\mathbf{X}}

\newcommand{\1}{\mathbbm{1}}

\newcommand{\sample}{\mathcal{X}}

\newcommand{\Rad}[1]{\mathcal{R}_n(#1)}

\newcommand{\sequiv}{\overset{\sample}{=}}
\newcommand{\treesum}[1]{\sum_{j=1}^{#1}}

\newcommand{\leaves}{L}
\newcommand{\nnorm}[1]{\norm{#1}_{n}}
\newcommand{\munorm}[1]{\norm{#1}_{2}}

\newcommand{\besov}{\Lambda}

\newcommand{\h}{\mathbf{h}}
\newcommand{\e}{\mathbf{e}}

\newcommand{\excess}{\mathcal{E}}

\newcommand{\PSHABn}{\mathcal{B}_{p,q}^{\mathscr{S},\mathscr{A}}(\partition_*,\boldsymbol{\Lambda})}

\newcommand{\empfunctions}[1]{\mathcal{F}_{#1}}

\newcommand{\permit}{\hat f_{\lambda}}
\newcommand{\cermit}{\hat f_{L}}

\newcommand{\heavy}{m}

\newcommand{\Excess}{E}

\newcommand{\fbound}{M}

\title{On the Statistical Optimality of Optimal Decision Trees}

\newcommand*\samethanks[1][\value{footnote}]{\footnotemark[#1]}

\author[1]{Zineng Xu\thanks{xuzineng@u.nus.edu}}
\author[2]{Subhro Ghosh\thanks{subhrowork@gmail.com}\thanks{Equal contribution; listed in alphabetical order}}
\author[1]{Yan Shuo Tan\thanks{yanshuo@nus.edu.sg}\samethanks[3]}

\affil[1]{\small Department of Statistics and Data Science, National University of Singapore}
\affil[2]{\small Department of Mathematics, National University of Singapore}

\date{}

\begin{document}

\maketitle

\begin{abstract}
While globally optimal empirical risk minimization (ERM) decision trees have become computationally feasible and empirically successful, rigorous theoretical guarantees for their statistical performance remain limited. In this work, we develop a comprehensive statistical theory for ERM trees under random design in both high-dimensional regression and classification. 
We first establish sharp oracle inequalities that bound the excess risk of the ERM estimator relative to the best possible approximation achievable by any tree with at most $L$ leaves, thereby characterizing the interpretability-accuracy trade-off. 
We derive these results using a novel uniform concentration framework based on empirically localized Rademacher complexity. 
Furthermore, we derive minimax optimal rates over a novel function class: the \emph{piecewise sparse heterogeneous anisotropic Besov} (PSHAB) space. This space explicitly captures three key structural features encountered in practice: sparsity, anisotropic smoothness, and spatial heterogeneity.
While our main results are established under sub-Gaussianity, we also provide robust guarantees that hold under heavy-tailed noise settings.
Together, these findings provide a principled foundation for the optimality of ERM trees and introduce empirical process tools broadly applicable to other highly adaptive, data-driven procedures.
\end{abstract}

\section{Introduction}
\label{section:intro}

Decision trees and their ensembles have remained among the most popular nonparametric methods for regression and classification since their inception \citep{morgan1963problems, breiman1984classification}.
Their enduring appeal stems from a unique combination of high predictive power and inherent interpretability.
Unlike ``black box" models such as neural networks, decision trees model the data through a hierarchy of logical rules that are easily visualized and understood by humans.
This transparency is particularly critical in high-stakes domains such as healthcare, criminal justice, and credit scoring, where understanding the rationale behind a prediction is as important as the prediction itself \citep{rudin2022interpretable}.

For decades, the construction of decision trees relied primarily on greedy heuristics, such as CART \citep{breiman1984classification} and C4.5 \citep{quinlan1993c45}.
Because finding the globally optimal decision tree is known to be NP-hard \citep{hyafil1976constructing}, these greedy algorithms recursively optimize local objectives without revisiting prior splits.
While computationally efficient, greedy approaches are prone to getting trapped in local optima, often producing trees that are sub-optimal in accuracy or unnecessarily complex \citep{tan2024statistical}.
However, recent advances in mixed-integer optimization (MIO) and dynamic programming, coupled with significant increases in computational power, have made it feasible to search directly over the space of decision trees (see for instance \citet{bertsimas2017optimal,verwer2017learning,carrizosa2021mathematical,lin2020generalized}).
These algorithms produce \emph{optimal decision trees}—true empirical risk minimizers (ERM)—that demonstrably outperform their greedy counterparts.
Crucially, they offer superior accuracy for a fixed budget of leaves, thereby strictly improving the interpretability-accuracy trade-off.

Despite the growing practical deployment of ERM trees, theoretical analysis of their statistical properties has lagged behind.
Existing theoretical works suffer from three primary limitations.
First, prior analyses generally focus on pure predictive accuracy without explicitly modeling the interpretability constraint—specifically, the performance achievable given a hard cap on the number of leaves.
Second, nearly all rigorous results are restricted to \emph{dyadic} decision trees, where splits are forced to occur at the geometric midpoints of cells \citep{donoho1997cart, scott2006minimax, blanchard2007optimal}.
This restriction is analytically convenient but essentially unused in practice.
Third, optimality is typically established over standard function spaces—such as H{\"o}lder, Sobolev, or Bounded Variation classes—in low-dimensional settings \citep{chatterjee2021adaptive}.
Since classical kernel methods and other non-adaptive methods are already known to be minimax optimal in these regimes, existing theory fails to articulate why tree-based methods should be preferred over non-adaptive alternatives.

To address these gaps, we develop a general theory for the statistical performance of non-dyadic ERM trees under random design. We first establish sharp \emph{oracle inequalities} that bound the excess risk of the ERM estimator relative to the best possible approximation achievable by any tree with at most $L$ leaves. By explicitly conditioning on the number of leaves $L$, these inequalities rigorously characterize the interpretability-accuracy trade-off. Crucially, we derive these results using a novel uniform concentration framework based on empirically localized Rademacher complexity. 

Second, in our view, the superior predictive performance of decision trees over kernel methods arises from their ability to perform two distinct types of automatic adaptation with minimal hyperparameter tuning:
(1) Adaptation to \emph{sparsity and anisotropy}, where the signal depends on a small subset of features or varies in smoothness across different directions; and
(2) Adaptation to \emph{spatial heterogeneity}, where the smoothness or structure of the function varies across different regions of the input space.
We therefore introduce the \emph{Piecewise Sparse Heterogeneous Anisotropic Besov} (PSHAB) space—a function class designed to capture simultaneous sparsity, anisotropic smoothness, and spatial heterogeneity. We prove that ERM trees achieve minimax optimal convergence rates over PSHAB spaces for both regression and classification. 
Notably, we establish what is, to the best of our knowledge, the first explicit convergence rates for tree-based methods under heavy-tailed noise, incorporating the intrinsic smoothing parameters of the underlying function class. While these rates do not yet achieve minimax optimality, they provide a pioneering non-asymptotic analysis that relaxes the pervasive sub-Gaussianity requirements in decision tree theory.

Finally, our results shed light on the fundamental strengths of tree-based methods in general.
Theoretical analysis of greedy algorithms like CART is notoriously difficult due to the path-dependence of the splitting procedure; existing bounds often require strong assumptions and rarely establish minimax optimality.
By analyzing the global empirical risk minimizer, we disentangle the \emph{representation} capabilities of decision trees from the \emph{optimization} challenges of specific algorithms.
Our work demonstrates the representational superiority of tree-structured models for high-dimensional, heterogeneous data, providing a theoretical foundation for their widespread empirical success.

\section{Fundamentals of tree-based algorithms}
\label{section:Fundamentals of tree-based algorithms}

\subsection{Problem formulation}
\label{section:setting}
We observe a labeled training dataset $\data = \braces*{(\bX_i,Y_i) \colon 1 \leq i \leq n}$, in which each labeled example $(\bX_i,Y_i)$ is drawn independently from a distribution $\mu$ on $[0,1]^d \times \R$.
We will study both regression and binary classification.
Let $\eta(\bx) \coloneqq \E\braces*{Y|\bX=\bx}$ denote the conditional expectation function and let $\xi \coloneqq Y - \eta(\bX)$ denote the response noise.
For binary classification, note that $Y$ is a Bernoulli random variable with $\P\braces*{Y = 1|\bX = \bx} = \eta(\bx)$.
For regression, we will allow $\xi$ to be heteroskedastic (i.e. dependent on $\bX$).
Given a loss function $l\colon \R\times \R \to \R$, the \emph{risk} of a prediction function $f\colon [0,1]^d \to \R$ is $R(f) \coloneqq \E\braces{l(Y,f(\bX))}$.
For simplicity, we will only consider squared error loss ($l_{\operatorname{reg}}(y,\hat y) = (y-\hat y)^2$) for regression and zero-one loss ($l_{\operatorname{cls}}(y,\hat y) = \1\braces{y\neq \hat y}$) for classification.
We will use the subscripts ``reg'' and ``cls'' to differentiate between the two cases where necessary.
Set $f^*(\bx) \coloneqq \eta(\bx)$ in regression and $f^*(\bx) \coloneqq \1\braces{\eta(\bx) \geq  1/2}$ for classification.
The Bayes risk is then equal to $R(f^*)$ and the \emph{excess risk} of a prediction function $f$ is defined as $\excess(f) \coloneqq R(f) - R(f^*)$.
The common goal in regression and classification is to use the training dataset $\data$ to obtain an estimate $\hat f(-;\data)$ that has small excess risk $\excess(\hat f(-;\data))$ with high probability (with respect to $\data$).
We will study estimators that are based on the empirical risk $\widehat R(f) \coloneqq n^{-1}\sum_{i=1}^n l(Y_i,f(\bX_i))$.
To simplify our notation, we will assume for the rest of this paper that $n \geq 2$.

\subsection{Notation}
\label{sec:further_notation}

We will use the following notation throughout the rest of this paper.
\paragraph{Vectors, random variables, indexing.} We use boldface to denote vectors and regular font for scalars; we use uppercase to denote random variables and lowercase to denote deterministic quantities.
For a indexed vector $\x_i$, we let $X_{ij}$ denote its $j$-th coordinate. 
For any integer $k$, we use the shorthand $[k] = \braces*{1,2,\ldots,k}$.

\paragraph{Norms and inner products.} For any measurable function $F\colon [0,1]^d\times\R \to \R$, let  $\norm{F}_{2} \coloneqq \E\braces*{F(\x,y)^2}^{1/2}$ and $\norm{F}_{n} \coloneqq \paren*{n^{-1}\sum_{i=1}^nF(\bX_i,Y_i)^2}^{1/2}$ denote its $L^2$ norms with respect to $\mu$ and with respect to the empirical measure induced by $\data$ respectively. 
Let $\norm{f}_\infty$ denote the essential supremum of the function.
We also define the inner products $\inprod{F, G} \coloneqq \E\braces*{F(\bX,Y)G(\bX,Y)}$ and $\inprod{F, G}_n \coloneqq n^{-1}\sum_{i=1}^n{F(\bX_i,Y_i)G(\bX_i,Y_i)}$.
This notation allows us to write our results and proofs more compactly.
For instance, the excess risk for regression and classification have the forms $\excess_{\operatorname{reg}}(f) = \norm{f-f^*}_2^2$ and $\excess_{\operatorname{cls}}(f) = \inprod{1-2\eta,f-f^*}$ respectively, where the latter equality holds whenever $f$ is Boolean-valued.
For a vector $\boldsymbol{u}$, we denote the $\ell_p$-norm as $\|\boldsymbol{u}\|_{p}$ for $1 \leq p < \infty$ and the infinity norm as $\|\boldsymbol{u}\|_{\infty}$.
\paragraph{Constants and asymptotic notation.}
We will use $C$ to denote a universal constant (not depending on any parameters) whose value will be allowed to vary from line to line.
Given any two functions of a vector of input parameters ($n$, $d$, etc.), $F$ and $G$, we say that $F \lesssim G$ (equivalently $G \gtrsim F$) if there is a universal constant $C > 0$ such that we have the functional inequality $F \leq CG$.
If $C$ depends on a specific parameter (e.g. $\rho$), we decorate it (or the asymptotic notation) with the parameter as the subscript (e.g. $C_\rho$ or $F \lesssim_\rho G$).
If $F \lesssim G$ and $F \gtrsim G$, we say that $F \asymp G$.

\paragraph{Cells, volumes and side lengths.} 
Let $\mathcal{I}$ be the collection of all left-closed and right-open intervals in $[0,1]$ (i.e. of the form $[a,b)$ for $0 \leq a < b < 1$), together with all closed intervals with right end-point equal to 1.
We define a \emph{cell} $\cell \coloneqq \times_{j=1}^dI_j \subseteq [0,1]^d$ to be a $d$-dimensional product of such intervals.
We denote its its volume by $|A|:=\prod_{j=1}^d I_j$.
For $j \in [d]$, we denote its side length in the $j$-th coordinate by $\ell_j(A)$.

\subsection{Partitions}
\label{section:cell-split}

A \emph{partition} $\partition$ is a collection of disjoint cells whose union is the entire space $[0,1]^d$.
We are most interested in partitions that correspond to decision trees, that is, they arise by recursive splits along coordinate axes.
More precisely, we say that $\partition'$ is a refinement of $\partition$ if $\partition\setminus\partition'=\{\cell\}$ and $\partition'\setminus\partition=\{\cell_-,\cell_+\}$, where $\cell$ is a cell, $\cell_-=\{\bx\in\cell:x_j\leq\tau\}$ and $\cell_+=\{\bx\in\cell:x_j>\tau\}$ for some coordinate $j\in[d]$ and split threshold $0<\tau<1$.
If $\partition$ can be obtained from $\braces{[0,1]^d}$ by a series of refinements, we call it a \emph{tree-based partition}.
For any positive integer $L$, denote the collection of all tree-based partitions with at most $L$ leaves via $\partitions{L}$.

Every partition $\partition$ of the covariate space induces a corresponding partition of the unlabeled training dataset $\sample =\{\bX_1,\bX_2, \ldots,\bX_n\}$.
Since multiple partitions can induce the same data partition, it is common practice to constrain split thresholds to be the observed data values (i.e. a split on feature $j$ satisfies $\tau \in \braces{ X_{ij} \colon i \in [n]}$), thereby reducing ambiguity. 
We call any partition under this constraint a \emph{valid tree-based partition}, and denote the collection of such partitions with at most $L$ leaves via $\emppartitions{L}$.
In comparison to the infinite size of $\partitions{L}$, $\emppartitions{L}$ has a finite size that can easily be bounded.

\begin{lemma} \label{lem:card-partition-set}
    The number of valid tree-based partitions with at most $L$ leaves satisfies $\abs{\emppartitions{L}} \leq (dn)^L$.
\end{lemma}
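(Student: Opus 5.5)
We will count valid tree-based partitions by the sequences of refinements that produce them, rather than counting the partitions directly. Every $\partition \in \emppartitions{L}$ with exactly $k \le L$ leaves arises from $\braces{[0,1]^d}$ by exactly $k-1$ refinements, since each refinement increases the number of cells by one. Distinct partitions must come from distinct refinement sequences, so the number of partitions with $k$ leaves is at most the number of valid refinement sequences of length $k-1$.

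To count these sequences, we fix the current partition, which has $m$ cells with $1 \le m \le k-1$. A single valid refinement is determined by three choices:
\begin{itemize}
\item the cell to split, with $m$ options;
\item the coordinate $j \in [d]$, with $d$ options;
\item the threshold $\tau \in \braces{X_{ij} \colon i\in[n]}$, with at most $n$ options.
\end{itemize}
Hence there are at most $m\,d\,n$ refinements at step $m$, and the number of sequences of length $k-1$ is at most $\prod_{m=1}^{k-1} m\,dn = (k-1)!\,(dn)^{k-1}$.

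This crude count carries a factorial, because many different split orders yield the same tree. The main obstacle is to remove that factorial. We plan to canonicalize the order of splits, for instance by always splitting cells in a fixed order such as breadth-first or leftmost-first on the underlying binary tree. Then each partition corresponds to a binary tree shape together with a labeling of its $k-1$ internal nodes by pairs $(j,\tau)$.

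With this canonical form, the count becomes a product of two factors:
\begin{itemize}
\item the number of binary tree shapes with $k$ leaves, which is the Catalan number $C_{k-1} \le 4^{k-1}$;
\item the number of labelings of the internal nodes, which is at most $(dn)^{k-1}$.
\end{itemize}
This gives at most $4^{k-1}(dn)^{k-1}$ partitions with exactly $k$ leaves.

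We then sum over $k$. Using the bound $\sum_{k=1}^L 4^{k-1}(dn)^{k-1} \le (dn)^L$, which requires $dn \ge 5$ or so, we would obtain the claim. Since $n \ge 2$, we have $dn \ge 2$, and the small cases need care.

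A cleaner alternative is to encode each canonical tree as a preorder string. Each node is written either as a leaf symbol or as a split label from the $dn$ choices, and the string has $k-1$ internal symbols. We would try to absorb the shape information into the labels so that the total count is at most $(dn)^{L}$. For example, one can note that a threshold equal to the maximum data value in coordinate $j$ yields an empty side; declaring that label to mean ``leaf'' frees one symbol. Then each of the at most $L$ encoded nodes, if the choices are arranged as a sequence of length $L$ over an alphabet of size $dn$, gives $(dn)^L$.

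We expect this final bookkeeping, making the constant exactly $1$ rather than $4^L$, to be the delicate step. We would first try the preorder-encoding injection and fall back on verifying the small cases $dn \in \{2,3,4\}$ directly if needed.
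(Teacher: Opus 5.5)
There is a genuine gap: the tree-shape factor is never removed, and when thresholds may come from arbitrary observations it cannot be removed.

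\textbf{The counting steps do not reach $(dn)^L$.} Your inequality $\sum_{k=1}^{L}4^{k-1}(dn)^{k-1}\le (dn)^L$ does not hold for $dn\ge 5$. Its last term alone is $(4dn)^{L-1}$, so it needs $dn\ge 4^{L-1}$, which fails once $L$ exceeds about $\log_4(dn)+1$ (and $L$ ranges up to $n$). Your Catalan count honestly gives only $\abs{\emppartitions{L}}\le (4dn)^L$. The preorder encoding does not rescue this. The word for a tree with $k$ leaves has $2k-1$ symbols, so an alphabet of size $dn$ yields $(dn)^{2k-1}$. Relabelling one threshold as a leaf marker does not shorten the word, and a split at $\max_i X_{ij}$ is itself a legitimate refinement, so that label is not free.

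\textbf{Why no bookkeeping can work under the literal definition.} The Catalan factor is genuinely present. Take any ordered binary tree with $k$ leaves, give its $k-1$ internal nodes pairwise distinct coordinates, and give each node any of the $n$ (generic) values $X_{ij}$ of its coordinate. Each split is valid, because the node's cell still spans $[0,1]$ in that coordinate. The tree can be recovered from the partition: the root split is the only $(j,\tau)$ with $0<\tau<1$ for which every cell lies on one side of $\{x_j=\tau\}$, and likewise recursively. This gives at least $C_{k-1}\,d(d-1)\cdots(d-k+2)\,n^{k-1}$ distinct partitions with $k$ leaves, where $C_{k-1}$ is the Catalan number. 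With $d=L^2$ the falling factorial is at least $\tfrac12 d^{L-1}$. So for $n=L=15$ and $d=225$ one gets more than $\tfrac12\cdot 2674440\,(dn)^{14}>(dn)^{15}$, since $dn=3375$.

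\textbf{The missing idea.} This is what the paper's one-line induction relies on: let the observation do double duty. Record each split as a pair $(j,i)\in[d]\times[n]$ with $\bX_i$ lying in the cell being split. Then $i$ determines both the threshold $X_{ij}$ and the leaf to split, namely the current cell containing $\bX_i$. Every partition with exactly $k$ leaves is then the image of some word in $([d]\times[n])^{k-1}$, so there are at most $(dn)^{k-1}$ of them. Since $dn\ge 2$, $\sum_{k=1}^{L}(dn)^{k-1}\le (dn)^L$. No tree shapes, split orders, or leaf choices ever need to be counted.

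The paper's claim that each split is ``uniquely determined by its coordinate direction and the observation'' is exactly this. It tacitly needs the in-cell requirement; the example above shows the bound is false without it. The requirement is harmless downstream. In the threshold adjustment of Lemma~\ref{lemma:partition-equiv-sample-population}, take the maximum over observations in the current cell. Then every $\sample$-equivalence class still has a representative in the restricted class, which is all Lemma~\ref{lem:subG_multiplier_sup_cond_exp} uses. Alternatively, your bound $(4dn)^L$ would already suffice there, since only $\log\abs{\emppartitions{L}}\lesssim L\log(nd)$ matters.
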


\begin{proof}
Prove this by induction.
Every element of $\emppartitions{L}$ is obtained from an element of $\emppartitions{L-1}$ by making one split.
Each split is uniquely determined by its coordinate direction and the observation whose coordinate is chosen as its threshold, which gives at most $dn$ possibilities.
\end{proof}

\subsection{Decision trees}

For any cell $\cell$, let $\1_\cell(\bx) \coloneqq \1\braces{\bx \in \cell}$ for convenience.
A \emph{decision tree function} is one that can be written as $f = \sum_{j=1}^L a_j \1_\cell$, where $\braces{\cell_1,\cell_2,\ldots,\cell_L}$ form a tree-based partition $\partition$ and $(a_1,a_2,\ldots,a_L)$ are a vector of (leaf) parameters.
For any decision tree function $f$, let $\#\operatorname{leaves}(f)$ denote the number of leaves of $f$.
A \emph{decision tree algorithm} is an estimator that, given the training data, returns a decision tree function.
For a fixed tree partition $\partition$, let $\partfunc_\partition$ denote the space of decision tree functions that are piecewise constant on $\partition$.
Let $\partfunc_L$ denote the set of decision tree functions with at most $L$ leaves.
Let $\partfunc_{L}^\sample$ denote the restriction of $\partfunc_L$ to those functions induced by valid tree-based partitions.
With this notation, we can define the central objects of our analysis.

\begin{definition}[ERM regression tree estimators]
\label{def:erm_reg}
A constrained ERM regression tree estimator (with tuning parameters $L$ and $\fbound$) is denoted as $\hat f_L$ and defined as a solution to
\begin{equation}
\label{eq:erm_reg_constrained}
    \min_{f \in \empfunctions{L}^\sample} \;\; \widehat R_{\operatorname{reg}}(f)
    \quad \text{subject to } \|f\|_\infty \leq \fbound .
\end{equation}
A penalized ERM regression tree estimator (with tuning parameters $\lambda$ and $\fbound$) is denoted as $\hat f_\lambda$ and defined as a solution to
\begin{equation}
\label{eq:erm_reg_penalized}
    \min_{f \in \empfunctions{n}^\sample} \;\; \widehat R_{\operatorname{reg}}(f) + \lambda\cdot \#\operatorname{leaves}(f)
    \quad \text{subject to } \|f\|_\infty \leq \fbound .
\end{equation}
\end{definition}

\begin{remark}[Notation]
    In our theoretical results, $M$ will be treated as a fixed constant.
    For conciseness, we thus omit the dependence on $M$ in the notation for the estimators.
\end{remark}

\begin{definition}[ERM classification tree estimators]
\label{def:erm_cls}
    A constrained ERM classification tree estimator (with tuning parameter $L$) is denoted as $\hat f_L$ and defined as a solution to
\begin{equation}
\label{def:classification_constrained}
    \min_{f \in \empfunctions{L}^\sample} \;\; \widehat R_{\operatorname{cls}}(f)
    \quad \text{subject to }\; f(\bx) \in \{0,1\} \;\; \text{for all}~\bx \in [0,1]^d.
\end{equation}
A penalized ERM classification tree estimator (with tuning parameters $\lambda$ and $\theta$) is denoted as $\hat f_{\lambda,\theta}$ and defined as a solution to
\begin{equation}
    \min_{f \in \empfunctions{n}^\sample} \;\; \widehat R_{\operatorname{cls}}(f) + \lambda\cdot \paren*{\#\operatorname{leaves}(f)}^\theta \quad \text{subject to } f(\bx) \in \{0,1\} \;\; \text{for all}~\bx \in [0,1]^d.
\end{equation}
\end{definition}

\begin{remark}
    As will be shown, the additional $\theta$ tuning parameter for the penalized ERM classification tree estimator is required to obtain optimal excess risk guarantees.
    The value to be chosen to obtain these guarantees depends on the rate of density decay at the Bayes decision boundary, formalized in the so-called Tsybakov margin assumption.
    This difference from its regression counterpart can be attributed to the geometry of the risk function and perhaps explains why in practice, optimal classification tree algorithms tend to make use of the constrained problem definition \eqref{def:classification_constrained} instead \citep{verwer2017learning, verwer2019learning, zhu2020scalable, ALES2024106515, LIU2024106629, aghaei2021strong}.
\end{remark}

\begin{remark}
For a fixed partition $\partition = \braces{\cell_1,\cell_2,\ldots,\cell_L}$, the minimizer $\hat f_\partition$ of the empirical risk over the set $\partfunc_\partition$ can be shown to have leaf parameters derived from the mean responses within each cell.
Specifically, let $N(\cell) = \sum_{i=1}^n \1_A(\x_i)$ denote the number of training data points contained in $\cell$. 
For any function $Z$ of $(\bx,y)$, let $\bar{Z}_\cell \coloneqq N(\cell)^{-1}\sum_{i=1}^n \1_\cell(\bX_i) Z(\bX_i,Y_i)$ denote the mean value of the function on data points within $\cell$.
The penalized empirical risk minimizer can be shown to be of the form $\hat f_\partition = \treesum{L}\bar Y_{A_j}\1_{A_j}$ for regression and $\hat f_\partition = \treesum{L}\1\braces*{{\bar Y_{A_j}\geq 1/2}} \1_{A_j}$ for classification.
The main optimization challenge is hence in determining the optimal tree-based partition.
\end{remark}

\section{Oracle inequalities}
\label{section:oracle-ineq}

\subsection{Oracle inequalities for regression}

We first define, for $L=1,2,\ldots$, the $L$-th \emph{tree approximation error} (for regression) as the minimum excess risk value achievable by decision tree functions with at most $L$ leaves, i.e.:
\begin{equation} \label{eq:tree_partition_coefficient}
    \Excess_{\operatorname{reg},L} \coloneqq\inf_{f \in \partfunc_{L}}\excess_{\operatorname{reg}}(f).
\end{equation}
As verified by the formula $\excess_{\operatorname{reg}}(f) = \norm{f-f^*}_2^2$, this value depends only on $f^*$ (and the covariate marginal of $\mu$) and does not at all depend on the distribution of the noise $\xi$.

\begin{theorem}[Oracle inequalities for ERM regression trees]
\label{thm:regression}
Assume the regression setting of Section~\ref{section:setting}, and let 
$\hat f_{L}$ and $\hat f_{\lambda}$ denote the constrained and penalized ERM regression tree estimators 
(Definition~\ref{def:erm_reg}). 
Suppose that $\|f^*\|_\infty \leq \fbound$ and that, for any $\bx \in [0,1]^d$, the conditional distribution of $\xi$ given $\bX = \bx$ has sub-Gaussian norm bounded by $K$.
There is a universal constant $C>0$ such that, for any $u \geq 0$, with probability at least $1 - e^{-u}$, the following holds 
simultaneously for all $L \in [n]$:
\begin{equation}
    \excess_{\mathrm{reg}}(\hat f_{L})
    \;\;\leq\;\;
    \;\inf_{0<\delta<1}\;\braces*{\frac{1+\delta}{1-\delta}
    \left(
        \Excess_{\mathrm{reg},L} 
        + \frac{C(\fbound + K)^2 \,\big(L\log(nd) + u\big)}{\delta n}
    \right)}.
    \label{eq:thm-statement:regression-1}
\end{equation}
\vspace{0.5em}
Moreover, on the same event, for any 
$\lambda \geq C (\fbound + K) \,(\log(nd)+u)/(\delta n)$ and $0<\delta<1$,
\begin{equation}
    \excess_{\mathrm{reg}}(\hat f_{\lambda})
    \;\;\leq\;\;
    \frac{1+\delta}{1-\delta}
    \cdot \min_{L \in [n]}
    \left\{
        \Excess_{\mathrm{reg},L} + 2\lambda L
    \right\}.
    \label{eq:thm-statement:regression-2}
\end{equation}
\end{theorem}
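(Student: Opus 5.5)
We will prove both inequalities on a single high-probability event, built from a uniform concentration bound over the finite family of valid tree-based partitions. The starting point is the basic inequality for ERM. Fix $L$ and let $f_L\in\partfunc_L$ nearly attain $\Excess_{\mathrm{reg},L}$. We may assume $\|f_L\|_\infty\le\fbound$, since clipping at $\pm\fbound$ only decreases the excess risk because $\|f^*\|_\infty\le \fbound$.

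First we replace $f_L$ by a valid tree $\tilde f_L\in\partfunc_L^\sample$ with the same values at the data points. This is possible because moving each split threshold to the largest observed coordinate not exceeding it preserves the induced data partition. Then $\widehat R(\hat f_L)\le \widehat R(\tilde f_L)=\widehat R(f_L)$. Writing $Y=f^*+\xi$ and expanding the squares gives
\[
\|\hat f_L-f^*\|_n^2 \le \|f_L-f^*\|_n^2 + 2\langle \xi,\hat f_L-f_L\rangle_n .
\]

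We control three empirical-process quantities uniformly over $P\in\emppartitions{L}$ and $L\in[n]$:
\begin{enumerate}
\item the noise term $\langle\xi,g\rangle_n$ for $g$ in the (at most $2L$-dimensional) span of $\partfunc_P+\partfunc_{P'}$;
\item the ratio of $\|g\|_n^2$ to $\|g\|_2^2$ for $g=f-f^*$ with $f\in\partfunc_P$ and $\|f\|_\infty\le\fbound$;
\item the single fixed function $f_L-f^*$, for which a Bernstein bound suffices.
\end{enumerate}

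For the noise term, condition on $\sample$. For a fixed pair of partitions, Gaussian-type concentration of the projection of $\xi$ onto a $2L$-dimensional subspace gives
\[
\sup_g \frac{\langle\xi,g\rangle_n}{\|g\|_n}\lesssim K\sqrt{(L+t)/n}
\]
with probability $1-e^{-t}$. A union bound over at most $(dn)^{2L}$ pairs (Lemma~\ref{lem:card-partition-set}) and over $L\in[n]$, taking $t=CL\log(nd)+u+\log n$, yields the $L\log(nd)+u$ term. The partition $\hat P$ is data dependent, but it is still valid, so the union bound covers it. AM–GM then gives
\[
2\langle\xi,\hat f_L-f_L\rangle_n\le \delta'\|\hat f_L-f_L\|_n^2+CK^2(L\log(nd)+u)/(\delta' n).
\]

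The main obstacle is transferring empirical norms to population norms with multiplicative constants $1\pm\delta$, in a way that is uniform over the data-dependent class. Our plan is a localized ratio-type inequality: with high probability, for all $L$, all valid $P$, and all $f\in\partfunc_P$ with $\|f\|_\infty\le\fbound$, writing $g=f-f^*$,
\[
\big|\|g\|_n^2-\|g\|_2^2\big|\le \delta\|g\|_2^2 + C\fbound^2(L\log(nd)+u)/(\delta n).
\]
Since $g$ is bounded by $2\fbound$, $\mathrm{Var}(g^2)\le 4\fbound^2\|g\|_2^2$, and Bernstein's inequality handles each fixed $f$. Uniformity is the difficulty, because $\partfunc_P$ is infinite and $P$ is only valid relative to the sample itself. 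We would handle this by symmetrization and empirically localized Rademacher complexity over the union of $L$-dimensional classes, combined with a peeling argument over shells of $\|g\|_2$. The Rademacher complexity of a localized ball in $\partfunc_P-f^*$ is of order $r\sqrt{L/n}$ plus a $\log$-cardinality term from the union over $(dn)^L$ partitions. This is, we expect, where the paper's ``empirically localized'' framework is needed.

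Combining these bounds with $\|\hat f_L-f_L\|_n^2\le 2\|\hat f_L-f^*\|_n^2+2\|f_L-f^*\|_n^2$ and rescaling $\delta$, we obtain
\[
(1-\delta)\excess(\hat f_L)\le(1+\delta)\excess(f_L)+C(\fbound+K)^2(L\log(nd)+u)/(\delta n).
\]
Letting $f_L$ approach the infimum and dividing by $1-\delta$ gives \eqref{eq:thm-statement:regression-1}; since the event does not depend on $\delta$, the infimum over $\delta$ is allowed.

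For the penalized estimator, the same argument applies with $\hat L=\#\mathrm{leaves}(\hat f_\lambda)$, now comparing against $f_L$ for an arbitrary $L$. The basic inequality acquires the terms $\lambda L-\lambda\hat L$. The stochastic terms are bounded by $C(\fbound+K)^2(\hat L+L)(\log(nd)+u)/(\delta n)$, and the uniformity over $L\in[n]$ already holds on the event above. The $\hat L$ part is absorbed by $\lambda\hat L$ once $\lambda$ exceeds the stated threshold; the constant absorbs the $(\fbound+K)$ versus $(\fbound+K)^2$ discrepancy under the convention that $\fbound$ is fixed. The remaining $L$ part is at most $\lambda L$, which produces the $2\lambda L$ in \eqref{eq:thm-statement:regression-2}. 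Minimizing over $L$ finishes the proof.
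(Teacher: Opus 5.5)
Your proposal is correct and follows essentially the paper's route. Both arguments use the same union bound over valid partitions (Lemma~\ref{lem:card-partition-set} together with threshold shifting to pass from $\partfunc_L$ to $\partfunc_L^{\sample}$). Both also use the same localized-Rademacher-plus-peeling machinery (Lemmas~\ref{lem:emp_process_suprema} and~\ref{lem:final_conc}) for the empirical-to-population norm comparison. The one cosmetic difference is the noise term: you bound it conditionally on the design, through the empirical norm of $\hat f_L - f_L$, whereas the paper uses the population-localized multiplier bound of Lemma~\ref{lem:final_conc}.

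One correction on the $\lambda$ threshold. The mismatch between $(\fbound+K)$ and $(\fbound+K)^2$ cannot be absorbed into $C$, because $K$ is not held fixed and the threshold must scale quadratically with the response. It is a typo in the statement. Both your argument and the paper's need $\lambda \geq C(\fbound+K)^2(\log(nd)+u)/(\delta n)$, which is the scaling the paper later uses in Theorem~\ref{thm:regression-exact-rate-PSHAB}.
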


It is striking how few assumptions are required for Theorem \ref{thm:regression}---we do not make any assumptions on the covariate distribution, nor on the tree structure beyond the number of leaves.
In particular, we do not need to limit the depth of the tree or the size of its leaves, which are common assumptions in most of the literature studying decision trees.
Note that the two bounds \eqref{eq:thm-statement:regression-1} and \eqref{eq:thm-statement:regression-2} are similar.
Indeed, under an optimal choice of $\lambda$ for the penalized estimator, they become almost equivalent, 
albeit with a further minimum taken over $L \in [n]$ in \eqref{eq:thm-statement:regression-1}.
We will discuss its practical significance of these bounds before contextualizing it against related literature.

\begin{remark}[Bias-variance trade-off] If we set $\delta=1/2$, the right hand side in \eqref{eq:thm-statement:regression-1} gives a type of approximation error-estimation error decomposition of the excess risk.
As the number of allowed leaves $L$ increases, the first term decreases, while the second term increases linearly, thereby yielding a trade-off between the two quantities.
Let us compare this to the decomposition obtained had we known the optimal partition $\partition^*$, i.e. that corresponding to the minimizer of \eqref{eq:tree_partition_coefficient}.
If we let $\hat f_{\partition^*}$ denote the empirical risk minimizer over $\partfunc_{\partition^*}$, Theorem 3.1 in \citet{tan2022cautionary} states that
\begin{equation}
    \excess_{\mathrm{reg}}(\hat f_{\partition^*}) \asymp \Excess_{\mathrm{reg},L} + \frac{K^2L}{n},
\end{equation}
where we further assume the noise is homoskedastic with variance $K^2$ and that the $\mu$-measure of each leaf in $\partition^*$ is not too small.
Ignoring constant factors as well as these additional assumptions for now, we see that the statistical price paid for not knowing $\partition^*$ is essentially an additional $\log(n d)$ factor on the estimation error term.
\end{remark}

\begin{remark}[Interpretability-accuracy tradeoff] \label{rmk:Interpretability-accuracy tradeoff}
By choosing $\delta$ optimally, one can show that \eqref{eq:thm-statement:regression-1} (see Appendix \ref{appendix:proof-remark}) implies the bound
\begin{equation}
    \label{eq:reg_erm_opt_a}
    \excess_{\mathrm{reg}}(\cermit)^{1/2}\leq 
   \Excess_{\operatorname{reg},L}^{1/2}+ C\paren*{\frac{(\fbound+K)^2(L\log(nd)+u)}{n}}^{1/2},
\end{equation}
which gives a tighter characterization of the excess risk when it is dominated by the approximation error term.
This occurs, for instance, in high-stakes modeling scenarios, where practitioners often choose $L$ not to balance the bias and variance terms, but instead to balance between the overall accuracy of the model and its level of interpretability, which decays as the number of leaves increases.
Under this regime, the optimized bound \eqref{eq:reg_erm_opt_a} reveals that the ERM solution performs almost as well as the oracle benchmark, incurring an overhead (square root) excess risk that depends only on the estimation error and which decays at an $n^{-1/2}$ rate.
\end{remark}

\begin{remark}[Comparison with related work]
The bound \eqref{eq:thm-statement:regression-1} shares a similar form as Theorem 2.1 in \citet{chatterjee2021adaptive}.
Note, however, that their result is obtained in a regular grid fixed design setting, with excess risk being measured with respect to the empirical norm $\norm{-}_{n}$ rather than the population norm $\norm{-}_{2}$.
As observed in their paper (Appendix C.2), their proof technique actually does not at all rely on the regular grid assumption.
It simply recognizes that the fixed design ERM problem \eqref{eq:erm_reg_penalized} is a least squares problem with the solution vector constrained to lie within a union of $L$-dimensional Euclidean subspaces, one corresponding to each element of $\emppartitions{L}$.
Under this setting, Lemma \ref{lem:card-partition-set} can be used to show that the uniform deviation of the empirical risk has order $O(L \log (nd))$, which gives the estimation error bound.
On the other hand, this argument does not extend to a random design setting, where the elements of $\emppartitions{L}$ are themselves random subspaces depending on $\sample$ and where the oracle benchmark \eqref{eq:tree_partition_coefficient} is defined in terms of all decision tree functions rather than those realizable by valid partitions.
\end{remark}

\begin{remark}[Unknown $\norm{f^*}_\infty$]
    The assumptions of Theorem \ref{thm:regression} require us to set $\fbound \geq \norm{f^*}_\infty$.
    If $\norm{f^*}_\infty$ is unknown, one can set $M \coloneqq \max_{i \in [n]} \abs{Y_i}$ (or equivalently $M \coloneq \infty)$.
    In either case, under the sub-Gaussian assumption on the noise, we can replace $M$ in \eqref{eq:thm-statement:regression-2} with $\norm{f^*}_\infty + K(\log n)^{1/2}$.
\end{remark}

\subsection{Oracle inequalities for classification}
\label{section:oracle-ineq-class}

Similar to regression, we define, for $L=1,2,\ldots$, the $L$-th \emph{tree approximation error} (for classification) as the minimum excess risk value achievable by decision tree functions with at most $L$ leaves, i.e.:
\begin{equation} \label{eq:tree_partition_coefficient_class}
    \Excess_{\operatorname{cls},L} \coloneqq\inf_{f \in \partfunc_{L}}\excess_{\operatorname{cls}}(f).
\end{equation}
In contrast to regression, this value depends not only on the Bayes predictor $f^*$ but also on the regression function $\eta$.
In fact, $\eta$ affects not only the approximation error but also the estimation error, the latter via its interaction with the rate of density decay at the Bayes decision boundary.
This decay condition is formalized via the well-known Tsybakov margin (or noise) assumption \citep{audibert2007fast}, defined as follows.

\begin{assumption}[Tsybakov margin assumption]
\label{assum:tsybakov}
Under the classification setting of Section \ref{section:setting}, we say that the distribution $\mu$ satisfies the Tsybakov margin assumption with parameters $ M>0, 0\leq \rho < \infty$ if the following holds for all $0 < t \leq 1/2$: 
    \begin{equation}
        \P\braces*{\abs*{\eta(\x)-1/2}\leq t}\leq Mt^{\rho}.
    \end{equation}
\end{assumption}

\begin{remark}[Understanding the margin assumption]
This assumption controls the amount of probability mass concentrated near the decision boundary, that is, in regions where $\eta(\mathbf{x}) \approx 1/2$. Specifically, it requires that, with high probability, $\eta(\mathbf{x})$ is either equal to $1/2$ or is bounded away from this value. When the underlying distribution satisfies the margin assumption, sharper classification guarantees can be obtained. Notably, while the margin assumption does not alter the complexity of the regression function class itself, it has a pronounced effect on the convergence rate of the excess risk through its structural implications on the data-generating distribution \citep{audibert2007fast}.
\end{remark}

\begin{theorem}[Oracle inequalities for ERM classification trees]
\label{thm:classification}
Assume the classification setting of Section~\ref{section:setting}, and let 
$\hat f_{L}$ and $\hat f_{\lambda}$ denote the constrained and penalized ERM classification tree estimators 
(Definition~\ref{def:erm_cls}). 
Suppose that Assumption \ref{assum:tsybakov} holds for some choice of parameters $(M,\rho)$.
There is a universal constant $C>0$ such that, for any $u \geq 0$, with probability at least $1 - e^{-u}$, the following holds 
simultaneously for all $L \in [n]$ and all $0 < \delta < 1$:
\begin{equation}
    \excess_{\operatorname{cls}}(\hat f_{L})
    \;\;\leq\;\;
    \frac{1+\delta}{1-\delta}
    \left(
        \Excess_{\operatorname{cls},L} 
        + C_{M,\rho}\delta^{-\rho/(2+\rho)}\paren*{\frac{\big(L\log(nd) + u\big)}{n}}^{(1+\rho)/(2+\rho)}
    \right).
    \label{eq:thm-statement:classification-1}
\end{equation}
\vspace{0.5em}
Moreover, on the same event, for any 
$\lambda\geq C_{M,\rho}\delta^{-\rho/(2+\rho)}((\log(nd)+u)/n)^{(1+\rho)/(2+\rho)}$ and $\theta\geq (1+\rho)/(2+\rho)$,
\begin{equation}
    \excess_{\operatorname{cls}}(\hat f_{\lambda,\theta})
    \;\;\leq\;\;
    \frac{1+\delta}{1-\delta}
    \cdot \min_{L \in [n]}
    \left\{
        \Excess_{\operatorname{cls},L} + 2\lambda L^{\theta}
    \right\}.
    \label{eq:thm-statement:classification-2}
\end{equation}
\end{theorem}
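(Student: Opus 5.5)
We prove Theorem~\ref{thm:classification} by the same localization scheme as Theorem~\ref{thm:regression}, adapted to the zero-one loss geometry. For $f$ Boolean-valued, write the excess loss as $g_f(\bX,Y) \coloneqq \1\{Y\neq f(\bX)\} - \1\{Y\neq f^*(\bX)\}$. Then $\E g_f = \excess_{\operatorname{cls}}(f)$ and $g_f^2 = \abs{f-f^*}$.

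\textbf{Step 1: variance--excess-risk link.} The first step is the standard consequence of Assumption~\ref{assum:tsybakov}:
\begin{equation*}
\norm{g_f}_2^2 = \P\{f\neq f^*\} \leq C_{M,\rho}\,\excess_{\operatorname{cls}}(f)^{\rho/(1+\rho)}.
\end{equation*}
To see this, split on the event $\abs{\eta-1/2}\le t$. This gives $\P\{f\ne f^*\}\le Mt^\rho + \excess_{\operatorname{cls}}(f)/(2t)$, and we optimize over $t$.

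\textbf{Step 2: uniform concentration.} Next we need a uniform concentration bound over the random class $\partfunc_L^\sample$, compared against an arbitrary deterministic $f_L\in\partfunc_L$. We condition on $\sample$. Each valid partition in $\emppartitions{L}$ carries at most $2^L$ Boolean labelings, so by Lemma~\ref{lem:card-partition-set} the induced class on the sample has cardinality at most $(2dn)^L$.

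Deterministic comparators are handled through the empirical partition. Any $f_L\in\partfunc_L$ agrees on $\sample$ with some element of $\partfunc_L^\sample$: move each threshold to the nearest observed coordinate at or below it, which preserves the data partition. So $\widehat R(\hat f_L)\le \widehat R(f_L)$.

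We then apply a Bernstein/Talagrand-type inequality together with a union bound over the finite (conditional) class. Here the empirically localized Rademacher machinery developed for Theorem~\ref{thm:regression} is used: symmetrize, bound the local Rademacher complexity of the finite class by $\sqrt{r\,L\log(nd)/n}$ at empirical radius $r$, and transfer empirical radii to population radii via Step~1. The target statement is that, with probability $1-e^{-u}$, simultaneously for all $L$ and all $f\in\partfunc_L^\sample$,
\begin{equation*}
\abs*{(R-\widehat R)(f) - (R-\widehat R)(f_L)} \le C\sqrt{\frac{(\norm{g_f}_2^2+\norm{g_{f_L}}_2^2)(L\log(nd)+u)}{n}} + C\frac{L\log(nd)+u}{n}.
\end{equation*}
Uniformity over $L\in[n]$ costs only a $\log n$ term, which is absorbed into $u$ via weights $e^{-u-L\log(nd)}$.

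\textbf{Step 3: solving the fixed-point inequality.} Write $\epsilon=(L\log(nd)+u)/n$, $E=\excess_{\operatorname{cls}}(\hat f_L)$ and $E_L=\excess_{\operatorname{cls}}(f_L)$. Combining ERM with Steps 1--2 gives
\begin{equation*}
E \le E_L + C\sqrt{\epsilon\,(E^{\kappa}+E_L^{\kappa})} + C\epsilon, \qquad \kappa=\rho/(1+\rho).
\end{equation*}
Apply Young's inequality with conjugate exponents $p=2/\kappa$ and $p'=2/(2-\kappa)$ to obtain
\begin{equation*}
\sqrt{\epsilon E^\kappa}\le \delta E + C\delta^{-\kappa/(2-\kappa)}\epsilon^{1/(2-\kappa)},
\end{equation*}
and the same bound with $E_L$ in place of $E$. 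Note that $1/(2-\kappa)=(1+\rho)/(2+\rho)$ and $\kappa/(2-\kappa)=\rho/(2+\rho)$, matching the exponents in the statement. Rearranging yields
\begin{equation*}
(1-\delta)E\le(1+\delta)E_L + C\delta^{-\rho/(2+\rho)}\epsilon^{(1+\rho)/(2+\rho)}.
\end{equation*}
Taking the infimum over $f_L$ gives \eqref{eq:thm-statement:classification-1}. Since the event does not depend on $\delta$, the bound holds for all $\delta$ simultaneously.

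\textbf{Step 4: penalized estimator.} For the penalized estimator, let $\hat L$ be the number of leaves of $\hat f_{\lambda,\theta}$ and compare it with any $(L,f_L)$. The ERM property gives $\widehat R(\hat f)+\lambda\hat L^\theta\le \widehat R(f_L)+\lambda L^\theta$. On the same event, Step 3 bounds the stochastic term by
\begin{equation*}
C\delta^{-\rho/(2+\rho)}\bigl((\hat L+L)\log(nd)+u\bigr)^{(1+\rho)/(2+\rho)}n^{-(1+\rho)/(2+\rho)}.
\end{equation*}
By subadditivity, and since $\theta\ge(1+\rho)/(2+\rho)$ with $\hat L,L\ge1$, this is at most $\lambda(\hat L^\theta+L^\theta)$ under the stated lower bound on $\lambda$. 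The $\hat L$ part cancels the penalty on the left-hand side, leaving $2\lambda L^\theta$. This is exactly why $\theta$ must dominate the rate exponent.

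\textbf{Main obstacle.} The crux is Step 2. The class $\partfunc_L^\sample$ is data-dependent, and the comparator $f_L$ ranges over non-valid trees. Getting a localized bound whose variance term is the population quantity $\norm{g_f}_2^2$, rather than its empirical counterpart, requires the ratio-type (empirically localized) concentration from the regression proof. I would first reuse that framework verbatim, replacing the sub-Gaussian noise envelope by the bound $\abs{g_f}\le1$.
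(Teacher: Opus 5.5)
Your proposal is correct and is essentially the paper's proof. It combines a self-normalized deviation bound of order $\norm{f-f^*}_2\sqrt{\epsilon}+\epsilon$ (with $\epsilon=(L\log(nd)+u)/n$) from the localized Rademacher, Bousquet and peeling machinery of Lemma~\ref{lem:final_conc}, Tsybakov's bound $\P\{f\neq f^*\}\lesssim_{M,\rho}\excess_{\operatorname{cls}}(f)^{\rho/(1+\rho)}$, and Young's inequality with exactly the paper's exponents $2(1+\rho)/\rho$ and $2(1+\rho)/(2+\rho)$, and your Step~4 spells out the penalized case that the paper leaves implicit.

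The differences are minor. You concentrate the excess-loss class $(1-2Y)(f-f^*)$ directly instead of splitting $1-2Y=(1-2\eta)-2(Y-\eta)$ into noise and function-multiplier processes; this is legitimate because multiplication by $1-2Y_i\in\{\pm1\}$ preserves both empirical and population norms. You should, however, state Step~2 as a bound holding uniformly over the deterministic class $\partfunc_L$ obtained by symmetrization. A union bound over the random class conditional on $\sample$ cannot control the fluctuation in $\bX$, and uniformity over $\partfunc_L$ is also what makes taking the infimum over comparators $f_L$ legitimate. Finally, the empirical-to-population radius transfer comes from the contraction fixed-point step of Lemma~\ref{lem:subG_multiplier_sup_exp}, not from your Step~1.
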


\begin{remark}[Role of $\rho$]
Any distribution trivially satisfies Assumption \ref{assum:tsybakov} with $M=1$ and $\rho=0$.
Under this choice of parameters, the estimation error term in \eqref{eq:thm-statement:classification-1} decays at the rate $n^{-1/2}$, matching the rate obtained for regression in \eqref{eq:thm-statement:regression-1} under the squared $L^2$ loss.
In contrast, when Assumption \ref{assum:tsybakov} holds with a large value of $\rho$, the estimation error term decays at an almost linear rate.
More generally, a larger $\rho$—corresponding to a faster decay of the marginal density near the Bayes decision boundary—leads to a faster rate of decay of the estimation error.
\end{remark}

\begin{remark}[Interpretability-accuracy tradeoff]
\label{rmk:Interpretability-accuracy tradeoff-cls}
    By choosing $\delta$ optimally, one can show that \eqref{eq:thm-statement:classification-1} (see Appendix \ref{appendix:proof-remark}) implies the bound
\begin{equation}
    \label{eq:cls_erm_opt_a}
    \excess_{\operatorname{cls}}(\cermit)^{(2+\rho)/(2+2\rho)} \leq 
   \Excess_{\operatorname{cls},L}^{(2+\rho)/(2+2\rho)}+ C_{M,\rho}\paren*{\frac{L\log(nd)+u}{n}}^{1/2}.
\end{equation}    
\end{remark}

\begin{remark}[Choice of $\theta$]
    From the assumption on $\theta$, we see that $\theta$ should be chosen between $1/2$ and $1$, with larger values chosen when there is faster density decay.
    Indeed, in a close to noiseless setting (i.e. $\rho \gg 1$), we should set $\theta$ close to $1$, while under no assumptions at all, we should set $\theta = 1/2$.
\end{remark}

\begin{remark}[Comparison with related work]
    Oracle inequalities for dyadic ERM classification trees were derived by \citet{scott2006minimax,blanchard2007optimal}.
    Both works study penalized estimators, with \citet{scott2006minimax} using a ``spatially adaptive'' penalty (see Theorem 3 therein), while \citet{blanchard2007optimal} uses $\theta = 1$.
    Both results are fairly opaque---\citet{scott2006minimax}'s bound is stated in terms of their complicated penalty, while \citet{blanchard2007optimal} makes a very strong assumption on the data (see equation (13) therein).
    To the best of our knowledge, Theorem \ref{thm:classification} provides the first oracle inequalities for non-dyadic ERM classification trees. 
\end{remark}

\section{Piecewise sparse heterogeneous anisotropic Besov spaces}
\label{section:function-space-PSHAB}

Towards establishing ideal spatial adaptation for the ERM tree estimators, we construct a family of function classes, each of which we call a \emph{piecewise sparse heterogeneous anisotropic Besov (PSHAB) space}.
Such a function class elaborates upon the classical definition of anisotropic Besov spaces \citep{leisner2003nonlinear}, which we first define.

Given a domain $\Omega\subseteq[0,1]^d$, the \emph{$r$-th order finite difference} of a function $f$ at $\bx$ with step $\h \in \mathbb{R}^d$ is defined recursively as $\Delta_{\h}^0f(\bx)\coloneq f(\bx)$ and
\begin{equation*}
    \Delta_{\h}^rf(\bx)\coloneq \Delta_{\h}^{r-1}f(\bx+\h)- \Delta_{\h}^{r-1}f(\bx), \quad \text{for } r\geq 1,
\end{equation*}
where the difference is defined on the set $\Omega(r,\h)\coloneq\{\bx \in \Omega : \bx+k\h\in\Omega \text{ for all } 0 \leq k \leq r\}$.
Let $\e_j$ denote the $j$-th standard basis vector in $\mathbb{R}^d$. The \textit{$j$-th partial modulus of smoothness of order $r$} is defined as
\begin{equation*}
    \omega_{j,p}^{[r]}(f,t,\Omega) \coloneq \sup_{0<h\leq t} \norm{\Delta_{h\e_j}^r f}_{L^p(\Omega(r,h\e_j))},
\end{equation*}
where $\norm{\cdot}_{L^p}$ denotes the standard $L^p$ norm.

\begin{definition}[Anisotropic Besov space]
Given $\Omega\subseteq[0,1]^d$ and parameters $\boldsymbol{\alpha} = (\alpha_1, \ldots, \alpha_d) \in (0, 1]^d$, $0 < p, q \leq \infty$, the \emph{Besov seminorm along the $j$-th direction} is defined as
\begin{equation*}
    |f|_{B_{j,p,q}^{\alpha_j}(\Omega)} \coloneq 
    \begin{cases}
        \paren{ \int_0^{\infty} \paren{ t^{-\alpha_j} \omega_{j,p}^{[r_j]}(f,t,\Omega) }^{q} \frac{dt}{t} }^{1/q} & (q < \infty), \\
        \sup_{t > 0} t^{-\alpha_j}\omega_{j,p}^{[r_j]}(f,t,\Omega) & (q = \infty),
    \end{cases}
\end{equation*}
where $\boldsymbol{r}=(r_1, \dots, r_d)$ such that $r_j = \lfloor \alpha_j\rfloor+1$.
Define the norm
\begin{equation} \label{eq:anisotropic_besov_norm}
    \norm{f}_{B_{p,q}^{\boldsymbol{\alpha}}(\Omega)}
    \coloneq \norm{f}_{L^p(\Omega)}+
    \sum_{j=1}^d |f|_{B_{j,p,q}^{\alpha_j}(\Omega)}.
\end{equation}
Define the \emph{anisotropic Besov space} $B_{p,q}^{\boldsymbol{\alpha}}(\Omega)$ to be the class of functions whose norm \eqref{eq:anisotropic_besov_norm} is finite.
Finally, for any $\besov > 0$, we use $B_{p,q}^{\boldsymbol{\alpha}}(\Omega,\besov) \coloneq \{ f \in B_{p,q}^{\boldsymbol{\alpha}}(\Omega) \colon \norm{f}_{B_{p,q}^{\boldsymbol{\alpha}}(\Omega)} \leq \besov \}$ to denote the ball in $B_{p,q}^{\boldsymbol{\alpha}}(\Omega)$ of radius $\besov$.
\end{definition}

\begin{remark}[Understanding Besov spaces]
    Besov spaces are often used to model spatially inhomogeneous functions because they can be characterized in terms of decay rates of wavelet coefficients~\citep{hardle2012wavelets}. Indeed, given a sufficiently smooth scaling function $\phi$ and orthonormal wavelet basis $\braces{\psi_{j,k}}$ for $L^2([0,1])$, let $\beta_0 \coloneqq \inprod{f,\phi}$ and $\beta_{j,k} \coloneqq \inprod{f,\psi_{j,k}}$ be the coefficients of a function $f$. Then, we have
    \begin{equation*}
        \norm{f}_{B^\alpha_{p,q}} \asymp \abs{\beta_0} + \paren*{\sum_{j\geq 0} \paren*{2^{j(\alpha+1/2-1/p)}\norm{\boldsymbol{\beta}_{j,\cdot}}_p}^q}^{1/q}.
    \end{equation*}
    This decomposition highlights the roles of the parameters: $p$ controls the spatial concentration of fluctuations within a single spatial scale (with smaller $p$ allowing for more spatially sparse heterogeneity), while $\alpha$ and $q$ control the rate of decay of fluctuations across scales (with larger $\alpha$ and smaller $q$ enforcing faster decay and hence greater global regularity).
    Unsurprisingly, we have the embeddings $B^{\alpha}_{p,q}([0,1]) \subset B^{\alpha'}_{p',q'}([0,1])$ if $\alpha \geq \alpha', p \geq p', q \leq q' $.
\end{remark}

\begin{remark}[Maximum smoothness]
    The usual definition of Besov spaces allows the smoothness parameters to be larger than 1. Since piecewise constant estimators such as decision trees are not adaptive to higher levels of smoothness, we restrict our attention to $\alpha_i \leq 1$ for $i \in [d]$.
\end{remark}

\begin{remark}[Relationship between Besov spaces and other function spaces]
    The flexibility of the Besov space definition as we vary $\alpha,p,q$ also allows them to act as a unifying framework for other commonly used function spaces.
    In particular, the spatially homogeneous H{\"o}lder and Sobolev spaces are represented as $C^\alpha([0,1]) = B^\alpha_{\infty,\infty}([0,1])$ and $W^{\alpha,p}([0,1]) = B^\alpha_{p,p}([0,1])$ respectively for $0 < \alpha < 1$, $1 < p < \infty$.
    We also have the following sandwich relationship with bounded variation functions: $B^1_{1,1}([0,1]) \subset BV([0,1]) \subset B^1_{1,\infty}([0,1])$.
\end{remark}

Next, we introduce notation to describe sparsity constraints.
For any vector $\bx \in \R^d$ and subset of indices $S \subset [d]$, we let $\bx_S$ denote the restriction of $\bx$ to the indices in $S$.
For any function class $\mathcal{F}$, let $\mathcal{F}_S$ denote the subclass of functions $f$ in $\mathcal{F}$ such that $f(\bx) = g(\bx_S)$ for some $g :\R^{|S|}\to \mathbb{R}$.

We now use anisotropic Besov balls together with sparsity constraints as building blocks to define the PSHAB space.
Specifically, this class partitions the covariate space $[0,1]^d$ into $B$ disjoint cells and imposes separate anisotropic Besov norm and sparsity constraints on each cell.
To formalize the collection of sparse index sets and smoothness parameters across cells, we define
$\widetilde{\mathscr{S}}\coloneq \{(S_1,\ldots,S_B): S_b \subset [d]\}$ and
$\widetilde{\mathscr{A}}\coloneq \{(\balpha_1,\ldots,\balpha_B): \balpha_b \in (0,1]^d\}$.

\begin{definition}[Piecewise sparse heterogeneous anisotropic Besov space]
Given a partition $\partition_* = \{G_b\}_{b=1}^B$ of $[0,1]^d$, parameters $0 < p, q \leq \infty$, and $\boldsymbol{\Lambda}=(\Lambda_1,\ldots,\Lambda_B) \in \R^B_+$, consider $\boldsymbol{S}=(S_1,\ldots,S_B)\in\widetilde{\mathscr{S}}$ and $\boldsymbol{A}=(\balpha_1,\ldots,\balpha_B)\in \widetilde{\mathscr{A}}$. We define
\begin{equation*}
\mathcal{B}_{p,q}^{\boldsymbol{S},\boldsymbol{A}}(\partition_*,\bLambda)
\coloneq
\braces*{f \in L^p([0,1]^d): f|_{G_b} \in \paren*{B_{p,q}^{\boldsymbol{\alpha}_b}(G_b,\Lambda_b)}_{S_b}}.
\end{equation*}
For $\mathscr{S}\subseteq\widetilde{\mathscr{S}}$ and $\mathscr{A}\subseteq\widetilde{\mathscr{A}}$, we then define the piecewise sparse heterogeneous anisotropic Besov space as
\begin{equation*} 
\mathcal{B}_{p,q}^{\mathscr{S},\mathscr{A}}(\partition_*,\boldsymbol{\Lambda}) \coloneq
\bigcup_{\substack{\boldsymbol{S}\in\mathscr{S},\\ \boldsymbol{A}\in\mathscr{A}}}
\mathcal{B}_{p,q}^{\boldsymbol{S},\boldsymbol{A}}(\partition_*,\bLambda).
\end{equation*}
\end{definition}

\begin{remark}[Motivation for PSHAB spaces]
    Although anisotropic Besov spaces already comprise anisotropic and spatially inhomogeneous functions, they do not yet capture the full range of flexibility afforded by regression trees.
    Indeed, anisotropic Besov spaces still enforce the same directionality of anisotropy and potentially the same sparsity pattern across the entire covariate space.
    Decision trees, however, follow a divide and conquer strategy and can adapt to the sparsity, anisotropy, and other structure on each cell of a partition independently of all other cells.
    Such behavior is more accurately captured by demonstrating minimax adaptation to PSHAB spaces.
\end{remark}

\begin{remark}[Comparisons with related definitions]
    Our definition is similar to, but generalizes, two definitions occurring in recent work analyzing posterior contraction rates for Bayesian trees.
    In comparison to \citet{liu2024spatial}'s construction of what they call ``region-wise'' anisotropic Besov spaces, PSHAB adds additional sparsity constraints on each piece.
    In comparison to \citet{jeong2023art}'s construction of sparse piecewise heterogeneous anisotropic H{\"o}lder spaces, PSHAB relaxes the H{\"o}lder condition and allows the sparsity pattern to vary across pieces.
    Furthermore, in comparison to both definitions, PSHAB allows heterogeneity in the Besov norm constraint on each piece.
\end{remark}

\section{Approximation bounds over PSHAB spaces}
\label{section:approximation}

In Section~\ref{section:oracle-ineq}, our oracle inequalities established that the generalization error of ERM trees is fundamentally constrained by the tree approximation error, $\Excess_{\operatorname{reg},L}$ and $\Excess_{\operatorname{cls},L}$. 
Having introduced the PSHAB space in Section~\ref{section:function-space-PSHAB} as a natural model for spatially heterogeneous and anisotropic data, our next step is to quantify this approximation error for target functions belonging to this class.
To make the statements of the results in the remainder of this paper more concise, we first enumerate some assumptions and conditions for use later.

\begin{assumption}[Bounded density]
\label{assum:bound-density}
The covariate distribution $\mu_\bX$ is absolutely continuous with respect to Lebesgue measure with density $p_\bX$.
Furthermore, one of the following two conditions hold:
\begin{enumerate}
    \item[(i)] There exist a constant $ c_{\max} > 0$ such that $p_\bX(\bx) \leq c_{\max}$ for all $\bx \in [0,1]^d$.
    \item[(ii)] There exist constants $c_{\min}, c_{\max} > 0$ such that $c_{\min} \leq p_\bX(\bx) \leq c_{\max}$ for all $\bx \in [0,1]^d$.
\end{enumerate}
\end{assumption}

\begin{assumption}[PSHAB parameter regularity]
\label{assum:PSHAB_parameters}
The PSHAB space $\PSHABn$ is specified by parameters
$\mathscr{S}\subseteq\widetilde{\mathscr{S}}$,
$\mathscr{A}\subseteq\widetilde{\mathscr{A}}$,
$B\in\mathbb{N}$,
$\boldsymbol{\Lambda}=(\Lambda_1,\ldots,\Lambda_B)\in\mathbb{R}_+^B$,
and a tree-based partition $\partition_*=\{G_b\}_{b=1}^B$.
We further define the following quantities:
\begin{equation}
\begin{split}
    s
    &\coloneq
    s(\mathscr{S})
    \;=\;
    \sup\bigl\{|S_b| : (S_1,\ldots,S_B)\in \mathscr{S},\; b\in[B]\bigr\}, \\[0.4em]
    \alpha_{\min}
    &\coloneq
    \alpha_{\min}(\mathscr{A})
    \;=\;
    \inf\bigl\{\underline{\alpha_b} :
    (\balpha_1,\ldots,\balpha_B)\in \mathscr{A},\; b\in[B]\bigr\}, \\[0.4em]
    \bar{\alpha}
    &\coloneq
    \bar{\alpha}(\mathscr{S},\mathscr{A})
    \;=\;
    \inf\bigl\{H(S_b,\balpha_b) :
    (S_1,\ldots,S_B)\in \mathscr{S},\;
    (\balpha_1,\ldots,\balpha_B)\in \mathscr{A},\;
    b\in[B]\bigr\}.
    \end{split}
\end{equation}
Here, for any index set $S\subseteq[d]$ and any smoothness vector
$\balpha=(\alpha_1,\ldots,\alpha_d)$, we define
$\underline{\alpha}\coloneq \min_{k\in[d]} \alpha_k$,
and the harmonic mean of $\balpha$ over $S$ by
$H(S,\balpha)
\coloneq
((1/|S|)\sum_{k\in S}(1/\alpha_k))^{-1}$.
In addition, we assume $0 < p, q \leq \infty$, with the pair $(p,q)$ further satisfying one of the following conditions:
\begin{enumerate}
\item[(i)] $p > (\bar\alpha/s+1/2)^{-1}$;
\item[(i')] $p > (\bar\alpha/s+1/2)^{-1}$, with the additional restriction that $q \leq p$ if $1 <p < 2$;
\item[(ii)]  $p > (\bar\alpha/s+1)^{-1}$.
\end{enumerate}
\end{assumption}

\begin{definition}[Auxiliary quantities]\label{def:quantities}
Under the parameters specified in Assumption \ref{assum:PSHAB_parameters}, we define the following quantities.
\begin{equation}
\begin{split}
\boldsymbol{v}_1
&\coloneq
\boldsymbol{v}_1(p,\bLambda,\partition_*)
\;=\;
\bigl(\Lambda_1^2 |G_1|^{1-2/p},\ldots,\Lambda_B^2 |G_B|^{1-2/p}\bigr), \\[0.3em]
\boldsymbol{v}_2
&\coloneq
\boldsymbol{v}_2(p,\bLambda,\partition_*)
\;=\;
\bigl(\Lambda_1 |G_1|^{1-1/p},\ldots,\Lambda_B |G_B|^{1-1/p}\bigr), \\[0.3em]
\boldsymbol{v}_3
&\coloneq
\boldsymbol{v}_3(p,\bLambda,\partition_*)
\;=\;
\bigl(\Lambda_1 |G_1|^{-1/p},\ldots,\Lambda_B |G_B|^{-1/p}\bigr). \\[0.4em]
\end{split}
\end{equation}
\end{definition}

\begin{remark}
In Assumption \ref{assum:PSHAB_parameters}, two different ranges of the parameter $p$ are considered. Specifically, Assumption \ref{assum:PSHAB_parameters}(i) and (i') corresponds to the regression setting, whereas Assumption \ref{assum:PSHAB_parameters}(ii) pertains to the classification setting. Accordingly, the quantity $\boldsymbol{v}_1$ defined in Definition \ref{def:quantities} is used in the analysis of regression, while $\boldsymbol{v}_2$ and $\boldsymbol{v}_3$ are used in the analysis of classification.
\end{remark}

We now state the approximation results.
The first theorem establishes the rate for regression trees, while the second theorem establishes the approximation rate for classification trees, accounting for the Tsybakov margin parameter $\rho$.

\begin{theorem}[Regression approximation]\label{thm:approx-PSHAB-reg}
    In the setting of Theorem \ref{thm:regression}, suppose that $f^*\in\PSHABn$, and grant Assumption \ref{assum:bound-density}(i) and Assumption \ref{assum:PSHAB_parameters}(i'). 
    Then if $\leaves \geq 2B$, the approximation error satisfies
    \begin{equation}\label{eq:approx-reg-conclude-1}
    \Excess_{\operatorname{reg},\leaves}
    \lesssim_{s,\alpha_{\min},\bar\alpha,c_{\max}}\,
    \norm{\boldsymbol{v}_1}_{\frac{s}{s+2\bar\alpha}}\,\leaves^{-2\bar\alpha/s}.
    \end{equation}
\end{theorem}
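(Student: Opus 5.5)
Since $\Excess_{\operatorname{reg},\leaves}=\inf_{f\in\partfunc_\leaves}\norm{f-f^*}_2^2$ and $p_\bX\le c_{\max}$, it suffices to construct an explicit tree function $f$ with at most $\leaves$ leaves and bound $c_{\max}\int_{[0,1]^d}(f-f^*)^2\,d\bx$. The construction refines $\partition_*=\{G_b\}$, which is already tree-based and uses $B$ leaves. Inside each piece $G_b$ we build a subtree with $L_b$ leaves, where $\sum_b L_b\le \leaves$. Each subtree splits only along the coordinates in $S_b$ and is anisotropic, with the number of cuts in direction $k$ proportional to $L_b^{\bar\alpha_b/(|S_b|\alpha_k)}$, where $\bar\alpha_b=H(S_b,\balpha_b)$. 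The leaf values are local $L^2$ projections, i.e.\ cell averages. The condition $\leaves\ge 2B$ guarantees that every $L_b\ge 1$ can be chosen, with rounding losses absorbed into constants.

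\textbf{Step 1 (one-piece bound).} On a cell $G=G_b$ with $f^*|_G\in (B^{\balpha}_{p,q}(G,\Lambda))_S$, we show that for every $m\ge1$ there is a tree partition of $G$ with at most $m$ leaves such that the piecewise-constant projection $f_m$ satisfies
\[
\int_G (f^*-f_m)^2\,d\bx\;\lesssim\;\Lambda^2|G|^{1-2/p}\,m^{-2\bar\alpha/s}.
\]
The factor $|G|^{1-2/p}$ comes from rescaling $G$ to the unit cube: the $L^p$-based Besov norm rescales anisotropically, and a careful accounting of side lengths via the anisotropic modulus should give exactly this factor.

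On the unit cube we cannot use a uniform anisotropic grid when $p<2$, since that only gives the linear rate. Instead we use adaptive (greedy) refinement in the spirit of DeVore--Yu and Leisner. Start from an anisotropic dyadic hierarchy in which each ``level'' halves side $k$ roughly $\bar\alpha/\alpha_k$ times, which is itself realizable by axis splits, hence by a tree. Refine a cell $Q$ whenever its local error $e(Q)=\inf_c\norm{f^*-c}_{L^2(Q)}$ exceeds a threshold $\epsilon$.

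The local error is controlled by a local Whitney/Jackson estimate,
\[
e(Q)\lesssim |Q|^{1/2-1/p}\sum_{k\in S}\omega_{k,p}\bigl(f^*,\ell_k(Q),Q\bigr),
\]
using embedding from $L^p$ to $L^2$ at scale $|Q|$, which requires $\bar\alpha/s>1/p-1/2$; this is Assumption~\ref{assum:PSHAB_parameters}(i).

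The number of cells at each level is then counted by summing $\omega^p$ over disjoint cells. For $p\ge1$, and more delicately for $1<p<2$, this summation uses the $q\le p$ restriction from (i') to bound $\sum_Q \omega(f,\ell(Q),Q)^p$ by the Besov seminorm. The resulting count gives $m\asymp N(\epsilon)$ and error $\lesssim \Lambda^2 m^{-2\bar\alpha/s}$, with constants depending on $s,\alpha_{\min},\bar\alpha$; $\alpha_{\min}$ enters through the rounding of the anisotropic split counts. Because the hierarchy splits only along $S$, the rate depends on $s$ rather than $d$.

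\textbf{Step 2 (allocation across pieces).} Summing Step 1 over the pieces gives error $\lesssim\sum_b v_{1,b}L_b^{-2\bar\alpha/s}$, using $\bar\alpha_b\ge\bar\alpha$ and $|S_b|\le s$ with $L_b\ge1$. Minimizing subject to $\sum_b L_b\le \leaves$ by Lagrange multipliers yields $L_b\propto v_{1,b}^{s/(s+2\bar\alpha)}$, so the error is of order $\bigl(\sum_b v_{1,b}^{s/(s+2\bar\alpha)}\bigr)^{(s+2\bar\alpha)/s}\leaves^{-2\bar\alpha/s}=\norm{\boldsymbol v_1}_{s/(s+2\bar\alpha)}\leaves^{-2\bar\alpha/s}$. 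To handle integer rounding, we set $L_b=1+\lfloor \cdot\rfloor$ and spend at most $B\le \leaves/2$ extra leaves; the resulting factor-2 loss is harmless.

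\textbf{Main obstacle.} The hardest part is Step 1 in the regime $p<2$: proving the adaptive tree approximation rate $m^{-\bar\alpha/s}$ in $L^2$ for anisotropic Besov functions with rescaling to general cells. This requires the local anisotropic Whitney inequality and the cell-count argument summing local moduli, where $q\le p$ is needed for $1<p<2$ and the $p<1$ quasi-norm case must be handled separately. The approach I would try first is to transfer Leisner's nonlinear anisotropic approximation results for $B^{\balpha}_{p,q}$, checking that his anisotropic dyadic cells arise from coordinate splits. The case $p\ge2$ is easy by comparison: a uniform anisotropic grid together with Hölder's inequality suffices.
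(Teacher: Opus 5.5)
Your global architecture matches the paper's, but Step~1 has a genuine gap at the boundary smoothness $\alpha_k=1$. The statement allows this case ($\balpha_b\in(0,1]^d$), and the paper treats it separately. There $r_k=\lfloor\alpha_k\rfloor+1=2$, so the directional seminorm is built from second differences and vanishes on every function affine in $x_k$, whereas piecewise constants reproduce only constants. If $\omega_{k,p}$ in your local Whitney estimate $e(Q)\lesssim |Q|^{1/2-1/p}\sum_{k\in S}\omega_{k,p}(f^*,\ell_k(Q),Q)$ means $\omega^{[r_k]}_{k,p}$, the estimate is false: for $f^*(\bx)=x_k$ the right side is $0$ while $e(Q)>0$. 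Switching to first-order moduli does not help, because the seminorm does not control them. From $\omega^{[2]}_{k,p}(f,t)\lesssim t$, Marchaud's inequality only gives $\omega^{[1]}_{k,p}(f,h)\lesssim h\log(1/h)$. This is sharp: for $p=q=\infty$ the condition is the Zygmund condition, and $x\log x$ satisfies it. The same objection applies to your claim that a uniform anisotropic grid plus H\"older settles $p\ge 2$. That bound controls the $L^2$ error by $\omega^{[1]}$ in $L^p$ and so inherits the logarithm. As written, your cell-counting argument covers only $\balpha_b\in(0,1)^d$.

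Apart from this, the rest of your plan is the paper's proof. This includes the per-piece bound $\Lambda_b^2|G_b|^{1-2/p}L_b^{-2\bar\alpha/s}$ (via $\bar\alpha_b/|S_b|\ge\bar\alpha/s$ and $L_b\ge 1$), the Lagrange allocation $L_b\propto v_b^{s/(s+2\bar\alpha)}$, and the rounding via $L\ge 2B$. The one difference is that the paper does not run a greedy refinement. It pulls each piece back to $[0,1]^s$ with the affine map of Lemma~\ref{lemma:affine-map}; the Besov norm grows by at most $|G_b|^{-1/p}$ since $\ell_j(G_b)^{\alpha_j}\le 1$, and the $L^2$ error rescales by $|G_b|^{1/2}$. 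It then cites Akakpo's Corollary~1 (Lemma~\ref{lemma:approx-aniso-besov-piecewise}), which already gives tree-structured anisotropic dyadic partitions for $\balpha\in(0,1)^d$ under exactly the $(p,q)$ conditions in (i'). The boundary case is reached through Corollary~\ref{cor:embedding}, $B^{\balpha}_{p-\epsilon_2,q}\hookrightarrow B^{(1-\epsilon_1)\balpha}_{p,q}$. Akakpo's result is applied at smoothness $(1-\epsilon_1)\balpha\in(0,1)^d$, and the factor $L^{\epsilon_1\bar\alpha/s}$ is absorbed by taking $\epsilon_1\lesssim 1/\log n$ with $L\le n$.

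To close your gap you need such a reduction, or an explicit restriction to $\balpha_b\in(0,1)^d$. If you adopt the reduction, check that the embedding and approximation constants stay bounded as $\epsilon_1\to 0$, since that is exactly where the logarithm above can re-enter.

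A smaller correction: for $p<2$ your local estimate must use the local Besov seminorm over all scales up to $\ell_k(Q)$, not the modulus at the single scale $\ell_k(Q)$. A spike of width $\delta\ll\ell(Q)$ inside $Q$ violates the single-scale version.
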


\begin{theorem}[Classification approximation]\label{thm:approx-PSHAB-cls}
    In the setting of Theorem~\ref{thm:classification}, suppose $\eta\in\PSHABn$, and grant Assumption~\ref{assum:bound-density}(i) and Assumption~\ref{assum:PSHAB_parameters}(ii). 
    Then if $\leaves \geq 2B$, the following statements hold:
    \begin{enumerate}
        \item[(i)] If $\rho=0$, the approximation error satisfies
        \begin{equation}\label{eq:approx-cls-conclude-1}
        \Excess_{\operatorname{cls},\leaves}
        \lesssim_{s,\alpha_{\min},\bar\alpha,c_{\max}}
        \norm{\boldsymbol{v}_2}_{\frac{s}{s+\bar\alpha}}
        \,\leaves^{-\bar\alpha/s}.
        \end{equation}
        \item[(ii)] If $\rho> 0$ and we further assume $s/\bar\alpha<p\leq\infty$ and $0<q\leq p$, the approximation error satisfies
        \begin{equation}\label{eq:approx-cls-conclude-2}
        \Excess_{\operatorname{cls},\leaves}
        \lesssim_{s,\alpha_{\min},\bar\alpha,M,\rho,c_{\max}}
        \norm{\boldsymbol{v}_3}_{\frac{s}{\bar\alpha}}^{\rho+1}
        \,\leaves^{-(\rho+1)\bar\alpha/s}.
        \end{equation}
    \end{enumerate}
\end{theorem}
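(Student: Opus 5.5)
We prove Theorem~\ref{thm:approx-PSHAB-cls} by reducing classification approximation to an $L^1$-type (for $\rho=0$) or $L^\infty$-type (for $\rho>0$) approximation of $\eta$ by piecewise constants, and then constructing an explicit tree. The basic reduction is this. For any tree partition $\partition$ and any piecewise-constant $g$ on $\partition$, set $f=\1\{g\ge 1/2\}$. Then $f\in\partfunc_{|\partition|}$, and since $\excess_{\operatorname{cls}}(f)=\E\{|2\eta-1|\1\{f\neq f^*\}\}$ and $f\neq f^*$ forces $|\eta-1/2|\le|\eta-g|$, we get
\[
\excess_{\operatorname{cls}}(f)\le 2\E\{|\eta-g|\,\1\{|\eta(\bX)-1/2|\le|\eta(\bX)-g(\bX)|\}\}.
\]
For $\rho=0$ we bound this by $2\norm{\eta-g}_{L^1(\mu)}\le 2c_{\max}\norm{\eta-g}_{L^1}$ using Assumption~\ref{assum:bound-density}(i). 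For $\rho>0$ we use the standard margin argument: if $\norm{\eta-g}_\infty\le t$, the excess risk is at most $2t\,\P\{|\eta-1/2|\le t\}\le 2Mt^{1+\rho}$ by Assumption~\ref{assum:tsybakov}. In both cases it remains to build a tree with at most $L$ leaves and a piecewise-constant $g$ with small $L^1$ or sup-norm error.

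The construction follows the regression proof strategy of Theorem~\ref{thm:approx-PSHAB-reg}, which we take as the template. First split $[0,1]^d$ into the $B$ cells of $\partition_*$; this uses $B$ leaves because $\partition_*$ is tree-based. Then allot a budget $L_b\ge 1$ to each $G_b$, with $\sum_b L_b\le L$, which is possible since $L\ge 2B$. Inside $G_b$, where $\eta$ depends only on $\bx_{S_b}$ and lies in the anisotropic Besov ball of radius $\Lambda_b$, we run an adaptive anisotropic splitting. Each cell is split along coordinates $k\in S_b$ with side lengths scaled as $h^{\bar\alpha_b/\alpha_k}$, so the anisotropic moduli balance. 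We then refine greedily where the local Besov energy is largest, in the spirit of nonlinear (free-knot) approximation à la DeVore--Leisner. Rescaling $G_b$ to the unit cube produces the factor $|G_b|^{1-1/p}$ in the $L^1$ case and $|G_b|^{-1/p}$ in the sup-norm case. The local approximation estimates we aim for are:
\begin{itemize}
\item in the $L^1$ case, $\norm{\eta-g}_{L^1(G_b)}\lesssim \Lambda_b|G_b|^{1-1/p}L_b^{-\bar\alpha/s}$, which requires $p>(\bar\alpha/s+1)^{-1}$, i.e.\ Assumption~\ref{assum:PSHAB_parameters}(ii);
\item in the sup-norm case, $\norm{\eta-g}_{L^\infty(G_b)}\lesssim\Lambda_b|G_b|^{-1/p}L_b^{-\bar\alpha/s}$, which requires the embedding into continuous functions, $p>s/\bar\alpha$ with $q\le p$.
\end{itemize}

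To allocate the budget across cells, we minimise $\sum_b a_bL_b^{-\gamma}$ subject to $\sum_b L_b\le L/2$, with $\gamma=\bar\alpha/s$. A Hölder/Lagrange choice $L_b\propto a_b^{1/(1+\gamma)}$, rounded up (which costs at most $B\le L/2$ extra leaves), gives the value $\norm{(a_b)}_{1/(1+\gamma)}L^{-\gamma}$. With $a_b$ the entries of $\boldsymbol{v}_2$, this is exactly \eqref{eq:approx-cls-conclude-1}. In the sup-norm case we instead need $\max_b a_bL_b^{-\gamma}$, where $a_b$ are the entries of $\boldsymbol{v}_3$. Choosing $L_b\propto a_b^{1/\gamma}$ gives $t\lesssim\norm{\boldsymbol{v}_3}_{1/\gamma}L^{-\gamma}$, and raising to the power $1+\rho$ yields \eqref{eq:approx-cls-conclude-2}.

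The main obstacle is the local sup-norm estimate in case (ii) for $p<\infty$. A uniform anisotropic grid only gives error of order $\Lambda_b|G_b|^{-1/p}L_b^{-\bar\alpha/s+1/p}$-type losses. Recovering the full rate $L_b^{-\bar\alpha/s}$ requires an adaptive greedy refinement: we split any cell whose local seminorm weighted by its volume exceeds a threshold, and control the number of cells through $\ell^{p}$-summability of the local moduli. This is where the condition $q\le p$ enters, since it allows the Besov seminorm to be superadditive over disjoint cells. I would first adapt the regression-case adaptive refinement lemma, replacing the $L^2$ local error with the local $L^\infty$ error bounded via the anisotropic Besov-to-sup embedding on each cell.
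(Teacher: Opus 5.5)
Your plan matches the paper's proof step for step. The reduction is the same: bound $\excess_{\operatorname{cls}}(\1\{g\ge 1/2\})$ by $2c_{\max}\norm{\eta-g}_{L^1}$ when $\rho=0$, and by $C_{M,\rho}\,\epsilon^{1+\rho}$ when $\norm{\eta-g}_\infty\le\epsilon$. The per-piece bounds are the same, $\Lambda_b|G_b|^{1/m-1/p}L_b^{-\bar\alpha/s}$ with $m=1$ and $m=\infty$, obtained by affine rescaling of $G_b$. The allocations are the same Lagrange choices $L_b\propto v_b^{s/(s+\bar\alpha)}$ and $L_b\propto (v_b')^{s/\bar\alpha}$, with $L\ge 2B$ absorbing the rounding.

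Where you diverge is only in how you obtain the local bounds. The paper does not build a new sup-norm refinement. Lemma~\ref{lemma:approx-aniso-besov-piecewise-1} already holds for every $1\le m\le\infty$, and it rests on Akakpo's adaptive dyadic selection result (Lemma~\ref{lemma:approx-aniso-besov-piecewise}). That result's hypotheses, $\bar\alpha/s>(1/p-1/m)_+$ with $q\le p$ if $1<p<m$, reduce to Assumption~\ref{assum:PSHAB_parameters}(ii) for $m=1$, and to $p>s/\bar\alpha$, $q\le p$ for $m=\infty$. So the ``main obstacle'' you name is handled by taking $m=\infty$ in the same lemma the regression proof uses with $m=2$. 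Your threshold-and-count sketch is essentially what sits inside that citation. One caveat: superadditivity of local seminorms holds only up to constants, via averaged moduli, not for the sup-modulus seminorm as defined in the paper.

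The genuine gap is the boundary smoothness $\alpha_k=1$. The statement allows it, since $\balpha_b\in(0,1]^d$, but your sketch never addresses it.
\begin{itemize}
\item There $r_k=\lfloor\alpha_k\rfloor+1=2$, so the directional seminorm sees only second differences.
\item A function linear in $x_k$ therefore has zero seminorm in that direction, yet its best constant on a cell errs by a multiple of the side length.
\item Hence your step ``local seminorm times a power of cell size controls the local error of the best constant'' is false, and the refinement-and-counting argument breaks.
\end{itemize}

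The paper treats this case separately in the proof of Lemma~\ref{cor:approx-aniso-besov-piecewise}. It embeds into smoothness $(1-\epsilon_1)\balpha\in(0,1)^d$ via Corollary~\ref{cor:embedding}, and takes $\epsilon_1\lesssim 1/\log n$ together with $L\le n$. This device only works if the constants of the $\balpha'<1$ bound stay bounded as $\balpha'\uparrow\balpha$.

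In the sup-norm case the target estimate itself fails at this boundary. Take $d=s=1$ and $p=q=\infty$.
\begin{itemize}
\item If $\inf_{\tilde f\in\partfunc_L}\norm{\tilde f-f}_\infty\le C/L$ for all $L$, then for any points $x_0<\dots<x_k$ the intervals of the $L$-piece approximant give $\sum_i|f(x_i)-f(x_{i-1})|\le 2C(L+k)/L$.
\item Letting $L\to\infty$ shows $f$ has bounded variation (Kahane).
\item But the Zygmund ball $B^1_{\infty,\infty}$ contains continuous functions that are not of bounded variation, for instance $\sum_k 2^{-k}\cos(2^k\pi x)$.
\end{itemize}
To close the gap, you must either exclude $\alpha_k=1$, restrict $q$ so that $B^1_{p,q}$ embeds into a first-order (Sobolev/BV-type) class, or accept an extra logarithmic factor in $L$.
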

\begin{remark}[Comparing \eqref{eq:approx-cls-conclude-1} and \eqref{eq:approx-cls-conclude-2} when $\rho=0$]
\label{rmk:rho=0}
Since $\sum_{b=1}^B |G_b| = 1$, Hölder's inequality yields $\|\boldsymbol{v}_2\|_{\frac{s}{s+\bar\alpha}} 
\le 
\|\boldsymbol{v}_3\|_{\frac{s}{\bar\alpha}}.$
Consequently, the right-hand side of \eqref{eq:approx-cls-conclude-1} is no larger than that of \eqref{eq:approx-cls-conclude-2}. 
At first glance, this may appear counterintuitive, as \eqref{eq:approx-cls-conclude-1} applies to a broader function class than \eqref{eq:approx-cls-conclude-2}. The explanation is that the upper bound for the PSHAB class derived via Tsybakov's noise condition \ref{assum:tsybakov} is not sharp in the degenerate case $\rho=0$.
\end{remark}

The formal proofs of these approximation guarantees are deferred to Appendix~\ref{appendix:proof-apprrox-PSHAB}. However, we briefly outline the main technical innovations required to establish them. 
First, while it is known that dyadic piecewise constant functions enjoy the optimal approximation rates \citep{akakpo2012adaptation} under the strictly fractional smoothness regime ($\alpha_j < 1$), we need to extend these results to the boundary case ($\alpha_j = 1$) via Besov space embedding theory.
Second, to accommodate the piecewise nature of the PSHAB space, we analyze the local approximation error on each structural piece independently.
Because $\Lambda_b$ and $|G_b|$ potentially vary across the $B$ pieces, the global approximation error cannot be bounded by a simple uniform grid. 
Instead, we solve for the optimal allocation of tree leaves to the $B$ pieces via constrained optimization. 


\section{Ideal spatial adaptation}
\label{section:ideal_spatial_adaption}

We are now ready to establish the main statistical guarantees of this paper. By combining the data-driven estimation bounds provided by our oracle inequalities (Section~\ref{section:oracle-ineq}) with the structural approximation bounds over PSHAB spaces (Section~\ref{section:approximation}), we derive explicit generalization upper bounds for our ERM tree estimators. 
Crucially, we will show that these estimators automatically adapt to the underlying sparsity, anisotropy, and spatial heterogeneity of the target function, achieving minimax optimal rates (up to logarithmic factors) without requiring prior knowledge of the PSHAB parameters.
The proofs of the results in this section are deferred to Appendix \ref{section:optimal_rates_proof}.

\subsection{Spatial adaptation for ERM regression trees}
\label{section:main results-PSHAB-reg}

\begin{theorem}[Upper bound on PSHAB for ERM regression trees]\label{thm:regression-exact-rate-PSHAB}
In the setting of Theorem~\ref{thm:regression}, suppose $f^*\in\PSHABn$, and grant Assumptions~\ref{assum:bound-density}(i) and~\ref{assum:PSHAB_parameters}(i'). 
 Let $n$ be sufficiently large, in that $n\geq N_1$ as defined in Remark~\ref{rem:min_sample_size_reg}.
Let $u\geq 0$ and $\lambda>0$ be such that 
\[C_1(M+K)^2(\log(nd)+u)/n\leq \lambda \leq C_2(M+K)^2(\log(nd)+u)/n\] for big enough positive constants $C_2>C_1 > 0$.  
Then, with probability at least $1-e^{-u}$,  we have
\begin{equation}\label{eq:reg-conclude-1}
\excess_{\operatorname{reg}}(\permit)
\lesssim_{s,\alpha_{\min},\bar\alpha,p,c_{\max}}\;
\norm{\boldsymbol{v}_1}_{\frac{s}{s+2\bar\alpha}}^{\frac{s}{s+2\bar\alpha}}\;
\paren*{\frac{(M+K)^2(\log(nd)+u)}{n}}^{\frac{2\bar\alpha}{s+2\bar\alpha}}.
\end{equation}

\end{theorem}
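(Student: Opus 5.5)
The plan is to combine the penalized oracle inequality \eqref{eq:thm-statement:regression-2} of Theorem~\ref{thm:regression} with the approximation bound of Theorem~\ref{thm:approx-PSHAB-reg}, and then optimize over $L$.

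\textbf{Step 1: oracle inequality.} Fix $\delta=1/2$. With $C_1$ chosen at least as large as $2C$ (the constant in Theorem~\ref{thm:regression}), the lower bound on $\lambda$ meets the requirement $\lambda\ge C(M+K)(\log(nd)+u)/(\delta n)$. Here I use $(M+K)^2\ge (M+K)$, or, if $M+K<1$, rescale the constants, which is harmless since $M$ is treated as fixed. On an event of probability at least $1-e^{-u}$ we then get
\[
\excess_{\operatorname{reg}}(\permit)\le 3\min_{L\in[n]}\braces*{\Excess_{\operatorname{reg},L}+2\lambda L}.
\]

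\textbf{Step 2: plug in the approximation bound.} Write $V\coloneqq\norm{\boldsymbol{v}_1}_{s/(s+2\bar\alpha)}$. Restricting the minimum to $L\ge 2B$ and applying Theorem~\ref{thm:approx-PSHAB-reg} gives
\[
\excess_{\operatorname{reg}}(\permit)\lesssim \min_{2B\le L\le n}\braces*{V L^{-2\bar\alpha/s}+\lambda L}.
\]
Now set $\epsilon\coloneqq (M+K)^2(\log(nd)+u)/n$, so that $\lambda\asymp\epsilon$ by the two-sided assumption on $\lambda$. The unconstrained balancing choice is
\[
L^\star\asymp (V/\epsilon)^{s/(s+2\bar\alpha)}.
\]
At $L^\star$ both terms equal $V^{s/(s+2\bar\alpha)}\epsilon^{2\bar\alpha/(s+2\bar\alpha)}$, which is exactly the right-hand side of \eqref{eq:reg-conclude-1}. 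Since $L$ must be an integer, I take $L=\lceil L^\star\rceil$. Rounding costs only a constant factor because $L^\star\ge 1$ on the admissible range, so $\lceil L^\star\rceil\le 2L^\star$.

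\textbf{Step 3: admissibility of $L$ (the main obstacle).} I need $2B\le\lceil L^\star\rceil\le n$, and I would verify both inequalities rather than assume them.

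The upper constraint $L^\star\le n$ follows for large $n$, because $\epsilon\gtrsim \log n/n$ makes $L^\star$ grow like $n^{s/(s+2\bar\alpha)}$, which is $o(n)$.

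The lower constraint $L^\star\ge 2B$ requires $(V/\epsilon)^{s/(s+2\bar\alpha)}\gtrsim B$. Since $\epsilon\to0$, this also holds once $n$ is large enough relative to $B$, $V$, $M$ and $K$. This is presumably what the threshold $N_1$ of Remark~\ref{rem:min_sample_size_reg} encodes, so I would define $N_1$ as the smallest $n$ for which both constraints hold and $\epsilon$ is below the relevant level.

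As a fallback, if the remark's $N_1$ is not strong enough to force $L^\star\ge 2B$, I would choose $L=2B$ directly. This gives a bound of order $V(2B)^{-2\bar\alpha/s}+\epsilon B$, and I would then check that it is dominated by the target rate using $B\lesssim L^\star$. The delicate point is exactly this comparison between $B$ and the data-dependent scale $(V/\epsilon)^{s/(s+2\bar\alpha)}$.

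Finally, I would collect the constants. These depend on $s,\alpha_{\min},\bar\alpha,c_{\max}$ through Theorem~\ref{thm:approx-PSHAB-reg}, and on $p$ through whatever the approximation result requires under Assumption~\ref{assum:PSHAB_parameters}(i'). This yields \eqref{eq:reg-conclude-1}.
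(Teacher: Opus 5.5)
Your proposal is correct and follows the paper's proof: apply \eqref{eq:thm-statement:regression-2} at $\delta=1/2$, insert Theorem~\ref{thm:approx-PSHAB-reg} for $2B\le L\le n$, and balance at $L\asymp (V/\epsilon)^{s/(s+2\bar\alpha)}$; the only cosmetic difference is that the paper rounds with $\lfloor C L_1\rfloor$ where you use a ceiling. Your two admissibility conditions, $L^\star\le n$ and $L^\star\ge 2B$, are exactly the two terms in the definition of $N_1$ in \eqref{eq:N_1}, so the fallback choice $L=2B$ is never needed.
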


\begin{remark}[Minimum sample size constraints of Theorem \ref{thm:regression-exact-rate-PSHAB}]
    \label{rem:min_sample_size_reg}
    In the setting of Theorem \ref{thm:regression-exact-rate-PSHAB}, define $N_1$ to be the smallest integer $N$ such that for all $n \geq N$,
\begin{equation}\label{eq:N_1}
        n\geq  
        C\max\braces*{\paren*{\frac{\norm{\boldsymbol{v}_1}_{\frac{s}{s+2\bar\alpha}}}{(M+K)^2(\log(nd)+u)}}^{\frac{s}{2\bar\alpha}}
       ,\, \frac{(M+K)^2(\log(nd)+u)}{\norm{\boldsymbol{v}_1}_{\frac{s}{s+2\bar\alpha}}}\,B^{\frac{s+2\bar\alpha}{s}}}.
\end{equation}
\end{remark}

Since \eqref{eq:reg-conclude-1} holds for arbitrary $\partition_*$ and $\bLambda$, sharper or more explainable upper bounds can be obtained under suitable regularity conditions on the partition $\partition_*$ and the Besov norm vector $\bLambda$. We provide two examples illustrating the application of \eqref{eq:reg-conclude-1}.

In Example \ref{exam:1}, we apply H\"older's inequality to show that the generalization upper bound can be explicitly controlled by the norm of $\bLambda$ and the number of cells $B$.
\begin{example}[Refer Eq.\eqref{eq:reg-conclude-1} ]\label{exam:1}
Assume $p\geq 2$. H{\"o}lder's inequality yields
$\norm{\boldsymbol{v}_1}_{\frac{s}{s+2\bar\alpha}}
\leq
\norm{\bLambda}_{\frac{ps}{s+p\bar\alpha}}^2$ for any $\{G_b\}_{b=1}^B$.
Consequently,
\begin{equation}\label{eq:reg-conclude-2}
\excess_{\operatorname{reg}}(\permit)
\lesssim_{s,\alpha_{\min},\bar\alpha,p,c_{\max}}\,
\norm{\bLambda}_{\frac{ps}{s+p\bar\alpha}}^{\frac{2s}{s+2\bar\alpha}}\,
\paren*{\frac{(M+K)^2(\log(nd)+u)}{n}}^{\frac{2\bar\alpha}{s+2\bar\alpha}}.
\end{equation}
Moreover, by Jensen's inequality
$\norm{\bLambda}_{\frac{ps}{s+p\bar\alpha}}^{\frac{2s}{s+2\bar\alpha}}
\leq
\norm{\boldsymbol{\besov}}_{\infty}^{\frac{2s}{s+2\bar\alpha}}
B^{1+\paren*{\frac{2}{p}-1}\frac{s}{s+2\bar\alpha}}$,
we obtain the explicit bound
\begin{equation}\label{eq:reg-conclude-3}
\excess_{\operatorname{reg}}(\permit)
\lesssim_{s,\alpha_{\min},\bar\alpha,p,c_{\max}}\,
\norm{\boldsymbol{\besov}}_{\infty}^{\frac{2s}{s+2\bar\alpha}}
B^{1+\paren*{\frac{2}{p}-1}\frac{s}{s+2\bar\alpha}}\,
\paren*{\frac{(M+K)^2(\log(nd)+u)}{n}}^{\frac{2\bar\alpha}{s+2\bar\alpha}}.
\end{equation}
\end{example}

\begin{remark}[Dependence on $p$ and $q$]\label{rmk:dependence-p}
The Besov space regularity parameters $p$ and $q$ do not affect the upper bound's rate in $n$.
On the other hand, $p$ affects the upper bound's dependence on the size and heterogeneity of the partition via the definition of $\mathbf{v}_1$, the norm of $\bLambda$ in \eqref{eq:reg-conclude-2} or the exponent of $B$ in \eqref{eq:reg-conclude-3}.
\end{remark}

In Example \ref{exam:2}, we assume that $\Lambda_b\asymp|G_b|^{1/p}$ for all $1\leq b\leq B$. This regularity condition requires the local Besov norms on the pieces $\{G_b\}_{b\in[B]}$ to scale at the same order as $|G_b|^{1/p}$. A trivial example is the constant function $f\equiv c$, for which $\Lambda_b=\norm*{f|_{G_b}}_{B_{p,q}^{\boldsymbol{\alpha}_b}(G_b)}=c|G_b|^{1/p}.$
See Remark \ref{rmk:Plambda-1} for further discussion.

\begin{example}[Refer Eq.\eqref{eq:reg-conclude-1}]\label{exam:2}
    Let $C>1$. Suppose that
    $C^{-1}|G_b|^{1/p}\leq \Lambda_b \leq  C|G_b|^{1/p}$, $
    \forall\,1\le b\le B$. Then by H\"older's inequality, 
    $\norm{\boldsymbol{v}_1}_{\frac{s}{s+2\bar\alpha}}^{\frac{s}{s+2\bar\alpha}}
    \lesssim
    \norm{\boldsymbol{\besov}}_{p}^{\frac{2s}{s+2\bar\alpha}}B^{\frac{2\bar\alpha}{s+2\bar\alpha}}$,
    and hence
    \begin{equation}\label{eq:reg-conclude-4}
    \excess_{\operatorname{reg}}(\permit)
    \lesssim_{s,\bar\alpha,\alpha_{\min},p,c_{\max}}
    \norm{\boldsymbol{\besov}}_{p}^{\frac{2s}{s+2\bar\alpha}}
    \paren*{\frac{B(M+K)^2 (\log(nd)+u)}{n}}^{\frac{2\bar\alpha}{s+2\bar\alpha}}.
    \end{equation}
\end{example}

\begin{remark}[Understanding assumptions on $(\partition_*,\bLambda)$]\label{rmk:Plambda-1}
    If $\norm*{f|_{G_b}}_{B_{p,q}^{\boldsymbol{\alpha}_b}(G_b)} \leq \Lambda_b$ and we let $\tilde f$ be the affine extension of $f|_{G_b}$ to $[0,1]^d$, then we have $\norm{\tilde f}_{B_{p,q}^{\boldsymbol{\alpha}_b}([0,1]^d)} \leq \abs{G_b}^{-1/p}\Lambda_b$.
    The additional assumption on $\partition_*$ and $\bLambda$ in Example \ref{exam:2} can hence be interpreted as saying that the affine extensions of each component of $f$ have similar norms, therefore enforcing a type of homogeneity for the function $f$.
\end{remark}

To assess the optimality of our upper bounds \eqref{eq:reg-conclude-1}, we next establish minimax lower bounds for regression with PSHAB spaces.
\begin{definition}[Minimax risk]
    \label{def:minimax_risk}
    Consider the setting of Section \ref{section:setting} and in addition assume Gaussian noise, i.e. that $\xi \sim \mathcal{N}(0,K^2)$.
    Recall that for any function space $\mathcal{F}$, the $L^2(\mu)$-minimax risk for regression over $\mathcal{F}$ is defined as 
    \begin{equation*}
        \mathcal{M}_{\operatorname{reg,n}}(\funcclass) \coloneqq \inf_{\hat f}\sup_{f^*\in\funcclass} \E\braces*{\excess_{\operatorname{reg}}(\hat f;\data)},
    \end{equation*}
    where the expectation is taken over $\data$ and the infimum is taken over all estimators, that is measurable functions $\hat f(-;-)$ whose first input is a point $\bx \in [0,1]^d$ and whose second input is a labeled dataset $\data$ of size $n$.
\end{definition}

\begin{theorem}[Minimax lower bound under regression]
\label{thm:minimax}
In the setting of Definition \ref{def:minimax_risk}, suppose that Assumption \ref{assum:bound-density}(ii) and Assumption \ref{assum:PSHAB_parameters}(ii) hold. 
Assume there exists a constant $C>0$ such that $\ell_j(G_b)^{s/\bar\alpha}\geq C$ for all $j\in[d]$ and $b\in[B]$, and $C^{-1}|G_b|^{1/p-1/2}\leq \Lambda_b\leq  C|G_b|^{1/p-1/2}$
for all $b\in [B]$. If there exist sequences $(S_1, \ldots, S_B)$ in $\mathscr{S}$ and $(\balpha_1, \ldots, \balpha_B)$ in $\mathscr{A}$ such that $|S_b| = s$ and $H(S_b, \balpha_b) = \bar\alpha$ for all $b\in[B]$, then 
\begin{equation}\label{eq:minimax-lower-1}
    \mathcal{M}_{\operatorname{reg,n}}\paren*{\PSHABn}
    \gtrsim_{s,\bar\alpha,c_{\min},c_{\max},K}
    \norm{\boldsymbol{v}_1}_{\frac{s}{s+2\bar\alpha}}^{\frac{s}{s+2\bar\alpha}}
    \; n^{-\frac{2\bar\alpha}{s+2\bar\alpha}}.
\end{equation}
\end{theorem}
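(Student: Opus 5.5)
We prove the lower bound by a standard Fano/Varshamov--Gilbert packing argument, carried out separately on each piece $G_b$. The perturbation budget is then allocated across pieces by the same constrained optimization that appears (in dual form) in the upper bound of Theorem \ref{thm:approx-PSHAB-reg}.

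\textbf{Local packings.} Fix the sequences $(S_b)$ and $(\balpha_b)$ with $|S_b|=s$ and $H(S_b,\balpha_b)=\bar\alpha$. On each $G_b$, take a compactly supported smooth bump $\psi$ on $[0,1]^s$ and tile the $S_b$-coordinates of $G_b$ by anisotropic rectangles. A rectangle has side $h_{b,k}=\ell_k(G_b)\,m_b^{-\bar\alpha/(s\alpha_{b,k})}$ in direction $k\in S_b$, so that there are $m_b$ rectangles in $G_b$. The constraint $\ell_j(G_b)^{s/\bar\alpha}\ge C$ guarantees that each side of $G_b$ is bounded below, so these rectangles fit once $m_b\gtrsim 1$. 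For $\omega\in\{0,1\}^{m_b}$, set $f_\omega|_{G_b}=\gamma_b\sum_{k}\omega_k\psi_{b,k}$, where $\psi_{b,k}$ is the rescaled bump on the $k$-th rectangle. These functions depend only on $\bx_{S_b}$ and vanish near $\partial G_b$.

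\textbf{Besov norm and separation.} By the wavelet/atomic characterization of anisotropic Besov norms, the local norm of $f_\omega|_{G_b}$ satisfies
\[
\|f_\omega|_{G_b}\|_{B^{\balpha_b}_{p,q}}\lesssim \gamma_b\, m_b^{\bar\alpha/s}\,(m_b\cdot |G_b|/m_b)^{1/p}
=\gamma_b\, m_b^{\bar\alpha/s}\,|G_b|^{1/p},
\]
where we use the defining property $\sum_k \bar\alpha/(s\alpha_k)=1$ of the harmonic mean. Requiring this to be at most $\Lambda_b$ fixes $\gamma_b\asymp \Lambda_b|G_b|^{-1/p}m_b^{-\bar\alpha/s}$. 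Assumption \ref{assum:PSHAB_parameters}(ii) ensures the embedding and atomic bound are valid; the $q$-summation is trivial because only one scale is used. We also need $\|f_\omega\|_\infty$ bounded, which follows from the assumption $\Lambda_b\asymp|G_b|^{1/p-1/2}$ together with the sample-size regime. Each flipped bit contributes $\gamma_b^2|G_b|/m_b$ in squared $L^2(\mu)$ distance, using $c_{\min}\le p_\bX\le c_{\max}$. The KL divergence between two hypotheses per bit is $\lesssim n\gamma_b^2|G_b|/(m_bK^2)$.

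\textbf{Choice of $m_b$ and the main obstacle.} We combine the pieces into a product hypercube and apply Assouad's lemma. This gives
\[
\mathcal M\gtrsim\sum_b m_b\cdot \gamma_b^2|G_b|/m_b
\quad\text{provided}\quad n\gamma_b^2|G_b|/m_b\lesssim K^2 \text{ for every } b.
\]
Substituting $\gamma_b$, the proviso reads $n\,v_{1,b}\,m_b^{-1-2\bar\alpha/s}\lesssim1$, where $v_{1,b}=\Lambda_b^2|G_b|^{1-2/p}$. The natural choice is therefore $m_b\asymp (n v_{1,b})^{s/(s+2\bar\alpha)}$. The contribution of piece $b$ is then $v_{1,b}m_b^{-2\bar\alpha/s}\asymp v_{1,b}^{s/(s+2\bar\alpha)}n^{-2\bar\alpha/(s+2\bar\alpha)}$. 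Summing over $b$ yields exactly $\|\boldsymbol v_1\|_{s/(s+2\bar\alpha)}^{s/(s+2\bar\alpha)}n^{-2\bar\alpha/(s+2\bar\alpha)}$.

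The main obstacle is guaranteeing $m_b\ge1$ for every piece. Under the assumed $\Lambda_b\asymp|G_b|^{1/p-1/2}$ we have $v_{1,b}\asymp 1$, so $m_b\asymp n^{s/(s+2\bar\alpha)}$, which is at least $1$ for $n$ large. We must also check that $\ell_k(G_b)$ is large enough to accommodate the anisotropic grid uniformly. Finally, the Besov-norm estimate for sums of disjoint bumps must be verified at the boundary case $\alpha_k=1$. There we would rely on the $r_j=2$ difference order and direct modulus computations, rather than on wavelets.
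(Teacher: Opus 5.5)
Your argument is correct, but it does not follow the paper's route.

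\textbf{How the paper argues.} The paper never writes down explicit hypotheses. It applies the Yang--Barron type criterion of Lemma~\ref{lemma:minimax-suzuki} and proceeds in four steps:
\begin{enumerate}
\item It obtains two-sided local entropy bounds $\log\mathcal N(\varepsilon)\asymp (v_b^{-1}\varepsilon)^{-s/\bar\alpha}$ on each $G_b$, by pulling back the known anisotropic Besov entropy through the affine map $T_{G_b}$. This is where $\ell_j(G_b)^{s/\bar\alpha}\ge C$ enters.
\item It glues the local packings together with a Gilbert--Varshamov code on $\mathcal W^B$.
\item It bounds the global covering number by the product of the local ones.
\item It balances $n\varrho_n^2/K^2$ against the entropy.
\end{enumerate}
Because it uses a single radius normalized by $\min_b v_b$, the paper genuinely needs the $v_{1,b}$ to be comparable.

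\textbf{What your route buys.} Your bump hypercube plus Assouad is more elementary, and in some ways stronger.
\begin{itemize}
\item It needs only an \emph{upper} bound on the Besov norms of your hypotheses, so no entropy upper bound (hence no compactness in $L^2$) is required. Lemma~\ref{lemma:covering-aniso-besov} needs $p>(1/2+\bar\alpha/s)^{-1}$, which is more than the Assumption~\ref{assum:PSHAB_parameters}(ii) the theorem actually grants. Your construction is valid for every $0<p\le\infty$.
\item Your per-piece choice $m_b\asymp(nv_{1,b}/K^2)^{s/(s+2\bar\alpha)}$ produces $\sum_b v_{1,b}^{s/(s+2\bar\alpha)}$ directly. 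Comparability of the $v_{1,b}$ is used only to guarantee $m_b\ge 1$ for every $b$.
\item You avoid the coding and entropy-function bookkeeping. There the paper's stated constant $1-H_2(1/2)$ is actually $0$, so a smaller Hamming fraction, or the largeness of $W$, must be invoked.
\end{itemize}
The paper's route, in turn, is modular and gives two-sided entropy asymptotics as a by-product.

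\textbf{Details to tighten.}
\begin{itemize}
\item The rectangles always fit, since they scale with $\ell_k(G_b)$. The side-length hypothesis is needed instead to bound the factor $\max_k \ell_k(G_b)^{-\alpha_{b,k}}$, which your displayed Besov estimate silently drops.
\item No wavelet characterization and no Assumption~(ii) is needed. For smooth bumps, $|\Delta^{r}_{t\mathbf e_k}f_\omega|\lesssim\gamma_b\min\{(t/h_{b,k})^{r},1\}$ pointwise on $G_b$. This bounds the modulus for all $p$, including $\alpha_k=1$ with $r=2$.
\item Sup-norm control of $f_\omega$ is unnecessary. The Gaussian KL is $\tfrac{n}{2K^2}\|f-g\|_{L^2(\mu)}^2$, and the class carries no sup-norm cap.
\item The counts $m_b^{\bar\alpha/(s\alpha_{b,k})}$ must be rounded to integers, at the cost of a factor of order $2^s$.
\item For $n\lesssim K^2$, simply take $m_b=1$. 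This gives a lower bound $\gtrsim B\ge Bn^{-2\bar\alpha/(s+2\bar\alpha)}$.
\end{itemize}
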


Comparing Theorem \ref{thm:minimax} with Theorem \ref{thm:regression-exact-rate-PSHAB}, we see that for fixed choices of $\mathscr{A},\mathscr{S}, p, q,\bLambda,\partition_*$, ERM regression trees achieve the minimax rate in terms of $n$ and $\boldsymbol{v}_1$ up to logarithmic factors.

\begin{remark}[Related minimax theory]
    It is known that the minimax rate for anisotropic Besov spaces (up to log factors) is $n^{-2\bar{\alpha}/(d+2\bar{\alpha})}$ and that it can be achieved by locally adaptive kernel estimators \citep{kerkyacharian2001nonlinear}, wavelet thresholding estimators \citep{neumann2000multivariate}, and deep learning methods \citep{suzuki2021deep}.
    \citet{jeong2023art} derived the minimax rate for sparse piecewise heterogeneous anisotropic H{\"o}lder spaces, i.e. for $p=q=\infty$, and showed that it can be achieved by Bayesian CART and forests under the assumption that $B = O(1)$.
\end{remark}

\begin{remark}[Regularity of $(\partition_*,\bLambda)$]
\label{remark:minimax-reg-P_*}
The condition $\ell_j(G_b)^{s/\bar\alpha}\geq C$ in Theorem \ref{thm:minimax} excludes partitions $\partition_*$ that contain excessively small cells. The additional requirement $\Lambda_i\asymp |G_i|^{1/p-1/2}$ ensures that the components of $\boldsymbol{v_1}$ are comparable. 
In particular, suppose that $|G_b|\asymp B^{-1}$, $\ell_j(G_b)\asymp B^{-1/d}$ for all $b\in[B]$ and $j\in[d]$, and $\Lambda_i\asymp \Lambda_j$ for $1\leq i,j\leq B$. Under these conditions, the regularity requirements on $\partition_*$ and $\bLambda$ hold whenever $\log B\lesssim d\bar\alpha/s$.

The assumption that $\partition_*$ is tree-based may be relaxable; see \citet{jeong2023art}.
\end{remark}

\begin{remark}[Combinatorial term and sample size]
    In the context of minimax estimation over sparse function classes, the risk bound typically includes an additional term of order $\frac{s\log(d/s)}{n}$ \citep{raskutti2012minimax}. This term arises by considering the combinatorial entropy of the support set, specifically $\log \binom{d}{s}$. For the PSHAB class, where supports are selected independently across $B$ blocks, the corresponding term is expected to scale with $\frac{1}{n}\log \binom{d}{s}^B \asymp \frac{Bs\log(d/s)}{n}$. While our current construction focuses on the smoothness term and does not explicitly capture this combinatorial factor, we conjecture that the full minimax rate should indeed include this additive term.

    Moreover, consider the homogeneous setting where $|G_b| \asymp 1/B$ for all $b \in [B]$. Focusing on the primary scaling with respect to $n$ and $B$, we omit logarithmic factors and the dependence on $\bLambda$. Under this simplification, the sample size requirement \eqref{eq:N_1} reduces to $n \gtrsim B^{1-2/p}$. The lower bound derived in \eqref{eq:minimax-lower-1} is of the order $ B^{1+\paren*{\frac{2}{p}-1}\frac{s}{s+2\bar\alpha}}n^{-\frac{2\bar\alpha}{s+2\bar\alpha}}.$
    A straightforward calculation reveals that, under the aforementioned sample size condition, this rate satisfies $B^{1+\paren*{\frac{2}{p}-1}\frac{s}{s+2\bar\alpha}}n^{-\frac{2\bar\alpha}{s+2\bar\alpha}} \gtrsim \frac{B}{n}.$
    This implies that in this regime, the non-parametric rate dominates the parametric term $B/n$, thereby confirming the optimality of our lower bound under the constraints discussed in Remark~\ref{rem:min_sample_size_reg}.
\end{remark}
\begin{remark}[Dependence on $s$ and $d$]
    The ambient dimension $d$ occurs in the upper bounds \eqref{eq:reg-conclude-1} and \eqref{eq:reg-conclude-2} only as a logarithmic factor.
    On the other hand, the dependence on the intrinsic dimension $s$ is in fact exponential, given our current proof techniques and without further assumptions.
    Nonetheless, when all smoothness parameters $\alpha_{bk}$, for $b \in [B], k \in [d]$, are strictly smaller than $1$, it is easy to show that the dependence on $s$ is linear.
    In this case, the optimal rate in $n$ is preserved even when $s$ is allowed to grow polylogarithmically.
\end{remark}

\begin{remark}[Choice of $\lambda$]
    Although Theorem \ref{thm:regression-exact-rate-PSHAB} seems to require oracle knowledge of an appropriate value for the regularization parameter $\lambda$, an appropriate value can be chosen using a held-out validation set.
    Using our uniform concentration results, one can show that this sample splitting procedure still provides the optimal rate \eqref{eq:reg-conclude-1}.
\end{remark}

\subsection{Spatial adaptation for ERM classification trees}
\label{section:main-result-PSHAB-class}

\begin{theorem}[Upper bound on PSHAB for ERM classification trees]\label{thm:classification-specific-rate-PSHAB}

In the setting of Theorem \ref{thm:classification}, suppose $\eta \in \PSHABn $, and grant Assumptions~\ref{assum:bound-density}(i) and~\ref{assum:PSHAB_parameters}(ii).
Let $u\geq 0$, $\lambda,\theta > 0$ be such that $ C_1((\log(nd)+u)/n)^{\theta}\leq \lambda \leq C_2((\log(nd)+u)/n)^{\theta}$ and $\theta=(1+\rho)/(2+\rho)$ for big enough constants $C_2 > C_1 > 0$.
Then with probability at least $1-e^{-u}$, the following hold:
\begin{enumerate}
        \item[(i)] If $\rho=0$ and $n$ is sufficiently large, i.e., $n\geq N_2$ as defined in Remark~\ref{rmk:min_sample_size_cls}, then
        \begin{equation}\label{eq:cls-conclude-1}
            \excess_{\operatorname{cls}}(\hat{f}_{\lambda,\theta})\lesssim_{s,\alpha_{\min},\bar\alpha,p,M,c_{\max}}
            \,\norm{\boldsymbol{v}_2}_{\frac{s}{s+\bar\alpha}}^{\frac{s}{s+2\bar\alpha}}
            \,\paren*{\frac{\log(nd)+u}{n}}^{\frac{\bar\alpha}{s+2\bar\alpha}}.
        \end{equation}
        \item[(ii)] If $\rho> 0$ and we further assume $s/\bar\alpha<p\leq\infty$ and $0<q\leq p$. If $n$ is sufficiently large, i.e. $n \geq N_3$ as defined in Remark \ref{rmk:min_sample_size_cls}, then,
    \begin{equation}\label{eq:cls-conclude-2}
            \excess_{\operatorname{cls}}(\hat{f}_{\lambda,\theta})\lesssim_{s,\alpha_{\min},\bar\alpha,p,\rho,M,c_{\max}}\,
            \norm{\boldsymbol{v}_3}_{\frac{s}{\bar\alpha}}^{\frac{(1+\rho)s}{s+(2+\rho)\bar\alpha}}\,\paren*{\frac{\log(nd)+u}{n}}^{\frac{(1+\rho)\bar\alpha}{s+(2+\rho)\bar\alpha}}.
    \end{equation}
    \end{enumerate}
\end{theorem}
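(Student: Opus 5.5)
The proof combines the penalized oracle inequality \eqref{eq:thm-statement:classification-2} of Theorem~\ref{thm:classification} with the approximation bounds of Theorem~\ref{thm:approx-PSHAB-cls}, and then optimizes over the number of leaves $L$. Fix $\delta=1/2$. Work on the event of probability at least $1-e^{-u}$ from Theorem~\ref{thm:classification}. Take $\theta=(1+\rho)/(2+\rho)$ and $\lambda\asymp((\log(nd)+u)/n)^{\theta}$ with $C_1$ large enough that the lower bound on $\lambda$ in that theorem holds. On this event we get
\[
\excess_{\operatorname{cls}}(\hat f_{\lambda,\theta})\leq 3\min_{L\in[n]}\braces*{\Excess_{\operatorname{cls},L}+2\lambda L^{\theta}}.
\]
It therefore suffices to exhibit one admissible $L^\star\in[n]$ with $L^\star\geq 2B$ at which the bracket has the claimed order.

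\textbf{Case (i), $\rho=0$.} Here $\theta=1/2$. Write $V=\norm{\boldsymbol{v}_2}_{\frac{s}{s+\bar\alpha}}$ and $r=(\log(nd)+u)/n$. Theorem~\ref{thm:approx-PSHAB-cls}(i) bounds the bracket by a constant times
\[
VL^{-\bar\alpha/s}+r^{1/2}L^{1/2}.
\]
Balancing the two terms gives $L^\star\asymp (V r^{-1/2})^{2s/(s+2\bar\alpha)}$. Substituting back, both terms equal
\[
V^{s/(s+2\bar\alpha)}\, r^{\bar\alpha/(s+2\bar\alpha)},
\]
which is \eqref{eq:cls-conclude-1}.

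\textbf{Case (ii), $\rho>0$.} Write $W=\norm{\boldsymbol{v}_3}_{\frac{s}{\bar\alpha}}$. Theorem~\ref{thm:approx-PSHAB-cls}(ii) bounds the bracket by a constant times
\[
W^{1+\rho}L^{-(1+\rho)\bar\alpha/s}+r^{\theta}L^{\theta}.
\]
Balancing gives
\[
L^{(1+\rho)(\bar\alpha/s+1/(2+\rho))}\asymp W^{1+\rho}r^{-(1+\rho)/(2+\rho)}.
\]
After simplification this yields $L^\star\asymp W^{s(2+\rho)/(s+(2+\rho)\bar\alpha)}\, r^{-s/(s+(2+\rho)\bar\alpha)}$. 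Plugging in, the error term is
\[
r^{\theta}(L^\star)^{\theta}=W^{(1+\rho)s/(s+(2+\rho)\bar\alpha)}\, r^{\frac{1+\rho}{2+\rho}\left(1-\frac{s}{s+(2+\rho)\bar\alpha}\right)} .
\]
The exponent of $r$ simplifies to $(1+\rho)\bar\alpha/(s+(2+\rho)\bar\alpha)$, which matches \eqref{eq:cls-conclude-2}. The upper constant $C_2$ on $\lambda$ is what ensures the penalty term $2\lambda L^\theta$ is at most a constant times $r^\theta L^\theta$.

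\textbf{Main obstacle: admissibility of $L^\star$.} The balancing value must be rounded to an integer and must satisfy $2B\leq L^\star\leq n$, both so that it lies in $[n]$ and so that Theorem~\ref{thm:approx-PSHAB-cls} applies. Rounding changes each term only by constant factors once $L^\star\geq 1$, since both terms are monotone power functions of $L$. The two constraints are exactly what the sample-size thresholds $N_2,N_3$ of Remark~\ref{rmk:min_sample_size_cls} should encode.

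Requiring $L^\star\geq 2B$ is a lower bound on $n$ up to logarithmic factors. It reads $n\gtrsim B^{(s+2\bar\alpha)/s}/V^2$ in case (i) and $n\gtrsim B^{(s+(2+\rho)\bar\alpha)/s}\,W^{-(2+\rho)}$ in case (ii). Requiring $L^\star\leq n$ is a second lower bound on $n$ involving $V$ or $W$. I would define $N_2,N_3$ as the maximum of these two conditions, analogous to \eqref{eq:N_1}.

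The delicate point is that $r$ itself contains $\log n$, so each condition is implicit in $n$. I would handle this by noting that $\log(nd)+u\geq 1$ once $nd\geq 3$, and then stating the thresholds with $r$ left in place, as the regression remark does. The implicit constants depend only on $(s,\alpha_{\min},\bar\alpha,p,\rho,M,c_{\max})$, inherited from Theorems~\ref{thm:classification} and~\ref{thm:approx-PSHAB-cls}.
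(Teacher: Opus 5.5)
Your proposal is correct and follows the paper's proof essentially verbatim. You apply the penalized oracle inequality \eqref{eq:thm-statement:classification-2} at $\delta=1/2$, plug in Theorem~\ref{thm:approx-PSHAB-cls}, and take $L=\lfloor C L^\star\rfloor$ with the same balancing values $L_1$, $L_2$ the paper uses. One point on thresholds: your $L^\star\geq 2B$ condition in case (ii) matches the corresponding term of $N_3$, but in case (i) your condition $n\gtrsim(\log(nd)+u)\,B^{(s+2\bar\alpha)/s}/\norm{\boldsymbol{v}_2}_{\frac{s}{s+\bar\alpha}}^{2}$, which is what the paper's own $L_1$ actually requires, is not implied by the Remark's $N_2$ as written (with $\norm{\boldsymbol{v}_2}_{\frac{s}{s+\bar\alpha}}$ to the first power and $B^{(s+\bar\alpha)/s}$), so keep your threshold rather than trying to match that formula.
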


\begin{remark}[Minimum sample size constraints of Theorem \ref{thm:classification-specific-rate-PSHAB}]\label{rmk:min_sample_size_cls}
    In the setting of Theorem \ref{thm:regression-exact-rate-PSHAB}, define $N_2$ to be the smallest integer $N$ such that for all $n \geq N$,
    \begin{equation*}
        n\geq  
        C\max\braces*{\paren*{\frac{\norm{\boldsymbol{v}_2}_{\frac{s}{s+\bar\alpha}}}{\log(nd)+u}}^{\frac{s}{2\bar\alpha}}
       ,\, \frac{\log(nd)+u}{\norm{\boldsymbol{v}_2}_{\frac{s}{s+\bar\alpha}}}\,B^{\frac{s+\bar\alpha}{s}}}.
    \end{equation*}
        Define $N_3$ to be the smallest integer $N$ such that for all $n \geq N$,
        \begin{equation*}
        n\geq  
        C\max\braces*{\paren*{\frac{\norm{\boldsymbol{v}_3}_{\frac{s}{\bar\alpha}}^{2+\rho}}{\log(nd)+u}}^{\frac{s}{(2+\rho)\bar\alpha}} 
       ,\, \frac{\log(nd)+u}{\norm{\boldsymbol{v}_3}_{\frac{s}{\bar\alpha}}^{2+\rho}}\,B^{\frac{s+(2+\rho)\bar\alpha}{s}}}.
    \end{equation*}
\end{remark}

A similar comparison between \eqref{eq:cls-conclude-1} and \eqref{eq:cls-conclude-2} in the case $\rho=0$ follows the same reasoning as in Remark \ref{rmk:rho=0}.
Analogous to Theorem \ref{thm:regression-exact-rate-PSHAB}, we present two examples illustrating the applications of \eqref{eq:cls-conclude-1} and \eqref{eq:cls-conclude-2}, respectively. Example \ref{exam:3} is established under the same conditions as in the regression setting considered in Examples \ref{exam:1} and \ref{exam:2}, corresponding to the trivial case of Tsybakov's condition \ref{assum:tsybakov}.

\begin{example}[Refer Eq.\eqref{eq:cls-conclude-1}]\label{exam:3}
When $\rho=0$, Tsybakov's noise condition in Assumption~\ref{assum:tsybakov} becomes vacuous, and the optimal choice of $\theta$ is $1/2$. Moreover, by H\"older's inequality, when $p\geq 1$ we have $\norm{\boldsymbol{v}_2}_{\frac{s}{s+\bar\alpha}}
\le
\norm{\bLambda}_{\frac{ps}{s+p\bar\alpha}}$
for any partition $(G_1,\ldots,G_B)$. It follows that
\begin{equation*}
\excess_{\operatorname{cls}}(\hat{f}_{\lambda,\theta})
\lesssim_{s,\alpha_{\min},\bar\alpha,p,M,c_{\max}}
\norm{\bLambda}_{\frac{ps}{s+p\bar\alpha}}^{\frac{s}{s+2\bar\alpha}}
\paren*{\frac{\log(nd)+u}{n}}^{\frac{\bar\alpha}{s+2\bar\alpha}}.
\end{equation*}
 Moreover, by Jensen's inequality
$\norm{\bLambda}_{\frac{ps}{s+p\bar\alpha}}^{\frac{s}{s+2\bar\alpha}}
\leq
\norm{\boldsymbol{\besov}}_{\infty}^{\frac{2s}{s+2\bar\alpha}}
B^{\frac{1}{2}+\paren*{\frac{1}{p}-\frac{1}{2}}\frac{s}{s+2\bar\alpha}}$,
we obtain the explicit bound
\begin{equation*}
\excess_{\operatorname{cls}}(\hat{f}_{\lambda,\theta})
\lesssim_{s,\alpha_{\min},\bar\alpha,p,M,c_{\max}}
\norm{\boldsymbol{\besov}}_\infty^{\frac{s}{s+2\bar\alpha}}
B^{\frac{1}{2}+\paren*{\frac{1}{p}-\frac{1}{2}}\frac{s}{s+2\bar\alpha}}
\paren*{\frac{\log(nd)+u}{n}}^{\frac{\bar\alpha}{s+2\bar\alpha}}.
\end{equation*}

Furthermore, suppose there exists a constant $C$ such that
$C^{-1}|G_b|^{1/p}\leq \Lambda_b \leq  C|G_b|^{1/p}$, $
    \forall\,1\leq b\leq B.$
Then by H\"older's inequality $\norm{\boldsymbol{v}_2}_{\frac{s}{s+\bar\alpha}}\lesssim \norm{\boldsymbol{\besov}}_{p}\,B^{\frac{\bar\alpha}{s}}$, and hence
\begin{equation*}
\excess_{\operatorname{cls}}(\hat{f}_{\lambda,\theta})
\lesssim_{s,\alpha_{\min},\bar\alpha,p,M,c_{\max}}
\norm{\boldsymbol{\besov}}_p^{\frac{s}{s+2\bar\alpha}}
\paren*{\frac{B(\log(nd)+u)}{n}}^{\frac{\bar\alpha}{s+2\bar\alpha}}.
\end{equation*}
\end{example}

In \eqref{eq:cls-conclude-2}, when $\rho>0$, if the measure of any cell $|G_b|$ tends to zero, then $\norm{\boldsymbol{v}_3}_{\frac{s}{\bar\alpha}}$ diverges. It is therefore natural to investigate the optimal regime of \eqref{eq:cls-conclude-2} under additional regularity conditions on $(\partition_*,\bLambda)$, as illustrated in Example \ref{exam:4}.

\begin{example}[Refer Eq.\eqref{eq:cls-conclude-2}]\label{exam:4}
When $\rho>0$, if one of the cell measures $|G_b|$ tends to zero, then $\norm{\boldsymbol{v}_3}_{\frac{s}{\bar\alpha}}$ diverges. Moreover, by H\"older's inequality, $\norm{\boldsymbol{v}_3}_{\frac{s}{\bar\alpha}}
\geq
\norm{\bLambda}_{\frac{ps}{s+p\bar\alpha}}$,
with equality when $|G_b| \propto \Lambda_b^{\frac{ps}{s+p\bar\alpha}}$. for $b=1,\ldots,B$. Therefore, if the partition $\partition_*$ satisfies $|G_b| \asymp\Lambda_b^{\frac{ps}{s+p\bar\alpha}}$ for $b=1,\ldots,B$, then
\begin{equation}\label{eq:cls-conclude-3}
\excess_{\operatorname{cls}}(\hat{f}_{\lambda,\theta})\lesssim_{s,\alpha_{\min},\bar\alpha,p,\rho,M,c_{\max}}
\,\norm{\bLambda}_{\frac{ps}{s+p\bar\alpha}}^{\frac{(1+\rho)s}{s+(2+\rho)\bar\alpha}}
\,\paren*{\frac{\log(nd)+u}{n}}^{\frac{(1+\rho)\bar\alpha}{s+(2+\rho)\bar\alpha}}.
\end{equation}
Since $\norm{\bLambda}_{\frac{ps}{s+p\bar\alpha}}\leq \norm{\boldsymbol{\besov}}_{\infty}\, B^{\frac{s+p\bar\alpha}{ps}}$, it follows that
\begin{equation}\label{eq:cls-conclude-4}
\excess_{\operatorname{cls}}(\hat{f}_{\lambda,\theta})\lesssim_{s,\alpha_{\min},\bar\alpha,p,\rho,M,c_{\max}}
\norm{\boldsymbol{\besov}}_{\infty}^{\frac{(1+\rho)s}{s+(2+\rho)\bar\alpha}}
B^{\frac{1}{p}\frac{(1+\rho)s}{s+(2+\rho)\bar\alpha}+\frac{(1+\rho)\bar\alpha}{s+(2+\rho)\bar\alpha}}
\paren*{\frac{\log(nd)+u}{n}}^{\frac{(1+\rho)\bar\alpha}{s+(2+\rho)\bar\alpha}}.
\end{equation}
\end{example}

If we impose the same regularity condition as in Example \ref{exam:2}, namely that $\Lambda_b\asymp|G_b|^{1/p}$ for all $b=1,\ldots,B$, then, together with $\sum_{b=1}^B|G_b|=1$, it follows that $\Lambda_b|G_b|^{-1/p}\asymp \norm{\bLambda}_p$. Consequently, each component of $\boldsymbol{v}_3$ is of order $\norm{\bLambda}_p$. See Example \ref{exam:5} for further details.

\begin{example}[Refer Eq.\eqref{eq:cls-conclude-2}]
\label{exam:5}
Suppose there exists a constant $C$ such that $C^{-1}|G_b|^{1/p}\leq \Lambda_b \leq  C|G_b|^{1/p}$, $
\forall\,1\le b\le B.$
Then $\norm{\boldsymbol{v}_3}_{\frac{s}{\bar\alpha}}\asymp \norm{\boldsymbol{\besov}}_{p}\,B^{\frac{\bar\alpha}{s}}$, and hence
\begin{equation}\label{eq:cls-conclude-5}
\excess_{\operatorname{cls}}(\hat{f}_{\lambda,\theta})\lesssim_{s,\alpha_{\min},\bar\alpha,p,\rho,M,c_{\max}}
\norm{\boldsymbol{\besov}}_{p}^{\frac{(1+\rho)s}{s+(2+\rho)\bar\alpha}}
\paren*{\frac{B(\log(nd)+u)}{n}}^{\frac{(1+\rho)\bar\alpha}{s+(2+\rho)\bar\alpha}}.
\end{equation}
\end{example}

Similar to the regression case, we can establish minimax lower bounds for classification with PSHAB spaces. 

\begin{definition}[Minimax risk]
    \label{def:minimax_risk-cls}
    Consider the classification setting of Section \ref{section:setting}.
    Then for any function space $\mathcal{F}$, the minimax risk for clssification over $\mathcal{F}$ is defined as 
    \begin{equation*}
        \mathcal{M}_{\operatorname{cls},n}(\funcclass) \coloneqq \inf_{\hat f}\sup_{\eta\in\funcclass} \E\braces*{\excess_{\operatorname{cls}}(\hat f;\data)},
    \end{equation*}
    where the expectation is taken over $\data$ and the infimum is taken over all classifiers, that is measurable functions $\hat f(-;-)$ whose first input is a point $\bx \in [0,1]^d$ and whose second input is a labeled dataset $\data$ of size $n$.
\end{definition}

\begin{theorem}[Minimax lower bound under classification]
\label{thm:minimax-cls} 
   In the setting of Definition~\ref{def:minimax_risk-cls}, grant Assumption \ref{assum:bound-density}(i) and Assumption \ref{assum:PSHAB_parameters}(ii). Assume there is a universal constant $C$ such that $\ell_j(G_b)\geq C^{-1}B^{-1/d}$ for any $j\in[d]$ and $b\in[B]$, $C^{-1}\leq \Lambda_i/\Lambda_j \leq C$ for all $1 \leq i,j \leq B$  and $\log B\leq C d/s$. If there exist sequences $(S_1, \ldots, S_B)$ in $\mathscr{S}$ and $(\balpha_1, \ldots, \balpha_B)$ in $\mathscr{A}$ such that $|S_b| = s$ and $(\balpha_b)_{S_b} = (\bar\alpha, \ldots, \bar\alpha)$ for all $b \in [B]$, then there is a constant $C_{s,\bar\alpha,\rho}$ such that for any $n\geq C_{s,\bar\alpha,\rho} B\norm{\boldsymbol{\bLambda}}_{\infty}^{s/\bar\alpha}$,
    \begin{equation}\label{eq:minimax-conclude-cls}
        \mathcal{M}_{\operatorname{cls},n}(\mathcal{B}_{\infty,\infty}^{\mathscr{S},\mathscr{A}}(\partition_*,\boldsymbol{\Lambda}))
        \gtrsim_{s,\bar\alpha,\rho,M,c_{\max}}
            \norm{\boldsymbol{\besov}}_{\infty}^{\frac{(1+\rho)s}{s+(2+\rho)\bar\alpha}} \paren*{B/n}^{\frac{(1+\rho)\bar\alpha}{s+(2+\rho)\bar\alpha}}.
    \end{equation}
\end{theorem}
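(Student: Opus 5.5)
We will prove the lower bound with the classical Audibert--Tsybakov hypercube construction, combined with Assouad's lemma. The construction is carried out independently on each piece $G_b$, and only the coordinates in $S_b$ are active there. Because every piece uses the same sparsity level $s$ and isotropic smoothness $\bar\alpha$ on $S_b$, it suffices to work with one generic $s$-dimensional isotropic H\"older-type bump construction, replicated across the $B$ pieces.

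\textbf{Construction.} Fix a resolution parameter $h>0$, to be chosen later. Inside each $G_b$, take a regular grid of cubes in the $S_b$-coordinates with side $h$. Each cube extends over the full side of $G_b$ in the inactive coordinates. The side-length assumption $\ell_j(G_b)\gtrsim B^{-1/d}$ and the condition $\log B\le Cd/s$ give $\ell_j(G_b)\gtrsim 1$ up to constants, so each piece contains on the order of $h^{-s}$ such cubes. The total number of cubes is $m_{\rm tot}\asymp Bh^{-s}$.

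Choose a subset of $m$ of these cubes, and a smooth bump $\phi$ supported in the unit cube. For $\sigma\in\{-1,1\}^m$, set
\[
\eta_\sigma(\bx)=\tfrac12+\sum_{k=1}^m \sigma_k\, c\Lambda\, h^{\bar\alpha}\phi\bigl((\bx_{S_b}-z_k)/h\bigr),
\]
with $\Lambda=\norm{\bLambda}_\infty$, and with $\eta_\sigma\equiv 1/2$ on the cubes that are not selected. The ratio condition $\Lambda_i\asymp\Lambda_j$ ensures that $c\Lambda h^{\bar\alpha}\phi(\cdot/h)$ has $B^{\bar\alpha}_{\infty,\infty}$ norm at most $\Lambda_b$ on each piece. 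This uses the standard scaling of the Besov--H\"older seminorm with $r=\lfloor\bar\alpha\rfloor+1$ differences, plus the requirement $c\Lambda h^{\bar\alpha}\le 1/2$. The last requirement follows from $n\gtrsim B\Lambda^{s/\bar\alpha}$ once $h$ is chosen.

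Following Audibert--Tsybakov, we use a non-uniform marginal $\mu_\bX$ with density at most $c_{\max}$. It places mass $w$ on each of the $m$ selected cubes, uniformly within the cube's inner region where $|\phi|\ge c'$, and puts the remaining mass on the region where $\eta=1/2$. Since $|\eta_\sigma-1/2|\asymp\Lambda h^{\bar\alpha}$ on the support of the selected mass, the Tsybakov condition $\P\{|\eta-1/2|\le t\}\le Mt^\rho$ reduces to $mw\lesssim(\Lambda h^{\bar\alpha})^\rho$. We therefore take $mw\asymp(\Lambda h^{\bar\alpha})^{\rho}$, and need $w\lesssim h^s$ for the density bound.

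\textbf{Assouad step.} Consider two hypotheses differing in coordinate $k$. Flipping the sign on cube $k$ changes the Bayes classifier on that cube, and costs excess risk at least $\gtrsim w\Lambda h^{\bar\alpha}$. This uses the identity $\excess_{\rm cls}(f)=\E|2\eta-1|\1\{f\neq f^*\}$. The KL divergence between the $n$-sample laws of such neighbours is $\lesssim n w(\Lambda h^{\bar\alpha})^2$, because the Bernoulli KL divergence is quadratic near $1/2$. We balance this by requiring $nw\Lambda^2h^{2\bar\alpha}\asymp1$. Assouad's lemma then gives
\[
\mathcal M\gtrsim m\,w\Lambda h^{\bar\alpha}.
\]

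\textbf{Parameter choice.} We take $w=h^s$ (saturating the density bound) and $m$ as large as allowed. This requires $m\le m_{\rm tot}\asymp Bh^{-s}$ and $m h^s\asymp(\Lambda h^{\bar\alpha})^\rho$. The balance condition then reads $h^{s+2\bar\alpha}\asymp 1/(n\Lambda^2)$.

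This choice loses the factor $B$, and recovering it is the main obstacle. To get the $B$-dependence we instead rescale: let $w=h^s/B'$, so that the mass allocated per piece shrinks, and use all $\asymp Bh^{-s}$ cubes as the hypercube indices. Doing so, the constraints become
\[
m\asymp Bh^{-s},\qquad mw\asymp(\Lambda h^{\bar\alpha})^\rho,\qquad nw\Lambda^2h^{2\bar\alpha}\asymp1.
\]
Substituting the first into the second gives $w\asymp B^{-1}h^{s}(\Lambda h^{\bar\alpha})^{\rho}$. The third then becomes
\[
(n/B)\,\Lambda^{2+\rho}h^{s+(2+\rho)\bar\alpha}\asymp1,
\]
so $h\asymp\bigl(B/(n\Lambda^{2+\rho})\bigr)^{1/(s+(2+\rho)\bar\alpha)}$. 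The risk is $mw\Lambda h^{\bar\alpha}\asymp(\Lambda h^{\bar\alpha})^{1+\rho}$. Plugging in $h$ yields
\[
\Lambda^{(1+\rho)\frac{s}{s+(2+\rho)\bar\alpha}}\,(B/n)^{\frac{(1+\rho)\bar\alpha}{s+(2+\rho)\bar\alpha}},
\]
which is exactly \eqref{eq:minimax-conclude-cls}.

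Two side conditions remain, and these are where the hypotheses are used. First, $w\le c_{\max}h^s$ must hold, which requires $(\Lambda h^{\bar\alpha})^\rho\lesssim B$. Second, $\Lambda h^{\bar\alpha}\le 1/2$ must hold, and so must $h\lesssim\min_{b,j}\ell_j(G_b)$, so that at least one cube fits in each piece. Both follow from $n\ge C_{s,\bar\alpha,\rho}B\Lambda^{s/\bar\alpha}$ together with the side-length and $\log B$ conditions. The remaining marginal mass, $1-mw$, is placed on the region where $\eta\equiv1/2$, with density bounded by $c_{\max}$ (taking $c_{\max}\ge 1$ harmlessly).
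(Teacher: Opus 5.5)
Your proposal follows essentially the same route as the paper: an Audibert-type hypercube with bumps of height $\asymp\Lambda h^{\bar\alpha}$ on an $s$-dimensional grid of mesh $h$ (the paper's $\ell/r$) inside each $G_b$, using a constant fraction of all $\asymp Bh^{-s}$ cells, with total cell mass $\asymp(\Lambda h^{\bar\alpha})^{\rho}$ from the margin condition and $nw(\Lambda h^{\bar\alpha})^{2}\lesssim 1$ from Assouad (you go through KL where the paper cites Audibert's Lemma 5.1), which yields the same $(w,h)$ and the same rate. One bookkeeping correction: each cell spans $G_b$ in the inactive coordinates, so its volume is $\asymp_s h^s|G_b|\asymp h^s/B$ and the density constraint is $w\lesssim h^s/B$ rather than $w\lesssim h^s$ (your discarded first attempt $w=h^s$ violates it), but your final choice $w\asymp B^{-1}h^s(\Lambda h^{\bar\alpha})^{\rho}$ satisfies the correct constraint since $\Lambda h^{\bar\alpha}\le 1/2$.
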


\begin{remark}[Minimax lower bound for more general Besov space]
    When $p=q=\infty$, the Besov norm implies H{\"o}lder continuity. Moreover, for any $p\geq 1$, we have $\norm{f|_{G_b}}_{B_{p,\infty}^{\boldsymbol{\alpha}}(G_b)}\lesssim \norm{f|_{G_b}}_{B_{\infty,\infty}^{\boldsymbol{\alpha}}(G_b)} $ for any $b=1,\ldots,B$, and thus $\mathcal{B}_{\infty,\infty}^{\mathscr{S},\mathscr{A}}(\partition_*,\boldsymbol{\Lambda})\subseteq \mathcal{B}_{p,\infty}^{\mathscr{S},\mathscr{A}}(\partition_*,\boldsymbol{\Lambda})$. It follows that \eqref{eq:minimax-conclude-cls} also holds over $\mathcal{B}_{p,\infty}^{\mathscr{S},\mathscr{A}}(\partition_*,\boldsymbol{\Lambda})$.
\end{remark}

   It is straightforward to verify that the regularity condition $\ell_j(G_b)\gtrsim B^{-1/d}$ in Theorem \ref{thm:minimax-cls} implies $|G_b|\asymp B^{-1}$. Combined with the assumption $\Lambda_i\asymp \Lambda_j$, this matches the setting of Example \ref{exam:5}. 
    Comparing \eqref{eq:minimax-conclude-cls} and \eqref{eq:cls-conclude-5} shows that, when $p=q=\infty$, and for fixed $\bar\alpha$, $\alpha_{\min}$, and $s$, ERM classification trees achieve the minimax rate in terms of $n$, $B$, and $\bLambda$, up to logarithmic factors, provided that $\Lambda_1\asymp\cdots\asymp\Lambda_B$ and $\ell_j(G_b)\geq C^{-1}B^{-1/d}$ for all $j\in[d]$ and $b\in[B]$.
    We are currently unable to establish matching minimax lower bounds for other values of $p$ and $q$. Nevertheless, we conjecture that the bounds in \eqref{eq:cls-conclude-1} and \eqref{eq:cls-conclude-2} remain rate-optimal, analogous to the regression setting.


\begin{remark}[Related minimax theory]
    It is known that the minimax rate for isotropic Besov spaces (up to log factors) is $n^{-2(1+\rho)\bar\alpha/(d+(2+\rho)\bar\alpha)}$ and that it can be achieved by dyadic ERM trees \citep{AOS2014erm}.
    \citet{scott2006minimax} establish minimax rates for dyadic ERM trees under what they call ``box-counting'' complexity assumptions on the Bayes decision boundary, but it is unclear how their assumptions related to classical smoothness asumptions.
    We are unaware of any results that address either piecewise or anistropic versions of H{\"o}lder, Sobolev or Besov function spaces.
\end{remark}

\begin{remark}[Removing the bounded density assumption]
When $p>s/\bar\alpha$, the space $B_{p,q}^{\boldsymbol{\alpha}}([0,1]^d,\besov)$ is continuously embedded into $C([0,1]^d)$, the space of continuous functions \citep{suzuki2021deep}. In this regime, Assumption~\ref{assum:bound-density}(i) in Theorems~\ref{thm:regression-exact-rate-PSHAB} and \ref{thm:classification-specific-rate-PSHAB} is no longer needed.
\end{remark}

\section{Uniform concentration and derivation of oracle inequalities} 
\label{sec:oracle_inequalities_proofs}

Establishing uniform concentration is a central technical challenge in the analysis of adaptive tree-based estimators.
In order to obtain our sharp oracle inequalities, we develop a uniform concentration theory based on empirically localized Rademacher complexity \citep{bartlett2005local}. 
To set up the analysis, let $\partfunc_\partition^*$ denote the linear span of $\partfunc_\partition$ and $f^*$. We define the global function space of interest as $\partfunc_{L}^* = \cup_{\partition \in \partitions{L}} \partfunc_{\partition}^*$. 
Our proof strategy proceeds in five main steps:

\begin{enumerate}[(i)]
    \item \textbf{Empirical localization:} We first bound the empirical Rademacher complexity of the empirically localized tree function class, that is, the empirical Rademacher complexity of $\partfunc_{L}^*$ constrained to functions satisfying $\norm{f}_n \leq r$ for some radius $r > 0$. 
    Conditioned on the unlabeled dataset $\sample$, this function class is isometric to a union of $L$-dimensional Euclidean balls. By applying a union bound over the valid tree-based partitions (Lemma \ref{lem:card-partition-set}), we can bound this empirical complexity.   
    
    \item \textbf{Unconditional expected suprema:} Using symmetrization and contraction arguments, we replace empirical localization into localization under the population norm and obtain bounds on the local Rademacher complexity as well as expected suprema over the localized deviations of the empirical norms and the multiplier processes. 
    
    \item \textbf{High-probability bounds:} We then apply logarithmic Sobolev inequalities (specifically, Bousquet's inequality) to obtain sharp, high-probability deviation bounds for these process suprema.
    
    \item \textbf{Self-normalization via peeling:} The deviation bounds of these processes depend on the scale $r$ of the localization. We employ a peeling argument to obtain self-normalized bounds that hold for all $f \in \partfunc_{L}^*$ and which scale with the function's true $L^2$ norm and supremum norm.   
    
    \item \textbf{Risk decomposition:} Finally, we decompose the empirical excess risk deviation into terms comprising these empirical norm and multiplier processes, applying the self-normalized bounds to establish the final oracle inequalities for both regression and classification (Theorems~\ref{thm:regression} and \ref{thm:classification}).
\end{enumerate}

In the remainder of this section, we provide additional technical details for the proof.
Steps (i) and (ii) are deferred to Lemmas \ref{lem:subG_multiplier_sup_cond_exp} and \ref{lem:subG_multiplier_sup_exp} respectively in Appendix~\ref{section:proof-Rademacher}. 
The result of Step (iii) is stated as Lemma~\ref{lem:emp_process_suprema}, with its proof also deferred to Appendix~\ref{section:proof-Rademacher}.
We execute Steps (iv) and (v) in the main text, with the peeling argument detailed in Lemma~\ref{lem:final_conc}.
\begin{lemma}[Localized deviation bounds]
\label{lem:emp_process_suprema}
    Suppose that for any value $\bx \in [0,1]^d$, the conditional distribution of $\xi_i$ given $\bX_i=x$ has mean zero and sub-Gaussian norm bounded by $K$ for some $K > 0$.
    For any $L \in [n]$, $0 < r \leq 1$, $g\colon [0,1]^d\to\R$, and $u \geq 0$, the following deviation bounds hold with probability at least $1-e^{-u}$:
    \begin{equation} \label{eq:sq_norm_sup_deviation}
        \sup_{\substack{f \in \partfunc_\leaves^* \\
        \norm{f}_2 \leq r,\norm{f}_\infty \leq 1}} \abs*{\norm{f}_n^2- \norm{f}_2^2} \lesssim r\paren*{\frac{L\log(nd) + u}{n}}^{1/2} + \frac{L\log(nd) + u}{n},
    \end{equation}
    \begin{equation} \label{eq:subG_multiplier_sup_deviation}
        \sup_{\substack{f \in \partfunc_\leaves^* \\
        \norm{f}_2 \leq r,\norm{f}_\infty \leq 1}} \abs*{\inprod{f,\xi}_n} \lesssim K\paren*{r\paren*{\frac{L\log(nd) + u}{n}}^{1/2} + \frac{L\log(nd) + u}{n}},
    \end{equation}
    \begin{equation} \label{eq:function_multiplier_sup_deviation}
        \sup_{\substack{f \in \partfunc_\leaves^* \\
        \norm{f}_2 \leq r,\norm{f}_\infty \leq 1}} \abs*{\inprod{f,g}_n - \inprod{f,g}_\mu} \lesssim \norm{g}_\infty\paren*{r\paren*{\frac{L\log(nd) + u}{n}}^{1/2} + \frac{L\log(nd) + u}{n}}.
    \end{equation}
\end{lemma}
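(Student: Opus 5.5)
The plan follows steps (i)--(iii) outlined above. Each of the three suprema is the supremum of a centred empirical process indexed by the class $\mathcal{G}_r \coloneqq \{f \in \partfunc_\leaves^* : \norm{f}_2 \le r, \norm{f}_\infty \le 1\}$. I would bound its expectation via local Rademacher complexity and then apply Bousquet's version of Talagrand's inequality. The relevant summands are $f^2$, $f\xi$ and $fg$.

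\textbf{Empirical localization.} Condition on $\sample$. On the sample, every $f \in \partfunc_\partition^*$ has the form $\sum_j a_j \1_{A_j} + c f^*$. So its trace on $\sample$ lies in a subspace of $\R^n$ of dimension at most $L+1$, and this subspace depends only on the induced data partition. Distinct tree partitions in $\partitions{L}$ that induce the same data partition give the same subspace. Each induced data partition is realized by some element of $\emppartitions{L}$: slide each threshold to an adjacent observed value. By Lemma \ref{lem:card-partition-set}, this gives at most $(dn)^L$ subspaces.

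For a fixed subspace $V$ of dimension $L+1$, the Gaussian or Rademacher supremum over $\{v \in V : \norm{v}_n \le \hat r\}$ is $\hat r\,\norm{\Pi_V \varepsilon}/\sqrt n$. This concentrates at scale $\hat r\sqrt{(L+1)/n}$ with sub-Gaussian tails; for multipliers $\xi$ the scale is multiplied by $K$. A union bound over the $(dn)^L$ subspaces then gives
\[
\E\braces*{\sup_{f \in \mathcal{G}_r, \norm{f}_n \le \hat r} n^{-1}\textstyle\sum_i \varepsilon_i f(\bX_i) \;\Big|\; \sample} \lesssim \hat r\,\paren*{\frac{L\log(nd)}{n}}^{1/2},
\]
and similarly with $K\hat r$ for $\xi_i$, which are conditionally independent, mean zero and sub-Gaussian given $\sample$.

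\textbf{Unconditional expectations.} Let $Z \coloneqq \sup_{\mathcal{G}_r}\abs{\norm{f}_n^2-\norm{f}_2^2}$. On $\mathcal{G}_r$ we have $\norm{f}_n^2 \le r^2 + Z$, so the empirical radius satisfies $\hat r \le (r^2+Z)^{1/2}$. By symmetrization and contraction ($t\mapsto t^2$ is 2-Lipschitz on $[-1,1]$), followed by Jensen,
\[
\E Z \lesssim \E\bigl[(r^2+Z)^{1/2}\bigr]\,\delta_n \le (r^2+\E Z)^{1/2}\delta_n, \qquad \delta_n \coloneqq \paren*{\frac{L\log(nd)}{n}}^{1/2}.
\]
Solving this fixed-point inequality gives $\E Z \lesssim r\delta_n + \delta_n^2$. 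For the $\xi$-process, the conditional bound gives $K\,\E[\hat r]\,\delta_n \le K(r^2+\E Z)^{1/2}\delta_n$, which is of the same order. For the $g$-process, symmetrize and apply contraction for the multiplier $g(\bX_i)$, which is bounded by $\norm{g}_\infty$; this yields $\norm{g}_\infty(r\delta_n+\delta_n^2)$.

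\textbf{High probability.} Bousquet's inequality applies to $Z$ and to the $g$-process: the summands are bounded by $1$ and $2\norm{g}_\infty$ respectively, with variance proxies $r^2$ and $\norm{g}_\infty^2 r^2$. It gives the deviation term $\sqrt{(r^2 + \E Z)u/n} + u/n$. This is absorbed into $r\sqrt{(L\log(nd)+u)/n} + (L\log(nd)+u)/n$ using $\sqrt{ab}\le a+b$.

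The $\xi$-process is the main obstacle, since the summands $f(\bX_i)\xi_i$ are unbounded. I would work conditionally on $\sample$ instead. Given $\sample$, the map $\xi \mapsto \sup_f \inprod{f,\xi}_n$ is a supremum of sub-Gaussian linear forms. Apply the union bound directly on the event of each fixed subspace, using a sub-Gaussian chi-type tail for $\norm{\Pi_V\xi}$ with $u + L\log(nd)$. This yields a bound $K\hat r\,\bigl((L\log(nd)+u)/n\bigr)^{1/2}$ with conditional probability at least $1-e^{-u}/2$. Then combine with the high-probability bound $\hat r^2 \le r^2 + C(r\delta'_n + \delta_n'^2)$ from the first part, where $\delta'_n \coloneqq \paren*{(L\log(nd)+u)/n}^{1/2}$; this follows from $\sqrt{r^2+r\delta'_n+\delta_n'^2}\lesssim r + \delta'_n$. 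Adjusting $u$ by a constant to merge the three events gives all three displays with probability $1-e^{-u}$.
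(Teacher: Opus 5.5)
Your proposal is correct and follows essentially the same route as the paper: a conditional bound via a union bound over the at most $(dn)^L$ empirical subspaces (the paper uses Talagrand's comparison inequality and the Gaussian width of a Euclidean ball, where you use $\norm{\Pi_V\varepsilon}$ directly), then symmetrization and contraction leading to a fixed-point inequality for the expected suprema, Bousquet's inequality for the two bounded processes, and, for the unbounded $\xi$-process, the conditional high-probability bound combined with high-probability control of $\sup_{f}\norm{f}_n$ obtained from the squared-norm deviation. The only other difference is cosmetic: the paper solves its quadratic inequality in the local Rademacher complexity rather than in $\E Z$, which is equivalent.
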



\begin{lemma}[Self-normalized deviation bounds]
\label{lem:final_conc}
    Under the same conditions as Lemma \ref{lem:emp_process_suprema}, for any $g\colon [0,1]^d\to\R$ and $u \geq 0$, with probability at least $1-e^{-u}$, the following hold for any $L \in [n]$ and $f \in \partfunc^*_L$:
    \begin{equation}
    \label{eq:sq_norm_final_conc}
        \abs*{\norm{f}_n^2- \norm{f}_2^2} \lesssim \norm{f}_\infty\paren*{\norm{f}_2\paren*{\frac{L\log(nd) + u}{n}}^{1/2} + \norm{f}_\infty\paren*{\frac{ L\log(nd) + u}{n}}},
    \end{equation}
    \begin{equation}
    \label{eq:subG_multiploer_final_conc}
        \abs*{\inprod{f,\xi}_n} \lesssim K\paren*{\norm{f}_2\paren*{\frac{L\log(nd) + u}{n}}^{1/2} + \norm{f}_\infty\paren*{\frac{ L\log(nd) + u}{n}}},
    \end{equation}
    \begin{equation}
    \label{eq:function_multiplier_final_conc}
        \abs*{\inprod{f,g}_n - \inprod{f,g}_\mu} \lesssim \norm{g}_\infty\paren*{\norm{f}_2\paren*{\frac{L\log(nd) + u}{n}}^{1/2} + \norm{f}_\infty\paren*{\frac{ L\log(nd) + u}{n}}}.
    \end{equation}
\end{lemma}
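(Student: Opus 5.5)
We derive Lemma~\ref{lem:final_conc} from Lemma~\ref{lem:emp_process_suprema} by a peeling argument combined with homogeneity. Write $\epsilon_L(u) \coloneqq (L\log(nd)+u)/n$. All three functionals are homogeneous: $\norm{f}_n^2-\norm{f}_2^2$ is $2$-homogeneous, and the other two are $1$-homogeneous. Moreover, $\partfunc_L^*$ is closed under scalar multiplication, since each $\partfunc_\partition^*$ is a linear span. It therefore suffices to prove the bounds for $f$ normalized so that $\norm{f}_\infty = 1$ (the case $f=0$ being trivial). For the first inequality, applying the normalized bound to $f/\norm{f}_\infty$ and multiplying by $\norm{f}_\infty^2$ gives exactly the form $\norm{f}_\infty(\norm{f}_2\epsilon^{1/2} + \norm{f}_\infty \epsilon)$. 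For the other two, multiplying by $\norm{f}_\infty$ gives $\norm{f}_2\epsilon^{1/2}+\norm{f}_\infty\epsilon$, as required. If $\norm{f}_\infty=\infty$ the bounds are vacuous.

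For normalized $f$ we have $\norm{f}_2 \le 1$, and we peel over the shells $r_k = 2^{-k}$ for $k = 0,1,\ldots,k_{\max}$, with $k_{\max} = \lceil \tfrac12\log_2 n\rceil$. Apply Lemma~\ref{lem:emp_process_suprema} at radius $r_k$, for every $L\in[n]$, with confidence parameter $u_{k,L} = u + \log(4 (k_{\max}+1) n)$. A union bound over the $3(k_{\max}+1)n$ events gives total failure probability at most $e^{-u}$. (One can first apply the lemma with $u$ replaced by $u+\log 3$ to handle all three processes simultaneously.) Since $\log((k_{\max}+1)n)\lesssim \log n \le L\log(nd)$, we get $L\log(nd)+u_{k,L} \lesssim L\log(nd)+u$. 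So on this event, for every $L$ and $k$, the suprema over $\{\norm{f}_2\le r_k, \norm{f}_\infty\le 1\}$ are bounded by a constant times $r_k\epsilon_L(u)^{1/2} + \epsilon_L(u)$.

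Now take a normalized $f\in\partfunc_L^*$. There are two cases.
\begin{itemize}
\item If $\norm{f}_2 > r_{k_{\max}}$, choose $k$ with $r_{k+1} < \norm{f}_2 \le r_k$. Then $r_k \le 2\norm{f}_2$, and the shell bound yields $\lesssim \norm{f}_2\epsilon^{1/2}+\epsilon$.
\item If $\norm{f}_2 \le r_{k_{\max}} \le n^{-1/2}$, use the smallest shell. The bound is $\lesssim n^{-1/2}\epsilon^{1/2} + \epsilon \lesssim \epsilon$, because $\epsilon \ge \log(nd)/n \ge 1/n$ once $nd\ge 3$; for small $nd$ the constants can be adjusted.
\end{itemize}
In both cases the normalized claim holds. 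The multiplier bound keeps the factor $K$ and the function-multiplier bound keeps the factor $\norm{g}_\infty$, both inherited unchanged from Lemma~\ref{lem:emp_process_suprema}.

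The main subtlety is the bookkeeping of the union bound. The event must hold simultaneously for all $L\in[n]$ and all radii, and the extra $\log$ cost of the union has to be absorbed into $L\log(nd)$. The choice $k_{\max}\asymp\log n$, which keeps the number of shells logarithmic, together with the lower bound $\epsilon\ge 1/n$ makes both of these work. A further point to check is that Lemma~\ref{lem:emp_process_suprema} is stated with $\norm{f}_\infty \le 1$ rather than $=1$. This is harmless: normalized functions satisfy $\norm{f}_\infty \le 1$, so they lie inside the localized class the lemma controls.
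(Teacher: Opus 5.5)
Your proposal is correct and follows essentially the same route as the paper: normalize by $\norm{f}_\infty$, apply Lemma~\ref{lem:emp_process_suprema} on a geometric grid of radii for every $L\in[n]$ with a union bound whose logarithmic cost is absorbed into $L\log(nd)$, and handle the smallest radius by letting the $\epsilon$ term dominate. The only differences are cosmetic. The paper uses the $n$ radii $e^{-k+1}$, $k\in[n]$, down to $e^{-n+1}$, whereas you stop after $O(\log n)$ dyadic shells at radius $n^{-1/2}$ and use $\epsilon\gtrsim 1/n$ to absorb the leftover term; both work. Also, Lemma~\ref{lem:emp_process_suprema} already controls all three processes on a single event, so your extra $\log 3$ is unnecessary but harmless.
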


\begin{proof}
    To derive the conclusions of this lemma from Lemma \ref{lem:emp_process_suprema}, we use a ``peeling'' argument.
    We illustrate how to use this to prove \eqref{eq:sq_norm_final_conc}, with \eqref{eq:subG_multiploer_final_conc} and \eqref{eq:function_multiplier_final_conc} following similarly.
    For each $k, L \in [n]$, choose $r = e^{-k+1}$ and $u' \coloneqq u + 2\log(2n)$.
    Using Lemma \ref{lem:emp_process_suprema}, there is an event $\mathcal{A}_{k,L}$ with probability at least $1-2e^{-u'}$ such that \eqref{eq:sq_norm_sup_deviation} holds for these choices of $L$, $r$, and $u'$.
    Since $L\log(nd) + u' \leq 5L\log(nd) + u$, on this event, \eqref{eq:sq_norm_sup_deviation} holds (with a different $C$) even if we replace $u'$ with $u$.
    Now condition on the intersection $\mathcal{A} \coloneqq \cap_{k,L=1}^n \mathcal{A}_{k,L}$.
    By the union bound, the total error probability is at most
    \begin{equation}
        \P\braces{\mathcal{A}^c} \leq n^2e^{-u'} = n^2(2n)^{-2}e^{-u} \leq e^{-u}/4.
    \end{equation}
    Meanwhile, for any $L$, consider any $f \in \partfunc_L^*$.
    Set $\tilde{f} \coloneqq f/\norm{f}_\infty$.
    If $\norm{\tilde f}_2 \leq e^{-n+1}$, then by $\mathcal{A}_{n,L}$, we have
    \begin{equation}
    \label{eq:final_conc_helper1}
    \begin{split}
        \abs*{\norm{\tilde{f}}_n^2- \norm{\tilde{f}}_2^2} & \leq Ce^{-n+1}\paren*{\frac{L\log(nd) + u}{n}}^{1/2} + \frac{CL\log(nd) + u}{n} \\
        & \leq \frac{CL\log(nd) + u}{n},
    \end{split}
    \end{equation}
    as the second term on the right hand side is larger than the first term (after multiplying by a constant factor if necessary).
    Otherwise, set $k = \lfloor\log(1/\norm{\tilde f}_2)\rfloor + 1$.
    We have $\norm{\tilde{f}}_2 \leq e^{-k+1} \leq e\norm{\tilde{f}}_2$, which together with $\mathcal{A}_{k,L}$, implies that
    \begin{equation}
    \label{eq:final_conc_helper2}
        \abs*{\norm{\tilde{f}}_n^2- \norm{\tilde{f}}_2^2} \leq Ce\norm{\tilde{f}}_2\paren*{\frac{L\log(nd) + u}{n}}^{1/2} + \frac{CL\log(nd) + u}{n}.
    \end{equation}
    Finally, whichever of \eqref{eq:final_conc_helper1} or \eqref{eq:final_conc_helper2} holds, multiplying through by $\norm{f}_\infty^2$ gives \eqref{eq:sq_norm_final_conc}.
\end{proof}

\begin{proof}[Proof of Theorem \ref{thm:regression}]
    First, condition on the event for which the conclusions of Lemma \ref{lem:final_conc} hold.
    We define the empirical excess estimator of any estimator $f$ as $\widehat\excess_{\operatorname{reg}}(f)=\nnorm{f-Y}^2-\nnorm{\xi}^2$. It is evident that $\widehat\excess_{\operatorname{reg}}(f)= \nnorm{f-f^*}^2 -2\inprod{f - f^*,\xi}_n$.
    For any $f \in \partfunc_L$ with $\norm{f}_\infty \leq \fbound$, we therefore have
    \begin{equation}
    \begin{split}
        \abs*{\widehat\excess_{\operatorname{reg}}(f) - \excess_{\operatorname{reg}}(f)} & = \abs*{\nnorm{f-f^*}^2-\munorm{f-f^*}^2 - 2\inprod{f - f^*,\xi}_n} \\
        & \leq C(\norm{f-f^*}_\infty + K)\paren*{\norm{f-f^*}_2\paren*{\frac{L\log(nd) + u}{n}}^{1/2} + \norm{f-f^*}_\infty\paren*{\frac{ L\log(nd) + u}{n}}} \\
        & \leq \excess_{\operatorname{reg}}(f)^{1/2} \cdot C(\fbound + K)\paren*{\frac{L\log(nd) + u}{n}}^{1/2} + C(\fbound + K)^2\paren*{\frac{ L\log(nd) + u}{n}}.
    \end{split}
    \end{equation}
    Applying Young's inequality to the first term on the right hand side, we obtain the family of bounds
    \begin{equation}
    \label{eq:reg_theorem_helepr1}
        \abs*{\widehat\excess_{\operatorname{reg}}(f) - \excess_{\operatorname{reg}}(f)} \leq \delta \excess_{\operatorname{reg}}(f) + \frac{C(M+K)^2\paren*{L\log(nd) + u}}{\delta n}
    \end{equation}
    for $0 < \delta < 1$.

    Next, let $\tilde{f}_L$ denote the function achieving the infimum in $\eqref{eq:tree_partition_coefficient}$ (since $\partfunc$ is a closed set, this infimum is attained).
    It is easy to see that on each leaf $A$ of its partition, $\tilde{f}_L$ attains the value $\E\braces{f^*(\bX)~|~A}$, which implies that $\norm{\tilde{f}_L}_\infty \leq \norm{f^*}_\infty$.
    By the definition of $\hat{f}_L$, we therefore have $\widehat\excess_{\operatorname{reg}}(\hat f_L) \leq \widehat\excess_{\operatorname{reg}}(\tilde{f}_L)$.
    Combining this with \eqref{eq:reg_theorem_helepr1} gives
    \begin{equation}
        \excess_{\operatorname{reg}}(\hat f_\leaves) \leq \Excess_{\operatorname{reg},\leaves} + \delta\paren*{\excess_{\operatorname{reg}}(\hat f_\leaves) + \Excess_{\operatorname{reg},\leaves}} +  \frac{C(M+K)^2\paren*{\leaves\log(nd) + u}}{\delta n}.
    \end{equation}
    Rearranging this completes the proof of \eqref{eq:thm-statement:regression-2}.

    To prove \eqref{eq:thm-statement:regression-2}, continue to condition on the same event.
    Let $\hat \leaves$ denote the number of leaves of $f_\lambda$.
    For any $\leaves \in [n]$, we have
    \begin{equation}
        \widehat\excess_{\operatorname{reg}}(\hat f_\lambda) + \lambda\hat \leaves \leq \widehat\excess_{\operatorname{reg}}(\tilde f_\leaves) + \lambda\leaves.
    \end{equation}
    Combining this with \eqref{eq:reg_theorem_helepr1} as before gives
    \begin{equation}
        \excess_{\operatorname{reg}}(\hat f_\lambda) \leq \Excess_{\operatorname{reg},\leaves} + \delta\paren*{\excess_{\operatorname{reg}}(\hat f_\leaves) + \Excess_{\operatorname{reg},\leaves}} +  \frac{C(M+K)^2\paren*{(L + \hat L)\log(nd) + u}}{\delta n} + \lambda (L - \hat L).
    \end{equation}
    Using the assumption on $\lambda$ and rearranging completes the proof.
\end{proof}

\begin{proof}[Proof of Theorem \ref{thm:classification}]
    We define the empirical excess estimator of any estimator $f$ as $\widehat\excess_{\operatorname{cls}}(f)=\nnorm{f-Y}^2-\nnorm{f^*-Y}^2.$ It is evident that $\widehat\excess_{\operatorname{cls}}(f)=\inprod{1-2Y,f-f^*}_n$. We then can write
    \begin{equation} \label{eq:classification_theorem_helper1}
        \begin{split}
        \widehat\excess_{\operatorname{cls}}(f) - \excess_{\operatorname{cls}}(f) & = \inprod{1-2Y,f-f^*}_n - \inprod{1-2Y,f-f^*}_\mu \\
        & = -2\inprod{Y - \eta,f-f^*}_n + \inprod{1-2\eta,f-f^*}_n - \inprod{1-2\eta,f-f^*}_\mu.
        \end{split}
    \end{equation}
    As such, conditioning on the event on which the conclusions of Lemma \ref{lem:final_conc} hold, we get
    \begin{equation}
        \abs*{\widehat\excess_{\operatorname{cls}}(f) - \excess_{\operatorname{cls}}(f)} \leq \norm{f-f^*}_2^{1/2} \cdot C\paren*{\frac{L\log(nd) + u}{n}}^{1/2} + C\paren*{\frac{ L\log(nd) + u}{n}}.
    \end{equation}
    Next, by Proposition 1 in \citet{tsybakov2004optimal}, we have
    \begin{equation}
        \norm{f-f^*}_2 \leq C_\rho \excess_{\operatorname{cls}}(f)^{\rho/(1+\rho)}.
    \end{equation}
    Applying Young's inequality with exponents $p=2(1+\rho)/\rho$ and $q = 2(1+\rho)/(2+\rho)$ to the first term in \eqref{eq:classification_theorem_helper1}, we get the family of bounds
    \begin{equation}
    \label{eq:classification_theorem_helper2}
        \abs*{\widehat\excess_{\operatorname{cls}}(f) - \excess_{\operatorname{cls}}(f)} \leq \delta\excess_{\operatorname{cls}}(f) + C_\rho\delta^{-\rho/(2+\rho)}\paren*{\frac{\leaves\log(nd) + u}{n}}^{(1+\rho)/(2+\rho)} + C\paren*{\frac{ \leaves\log(nd) + u}{n}}
    \end{equation}
    for $0 < \delta < 1$.
    Notice that the last term above is smaller than the second term, except when $\frac{L\log(nd) + u}{n} \geq 1$, in which case the claim is vacuous.
    Hence, it can be removed from the inequality.
    As before, for any $L$, we have $\widehat\excess_{\operatorname{cls}}(\hat f_L) \leq \widehat\excess_{\operatorname{cls}}(\tilde{f}_L)$.
    Combining this with \eqref{eq:classification_theorem_helper2} and rearranging completes the proof of \eqref{eq:thm-statement:classification-2}.
\end{proof}

\begin{remark}[Other uniform concentration strategies]
    \citet{syrgkanis2020estimation} seems to be the only existing work making use of local Rademacher complexity to derive uniform concentration for tree-based estimators.
    In particular, they study CART estimators in a binary feature setting.
    However, they neither make use of empirical localization, nor do they obtain self-normalized deviation bounds.
    \citet{chatterjee2021adaptive} obtain self-normalized concentration bounds, but only in a fixed design setting---since empirical averages do not have to be controlled, local Rademacher complexity can be avoided.
    Earlier work make use of more classical techniques such as VC dimension \citep{AOS2014erm} or covering numbers \citep{wager2015adaptive,chi2022asymptotic}.
    Such approaches are not only too coarse to obtain the self-normalized bounds required for our sharp oracle inequalities, but furthermore require imposing structural assumptions on the trees to control complexity, such as dyadic splits, bounded depth, balance conditions, or sparsity of splitting variables (e.g., \citet{blanchard2007optimal,chi2022asymptotic,Mazumder2024,klusowski2024large}). 
\end{remark}

\section{Heavier-tailed noise}
\label{section:heavy-tailed}

In this section, we extend the regression setting to accommodate heavier-tailed noise, contrasting with the sub-Gaussian assumptions in Theorem~\ref{thm:regression}. We provide refined versions of these results under the assumption that the noise lies in an Orlicz space $L^{\Phi}$ defined below.

\begin{definition}[Orlicz spaces]
A function $\Phi \colon [0, \infty) \to [0, \infty)$ is a \textit{Young function} if it is convex, strictly increasing, and satisfies $\Phi(0) = 0$ with $\lim_{t \to \infty} \Phi(t) = \infty$. 
Let $(\Omega, \mathcal{F}, \mathbb{P})$ be a probability space. For any real-valued random variable $X$, the \textit{Luxemburg norm} (relative to $\Phi$) is defined as
\[
\|X\|_\Phi = \inf \left\{ \lambda > 0 : \mathbb{E} \left[ \Phi \left( \frac{|X|}{\lambda} \right) \right] \le 1 \right\},
\]
where we define $\inf \emptyset = \infty$. The \textit{Orlicz space} $L^\Phi$ is the Banach space of random variables defined by
\[
L^\Phi = \left\{ X : \|X\|_\Phi < \infty \right\}.
\]
\end{definition}

\begin{definition}[$L^m$ and $L^{\psi_{\beta}}$ spaces]
Two fundamental special cases of Orlicz spaces are ubiquitous in statistical learning. Setting $\Phi(t) = t^m$ ($m \geq 1$) recovers the classical $L^m$ space, where the Luxemburg norm reduces to the standard $L^m$ norm. Alternatively, setting $\Phi(t) =\psi_{\beta}= \exp(t^\beta) - 1$ ($\beta \geq 1$) yields the exponential Orlicz space $L^{\psi_\beta}$, with the norm defined as $\|X\|_{\psi_\beta} = \inf \left\{ \lambda > 0 \mathrel{\Big|} \mathbb{E}\left\{ \exp\left(\frac{|X|^\beta}{\lambda^\beta}\right) - 1 \right\} \le 1 \right\}. $
The special cases $\beta=1$ and $\beta=2$ correspond to the standard spaces of sub-exponential and sub-Gaussian random variables, respectively.
\end{definition}

\begin{theorem}[Oracle inequality under heavier noise]
\label{thm:regression-heavy}
Assume the regression setting of Section~\ref{section:setting}, and let $\hat f_{\lambda}$ denote the penalized ERM regression tree estimator (Definition~\ref{def:erm_reg}). Suppose that $\|f^*\|_\infty \le \fbound$ and that, for every $\bx\in[0,1]^d$, the conditional distribution of $\xi$ given $\bX=\bx$ belongs to $L^{\Phi}$ for some Young function $\Phi:[0,\infty)\to[0,\infty)$. Let $p_0>0$ and define
\[
K= \sup_{\bx \in [0,1]^d}\norm{\xi|\bX=\bx}_{\Phi}\,\Phi^{-1}\!\left(n/p_0\right).
\]
Then there exists a universal constant $C > 0$ such that, for any $u>0$, with probability at least $1-e^{-u}-p_0$, the following bound holds for all $\lambda\geq C(\fbound+K)\,(\log(nd)+u)/(\delta n)$:
\begin{equation}
\excess_{\mathrm{reg}}(\hat f_{\lambda})
\le
\frac{1+\delta}{1-\delta}
\cdot
\min_{L\in[n]}
\left\{
\Excess_{\mathrm{reg},L}
+
2\lambda L
\right\}.
\label{eq:thm-statement:regression-heavy}
\end{equation}
\end{theorem}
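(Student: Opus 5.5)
The plan is a truncation argument that reduces to the sub-Gaussian analysis of Theorem~\ref{thm:regression}. Let $\sigma_\Phi \coloneqq \sup_{\bx}\norm{\xi|\bX=\bx}_\Phi$, so $K = \sigma_\Phi\Phi^{-1}(n/p_0)$. Define the good event $\mathcal{G} \coloneqq \braces{\max_{i\in[n]}\abs{\xi_i}\leq K}$. Conditionally on $\bX_i=\bx$, Markov's inequality applied to $\Phi(\abs{\xi_i}/\sigma_\Phi)$ gives
\[
\P\braces{\abs{\xi_i}>K \mid \bX_i=\bx} \leq \frac{1}{\Phi(K/\sigma_\Phi)} = p_0/n .
\]
Integrating over $\bx$ and taking a union bound over $i\in[n]$ yields $\P(\mathcal{G}^c)\leq p_0$.

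Next, introduce truncated noise $\tilde\xi_i \coloneqq \xi_i\1\braces{\abs{\xi_i}\leq K} - \E\braces{\xi_i\1\braces{\abs{\xi_i}\leq K}\mid \bX_i}$. It is conditionally mean zero and bounded by $2K$, hence sub-Gaussian with norm $\lesssim K$. The truncated data $\tilde Y_i \coloneqq \eta(\bX_i) + b(\bX_i) + \tilde\xi_i$, with bias $b(\bx)\coloneqq\E\braces{\xi\1\braces{\abs{\xi}\leq K}\mid\bX=\bx}$, coincide with $Y_i$ on $\mathcal{G}$. Therefore on $\mathcal{G}$ the penalized ERM computed from $\data$ equals the one computed from the truncated sample, because the objective depends on the data only through $(\bX_i,Y_i)$.

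The simplest route does not re-center. Instead it reruns the proof of Theorem~\ref{thm:regression} directly. The decomposition $\widehat\excess_{\operatorname{reg}}(f)=\nnorm{f-f^*}^2-2\inprod{f-f^*,\xi}_n$ is deterministic. On $\mathcal{G}$ we may write $\xi_i = \tilde\xi_i + b(\bX_i)$, which gives
\[
\inprod{f-f^*,\xi}_n = \inprod{f-f^*,\tilde\xi}_n + \inprod{f-f^*,b}_n .
\]
Lemma~\ref{lem:final_conc}, equation \eqref{eq:subG_multiploer_final_conc}, applied to the sub-Gaussian $\tilde\xi$ with parameter $\lesssim K$, controls the first term. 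For the second term we use \eqref{eq:function_multiplier_final_conc} with $g=b$ together with the population term $\inprod{f-f^*,b}_\mu$. Here $\norm{b}_\infty\leq K$, but the bias is also small, since $\abs{b(\bx)} = \abs{\E\braces{\xi\1\braces{\abs\xi>K}\mid\bx}}$ by mean-zero. Bounding this tail expectation is \emph{the main obstacle}. I would first try the convexity bound $\E\braces{\abs\xi\1\braces{\abs\xi>K}}\leq \sigma_\Phi\,\E\Phi(\abs\xi/\sigma_\Phi)\cdot K/\Phi(K/\sigma_\Phi)\lesssim Kp_0/n$. This uses that $t/\Phi(t)$ is nonincreasing for convex $\Phi$ with $\Phi(0)=0$. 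Then by Cauchy--Schwarz, $\abs{\inprod{f-f^*,b}_\mu}\leq \norm{f-f^*}_2\,Kp_0/n$, which Young's inequality absorbs into $\delta\excess_{\operatorname{reg}}(f) + K^2p_0^2/(\delta n^2)$. The resulting term is dominated by $(M+K)^2/(\delta n)$ whenever $p_0\leq 1$; otherwise the probability statement is vacuous.

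With these pieces we recover \eqref{eq:reg_theorem_helepr1} on $\mathcal{G}\cap\mathcal{A}$, where $\mathcal{A}$ is the event of Lemma~\ref{lem:final_conc} (probability $\geq1-e^{-u}$), with $K$ replaced by a constant multiple of $K$. The penalized argument from the proof of Theorem~\ref{thm:regression} then goes through verbatim. We compare against $\tilde f_L$ for each $L$ and absorb the $\hat L$ estimation term using the lower bound on $\lambda$. A union bound gives probability at least $1-e^{-u}-p_0$, yielding \eqref{eq:thm-statement:regression-heavy}.
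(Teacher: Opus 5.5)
Your proof is correct, and its skeleton matches the paper's. You truncate at $K=\sigma_\Phi\Phi^{-1}(n/p_0)$, pay $p_0$ through Markov's inequality and a union bound (exactly Lemma~\ref{lem:finite_max_bound}), and reduce to the sub-Gaussian analysis. Where you differ is in how the reduction is carried out.

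The paper conditions on $\{\max_i\abs{\xi_i}\le K\}$, asserts that the noise is then sub-Gaussian with norm at most $K$, and cites Theorem~\ref{thm:regression} as a black box. That glosses over a real point. Under the law conditioned on $\abs{\xi_i}\le K$, the noise has conditional mean $b(\bX_i)/\P\{\abs{\xi_i}\le K\mid \bX_i\}$, which is in general nonzero. So the mean-zero hypothesis of Lemmas~\ref{lem:emp_process_suprema} and~\ref{lem:final_conc} is not met, and $f^*$ is no longer the regression function of the conditioned model.

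You avoid conditioning altogether. You work on $\mathcal{G}\cap\mathcal{A}$ in the original probability space and recentre, so that $\tilde\xi$ is genuinely conditionally centred and bounded by $2K$. The recentring is paid for through the deterministic bias $b$. Its empirical fluctuation is controlled by \eqref{eq:function_multiplier_final_conc}, and its population contribution is negligible after Young's inequality because $\norm{b}_\infty\le Kp_0/n$. The cost is that you must reopen the proof of Theorem~\ref{thm:regression} instead of citing it. The gain is that your argument actually justifies the stated inequality.

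Two small slips:
\begin{itemize}
\item In your tail bound the leading factor $\sigma_\Phi$ should not be there. Monotonicity of $t/\Phi(t)$ gives $\abs{\xi}\1\{\abs{\xi}>K\}\le K\,\Phi(\abs{\xi}/\sigma_\Phi)/\Phi(K/\sigma_\Phi)$. Hence $\E\{\abs{\xi}\1\{\abs{\xi}>K\}\mid\bX=\bx\}\le Kp_0/n$, which is the bound you go on to use.
\item Absorbing the $\hat L$ term requires $\lambda\gtrsim(\fbound+K)^2(\log(nd)+u)/(\delta n)$. The unsquared $(\fbound+K)$ in the statement, inherited from Theorem~\ref{thm:regression}, has to be read as squared for this step, in your proof and the paper's alike.
\end{itemize}
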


\begin{theorem}
\label{thm:regression-exact-rate-heavy}
Under the same setting as Theorem~\ref{thm:regression-heavy}, suppose $f^* \in \PSHABn$, and grant Assumption~\ref{assum:bound-density}(i) and Assumption~\ref{assum:PSHAB_parameters}(i). There exists a constant $C_1$ such that, for any $u \geq 1$, with probability at least $1-e^{-u}-p_0$, the following holds: for any big enough positive constants $C_2 > C_1$ and any $\lambda > 0$ satisfying $C_1(M+K)^2(\log(nd)+u)/n \leq \lambda \leq C_2(M+K)^2(\log(nd)+u)/n$, the bounds from Theorem~\ref{thm:regression-exact-rate-PSHAB}(i) and (ii) hold simultaneously.
\end{theorem}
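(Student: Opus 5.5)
The plan is to treat Theorem~\ref{thm:regression-exact-rate-heavy} as a direct combination of the heavy-tailed oracle inequality (Theorem~\ref{thm:regression-heavy}) with the PSHAB approximation bound (Theorem~\ref{thm:approx-PSHAB-reg}). The argument mirrors the one behind Theorem~\ref{thm:regression-exact-rate-PSHAB}, with the sub-Gaussian constant replaced by the inflated quantity $K=\sup_{\bx}\norm{\xi|\bX=\bx}_\Phi\,\Phi^{-1}(n/p_0)$. The high-probability event changes accordingly.

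\textbf{Step 1: the good event.} Fix $\delta=1/2$ and $u\ge 1$. Theorem~\ref{thm:regression-heavy} supplies an event $\mathcal{E}$ with $\P(\mathcal{E})\ge 1-e^{-u}-p_0$. Its proof presumably works as follows. A union bound and Markov's inequality applied to $\E\,\Phi(|\xi_i|/\norm{\xi_i}_\Phi)\le 1$ give $\max_i|\xi_i|\le K$ with probability at least $1-p_0$. On that event the noise may be truncated, or the centred, bounded-by-$K$ noise treated as sub-Gaussian with norm $\lesssim K$, and Lemma~\ref{lem:final_conc} then applies. On $\mathcal{E}$ we have, for every admissible $\lambda$,
\[
\excess_{\operatorname{reg}}(\permit)\le 3\min_{L\in[n]}\{\Excess_{\operatorname{reg},L}+2\lambda L\}.
\]
The range $C_1(M+K)^2(\log(nd)+u)/n\le\lambda\le C_2(\cdots)$ lies inside the admissible range once $C_1$ is large. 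This bound holds simultaneously for all such $\lambda$, because the event does not depend on $\lambda$.

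\textbf{Step 2: plug in approximation.} For $L\ge 2B$, Theorem~\ref{thm:approx-PSHAB-reg} gives $\Excess_{\operatorname{reg},L}\lesssim \norm{\boldsymbol v_1}_{s/(s+2\bar\alpha)}L^{-2\bar\alpha/s}$. Here I must use Assumption~\ref{assum:PSHAB_parameters}(i) rather than (i'). I would check whether the approximation result survives without $q\le p$ for $1<p<2$; this is likely via the embedding $B^{\alpha}_{p,q}\subset B^{\alpha'}_{p,\infty}$ at a slightly smaller smoothness, or via the $q$-independence of the nonlinear approximation rate for $p<2$ through $B_{p,q}\hookrightarrow B_{p,\infty}$ with Jackson-type estimates. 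This is the step I expect to be the main obstacle. If it fails, the statement should be read as granting whichever approximation bound the cited cases (i) and (ii) of Theorem~\ref{thm:regression-exact-rate-PSHAB} use.

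\textbf{Step 3: optimize over $L$.} Write $\varepsilon=(M+K)^2(\log(nd)+u)/n$. Balancing $\norm{\boldsymbol v_1}L^{-2\bar\alpha/s}$ against $\varepsilon L$ gives
\[
L^*\asymp(\norm{\boldsymbol v_1}/\varepsilon)^{s/(s+2\bar\alpha)},
\]
and hence the bound $\norm{\boldsymbol v_1}^{s/(s+2\bar\alpha)}\varepsilon^{2\bar\alpha/(s+2\bar\alpha)}$. The sample-size condition $n\ge N_1$ of Remark~\ref{rem:min_sample_size_reg}, with $K$ now the heavy-tailed one, guarantees two things. First, $L^*\ge 2B$, which is the second term in \eqref{eq:N_1}. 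Second, $L^*\le n$, which is the first term. We round $L^*$ to an integer at the cost of constants. This reproduces \eqref{eq:reg-conclude-1} and its corollaries on $\mathcal{E}$, with the rate degraded only through $K$, for example $K\asymp n^{1/m}p_0^{-1/m}$ for $L^m$ noise.
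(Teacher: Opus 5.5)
Your proposal is essentially the paper's argument. The paper conditions on the event $\{\max_i|\xi_i|\le K\}$ from Lemma~\ref{lem:finite_max_bound} and then invokes Theorem~\ref{thm:regression-exact-rate-PSHAB} directly, which unpacks to exactly your combination of the oracle inequality at $\delta=1/2$ on that event, Theorem~\ref{thm:approx-PSHAB-reg}, and the choice $L\asymp L^*$ justified by $n\ge N_1$ with the $n$-dependent $K$; the (i) versus (i') mismatch you flag is not resolved by the paper either, since Theorem~\ref{thm:regression-exact-rate-PSHAB} itself assumes (i'), so (i') is what the paper's proof actually uses (and note that your first suggested fix, embedding into $B^{\alpha'}_{p,\infty}$, moves $q$ the wrong way for the $q\le p$ requirement when $1<p<2$).
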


\begin{remark}[Generalization bounds under $L^{\psi_\beta}$ noise]
Let $\Phi(t)=\psi_{\beta}(t)=\exp(t^\beta)-1$ with $\beta\geq 1$, so that $\xi\mid\bX=\bx$ belongs to $L^{\psi_\beta}$. Taking $p_0=e^{-u}$, the bounds below hold.
\begin{itemize}
\item[(i)]
Under the conditions of Example~\ref{exam:1}, with probability at least $1-2e^{-u}$
\begin{equation*}
\excess_{\operatorname{reg}}(\permit)
\lesssim_{s,\alpha_{\min},\bar\alpha,p,\rho,M,c_{\max}}
\norm{\boldsymbol{\besov}}_{\infty}^{\frac{2s}{s+2\bar\alpha}}
B^{1+\left(\frac{2}{p}-1\right)\frac{s}{s+2\bar\alpha}}
\left(
\frac{\paren*{M+\|\xi\|_{\psi_\beta}(\log(nd)+u)^{1/\beta}}^2(\log(nd)+u))}{n}
\right)^{\frac{2\bar\alpha}{s+2\bar\alpha}}.
\end{equation*}

\item[(ii)]
Under the conditions of Example~\ref{exam:2}, with probability at least $1-2e^{-u}$
\begin{equation*}
\excess_{\operatorname{reg}}(\permit)
\lesssim_{s,\alpha_{\min},\bar\alpha,p,\rho,M,c_{\max}}
\norm{\boldsymbol{\besov}}_{p}^{\frac{2s}{s+2\bar\alpha}}
\left(
B\frac{\paren*{M+\|\xi\|_{\psi_\beta}(\log(nd)+u)^{1/\beta}}^2(\log(nd)+u))}{n}
\right)^{\frac{2\bar\alpha}{s+2\bar\alpha}}.
\end{equation*}
\end{itemize}
\end{remark}

\begin{remark}[Generalization bounds under $L^{\heavy}$ noise]
 Let $\Phi(t)=t^\heavy$ with $\heavy>2$, so that $\xi\mid\bX=\bx$ belongs to $L^m$. 
 For any $t > 0$, taking $p_0=t^{-1}\log^{-1}n$, the bounds below hold.
\begin{itemize}
    \item [(i)] Under the conditions of Example~\ref{exam:1}, with probability at least $1-e^{-u}-p_0$
    \begin{equation*}
    \excess_{\operatorname{reg}}(\permit)
    \lesssim_{s,\alpha_{\min},\bar\alpha,p,\rho,M,c_{\max}}
    \norm{\boldsymbol{\besov}}_{\infty}^{\frac{2s}{s+2\bar\alpha}}
    B^{1+\left(\frac{2}{p}-1\right)\frac{s}{s+2\bar\alpha}}
    \left(
    \frac{(M+\|\xi\|_\heavy)^2 t^{2/m}(\log (n))^{2/m}(\log(nd)+u)}{n^{1-2/\heavy}}
    \right)^{\frac{2\bar\alpha}{s+2\bar\alpha}}.
    \end{equation*}
    \item [(ii)] Under the conditions of Example~\ref{exam:2},  with probability at least $1-e^{-u}-p_0$
    \begin{equation*}
    \excess_{\operatorname{reg}}(\permit)
    \lesssim_{s,\alpha_{\min},\bar\alpha,p,\rho,M,c_{\max}}
    \norm{\boldsymbol{\besov}}_{p}^{\frac{2s}{s+2\bar\alpha}}
    \left(
    B\frac{(M+\|\xi\|_\heavy)^2 t^{2/m}(\log (n))^{2/m}(\log(nd)+u)}{n^{1-2/\heavy}}
    \right)^{\frac{2\bar\alpha}{s+2\bar\alpha}}.
    \end{equation*}
\end{itemize}
\end{remark}

From the explicit bounds above, under light-tailed noise in $L^{\psi_\beta}$, we obtain the rate:
\[
\widetilde{O}\!\left(
B^{1+\left(\frac{2}{p}-1\right)\frac{s}{s+2\bar\alpha}}
n^{-\frac{2\bar\alpha}{s+2\bar\alpha}}
\right)
\quad\text{or}\quad
\widetilde{O}\!\left(\paren*{B/n}^{\frac{2\bar\alpha}{s+2\bar\alpha}}\right),
\]
which is minimax optimal up to polylogarithmic factors. Under heavy-tailed noise in $L^\heavy$, the rate becomes:
\[
\widetilde{O}\!\left(
B^{1+\left(\frac{2}{p}-1\right)\frac{s}{s+2\bar\alpha}}
n^{-\frac{2(1-2/\heavy)\bar\alpha}{s+2\bar\alpha}}
\right)
\quad\text{or}\quad
\widetilde{O}\!\left((B/n)^{\frac{2(1-2/\heavy)\bar\alpha}{s+2\bar\alpha}}\right).
\]
These rates are consistent, recovering the light-tailed behavior since $1-2/\heavy\to1$ as $\heavy\to\infty$.

Although ERM trees do not achieve the optimal minimax rate under heavy-tailed noise \citep{han2018convergenceratessquaresregression}, they still attain a nontrivial convergence rate. To the best of our knowledge, this is the first result that explicitly characterizes how the tail index $\heavy$ affects the convergence behavior of tree-based estimators.

A closer inspection of the proof shows that the suboptimality under heavy-tailed noise arises from the difficulty of controlling the sample responses $\by=\{y_i\}_{i=1}^n$, rather than from the tree structure itself. Because standard ERM trees estimate values via simple leaf-averaging, they are inherently sensitive to extreme outliers. The loss in rate is therefore driven purely by variance inflation, not by approximation bias, and the resulting upper bounds are not governed by the usual nonparametric bias phenomena associated with smoothing or boundary effects \citep{cattaneo2022pointwise}. This highlights a clear methodological gap: recovering optimal minimax rates under heavy-tailed noise will likely require tree-building procedures that incorporate robust leaf evaluators, such as median-of-means or explicit response clipping, while preserving the spatial adaptivity of the partition.

\section{Conclusion}
\label{section:conclusion}

This work establishes a comprehensive theoretical framework for empirical risk minimization (ERM) decision trees within a random design setting. The findings sharply capture the accuracy-interpretability trade-off for trees and offer a rigorous explanation of the inherent ability of ERM trees to automatically adapt to sparsity, anisotropy, and spatial inhomogeneity, as captured by piecewise sparse heterogeneous anisotropic Besov (PSHAB) spaces.
The last section in our paper investigated the robustness of ERM trees to heavy-tailed noise, revealing potential degradation in performance.
This may be slightly concerning given the use of decision trees to model economic data, which is known to exhibit heavy-tailed behavior.
A nature direction for future work is thus modifying ERM trees such structure.
Finally, our uniform concentration framework can potentially be used to derive tighter generalization results for other tree-based algorithms such as CART and Random Forests, for which minimax results are currently unknown.

\subsection*{Acknowledgements}

SG was supported in part by the Singapore MOE Grants R-146-000-312-114,
A-8002014-00-00, A-8003802-00-00, E-146-00-0037-01 and A-8000051-00-00.
YT was supported by NUS Startup Grant A-8000448-00-00 and MOE AcRF Tier 1 Grant A-8002498-00-00.

\bibliography{00_refs}

@book{vershynin2018high,
  title     = {High-Dimensional Probability: An Introduction with Applications in Data Science},
  author    = {Vershynin, Roman},
  volume    = {47},
  year      = {2018},
  publisher = {Cambridge University Press}
}

@misc{han2018convergenceratessquaresregression,
  title         = {Convergence Rates of Least Squares Regression Estimators with Heavy-Tailed Errors},
  author        = {Han, Qiyang and Wellner, Jon A.},
  year          = {2018},
  eprint        = {1706.02410},
  archivePrefix = {arXiv},
  primaryClass  = {math.ST}
}

@inproceedings{Mazumder2024,
 author = {Mazumder, Rahul and Wang, Haoyue},
 booktitle = {Advances in Neural Information Processing Systems},
 pages = {57754--57782},
 publisher = {Curran Associates, Inc.},
 title = {On the Convergence of CART under Sufficient Impurity Decrease Condition},
 volume = {36},
 year = {2023}
}

@article{chi2022asymptotic,
author = {Chien-Ming Chi and Patrick Vossler and Yingying Fan and Jinchi Lv},
title = {{Asymptotic properties of high-dimensional random forests}},
volume = {50},
journal = {The Annals of Statistics},
number = {6},
publisher = {Institute of Mathematical Statistics},
pages = {3415 -- 3438},
year = {2022},
doi = {10.1214/22-AOS2234},
}

@book{pattern,
  title     = {A Probabilistic Theory of Pattern Recognition},
  author    = {Devroye, Luc and Györfi, László and Lugosi, Gábor},
  volume    = {31},
  year      = {2013},
  publisher = {Springer Science \& Business Media}
}

@book{breiman1984classification,
  title     = {Classification and Regression Trees},
  author    = {Breiman, Leo and Friedman, Jerome and Stone, Charles J. and Olshen, Richard A.},
  year      = {1984},
  publisher = {CRC Press},
  address   = {Belmont, CA}
}

@book{quinlan1993c45,
  title     = {C4.5: Programs for Machine Learning},
  author    = {Quinlan, J. Ross},
  year      = {1993},
  publisher = {Morgan Kaufmann Publishers},
  address   = {San Mateo, CA},
  isbn      = {1-55860-238-0}
}

@article{hyafil1976constructing,
  title     = {Constructing Optimal Binary Decision Trees is NP-Complete},
  author    = {Hyafil, Laurent and Rivest, Ronald L.},
  journal   = {Information Processing Letters},
  volume    = {5},
  number    = {1},
  pages     = {15--17},
  year      = {1976},
  publisher = {Elsevier},
  doi       = {10.1016/0020-0190(76)90095-8}
}

@article{jeong2023art,
  title   = {The Art of BART: Minimax Optimality over Nonhomogeneous Smoothness in High Dimension},
  author  = {Jeong, Seonghyun and Ročková, Veronika},
  journal = {Journal of Machine Learning Research},
  volume  = {24},
  number  = {337},
  pages   = {1--65},
  year    = {2023}
}

@inproceedings{liu2024spatial,
  title     = {Spatial Properties of Bayesian Unsupervised Trees},
  author    = {Liu, Linxi and Ma, Li},
  booktitle = {Proceedings of the Thirty-Seventh Conference on Learning Theory},
  pages     = {3556--3581},
  year      = {2024},
  publisher = {PMLR},
  series    = {Proceedings of Machine Learning Research},
  volume    = {247}
}

@article{suzuki2021deep,
  title   = {Deep Learning is Adaptive to Intrinsic Dimensionality of Model Smoothness in Anisotropic Besov Space},
  author  = {Suzuki, Taiji and Nitanda, Atsushi},
  journal = {Advances in Neural Information Processing Systems},
  volume  = {34},
  pages   = {3609--3621},
  year    = {2021}
}

@article{audibert2007fast,
  title     = {Fast Learning Rates for Plug-in Classifiers},
  author    = {Audibert, Jean-Yves and Tsybakov, Alexandre B.},
  journal   = {The Annals of Statistics},
  volume    = {35},
  number    = {2},
  pages     = {608--633},
  year      = {2007},
  publisher = {Institute of Mathematical Statistics},
  doi       = {10.1214/009053607000000688}
}

@article{bertsimas2017optimal,
  title     = {Optimal Classification Trees},
  author    = {Bertsimas, Dimitris and Dunn, Jack},
  journal   = {Machine Learning},
  volume    = {106},
  number    = {7},
  pages     = {1039--1082},
  year      = {2017},
  publisher = {Springer}
}

@article{verwer2019learning,
  title   = {Learning Optimal Classification Trees Using a Binary Linear Program Formulation},
  author  = {Verwer, Sicco and Zhang, Yingqian},
  journal = {Proceedings of the AAAI Conference on Artificial Intelligence},
  volume  = {33},
  number  = {01},
  pages   = {1625--1632},
  year    = {2019},
  doi     = {10.1609/aaai.v33i01.33011624}
}

@article{zhu2020scalable,
  title   = {A Scalable MIP-Based Method for Learning Optimal Multivariate Decision Trees},
  author  = {Zhu, Haoran and Murali, Pavankumar and Phan, Dzung and Nguyen, Lam and Kalagnanam, Jayant},
  journal = {Advances in Neural Information Processing Systems},
  volume  = {33},
  pages   = {1771--1781},
  year    = {2020}
}

@article{aghaei2021strong,
  title     = {Strong Optimal Classification Trees},
  author    = {Aghaei, Sina and G\'{o}mez, Andr\'{e}s and Vayanos, Phebe},
  journal   = {Operations Research},
  volume    = {73},
  number    = {4},
  pages     = {2223--2241},
  year      = {2025},
  publisher = {INFORMS},
  doi       = {10.1287/opre.2021.0034}
}

@inproceedings{bos2024piecewise,
  title     = {Piecewise Constant and Linear Regression Trees: An Optimal Dynamic Programming Approach},
  author    = {van den Bos, Mim and van der Linden, Jacobus G. M. and Demirović, Emir},
  booktitle = {International Conference on Machine Learning},
  year      = {2024}
}

@article{zhang2023optimal,
  title   = {Optimal Sparse Regression Trees},
  author  = {Zhang, Rui and Xin, Rui and Seltzer, Margo and Rudin, Cynthia},
  journal = {Proceedings of the AAAI Conference on Artificial Intelligence},
  volume  = {37},
  number  = {9},
  pages   = {11270--11279},
  year    = {2023},
  doi     = {10.1609/aaai.v37i9.26334}
}

@article{demirovic2022murtree,
  title   = {MurTree: Optimal Decision Trees via Dynamic Programming and Search},
  author  = {Demirović, Emir and Lukina, Anna and Hebrard, Emmanuel and Chan, Jeffrey and Bailey, James and Leckie, Christopher and Ramamohanarao, Kotagiri and Stuckey, Peter J.},
  journal = {Journal of Machine Learning Research},
  volume  = {23},
  number  = {26},
  pages   = {1--47},
  year    = {2022}
}

@inproceedings{lin2020generalized,
  title        = {Generalized and Scalable Optimal Sparse Decision Trees},
  author       = {Lin, Jimmy and Zhong, Chudi and Hu, Diane and Rudin, Cynthia and Seltzer, Margo},
  booktitle    = {International Conference on Machine Learning},
  pages        = {6150--6160},
  year         = {2020},
  organization = {PMLR}
}

@article{hu2019optimal,
  title   = {Optimal Sparse Decision Trees},
  author  = {Hu, Xiyang and Rudin, Cynthia and Seltzer, Margo},
  journal = {Advances in Neural Information Processing Systems},
  volume  = {32},
  year    = {2019}
}

@article{ nobel1996histogram,
author = {Andrew Nobel},
title = {{Histogram regression estimation using data-dependent partitions}},
volume = {24},
journal = {The Annals of Statistics},
number = {3},
publisher = {Institute of Mathematical Statistics},
pages = {1084 -- 1105},
year = {1996},
doi = {10.1214/aos/1032526958},
}

@article{carrizosa2021mathematical,
  title     = {Mathematical Optimization in Classification and Regression Trees},
  author    = {Carrizosa, Emilio and Molero-Río, Cristina and Romero Morales, Dolores},
  journal   = {Top},
  volume    = {29},
  number    = {1},
  pages     = {5--33},
  year      = {2021},
  publisher = {Springer}
}

@inproceedings{verwer2017learning,
  title        = {Learning Decision Trees with Flexible Constraints and Objectives Using Integer Optimization},
  author       = {Verwer, Sicco and Zhang, Yingqian},
  booktitle    = {Integration of AI and OR Techniques in Constraint Programming},
  pages        = {94--103},
  year         = {2017},
  organization = {Springer}
}

@inproceedings{narodytska2018learning,
  title     = {Learning Optimal Decision Trees with SAT},
  author    = {Nina Narodytska and Alexey Ignatiev and Filipe Pereira and Joao Marques-Silva},
  booktitle = {Proceedings of the Twenty-Seventh International Joint Conference on
               Artificial Intelligence, {IJCAI-18}},
  publisher = {International Joint Conferences on Artificial Intelligence Organization},
  pages     = {1362--1368},
  year      = {2018},
  month     = {7},
  doi       = {10.24963/ijcai.2018/189},
}

@article{Schidler_Szeider_2021,
  title   = {SAT-Based Decision Tree Learning for Large Data Sets},
  author  = {Schidler, Andre and Szeider, Stefan},
  journal = {Proceedings of the AAAI Conference on Artificial Intelligence},
  volume  = {35},
  number  = {5},
  pages   = {3904--3912},
  year    = {2021},
  doi     = {10.1609/aaai.v35i5.16509}
}

@article{neumann2000multivariate,
  title     = {Multivariate Wavelet Thresholding in Anisotropic Function Spaces},
  author    = {Neumann, Michael H.},
  journal   = {Statistica Sinica},
  pages     = {399--431},
  year      = {2000},
  publisher = {JSTOR}
}

@article{kerkyacharian2001nonlinear,
  title     = {Nonlinear Estimation in Anisotropic Multi-Index Denoising},
  author    = {Kerkyacharian, Gérard and Lepski, Oleg and Picard, Dominique},
  journal   = {Probability Theory and Related Fields},
  volume    = {121},
  number    = {2},
  pages     = {137--170},
  year      = {2001},
  publisher = {Springer}
}

@article{raskutti2012minimax,
  title     = {Minimax-Optimal Rates for Sparse Additive Models Over Kernel Classes via Convex Programming},
  author    = {Raskutti, Garvesh and Wainwright, Martin J. and Yu, Bin},
  journal   = {The Journal of Machine Learning Research},
  volume    = {13},
  number    = {1},
  pages     = {389--427},
  year      = {2012},
  publisher = {JMLR}
}

@article{mourtada2017universal,
  title   = {Universal Consistency and Minimax Rates for Online Mondrian Forests},
  author  = {Mourtada, Jaouad and Gaïffas, Stéphane and Scornet, Erwan},
  journal = {Advances in Neural Information Processing Systems},
  volume  = {30},
  year    = {2017}
}

@article{scornet2016random,
  title     = {Random Forests and Kernel Methods},
  author    = {Scornet, Erwan},
  journal   = {IEEE Transactions on Information Theory},
  volume    = {62},
  number    = {3},
  pages     = {1485--1500},
  year      = {2016},
  publisher = {IEEE}
}

@article{tan2024statistical,
  title   = {Statistical-Computational Trade-Offs for Greedy Recursive Partitioning Estimators},
  author  = {Tan, Yan Shuo and Klusowski, Jason M. and Balasubramanian, Krishnakumar},
  journal = {arXiv preprint arXiv:2411.04394},
  year    = {2024}
}

@article{anisot-besov-3,
  title   = {Entropy Numbers in Function Spaces with Mixed Integrability},
  author  = {Triebel, Hans},
  journal = {Revista Matemática Complutense},
  volume  = {24},
  pages   = {169--188},
  year    = {2011}
}

@article{binev2005universal-class-1,
  title   = {Universal Algorithms for Learning Theory Part I: Piecewise Constant Functions},
  author  = {Binev, Peter and Cohen, Albert and Dahmen, Wolfgang and DeVore, Ronald and Temlyakov, Vladimir and Bartlett, Peter},
  journal = {Journal of Machine Learning Research},
  volume  = {6},
  number  = {9},
  year    = {2005}
}

@article{binev2007universal,
  title     = {Universal Algorithms for Learning Theory Part II: Piecewise Polynomial Functions},
  author    = {Binev, Peter and Cohen, Albert and Dahmen, Wolfgang and DeVore, Ronald},
  journal   = {Constructive Approximation},
  volume    = {26},
  pages     = {127--152},
  year      = {2007},
  publisher = {Springer}
}

@article{blanchard2007optimal,
  title     = {Optimal Dyadic Decision Trees},
  author    = {Blanchard, Gilles and Schäfer, Christin and Rozenholc, Yves and Müller, K-R},
  journal   = {Machine Learning},
  volume    = {66},
  pages     = {209--241},
  year      = {2007},
  publisher = {Springer}
}

@article{AOS2014erm,
  title     = {Classification Algorithms Using Adaptive Partitioning},
  author    = {Binev, Peter and Cohen, Albert and Dahmen, Wolfgang and DeVore, Ronald},
  journal   = {The Annals of Statistics},
  volume    = {42},
  number    = {6},
  pages     = {2141--2163},
  year      = {2014},
  publisher = {Institute of Mathematical Statistics},
  doi       = {10.1214/14-AOS1234}
}

@inproceedings{tan2022cautionary,
  title        = {A Cautionary Tale on Fitting Decision Trees to Data from Additive Models: Generalization Lower Bounds},
  author       = {Tan, Yan Shuo and Agarwal, Abhineet and Yu, Bin},
  booktitle    = {International Conference on Artificial Intelligence and Statistics},
  pages        = {9663--9685},
  year         = {2022},
  organization = {PMLR}
}

@article{akakpo2012adaptation,
  title     = {Adaptation to Anisotropy and Inhomogeneity via Dyadic Piecewise Polynomial Selection},
  author    = {Akakpo, Nathalie},
  journal   = {Mathematical Methods of Statistics},
  volume    = {21},
  pages     = {1--28},
  year      = {2012},
  publisher = {Springer}
}

@inproceedings{syrgkanis2020estimation,
  title        = {Estimation and Inference with Trees and Forests in High Dimensions},
  author       = {Syrgkanis, Vasilis and Zampetakis, Manolis},
  booktitle    = {Conference on Learning Theory},
  pages        = {3453--3454},
  year         = {2020},
  organization = {PMLR}
}

@article{yang1999information,
  title     = {Information-Theoretic Determination of Minimax Rates of Convergence},
  author    = {Yang, Yuhong and Barron, Andrew},
  journal   = {The Annals of Statistics},
  pages     = {1564--1599},
  year      = {1999},
  publisher = {JSTOR}
}

@article{klusowski2020sparse,
  title   = {Sparse Learning with CART},
  author  = {Klusowski, Jason},
  journal = {Advances in Neural Information Processing Systems},
  volume  = {33},
  pages   = {11612--11622},
  year    = {2020}
}

@article{wager2015adaptive,
  title   = {Adaptive Concentration of Regression Trees, with Application to Random Forests},
  author  = {Wager, Stefan and Walther, Guenther},
  journal = {arXiv preprint arXiv:1503.06388},
  year    = {2015}
}

@article{klusowski2024large,
  title     = {Large Scale Prediction with Decision Trees},
  author    = {Klusowski, Jason M. and Tian, Peter M.},
  journal   = {Journal of the American Statistical Association},
  volume    = {119},
  number    = {545},
  pages     = {525--537},
  year      = {2024},
  publisher = {Taylor \& Francis}
}

@article{chatterjee2021adaptive,
  title     = {Adaptive Estimation of Multivariate Piecewise Polynomials and Bounded Variation Functions by Optimal Decision Trees},
  author    = {Chatterjee, Sabyasachi and Goswami, Subhajit},
  journal   = {The Annals of Statistics},
  volume    = {49},
  number    = {5},
  pages     = {2531--2551},
  year      = {2021},
  publisher = {Institute of Mathematical Statistics},
  doi       = {10.1214/20-AOS2045}
}

@article{scott2006minimax,
  title     = {Minimax-Optimal Classification with Dyadic Decision Trees},
  author    = {Scott, Clayton and Nowak, Robert D.},
  journal   = {IEEE Transactions on Information Theory},
  volume    = {52},
  number    = {4},
  pages     = {1335--1353},
  year      = {2006},
  publisher = {IEEE}
}

@article{donoho1997cart,
author = {David L. Donoho},
title = {{CART and best-ortho-basis: a connection}},
volume = {25},
journal = {The Annals of Statistics},
number = {5},
publisher = {Institute of Mathematical Statistics},
pages = {1870 -- 1911},
year = {1997},
doi = {10.1214/aos/1069362377},
}

@article{morgan1963problems,
  title     = {Problems in the Analysis of Survey Data, and a Proposal},
  author    = {Morgan, James N. and Sonquist, John A.},
  journal   = {Journal of the American Statistical Association},
  volume    = {58},
  number    = {302},
  pages     = {415--434},
  year      = {1963},
  publisher = {Taylor \& Francis}
}

@article{rudin2022interpretable,
  title     = {Interpretable Machine Learning: Fundamental Principles and 10 Grand Challenges},
  author    = {Rudin, Cynthia and Chen, Chaofan and Chen, Zhi and Huang, Haiyang and Semenova, Lesia and Zhong, Chudi},
  journal   = {Statistics Surveys},
  volume    = {16},
  pages     = {1--85},
  year      = {2022},
  publisher = {American Statistical Association},
  doi       = {10.1214/21-SS133}
}

@book{hardle2012wavelets,
  title     = {Wavelets, Approximation, and Statistical Applications},
  author    = {Härdle, Wolfgang and Kerkyacharian, Gerard and Picard, Dominique and Tsybakov, Alexander},
  volume    = {129},
  year      = {2012},
  publisher = {Springer Science \& Business Media}
}

@article{LIU2024106629,
  title     = {Optimal Classification Trees with Leaf-Branch and Binary Constraints},
  author    = {Liu, Enhao and Hu, Tengmu and Allen, Theodore and Hermes, Christoph},
  journal   = {Computers \& Operations Research},
  volume    = {166},
  pages     = {106629},
  year      = {2024},
  doi       = {10.1016/j.cor.2024.106629}
}

@article{ALES2024106515,
  title     = {New Optimization Models for Optimal Classification Trees},
  author    = {Ales, Zacharie and Hur\'{e}, Valentine and Lambert, Am\'{e}lie},
  journal   = {Computers \& Operations Research},
  volume    = {164},
  pages     = {106515},
  year      = {2024},
  publisher = {Elsevier},
  doi       = {10.1016/j.cor.2023.106515}
}

@book{boucheron2013concentration,
  title     = {Concentration Inequalities: A Nonasymptotic Theory of Independence},
  author    = {Boucheron, Stéphane and Lugosi, Gábor and Massart, Pascal},
  publisher = {Oxford University Press},
  year      = {2013},
  month     = {02},
  isbn      = {9780199535255},
  doi       = {10.1093/acprof:oso/9780199535255.001.0001}
}

@article{tsybakov2004optimal,
  title     = {Optimal Aggregation of Classifiers in Statistical Learning},
  author    = {Tsybakov, Alexander B.},
  journal   = {The Annals of Statistics},
  volume    = {32},
  number    = {1},
  pages     = {135--166},
  year      = {2004},
  publisher = {Institute of Mathematical Statistics},
  doi       = {10.1214/aos/1079120131}
}

@article{embedding2008embedding,
  title   = {Embeddings for Anisotropic Besov Spaces},
  author  = {Pérez Lázaro, F. J.},
  journal = {Acta Mathematica Hungarica},
  volume  = {119},
  number  = {1-2},
  pages   = {25--40},
  year    = {2008},
  isbn    = {0236-5294}
}

@article{leisner2003nonlinear,
  title     = {Nonlinear Wavelet Approximation in Anisotropic Besov Spaces},
  author    = {Leisner, Christopher},
  journal   = {Indiana University Mathematics Journal},
  pages     = {437--455},
  year      = {2003},
  publisher = {JSTOR}
}

@article{he2025foundational,
  title   = {Foundational Theory for Optimal Decision Tree Problems. I. Algorithmic and Geometric Foundations},
  author  = {He, Xi},
  journal = {arXiv preprint arXiv:2509.11226},
  year    = {2025}
}

@article{cattaneo2022pointwise,
  title   = {On the Pointwise Behavior of Recursive Partitioning and Its Implications for Heterogeneous Causal Effect Estimation},
  author  = {Cattaneo, Matias D. and Klusowski, Jason M. and Tian, Peter M.},
  journal = {arXiv preprint arXiv:2211.10805},
  year    = {2022}
}

@article{sun2016stabilized,
  title     = {Stabilized Nearest Neighbor Classifier and Its Statistical Properties},
  author    = {Sun, Will Wei and Qiao, Xingye and Cheng, Guang},
  journal   = {Journal of the American Statistical Association},
  volume    = {111},
  number    = {515},
  pages     = {1254--1265},
  year      = {2016},
  publisher = {Taylor \& Francis}
}

@techreport{audibert2004classification,
  title       = {Classification Under Polynomial Entropy and Margin Assumptions and Randomized Estimators},
  author      = {Audibert, Jean-Yves},
  institution = {Laboratoire de Probabilités et Modèles Aléatoires, Univ. Paris VI and VII},
  number      = {905},
  year        = {2004}
}

@article{bartlett2005local,
author = {Peter L. Bartlett and Olivier Bousquet and Shahar Mendelson},
title = {{Local Rademacher complexities}},
volume = {33},
journal = {The Annals of Statistics},
number = {4},
publisher = {Institute of Mathematical Statistics},
pages = {1497 -- 1537},
year = {2005},
doi = {10.1214/009053605000000282},
}

\appendix
\counterwithin{equation}{section}
\counterwithin{theorem}{section} 
\section{Proofs for Section \ref{sec:oracle_inequalities_proofs}}
\label{section:proof-Rademacher}

In this section, we provide omitted proofs of results in Section \ref{sec:oracle_inequalities_proofs}, on uniform concentration and derivation of our oracle inequalities.
The order of the results follows the recipe provided at the start of Section \ref{sec:oracle_inequalities_proofs}.
Consider a fixed function $f^*\colon [0,1]^d \to \R$.
Noticing that $\partfunc_{L} = \cup_{\partition \in \partitions{L}} \partfunc_{\partition}$,
let $\partfunc_\partition^*$ denote the linear span of $\partfunc_\partition$ and $f^*$ and set $\partfunc_{L}^* = \cup_{\partition \in \partitions{L}} \partfunc_{\partition}^*$.

\begin{lemma}
\label{lem:subG_multiplier_sup_cond_exp}
    For any fixed $\sample$, suppose $\mathcal{Z} \coloneqq \braces{Z_1,Z_2,\ldots,Z_n}$ are independent centered sub-Gaussian random variables with bounded sub-Gaussian norm i.e. $\max_{1 \leq i \leq n}\norm{Z_i}_{\psi_2} \leq K$ for some $K > 0$.
    For any $0 < r \leq 1$, $u \geq 1$, conditioned on $\sample$, with probability at least $1-e^{-u}$, we have the bound
    \begin{equation}
    \label{eq:subG_multiplier_sup_cond_deviation}
        \sup_{\substack{f \in \partfunc_\leaves^* \\
        \norm{f}_n \leq r}} \inprod{f,Z} \lesssim rK\paren*{\frac{L\log(nd) + u}{n}}^{1/2}.
    \end{equation}
    In particular,
    \begin{equation}
        \E_{\mathcal{Z}}\braces*{\sup_{\substack{f \in \partfunc_\leaves^* \\
        \norm{f}_n \leq r}} \inprod{f,Z}} \lesssim rK\paren*{\frac{L\log(nd)}{n}}^{1/2}
    \end{equation}
    where $C > 0$ is a universal constant.
\end{lemma}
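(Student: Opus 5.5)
Conditional on $\sample$, the set $\{f\in\partfunc_L^*:\norm{f}_n\le r\}$ is evaluated only through the vector $(f(\bX_1),\ldots,f(\bX_n))\in\R^n$. The plan is to reduce it to a finite union of low-dimensional Euclidean balls and then run a Gaussian-type union bound. Here $\inprod{f,Z}$ is read as $\inprod{f,Z}_n = n^{-1}\sum_i f(\bX_i)Z_i$.

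\textbf{Step 1: finite union of subspaces.} For any $\partition\in\partitions{L}$, the induced partition of $\sample$ is also induced by some valid partition $\partition'\in\emppartitions{L}$: move each threshold to the largest observed coordinate value not exceeding it (or handle empty sides trivially). The vector of evaluations of any $f\in\partfunc_\partition^*$ therefore lies in the subspace
\[
V_{\partition'} \coloneqq \operatorname{span}\braces*{(\1_{A}(\bX_i))_{i\in[n]} : A\in\partition'} + \operatorname{span}\braces*{(f^*(\bX_i))_{i\in[n]}},
\]
which has dimension at most $L+1$. By Lemma \ref{lem:card-partition-set} there are at most $(dn)^L$ such subspaces. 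The constraint $\norm{f}_n\le r$ says the vector lies in the Euclidean ball of radius $r\sqrt n$. So the supremum is bounded by
\[
\max_{\partition'\in\emppartitions{L}}\; \frac{r}{\sqrt n}\,\norm*{\Pi_{V_{\partition'}} Z}_2,
\]
where $\Pi_V$ is orthogonal projection. This follows from Cauchy--Schwarz after writing $\inprod{f,Z}_n = n^{-1}\langle \mathbf{f},\Pi_V Z\rangle$.

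\textbf{Step 2: concentration for one subspace.} Fix $V$ of dimension $k\le L+1$ with orthonormal basis $\mathbf{w}_1,\ldots,\mathbf{w}_k$. The map $Z\mapsto\norm{\Pi_V Z}_2$ is a supremum of $\langle \mathbf{v},Z\rangle$ over the unit sphere of $V$. Take a $1/2$-net of that sphere of size $5^k$. For each net point, $\langle\mathbf{v},Z\rangle$ is sub-Gaussian with norm $\lesssim K$ by Hoeffding's inequality for independent centered sub-Gaussians. A union bound over the net gives, with probability $1-e^{-t}$,
\[
\norm{\Pi_V Z}_2\lesssim K(\sqrt{k}+\sqrt t).
\]
Alternatively one could use Hanson--Wright on $Z^\top\Pi_V Z$, whose trace is at most $K^2k$.

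\textbf{Step 3: union bound.} Set $t = L\log(dn)+u$. A union bound over the $(dn)^L$ subspaces gives total failure probability at most $(dn)^Le^{-t}=e^{-u}$. On the complementary event,
\[
\sup\le \frac{rK}{\sqrt n}\paren*{\sqrt{L+1}+\sqrt{L\log(nd)+u}} \lesssim rK\paren*{\frac{L\log(nd)+u}{n}}^{1/2},
\]
using $n\ge2$ so that $\log(nd)\gtrsim1$.

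\textbf{Expectation bound.} Integrate the tail: $\E\braces{\sup}\le \int_0^\infty \P\braces{\sup>s}\,ds$. Substituting the bound with $u$ replaced by $u+1$ shows the $u$-dependent part contributes only $rK/\sqrt n\lesssim rK(L\log(nd)/n)^{1/2}$.

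\textbf{Main obstacle.} The delicate step is Step 1, for two reasons. First, $\partfunc_L^*$ ranges over the uncountable $\partitions{L}$ rather than $\emppartitions{L}$, so one must argue that every data partition is realized by a valid partition. Boundary and tie conventions matter here, as do the closed-versus-open sides in the split definition. Second, the extra $f^*$ direction must not spoil the count. The $f^*$ direction is harmless because it only adds $1$ to the dimension and does not change the number of subspaces.
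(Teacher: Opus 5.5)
Your argument is correct and follows essentially the paper's route. Lemma~\ref{lemma:partition-equiv-sample-population} (threshold adjustment) and the count in Lemma~\ref{lem:card-partition-set} reduce the problem to at most $(dn)^L$ subspaces of dimension at most $L+1$. You then take a sub-Gaussian tail for the supremum over each radius-$r\sqrt{n}$ ball, apply a union bound, and integrate the tail to get the expectation. The one difference is minor. You get the per-subspace tail from a $1/2$-net plus Hoeffding (or Hanson--Wright) on $\norm{\Pi_V Z}_2$, while the paper uses Talagrand's comparison inequality with the Gaussian width of a Euclidean ball. Both give $K(\sqrt{L+1}+\sqrt{t})$, and your version is slightly more elementary.
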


\begin{proof}
    Fix some partition $\partition \in \partitions{L}$ and let $\mathcal{F}_\partition^*$ be the linear span of $\partfunc_\partition$ and $f^*$.
    It is clear that this function space equipped with the rescaled empirical norm $n^{1/2}\norm{-}_{2,n}$ is a Euclidean subspace of $\R^n$ of dimension at most $L+1$.
    To simplify, denote $\partfunc_{\partition,r}^*\coloneq \{f\in \partfunc_{\partition}^*: \norm{f}_n \leq r\}$ and $\partfunc_{L,r}^*\coloneq \{f\in \partfunc_{L}^*: \norm{f}_n \leq r\}$.
    The collection $\paren*{\sum_{i=1}^n Z_if(\bX_i)}_{f \in \mathcal{F}_{P,r}^*}$  can be viewed as a stochastic process with sub-Gaussian increments.
    Indeed, by Hoeffding's inequality, we have
    \begin{equation}
        \norm*{\sum_{i=1}^n Z_i f_1(\bX_i) - \sum_{i=1}^n Z_i f_2(\bX_i)}_{\psi_2} \leq Kn^{1/2}\norm{f_1-f_2}_{2,n}.
    \end{equation}
    For any $u \geq 1$, Talagrand's comparison inequality \citep{vershynin2018high} together with the standard upper bound on the Gaussian width of a Euclidean ball then implies that
    \begin{equation}
        \sup_{f \in \mathcal{F}_{P,r}^*} \sum_{i=1}^n Z_i f(\bX_i) \leq CKrn^{1/2}\paren*{(L+1)^{1/2} + u}
    \end{equation}
    with probability at least $1-2e^{-u^2}$, for some $C > 0$.

    Using this tail bound, we compute
\begin{equation}
\label{eq:tree_process_helper2}
\begin{split}
    \E_{\mathcal{Z}}\!\left\{
        \sup_{\partition \in \emppartitions{L}}
        \sup_{f \in \mathcal{F}_{\partition,r}^*} 
        \frac{1}{n}\sum_{i=1}^n Z_i f(\bX_i)
    \right\}
    & \leq \frac{C K r (L+1)^{1/2}}{n^{1/2}}
        + \int_0^\infty 
        \P_{\mathcal{Z}}\!\Biggl\{
            \sup_{\partition \in \emppartitions{L}}
            \sup_{f \in \mathcal{F}_{P,r}^*} 
            \frac{1}{n}\sum_{i=1}^n Z_i f(\bX_i) \\[-0.5ex]
    & \hspace{7em} \geq 
            \frac{C K r (L+1)^{1/2}}{n^{1/2}} + u
        \Biggr\} \, du \\[1ex]
    & \leq \frac{C K r (L+1)^{1/2}}{n^{1/2}}
        + \int_0^\infty 
            \min\!\Biggl\{
                2 (dn)^{\leaves} 
                \exp\!\left(-\frac{n u^2}{C^2 K^2 r^2}\right),
                \, 1 
            \Biggr\} du \\[1ex]
    & \leq C r K \left(\frac{L \log(nd)}{n}\right)^{1/2}.
\end{split}
\end{equation}
    Note that to obtain the second inequality, we used Lemma \ref{lem:card-partition-set} as well as a union bound over $\partition \in \emppartitions{L}$, while the last inequality follows after adjusting the constant $C$ appropriately.
    
    Finally, by Lemma \ref{lemma:partition-equiv-sample-population}, for every $\partition \in \partitions{L}$, there exists $\partition' \in \emppartitions{L}$ so that $\mathcal{F}_\partition^*$ and $\mathcal{F}_{\partition'}^*$ give the same Euclidean subspace under this norm.
    We therefore have
    \begin{equation}
        \sup_{f \in \partfunc_{L,r}^*} \sum_{i=1}^n Z_i f(\bX_i) = \sup_{\partition \in \emppartitions{L}}
        \sup_{f \in \mathcal{F}_{\partition,r}^*} 
        \frac{1}{n}\sum_{i=1}^n Z_i f(\bX_i).
    \end{equation}
    Combining this with \eqref{eq:tree_process_helper2} completes the proof of the lemma.
\end{proof}

\begin{lemma}
\label{lem:subG_multiplier_sup_exp}
    Let $(\bX_1,Z_1), (\bX_2,Z_2),\ldots, (\bX_n,Z_n) \in [0,1]^d\times\R$ be IID random variables such that for any value $\bx \in [0,1]^d$, the conditional distribution of $\xi_i$ given $\bX_i=x$ has mean zero and sub-Gaussian norm bounded by $K$ for some $K > 0$.
    For any $L \in [n]$, $0 < r \leq 1$, $g\colon [0,1]^d \to \R$,
    we have the bounds
    \begin{equation} \label{eq:sq_norm_sup_exp}
        \E\braces*{\sup_{\substack{f \in \partfunc_\leaves^* \\
        \norm{f}_2 \leq r,\norm{f}_\infty \leq 1}} \paren*{\norm{f}_n^2- \norm{f}_2^2}} \lesssim \paren*{\frac{L\log(nd)}{n}}^{1/2} + \frac{L\log(nd)}{n},
    \end{equation}
    \begin{equation} \label{eq:subG_multiplier_sup_exp}
        \E\braces*{\sup_{\substack{f \in \partfunc_\leaves^* \\
        \norm{f}_2 \leq r,\norm{f}_\infty \leq 1}} \inprod{f,Z}_n} \lesssim rK\paren*{\frac{L\log(nd)}{n}}^{1/2} + \frac{KL\log(nd)}{n},
    \end{equation}
    \begin{equation} \label{eq:function_multiplier_sup_exp}
        \E\braces*{\sup_{\substack{f \in \partfunc_\leaves^* \\
        \norm{f}_2 \leq r,\norm{f}_\infty \leq 1}} \paren*{\inprod{f,g}_n - \inprod{f,g}_\mu}} \lesssim r\norm{g}_\infty\paren*{\frac{L\log(nd)}{n}}^{1/2} + \frac{\norm{g}_\infty L\log(nd)}{n},
    \end{equation}
    where $C > 0$ is a universal constant.
\end{lemma}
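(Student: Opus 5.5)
The plan is to reduce all three bounds to Lemma \ref{lem:subG_multiplier_sup_cond_exp}, which controls the Rademacher/sub-Gaussian complexity of the class localized in the empirical norm. The work is in removing that empirical localization, since the suprema here are localized in the population norm $\norm{f}_2 \leq r$. For \eqref{eq:function_multiplier_sup_exp} and \eqref{eq:sq_norm_sup_exp} I will first apply symmetrization with Rademacher signs $\varepsilon_i$.

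For \eqref{eq:function_multiplier_sup_exp}, symmetrization gives
\begin{equation*}
\E \sup \paren*{\inprod{f,g}_n - \inprod{f,g}_\mu} \leq 2\E \sup \frac1n\sum_i \varepsilon_i g(\bX_i) f(\bX_i).
\end{equation*}
Conditionally on $\sample$, the variables $Z_i=\varepsilon_i g(\bX_i)$ are independent, centered, and sub-Gaussian with norm at most $C\norm{g}_\infty$. On the event $\norm{f}_n \leq \hat r$, where $\hat r \coloneqq \sup\{\norm{f}_n : f\in\partfunc_L^*, \norm{f}_2\le r, \norm{f}_\infty\le 1\}$, Lemma \ref{lem:subG_multiplier_sup_cond_exp} bounds the conditional expectation by $C\norm{g}_\infty \hat r (L\log(nd)/n)^{1/2}$. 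I use $\hat r$, not a fixed radius, because the localized class is conditionally a subset of the $\hat r$-empirical ball. Taking expectations and using Jensen gives $\E\hat r \leq (\E \hat r^2)^{1/2}$, and
\begin{equation*}
\E\hat r^2 \leq r^2 + \E\sup\abs*{\norm{f}_n^2-\norm{f}_2^2}.
\end{equation*}

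The same reduction applies to \eqref{eq:subG_multiplier_sup_exp} without symmetrization. Conditionally on $\sample$, the noise $Z_i$ is already independent, centered, and sub-Gaussian with norm at most $K$, so we get the bound $CK\,\E\hat r\,(L\log(nd)/n)^{1/2}$.

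The main obstacle is controlling $\Psi \coloneqq \E\sup\abs*{\norm{f}_n^2-\norm{f}_2^2}$. I will use the self-bounding fixed-point argument of \citet{bartlett2005local}. Symmetrization gives $\Psi\le 2\E\sup\frac1n\sum_i\varepsilon_i f(\bX_i)^2$. Since $\abs{f}\le 1$, the Ledoux--Talagrand contraction inequality (the map $t\mapsto t^2$ is $2$-Lipschitz on $[-1,1]$) bounds this by $4\E\sup \frac1n\sum_i\varepsilon_i f(\bX_i)$. Lemma \ref{lem:subG_multiplier_sup_cond_exp} with Rademacher multipliers then gives $\Psi \leq C (L\log(nd)/n)^{1/2}\,\E\hat r$. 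Combining with $\E\hat r\le (r^2+\Psi)^{1/2}$ and writing $\epsilon = L\log(nd)/n$, we obtain
\begin{equation*}
\Psi \leq C\epsilon^{1/2}(r^2+\Psi)^{1/2}.
\end{equation*}
Solving this quadratic inequality gives $\Psi\lesssim r\epsilon^{1/2}+\epsilon$, which yields \eqref{eq:sq_norm_sup_exp} since $r\le 1$.

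Substituting $\E\hat r\lesssim r+\epsilon^{1/2}$ back into the two multiplier bounds produces $rK\epsilon^{1/2}+K\epsilon$ for \eqref{eq:subG_multiplier_sup_exp}, and the analogous bound with $\norm{g}_\infty$ in place of $K$ for \eqref{eq:function_multiplier_sup_exp]}.

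Two technical points need checking. First, the contraction step requires the class to be uniformly bounded, which is why the constraint $\norm{f}_\infty\le 1$ is imposed. Second, Lemma \ref{lem:subG_multiplier_sup_cond_exp} must hold with a data-dependent radius $\hat r$. This is fine because its conditional bound is homogeneous in $r$ and the class $\partfunc_L^*$ is a union of linear spaces, so for fixed $\sample$ the radius $\hat r$ is simply a constant.
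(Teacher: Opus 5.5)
Your proposal is correct and follows essentially the same route as the paper: bound the conditional complexity by Lemma~\ref{lem:subG_multiplier_sup_cond_exp} at the data-dependent radius $\hat r=\sup_{f\in\mathcal{G}}\norm{f}_n$ (with $\mathcal{G}$ the localized class), control $\E\hat r$ through the deviation of squared norms, bound that deviation by symmetrization and Ledoux--Talagrand contraction, and close with a quadratic (fixed-point) inequality. The differences are cosmetic. The paper solves the fixed-point inequality in $\Rad{\mathcal{G}}$ rather than in $\Psi$, and it obtains \eqref{eq:function_multiplier_sup_exp} via contraction, $\Rad{\{fg\colon f\in\mathcal{G}\}}\le\norm{g}_\infty\Rad{\mathcal{G}}$, instead of treating $\varepsilon_i g(\bX_i)$ as conditionally sub-Gaussian multipliers.
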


\begin{proof}
\textit{Step 1: Upper bound for Rademacher complexity.}
    We first prove \eqref{eq:subG_multiplier_sup_exp} when assuming that $Z_i$ is a Rademacher random variable independent of $\bX_i$ for each $i$.
    For convenience, let us use $\mathcal{G}$ to denote the set over which the supremum is taken on the left hand side of \eqref{eq:subG_multiplier_sup_exp} and denote the whole quantity as $\Rad{\mathcal{G}}$.
    Notice that for each fixed $\sample$, 
    we have the inclusion
    \begin{equation}
        \mathcal{G} \subset \braces*{f \in \partfunc_\leaves^* \colon \norm{f}_n \leq \sup_{f \in \mathcal{G}} \norm{f}_n}.
    \end{equation}
    Using Lemma \ref{lem:subG_multiplier_sup_cond_exp}, the conditional expectation satisfies
    \begin{equation}
    \label{eq:subG_multiplier_sup_exp_helper}
        \E\braces*{\sup_{f \in \mathcal{G}} \inprod{f,Z}_n~\vline~ \sample} \leq C\paren*{\frac{L\log(nd)}{n}}^{1/2} \sup_{f \in \mathcal{G}} \norm{f}_n
    \end{equation}
    for some universal constant $C > 0$.
    Next, it is easy to compute
    \begin{equation}
        \E\braces*{\sup_{f \in \mathcal{G}} \norm{f}_n} \leq \E\braces*{\sup_{f \in \mathcal{G}} \paren*{\norm{f}_n^2 - \norm{f}_2^2}}^{1/2} + \sup_{f \in \mathcal{G}} \norm{f}_2.
    \end{equation}
    The second term on the right hand side is equal to $r$ by the definition of $\mathcal{G}$, while the first term can be bounded as
    \begin{equation}
    \label{eq:subG_multiplier_sup_exp_helper2}
        \E\braces*{\sup_{f \in \mathcal{G}} \paren*{\norm{f}_n^2 - \norm{f}_2^2}} \leq 2\Rad{\braces*{f^2 \colon f \in \mathcal{G}}} \leq 2\Rad{\mathcal{G}}.
    \end{equation}
    Here, the first inequality is by symmeterization, while the second inequality uses the Ledoux-Talagrand contraction inequality and the assumption that all functions in $\mathcal{G}$ have supremum norm bounded by $1$.
    Taking a further expectation over $\sample$ in \eqref{eq:subG_multiplier_sup_exp_helper} and plugging these bounds back into the resulting inequality, we get
    \begin{equation}
        \Rad{\mathcal{G}} \leq C\paren*{\frac{L\log(nd)}{n}}^{1/2}\paren*{(2\Rad{\mathcal{G}})^{1/2} + r}.
    \end{equation}
    This is a quadratic inequality in $\Rad{\mathcal{G}}^{1/2}$, which can be solved (and squared) to get
    \begin{equation}
    \label{eq:subG_multiplier_sup_exp_helper3}
        \Rad{\mathcal{G}} \leq 2Cr\paren*{\frac{L\log(nd)}{n}}^{1/2} + 4C^2\paren*{\frac{L\log(nd)}{n}}.
    \end{equation}
    Note that because of \eqref{eq:subG_multiplier_sup_exp_helper2}, we have also finished proving \eqref{eq:sq_norm_sup_exp}.
    
    \textit{Step 2: General upper bound.}
    Following the same steps as in Step 1, we obtain
    \begin{equation}
        \E\braces*{\sup_{f\in\mathcal{G}}\inprod{f,Z}_n} \leq CK\paren*{\frac{L\log(nd)}{n}}^{1/2}\paren*{(2\Rad{\mathcal{G}})^{1/2} + r}.
    \end{equation}
    Plugging in \eqref{eq:subG_multiplier_sup_exp_helper3} and by some simple algebra, we obtain \eqref{eq:subG_multiplier_sup_exp}.

    \textit{Step 3: Bounding \eqref{eq:function_multiplier_sup_exp}.}
    Using symmeterization and contraction, we have
    \begin{equation}
        \E\braces*{\sup_{f \in \mathcal{G}} \paren*{\inprod{f,g}_n - \inprod{f,g}_\mu}} \leq 2\Rad{\braces*{fg \colon f \in \mathcal{G}}} \leq 2\norm{g}_\infty\Rad{\mathcal{G}}.
    \end{equation}
    The bound on Rademacher complexity from Step 1 finishes the proof.
\end{proof}

\begin{proof}[Proof of Lemma \ref{lem:emp_process_suprema}]
    To prove \eqref{eq:sq_norm_sup_deviation}, we will use the logarithmic Sobolev inequalities technique for bounding suprema of empirical processes \citep{boucheron2013concentration}.
    Notice that for any $f\in \mathcal{G}$,
    \begin{equation}
        \E\braces*{\paren*{f(\bX)^2 - \norm{f}_2^2}^2} \leq \E\braces*{f(\bX)^4} \leq \norm{f}_2^2 \leq r^2.
    \end{equation}
    Applying Bousquet's inequality (Theorem 12.5 in \citet{boucheron2013concentration}) gives us a probability at least $1-e^{-u}/5$ event on which
    \begin{equation}
        \sup_{f \in \mathcal{G}}\paren*{\norm{f}_n^2 - \norm{f}_2^2}  \leq \E\braces*{\sup_{f \in \mathcal{G}}\paren*{\norm{f}_n^2 - \norm{f}_2^2}} + C\paren*{r^2 + \E\braces*{\sup_{f \in \mathcal{G}}\paren*{\norm{f}_n^2 - \norm{f}_2^2}}}^{1/2}\paren*{\frac{u}{n}}^{1/2} + \frac{Cu}{n}.
    \end{equation}
    Applying \eqref{eq:sq_norm_sup_exp} to the right hand side and simplifying gives the bound
    \begin{equation}
        \label{eq:sq_norm_sup_deviation_upper}
        \sup_{\substack{f \in \mathcal{G}}} \paren*{\norm{f}_n^2- \norm{f}_2^2} \leq Cr\paren*{\frac{L\log(nd) + u}{n}}^{1/2} + \frac{CL\log(nd) + u}{n}.
    \end{equation}
    Using a similar argument but with the process $\norm{f}_2^2 - \norm{f}_n^2$ gives a probability at least $1-e^{-u}/5$ event on which
    \begin{equation}
        \sup_{\substack{f \in \mathcal{G}}} \paren*{\norm{f}_2^2- \norm{f}_n^2} \leq Cr\paren*{\frac{L\log(nd) + u}{n}}^{1/2} + \frac{CL\log(nd) + u}{n}.
    \end{equation}
    On the intersection of the two events, \eqref{eq:sq_norm_sup_deviation} holds.
    The same argument, combined with \eqref{eq:function_multiplier_sup_exp}, can be used to show \eqref{eq:function_multiplier_sup_deviation}. 

    It remains to prove \eqref{eq:subG_multiplier_sup_deviation}.
    First notice that on the event for which \eqref{eq:sq_norm_sup_deviation_upper} holds, we have
    \begin{equation}
        \begin{split}
            \sup_{f\in\mathcal{G}}\norm{f}_n^2 & \leq \sup_{f\in\mathcal{G}}\norm{f}_2^2 + \sup_{\substack{f \in \mathcal{G}}} \paren*{\norm{f}_n^2- \norm{f}_2^2} \\
            & \leq r^2 + Cr\paren*{\frac{L\log(nd) + u}{n}}^{1/2} + \frac{CL\log(nd) + u}{n} \\
            & \leq C\paren*{r + \paren*{\frac{L\log(nd) + u}{n}}^{1/2}}^2.
        \end{split}
    \end{equation}
    Since $\mathcal{G}$ is symmetric, we have $\sup_{f\in\mathcal{G}} \abs{\inprod{f,Z}_n} = \sup_{f\in\mathcal{G}} \inprod{f,Z}_n$.
    Next further condition on the probability at least $1-e^{-u}/5$ event on which \eqref{eq:subG_multiplier_sup_cond_deviation} holds.
    We then have
    \begin{equation}
    \begin{split}
        \sup_{f\in\mathcal{G}} \inprod{f,\xi}_n & \leq CK\sup_{f\in\mathcal{G}}\norm{f}_n\paren*{\frac{L\log(nd) + u}{n}}^{1/2} \\
        & \leq CKr\paren*{\frac{L\log(nd) + u}{n}}^{1/2} + \frac{CKL\log(nd) + u}{n}.
    \end{split}
    \end{equation}
    As such, on the intersection of all these events, \eqref{eq:sq_norm_sup_deviation}, \eqref{eq:subG_multiplier_sup_deviation}, and \eqref{eq:function_multiplier_sup_deviation} hold, with error probability at most $e^{-u}$.
   
\end{proof}

\section{Proofs for Section \ref{section:approximation}}
\label{appendix:proof-apprrox-PSHAB}

In this section we provide omitted proofs of the PSHAB space approximation bounds stated in Section \ref{section:approximation}.
The order of the results follows the outline described at the end of Section \ref{section:approximation}.
To recap, we first provide an approximation bound for anisotropic Besov spaces with domain $[0,1]^d$ (Lemma \ref{cor:approx-aniso-besov-piecewise}), which extends a result of \citet{akakpo2012adaptation} to the boundary smoothness case $(\alpha_j = 1)$ via Besov space embedding theory.
The next step is to extend the approximation bound to anisotropic Besov spaces with other domains (Lemma \ref{lemma:approx-aniso-besov-piecewise-1}).
Finally, we combine the bounds over each piece in the partition $P_*$ and optimize the leaf allocation to obtain the bounds in Theorem \ref{thm:approx-PSHAB-reg} and Theorem \ref{thm:approx-PSHAB-cls}.


\begin{lemma}[Approximation bound for anisotropic Besov space]
\label{cor:approx-aniso-besov-piecewise}
    Let $\boldsymbol{\alpha} \in (0,1]^d$, $\bar\alpha=H([d],\balpha)$, $0 < p \leq \infty$, and $1 \leq m \leq \infty$ such that
    \begin{equation*}
        \bar\alpha / d > (1/p - 1/m)_+.\footnote{Here, $(x)_+ \coloneq \max\{x,0\}$.}
    \end{equation*}
    Assume $f \in B_{p,q}^{\boldsymbol{\alpha}}([0,1]^d, \besov)$, where $(p,q)$ satisfy the one of the two following conditions:
    \begin{enumerate}
        \item[(i)] $0\leq q\leq \infty$, $0<p\leq 1$ or  $m\leq p\leq\infty$;
        \item[(ii)] $0\leq q\leq p$, $1<p<m$.
    \end{enumerate}
    Then for any $L \in \mathbb{N}$,
    \begin{equation}\label{eq:conclud-approx-anis-besov}
        \inf_{\tilde{f} \in \partfunc_\leaves}
        \norm{\tilde{f} - f}_{L^m([0,1]^d)} \lesssim_{d,\alpha_{\min},\bar\alpha,p,m} \besov \cdot \leaves^{-\bar\alpha / d}.
    \end{equation}
\end{lemma}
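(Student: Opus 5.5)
The plan is to reduce to the strictly fractional regime covered by \citet{akakpo2012adaptation}. Our approximants will be dyadic piecewise constant functions on anisotropic dyadic partitions. Such partitions are tree-based (each dyadic split halves a cell along one coordinate), so they lie in $\partfunc_\leaves$ as soon as the leaf count is at most $\leaves$.

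\textbf{Step 1: the case $\alpha_j<1$ for all $j$.} Here we invoke Akakpo's nonlinear approximation result for $B^{\balpha}_{p,q}([0,1]^d)$. It states that for each $k$ there is an adaptively refined anisotropic dyadic partition with $\lesssim 2^{kd}$ cells, in fact $\lesssim_{d,\bar\alpha,p,m} 2^{kd}$ cells, whose piecewise constant $L^m$-approximant has error $\lesssim \besov\, 2^{-k\bar\alpha}$. This holds under $\bar\alpha/d>(1/p-1/m)_+$ and the stated $(p,q)$ restrictions. The restriction $q\le p$ when $1<p<m$ enters through the embedding into the sequence space that controls the greedy/thresholding selection of cells.

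The construction uses anisotropic refinement: a cell at level $k$ has side length $2^{-k\bar\alpha/\alpha_j}$ in direction $j$. Since $\sum_j \bar\alpha/\alpha_j=d$, such a cell has volume $2^{-kd}$.

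Given $\leaves$, we choose $k$ with $2^{kd}\asymp \leaves$. Then the partition has at most $\leaves$ cells after adjusting constants; for small $\leaves$ we use the trivial bound $\norm{f}_{L^m}\lesssim\besov$, which follows from the embedding implied by $\bar\alpha/d>(1/p-1/m)_+$. This gives \eqref{eq:conclud-approx-anis-besov}.

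\textbf{Step 2: the boundary case where some $\alpha_j=1$.} Let $J=\{j:\alpha_j=1\}$. For small $\epsilon>0$ we define $\balpha^\epsilon$ by $\alpha^\epsilon_j=1-\epsilon$ for $j\in J$ and $\alpha^\epsilon_j=\alpha_j$ otherwise. We then use the embedding for anisotropic Besov spaces (\citet{embedding2008embedding}) to show
\[
B^{\balpha}_{p,q}([0,1]^d)\hookrightarrow B^{\balpha^\epsilon}_{p,q'}([0,1]^d),
\]
with norm bound depending on $\epsilon$, $d$ and $p$. The embedding is directionwise: the modulus satisfies $\omega^{[2]}_{j,p}(f,t)\lesssim t^{1}$-type control, so the lower smoothness with $r_j=1$ is controlled via Marchaud's inequality. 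We may take $q'=q$, or even $q'=\infty$ if $q$ is unrestricted, so that the $(p,q')$ conditions in (i)/(ii) still hold.

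The price is a rate $\leaves^{-\bar\alpha^\epsilon/d}$ with $\bar\alpha^\epsilon<\bar\alpha$. To avoid this loss, I would instead exploit the relation between first-order and second-order moduli. Piecewise constants only see first differences, and for $\alpha_j=1$ the definition uses $r_j=2$. I would therefore show $\omega^{[1]}_{j,p}(f,t)\lesssim t\log(1/t)$-free bounds via the embedding $B^1_{p,q}\hookrightarrow B^1_{p,\infty}$ built with first differences when $q\le\min(1,p)$. Where that fails, I would accept an $\epsilon$ chosen depending only on $(\alpha_{\min},\bar\alpha)$ and argue that the resulting $\epsilon$-perturbed harmonic mean still yields the claimed exponent, because the statement's constants may depend on $\alpha_{\min},\bar\alpha$.

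\textbf{Main obstacle.} Step 2 is the main obstacle: getting the exact exponent $\bar\alpha/d$ at $\alpha_j=1$ rather than $\bar\alpha/d-\epsilon$. What I would try first is to redo Akakpo's local polynomial error estimate directly. On each dyadic cell $R$, the best constant satisfies
\[
\norm{f-c_R}_{L^p(R)}\lesssim \sum_j \omega^{[1]}_{j,p}(f,\ell_j(R),R),
\]
and I would bound $\omega^{[1]}$ by $\omega^{[2]}$ plus a boundary term using the cell structure. This would make the fractional-only argument go through verbatim with $r_j$ replaced by first differences.
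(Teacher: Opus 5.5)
Your Step~1 is the paper's argument: for $\balpha\in(0,1)^d$ both of you invoke Akakpo's Corollary~1 (Lemma~\ref{lemma:approx-aniso-besov-piecewise}) and handle small $\leaves$ with a trivial bound. Your use of the embedding to get $\norm{f}_{L^m}\lesssim\besov$ is more careful than the paper's. The gap is Step~2. It lists options rather than giving an argument, and none of the options covers the stated range.

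The fallback of fixing $\epsilon=\epsilon(\alpha_{\min},\bar\alpha)$ is invalid. It yields $\leaves^{-\bar\alpha^\epsilon/d}$, and the ratio $\leaves^{(\bar\alpha-\bar\alpha^\epsilon)/d}$ to the target rate is unbounded in $\leaves$. No constant depending only on $(d,\alpha_{\min},\bar\alpha,p,m)$ can absorb it.

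The first-difference route needs a $B^1$ function (defined with $r_j=2$) to have first differences of order $t$. That holds essentially only in the range $q\le\min(1,p)$ you identify. For larger $q$, Marchaud's inequality (for $p\ge1$) gives only $\omega^{[1]}_{j,p}(f,t)\lesssim t\int_t^1 s^{-2}\omega^{[2]}_{j,p}(f,s)\,ds+t\norm{f}_{L^p}\lesssim\besov\, t\log(2/t)$. The logarithm is attained: $\sum_{k\ge1}2^{-k}\sin(2^k\pi x)$ is in the Zygmund class, yet $f(t)-f(0)\ge 2t(\log_2(1/t)-1)$ at $t=2^{-K}$. So bounding $\omega^{[1]}$ by $\omega^{[2]}$ plus a boundary term on each cell cannot give a log-free local estimate.

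Instead of fixing $\epsilon$, the paper lets it shrink. It combines Corollary~\ref{cor:embedding} with Akakpo's bound for $\balpha'=(1-\epsilon_1)\balpha$, takes $\epsilon_1<d/(\bar\alpha\log n)$, and uses $\leaves\le n$. The exponent loss is then only $\leaves^{\epsilon_1\bar\alpha/d}\le e$; one could equally take $\epsilon_1\asymp 1/\log\leaves$. That is the idea your fallback lacks.

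Be aware that this step tacitly needs the embedding constant to stay bounded as $\epsilon_1\to0$. The example above shows that passing from $r_j=2$ at smoothness $1$ to $r_j=1$ at smoothness $1-\epsilon_1$ costs a factor $\gtrsim 1/\epsilon_1$ when $q=\infty$, i.e.\ a factor $\log n$. This loss is genuine: for $d=1$, $\alpha=1$, $p=q=m=\infty$, the log-free bound \eqref{eq:conclud-approx-anis-besov} is false.

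To see this, let $f$ be continuous and let $s$ be constant on each of $\leaves$ intervals with $\norm{f-s}_\infty\le\epsilon$. Take any $x_0<\dots<x_N$. Each increment $|f(x_{i+1})-f(x_i)|$ is at most $2\epsilon$ times one plus the number of breakpoints of $s$ it straddles. Hence $\sum_i (|f(x_{i+1})-f(x_i)|-2\epsilon)_+\le 2\epsilon(\leaves-1)$. An error at most $C\besov/\leaves$ for every $\leaves$ therefore forces total variation at most $2C\besov$ (let $\leaves\to\infty$). Yet the ball $B^{1}_{\infty,\infty}([0,1],\besov)$ contains a multiple of $\sum_{k\ge1}2^{-k}\cos(2^k\pi x)$. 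By Hardy's theorem this function is nowhere differentiable, so it is not of bounded variation.

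Consequently Step~2 cannot be completed over the full stated range of $(p,q)$. What you can hope for is either the bound under an extra restriction such as $q\le\min(1,p)$ in the directions with $\alpha_j=1$, where your first-difference route is the natural one, or the bound with an additional logarithmic factor.
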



\begin{lemma}[Piecewise approximation bound for PSHAB space]
\label{lemma:approx-aniso-besov-piecewise-1}
    Let $\boldsymbol{\alpha} \in (0,1]^d$, $A \subseteq [0,1]^d$ be an axis-aligned rectangle, $S\subset [d]$ with $|S|=s$ and $\bar\alpha=H(S,\balpha)$, 
    and let $f \in \paren*{B_{p,q}^{\boldsymbol{\alpha}}(A, \besov)}_S$.
    Suppose that $p$ and $q$ are as in Corollary~\ref{cor:approx-aniso-besov-piecewise}.
    Then for any $L \in \mathbb{N}$, 
    \begin{equation*}
        \inf_{\tilde{f} \in \partfunc_\leaves(A)}
        \norm{\tilde{f} - f}_{L^m(A)} 
        \lesssim_{s,\alpha_{\min},\bar\alpha,p,m} |A|^{1/m-1/p} \besov   \leaves^{-\bar\alpha / s},
    \end{equation*}
    where $\partfunc_\leaves(A)=\big\{\sum_{j=1}^\leaves a_j\1_{A_j}:\{A_j\big\}_{j=1}^\leaves \text{ is a tree based partition of $A$}, a_j\in\mathbb{R},j\in[\leaves]\}$.
\end{lemma}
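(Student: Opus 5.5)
The plan is to reduce to Lemma~\ref{cor:approx-aniso-besov-piecewise} in three moves: drop the irrelevant coordinates, rescale $A$ affinely onto the unit cube, and transport the approximating tree back. Write $A=\times_{j=1}^d I_j$ with $I_j=[a_j,a_j+\ell_j)$ (closed at the right end when it equals $1$). Let $T\colon [0,1]^d\to A$ be the affine map $T(\by)_j=a_j+\ell_j y_j$.

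\textbf{Step 1: sparsity reduction.} Since $f\in(B_{p,q}^{\balpha}(A,\besov))_S$, we can write $f(\bx)=g(\bx_S)$. Here $g$ is defined on $A_S=\times_{j\in S}I_j$, which is an $s$-dimensional rectangle. Differences in directions $j\notin S$ vanish, so those seminorms are zero. The $L^p(A)$ norm and the moduli in directions $j\in S$ factor by Fubini:
\[
\norm{\Delta^r_{h\e_j}f}_{L^p(A(r,h\e_j))}=\Big(\prod_{k\notin S}\ell_k\Big)^{1/p}\norm{\Delta^r_{h\e_j}g}_{L^p(A_S(r,h\e_j))}.
\]
Hence $\norm{g}_{B^{\balpha_S}_{p,q}(A_S)}\le (\prod_{k\notin S}\ell_k)^{-1/p}\besov$.

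\textbf{Step 2: rescaling to the unit cube.} Set $\tilde g=g\circ T_S$ on $[0,1]^s$. Changing variables gives $\norm{\tilde g}_{L^p}=|A_S|^{-1/p}\norm{g}_{L^p(A_S)}$. The direction-$j$ modulus transforms as $\omega_{j,p}(\tilde g,t)=|A_S|^{-1/p}\,\omega_{j,p}(g,\ell_j t,A_S)$. Substituting $t'=\ell_j t$ in the Besov integral (or in the supremum when $q=\infty$) produces an extra factor $\ell_j^{\alpha_j}\le 1$, because $\ell_j\le 1$. Therefore
\[
\norm{\tilde g}_{B^{\balpha_S}_{p,q}([0,1]^s)}\le |A_S|^{-1/p}\norm{g}_{B^{\balpha_S}_{p,q}(A_S)}\le |A|^{-1/p}\besov.
\]
This is the step I expect to require the most care. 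The moduli use differences restricted to $\Omega(r,\h)$, and one must check that $T$ maps $[0,1]^s(r,h\e_j)$ exactly onto $A_S(r,\ell_j h\e_j)$. One must also confirm that the dilation factor $\ell_j^{\alpha_j}$ enters with the favorable sign, so that no negative power of $\ell_j$ appears.

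\textbf{Step 3: apply Lemma~\ref{cor:approx-aniso-besov-piecewise} and transport back.} Apply Lemma~\ref{cor:approx-aniso-besov-piecewise} in dimension $s$ to $\tilde g$, with smoothness vector $\balpha_S$ (whose harmonic mean is $\bar\alpha$) and the same $p,q,m$. This yields a tree function $\tilde h\in\partfunc_\leaves$ on $[0,1]^s$ with
\[
\norm{\tilde h-\tilde g}_{L^m([0,1]^s)}\lesssim |A|^{-1/p}\besov\,\leaves^{-\bar\alpha/s}.
\]
Now define $h(\bx)=\tilde h(T_S^{-1}(\bx_S))$. Affine images of axis-aligned splits are axis-aligned splits of $A$, so $h\in\partfunc_\leaves(A)$; the non-$S$ coordinates are never split. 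Changing variables back gives
\[
\norm{h-f}_{L^m(A)}=|A|^{1/m}\norm{\tilde h-\tilde g}_{L^m([0,1]^s)}\lesssim |A|^{1/m-1/p}\besov\,\leaves^{-\bar\alpha/s},
\]
which is the claim. The constants depend only on $s,\alpha_{\min},\bar\alpha,p,m$, as required.
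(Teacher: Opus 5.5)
Your proposal is correct and follows essentially the same route as the paper. Both arguments rescale $A$ onto the unit cube by the affine map, whose scaling properties the paper records in Lemma~\ref{lemma:affine-map}, including the same favorable factor $\ell_j(A)^{\alpha_j}\le 1$. Both then pass to the $s$-dimensional function, apply Lemma~\ref{cor:approx-aniso-besov-piecewise} in dimension $s$, and pull the tree back with the factor $|A|^{1/m}$. The only difference is the order of the first two moves, which is cosmetic: you strip the inactive coordinates before rescaling, whereas the paper rescales on $[0,1]^d$ first and then restricts to $S$.
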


\begin{proof}[Proof of Theorem \ref{thm:approx-PSHAB-reg}]
    Let $\boldsymbol{v}_1 = (v_1, \ldots, v_B)$ be defined as in Definition~\ref{def:quantities}.
    Suppose that we allocate $\leaves_b$ samples to each $G_b$. Applying Lemma~\ref{lemma:approx-aniso-besov-piecewise-1} with $m=2$, we obtain that for every $b\in[B]$ there exists a piecewise constant function $f_b$, associated with a tree-based partition of $G_b$, such that
    \begin{equation*}
    \norm{f_b - f^*|_{G_b}}_{L^2(G_b)}
    \le
    C_1\,\besov_b\,|G_b|^{1/2-1/p}\,\leaves_b^{-H(S_b,\balpha_b)/|S_b|}
    \leq
    C_1\,v_b^{1/2}\,\leaves_b^{-\bar\alpha/s},
    \end{equation*}
    where $C_1$ depends only on $s$, $\alpha_{\min}$, $\bar\alpha$, and $p$.  
    Define $f$ by combining the local approximations, that is, $f|_{G_b}=f_b$ for each $b\in[B]$. Since $\partition_*$ is tree-based, the induced partition underlying $f$ is also tree-based. Hence $f\in\partfunc_\leaves$, and
    \begin{equation}\label{eq:approx-reg-end-1}
    \norm{f-f^*}_{L^2(\Omega)}^2
    \le
    \sum_{b=1}^B
    \norm{f_b - f^*|_{G_b}}_{L^2(G_b)}^2
    \le
    C_1
    \sum_{b=1}^B
    v_b\,\leaves_b^{-2\bar\alpha/s}.
    \end{equation}

    To minimize the right-hand side of \eqref{eq:approx-reg-end-1} with respect to the leaf allocation $(L_b)_{b=1}^B$, we choose the weights $w_b$ proportional to the optimal scaling. Specifically, by Lemma \ref{lemma:opimization}, let
    \begin{equation*}
        w_b = \frac{v_b^{s/(s+2\bar\alpha)}}{\sum_{j=1}^B v_j^{s/(s+2\bar\alpha)}}, \quad b=1,\ldots,B.
    \end{equation*}
    By Lemma~\ref{lemma:allocation}, there exists an integer allocation $L_1, \dots, L_B$ satisfying $\sum_{b=1}^B L_b = L$ and $L_b \ge (L-B)w_b$. Under the assumption $L \ge 2B$, we have $L - B \ge L/2$, which implies
    \begin{equation*}
        L_b \geq \frac{L w_b}{2}.
    \end{equation*}
    Substituting this lower bound into \eqref{eq:approx-reg-end-1} yields
    \begin{align} \label{eq:approx-reg-end-2}
        \norm{f-f^*}_{L^2(\Omega)}^2
        &\leq C_2 \sum_{b=1}^B v_b^{s/(s+2\bar\alpha)} (L w_b)^{-2\bar\alpha/s} \nonumber \\
        &\leq C_3 L^{-2\bar\alpha/s} \paren*{\sum_{b=1}^B v_b^{s/(s+2\bar\alpha)}}^{1 + 2\bar\alpha/s} \nonumber \\
        &= C_3 \norm{\boldsymbol{v}_1}_{\frac{s}{s+2\bar\alpha}} L^{-2\bar\alpha/s},
    \end{align}
    where the constants $C_2, C_3$ depend only on $s$, $\alpha_{\min}$, $\bar\alpha$, and $p$. 

    Furthermore, by Assumption \ref{assum:bound-density}(i):
    \begin{equation} \label{eq:approx-reg-end-3}
        \Excess_{\operatorname{reg},L}=\inf_{f \in \partfunc_{L}}\norm{f-f^*}_{2}^2
        \leq c_{\max}\inf_{f \in \partfunc_{L}}\norm{f-f^*}_{L^2(\Omega)}^2.
    \end{equation}
    The bound \eqref{eq:approx-reg-conclude-1} then follows from combining \eqref{eq:approx-reg-end-2} and
    \eqref{eq:approx-reg-end-3}.
\end{proof}

\begin{proof}[Proof of Theorem \ref{thm:approx-PSHAB-cls}]
    The proof proceeds as follows.
When $\rho=0$, that is, when Tsybakov's noise assumption is trivial, we apply Lemma \ref{lemma:approx-aniso-besov-piecewise-1} to $\paren*{B_{p, q}^{\boldsymbol{\alpha}_b}(G_b, \Lambda_b)}_{S_b}$ with $m=1$ for each $b\in[B]$ to obtain the optimal approximation error on each piece. When $\rho>0$, we instead use Lemma \ref{lemma:approx-aniso-besov-piecewise-1} with $m=\infty$. We then aggregate the resulting piecewise approximation errors and conclude the stated bound via standard binary classification arguments.

    \textit{Case 1: $\rho=0$}.
    For any $\tilde{\eta}\in\partfunc_\leaves$, define $f_{\tilde{\eta}}\coloneq\1\{\tilde{\eta}\geq 1/2\}$, so that $f_{\tilde{\eta}}\in\partfunc_\leaves$. By Theorem~2.2 of \citet{pattern}, letting $f^*$ denote the Bayes classifier,
    \begin{equation}\label{eq:bayes-cls}
    \excess_{\operatorname{cls}}(f_{\tilde{\eta}})
    =
    2\E\braces*{\abs*{\eta(\x)-\frac{1}{2}}\1\braces*{f_{\tilde{\eta}}(\x)\neq f^*(\x)}}.
    \end{equation}
    Moreover, the event $\{f_{\tilde{\eta}}(\x)\neq f^*(\x)\}$ implies $\abs*{\eta(\x)-\frac{1}{2}}\leq \abs*{\tilde{\eta}(\x)-\eta(\x)}$. Combining this with \eqref{eq:bayes-cls} yields
    \begin{equation}\label{eq:approx-cls-small-p-1}
    \begin{split}
    \excess_{\operatorname{cls}}(f_{\tilde{\eta}})
    &\leq
    2\E\braces*{\abs*{\eta(\x)-\frac{1}{2}}\1\braces*{\abs*{\eta(\x)-\frac{1}{2}}\leq \abs*{\tilde{\eta}(\x)-\eta(\x)}}}\\
    &\leq
    2\E\braces*{\abs*{\tilde{\eta}(\x)-\eta(\x)}}\\
    &\leq
    2c_{\max}\norm{\tilde{\eta}-\eta}_{L^1(\Omega)},
    \end{split}
    \end{equation}
    where the last inequality follows from Assumption~\ref{assum:bound-density}. It therefore suffices to control $\norm{\tilde{\eta}-\eta}_{L^1(\Omega)}$ for a suitable choice of $\tilde{\eta}$.

    Let $\boldsymbol{v}_2 = (v_1, \ldots, v_B)$ be defined as in Definition~\ref{def:quantities}, where
    \begin{equation*}
    v_b\coloneq |G_b|^{1-1/p}\,\Lambda_b,
    \qquad
    b=1,\ldots,B.
    \end{equation*}
    We then apply Lemma~\ref{lemma:approx-aniso-besov-piecewise-1} with $m=1$. For every $b\in[B]$, there exists a decision tree function $\eta_b$, associated with a tree-based partition of $G_b$, such that
    \begin{equation*}
    \norm{\eta_b-\eta|_{G_b}}_{L^1(G_b)}
    \le
    C_1\,|G_b|^{1-1/p}\,\besov_b\,\leaves_b^{-H(S_b,\balpha_b)/|S_b|}
    \leq 
    C_1\,v_b\,\leaves_b^{-\bar\alpha/s},
    \end{equation*}
    where $C_1$ depends only on $s$, $\alpha_{\min}$, $\bar\alpha$, and $p$. Define $\tilde{\eta}$ by $\tilde{\eta}|_{G_b}=\eta_b$ for each $b\in[B]$. Since $\partition_*$ is tree-based, the induced partition underlying $\tilde{\eta}$ is also tree-based, and hence $\tilde{\eta}\in\partfunc_\leaves$. Moreover,
    \begin{equation}\label{eq:eq-approx-cls-1}
    \norm{\tilde{\eta}-\eta}_{L^1(\Omega)}
    =
    \sum_{b=1}^B
    \norm{\eta_b-\eta|_{G_b}}_{L^1(G_b)}
    \le
    C_1\sum_{b=1}^B v_b\,\leaves_b^{-\bar\alpha/s}.
    \end{equation}
    
    Analogous to the proof of Theorem~\ref{thm:approx-PSHAB-reg}, we employ Lemma~\ref{lemma:opimization},  and Lemma~\ref{lemma:allocation} to determine the optimal allocation. We define the weights
    \begin{equation*}
        w_b = \frac{v_b^{s/(s+\bar\alpha)}}{\sum_{j=1}^B v_j^{s/(s+\bar\alpha)}}, \quad b=1,\ldots,B.
    \end{equation*}
    By Lemma~\ref{lemma:allocation}, there exists an integer allocation such that $L_b \ge (L-B)w_b$. Under the assumption $L \ge 2B$, this implies the lower bound $L_b \ge L w_b / 2$. Substituting these estimates into \eqref{eq:eq-approx-cls-1} yields
    \begin{align} \label{eq:eq-approx-cls-2}
        \norm{\tilde{\eta}-\eta}_{L^1(\Omega)}
        &\leq C_2 \sum_{b=1}^B v_b^{s/(s+\bar\alpha)} \paren*{ L w_b }^{-\bar\alpha/s} \nonumber \\
        &\leq C_3 L^{-\bar\alpha/s} \paren*{\sum_{b=1}^B v_b^{s/(s+\bar\alpha)}}^{1 + \bar\alpha/s} \nonumber \\
        &= C_3 \norm{\boldsymbol{v}_2}_{\frac{s}{s+\bar\alpha}} L^{-\bar\alpha/s},
    \end{align}
    where $C_3$ depends only on $s$, $\alpha_{\min}$, $\bar\alpha$, and $p$. Combining this bound with \eqref{eq:eq-approx-cls-2}, and noting that $\Excess_{\operatorname{cls},L}\le \excess_{\operatorname{cls}}(f_{\tilde{\eta}})$, we obtain \eqref{eq:approx-cls-conclude-1}.

    \textit{Case 2: $\rho>0$.}
    Let $\boldsymbol{v}_3 = (v_1', \ldots, v_B')$ be defined as in Definition~\ref{def:quantities}, where
    \begin{equation*}
    v_b'\coloneq |G_b|^{-1/p}\,\Lambda_b,
    \qquad
    b=1,\ldots,B.
    \end{equation*}
    Applying Lemma~\ref{lemma:approx-aniso-besov-piecewise-1} with $m=\infty$, we obtain that, for each $b\in[B]$, there exists a decision function $\zeta_b$, associated with a tree-based partition of $G_b$, such that
    \begin{equation*}
    \norm{\zeta_b - \eta|_{G_b}}_{\infty}
    \le
    C_3\,|G_b|^{-1/p}\,\besov_b\,\leaves_b^{-H(S_b,\balpha_b)/|S_b|}
    \leq 
    C_3\,v_b'\,\leaves_b^{-\bar\alpha/s},
    \end{equation*}
    where $C_3$ depends only on $s$, $\alpha_{\min}$, $\bar\alpha$, and $p$. Define $\tilde{\zeta}=\sum_{b=1}^B \1_{G_b}\zeta_b$. Since $\partition_*$ is tree-based, the induced partition underlying $\tilde{\zeta}$ is also tree-based, and hence $\tilde{\zeta}\in\partfunc_\leaves$. Moreover,
    \begin{equation}\label{eq:approx-cls-large-p-1}
    \begin{split}
    \norm{\tilde{\zeta}-\eta}_{\infty(\Omega)}
    &\le
    \max_{1\le b\le B}\norm{\zeta_b - \eta|_{G_b}}_{\infty(G_b)}\\
    &\le
    C_3\max_{1\le b\le B} v_b'\,\leaves_b^{-\bar\alpha/s}
    \eqcolon
    \epsilon.
    \end{split}
    \end{equation}
    Let $f_{\tilde{\zeta}}=\1\braces*{\tilde{\zeta}\geq 1/2}$, so that $f_{\tilde{\zeta}}\in\partfunc_\leaves$. Let $f^*$ denote the Bayes classifier and define $M(\x)=\abs*{\eta(\x)-\tfrac{1}{2}}$. By the same argument leading to \eqref{eq:approx-cls-small-p-1},
    \begin{equation*}
    \excess_{\operatorname{cls}}(f_{\tilde{\zeta}})
    \le
    2\E\braces*{M(\x)\1\braces*{M(\x)\le \abs*{\eta(\x)-\tilde{\zeta}(\x)}}}.
    \end{equation*}
    Combining this bound with \eqref{eq:approx-cls-large-p-1} yields
    \begin{equation}\label{eq:Tsy:rho+1}
    \begin{split}
    \excess_{\operatorname{cls}}(f_{\tilde{\zeta}})
    &\le
    2\E\braces*{M(\x)\1\braces*{M(\x)\le \epsilon}}\\
    &\le
    2\epsilon\,\P\braces*{M(\x)\le \epsilon}\\
    &\le
    C_{\rho,M}\,\epsilon^{\rho+1},
    \end{split}
    \end{equation}
    where the last inequality follows from Assumption~\ref{assum:tsybakov}.
    
    To minimize the right-hand side of \eqref{eq:Tsy:rho+1}, it suffices to minimize the term $\max_{1\le b\le B} v_b'\,\leaves_b^{-\bar\alpha/s}$ over all allocations $(\leaves_1,\ldots,\leaves_B)$. Analogous to the proof for the case $\rho=0$, by invoking Lemma~\ref{lemma:optimization-2} and Lemma~\ref{lemma:allocation} under the condition that $L\geq 2B$, there exists an allocation satisfying $\leaves_b \ge L w_b / 2$, where
    \begin{equation*}
        w_b
        =
        \frac{(v_b')^{s/\bar\alpha}}
        {\sum_{j=1}^B (v_j')^{s/\bar\alpha}},
        \qquad
        b=1,\ldots,B.
    \end{equation*}
    Substituting this allocation into \eqref{eq:approx-cls-large-p-1} yields
    \begin{equation}\label{eq:epsilon}
        \epsilon
        \le
        C_4
        \paren*{\sum_{b=1}^B (v_b')^{s/\bar\alpha}}^{\bar\alpha/s}
        \leaves^{-\bar\alpha/s},
    \end{equation}
    where $C_4$ depends only on $s$, $\alpha_{\min}$, $\bar\alpha$, and $p$.

    Combining \eqref{eq:Tsy:rho+1} and \eqref{eq:epsilon}, and noting that $\Excess_{\operatorname{cls},\leaves}\le \excess_{\operatorname{cls}}(f_{\tilde{\zeta}})$, we obtain \eqref{eq:approx-cls-conclude-2}.
\end{proof}

\section{Proofs for Section \ref{section:ideal_spatial_adaption}}
\label{section:optimal_rates_proof}

In this section, we provide omitted proofs for the generalization bounds illustrating ideal spatial adaptation stated in Section \ref{section:ideal_spatial_adaption}. The proofs proceed by balancing the approximation error $\Excess_L$ against the estimation error penalties identified in our oracle inequalities.

\begin{proof}[Proof of Theorem \ref{thm:regression-exact-rate-PSHAB}]
    By the oracle inequality for regression (Theorem \ref{thm:regression}, equation \eqref{eq:thm-statement:regression-2} evaluated at $\delta=1/2$), the following holds for any $\leaves\in[n]$:
    \begin{equation}\label{eq:generalization-1}
         \excess_{\mathrm{reg}}(\hat f_{\lambda})
    \leq
        3\Excess_{\mathrm{reg},L} + 4\lambda L
        \asymp \Excess_{\mathrm{reg},L}+(M+K)^2\frac{ \log(nd)+u}{n}\leaves.
    \end{equation}
    
    Applying the approximation bound \eqref{eq:approx-reg-conclude-1} from Theorem \ref{thm:approx-PSHAB-reg} and plugging in the chosen value of $\lambda$, we obtain that for every $\leaves$ satisfying $2B\leq \leaves \leq n$,
    \begin{equation}
         \excess_{\mathrm{reg}}(\hat f_{\lambda})
    \leq C_{s,\alpha_{\min},\bar\alpha,c_{\max}}\,
    \norm{\boldsymbol{v}_1}_{\frac{s}{s+2\bar\alpha}}\,\leaves^{-2\bar\alpha/s} + C_1(M+K)^2\frac{ \log(nd)+u}{n}\leaves.
    \end{equation}
    To optimize this bias-variance trade-off, we balance the two terms by setting $\leaves=\lfloor C\leaves_1\rfloor$ for some universal constant $C\geq1$, where
    \begin{equation*}
        \leaves_1 = \norm{\boldsymbol{v}_1}_{\frac{s}{s+2\bar\alpha}}^{\frac{s}{s+2\bar\alpha}} \, \paren*{(M+K)^2\frac{ \log(nd)+u}{n}}^{-\frac{s}{s+2\bar\alpha}},
    \end{equation*}
    which directly yields the desired bound \eqref{eq:reg-conclude-1}. Finally, it is straightforward to verify that the condition $n\geq \leaves \geq 2B$ holds whenever $n \geq N_1$ as defined in Remark \ref{rem:min_sample_size_reg}.
\end{proof}

\begin{proof}[Proof of Theorem \ref{thm:classification-specific-rate-PSHAB}]
    Similarly, by the oracle inequality for classification (Theorem \ref{thm:classification}, equation \eqref{eq:thm-statement:classification-2} evaluated at $\delta=1/2$), the following holds for any $\leaves\in[n]$:
    \begin{equation}\label{eq:generalization-cls-1}
         \excess_{\operatorname{cls}}(\hat f_{\lambda,\theta})
         \;\;\leq\;\;
            3\Excess_{\operatorname{cls},L} + 4\lambda L^{\theta}
            \asymp \Excess_{\operatorname{cls},L}
            +\;\paren*{\leaves\frac{\log(nd)+u}{n}}^{\frac{\rho+1}{\rho+2}}.
    \end{equation}
    
    \textit{Case (i): $\rho=0$.} We apply the approximation bound \eqref{eq:approx-cls-conclude-1}. For every $\leaves$ satisfying $2B\leq \leaves \leq n$, we have:
    \begin{equation}\label{eq:generalization-cls-2}
        \excess_{\operatorname{cls}}(\hat f_{\lambda,\theta})
        \leq
        C_{s,\alpha_{\min},\bar\alpha,p,c_{\max}}
        \norm{\boldsymbol{v}_2}_{\frac{s}{s+\bar\alpha}}
        \leaves^{-\bar\alpha/s}+C_{M}\paren*{\leaves\frac{\log(nd)+u}{n}}^{\frac{1}{2}}.
    \end{equation}
    Setting $\leaves=\lfloor C\leaves_1\rfloor$ to balance the terms for some constant $C\geq 1$, where
    \begin{equation*}
        \leaves_1=
        \norm{\boldsymbol{v}_2}_{\frac{s}{s+\bar\alpha}}^{\frac{2s}{s+2\bar\alpha}}
        \paren*{\frac{\log(nd)+u}{n}}^{-\frac{s}{s+2\bar\alpha}},
    \end{equation*}
    yields \eqref{eq:cls-conclude-1}. This choice of $L$ is valid under the minimum sample size constraint $n \geq N_2$.

    \textit{Case (ii): $\rho > 0$.} We use the approximation bound \eqref{eq:approx-cls-conclude-2}. Plugging this into \eqref{eq:generalization-cls-1} yields:
     \begin{equation}\label{eq:generalization-cls-5}
        \excess_{\operatorname{cls}}(\hat f_{\lambda,\theta})
        \leq
        C_{s,\alpha_{\min},\bar\alpha,\rho,M,c_{\max}}\,
        \norm{\boldsymbol{v}_3}_{\frac{s}{\bar\alpha}}^{\rho+1}
        \,\leaves^{-(\rho+1)\bar\alpha/s}+C_{\rho,M}\paren*{\leaves\frac{\log(nd)+u}{n}}^{\frac{\rho+1}{\rho+2}}.
    \end{equation}
    Balancing these terms by setting $\leaves=\lfloor C\leaves_2\rfloor$ for some universal constant $C\geq 1$, where
    \begin{equation*}
        \leaves_2=  \norm{\boldsymbol{v}_3}_{\frac{s}{\bar\alpha}}^{\frac{(2+\rho)s}{s+(2+\rho)\bar\alpha}}\,\paren*{\frac{\log(nd)+u}{n}}^{-\frac{s}{s+(2+\rho)\bar\alpha}},
    \end{equation*}
    yields the final bound \eqref{eq:cls-conclude-2}, valid for sample sizes $n \geq N_3$.
\end{proof}

\section{Proofs of minimax lower bounds}
\subsection{Proof of Theorem \ref{thm:minimax}}
\label{section:proof-minimax-reg}

In this section, we derive the minimax lower bound for the regression risk (Definition \ref{def:minimax_risk}). Our analysis follows the information-theoretic framework of \citet{yang1999information}, which was further streamlined by \citet{raskutti2012minimax} and \citet{suzuki2021deep}.
The main tool developed by \citet{suzuki2021deep} is stated below.

\begin{lemma}[Lemma 4 of \citealp{suzuki2021deep}]
\label{lemma:minimax-suzuki}
    Let $\mathcal{F}$ be a function space and consider the minimax risk $\mathcal{M}_{\mathrm{reg}, n}(\mathcal{F})$ defined in Definition \ref{def:minimax_risk}. Let $\mathcal{Q}(\varepsilon) = \mathcal{Q}(\varepsilon; \mathcal{F}, \|\cdot\|_2)$ and $\mathcal{N}(\varepsilon) = \mathcal{N}(\varepsilon; \mathcal{F}, \|\cdot\|_2)$ denote the packing and covering numbers, respectively. 
    
    Suppose that for some $\zeta_n, \varrho_n > 0$ with $\log \mathcal{Q}(\zeta_n) \geq 4 \log 2$, the following entropy condition holds:
    \begin{equation*}
        \frac{n \varrho_n^2}{2K^2} 
        \;\leq\; \log \mathcal{N}(\varrho_n) 
        \;\leq\; \frac{1}{8} \log \mathcal{Q}(\zeta_n).
    \end{equation*}
    Then, the minimax risk is lower bounded by
    \begin{equation*}
        \mathcal{M}_{\mathrm{reg}, n}(\mathcal{F}) \geq \frac{\zeta_n^2}{4}.
    \end{equation*}
\end{lemma}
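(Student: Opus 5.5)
We would prove this by the Yang--Barron reduction: from estimation to multiple hypothesis testing, then Fano's inequality, with the mutual information controlled through the covering number. All norms are $\norm{-}_2$ with respect to the covariate marginal, and the noise is $\mathcal{N}(0,K^2)$ as in Definition \ref{def:minimax_risk}.

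\textbf{Step 1 (reduction to testing).} Let $Q=\mathcal{Q}(\zeta_n)$ and pick a maximal $\zeta_n$-packing $\{f_1,\ldots,f_Q\}\subset\mathcal{F}$. Let $J$ be uniform on $[Q]$ and, given $J$, draw $\data$ from the regression model with $f^*=f_J$. For any estimator $\hat f$, define the test $\hat\jmath\coloneqq\arg\min_j\norm{\hat f-f_j}_2$. By the triangle inequality, $\hat\jmath\neq J$ forces $\norm{\hat f-f_J}_2\geq \zeta_n/2$. Since $\excess_{\mathrm{reg}}(\hat f)=\norm{\hat f-f^*}_2^2$, this gives
\[
\sup_{f^*\in\mathcal{F}}\E\braces*{\excess_{\mathrm{reg}}(\hat f;\data)}\;\geq\;\frac{\zeta_n^2}{4}\,\P\braces*{\hat\jmath\neq J}.
\]
(The constant in the final bound depends on the separation convention in the definition of $\mathcal{Q}$. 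We would follow \citet{suzuki2021deep}'s convention so that the constant comes out as stated; for instance, if the packing is defined with separation $2\zeta_n$, the factor above becomes $\zeta_n^2$.)

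\textbf{Step 2 (Fano).} Fano's inequality gives
\[
\P\braces*{\hat\jmath\neq J}\;\geq\; 1-\frac{I(J;\data)+\log 2}{\log Q}.
\]

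\textbf{Step 3 (mutual information via covering).} This is the main step. Let $g_1,\ldots,g_N$ be a $\varrho_n$-cover of $\mathcal{F}$ with $N=\mathcal{N}(\varrho_n)$, and let $\bar P$ be the uniform mixture of the data laws $P_{g_k}^{\otimes n}$. Because the mutual information is the minimum over reference measures $R$ of $\frac1Q\sum_j \mathrm{KL}(P_{f_j}^{\otimes n}\,\|\,R)$, taking $R=\bar P$ yields
\[
I(J;\data)\;\leq\;\max_j \mathrm{KL}\bigl(P_{f_j}^{\otimes n}\,\big\|\,\bar P\bigr)\;\leq\;\log N+\max_j\min_k \mathrm{KL}\bigl(P_{f_j}^{\otimes n}\,\big\|\,P_{g_k}^{\otimes n}\bigr).
\]
The second inequality uses the fact that the density of $\bar P$ is at least $1/N$ times the density of each $P_{g_k}^{\otimes n}$.

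For Gaussian noise with a shared covariate law, the per-sample KL divergence is $\E\{(f-g)(\bX)^2\}/(2K^2)$. Hence the last term is at most $n\varrho_n^2/(2K^2)$. Using the hypothesis $n\varrho_n^2/(2K^2)\le \log\mathcal{N}(\varrho_n)$, we get
\[
I(J;\data)\;\le\; 2\log\mathcal{N}(\varrho_n)\;\le\;\tfrac14\log Q .
\]

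\textbf{Step 4 (conclusion).} The hypothesis $\log Q\geq 4\log 2$ gives $\log 2\le\frac14\log Q$. Combined with Step 3, the Fano bound yields $\P\{\hat\jmath\neq J\}\geq 1/2$. Taking the infimum over $\hat f$ in Step 1 then gives a lower bound of order $\zeta_n^2$, with the exact constant $1/4$ depending on the packing convention discussed in Step 1.

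The only delicate points are Step 3's mixture bound and the bookkeeping of the constants; everything else is standard.
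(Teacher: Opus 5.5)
Your argument is the Yang--Barron/Fano proof behind the cited result (the paper imports the lemma from \citet{suzuki2021deep} without reproving it). Steps 1--3 are correct as written. The reduction $\{\hat\jmath\neq J\}\subseteq\{\|\hat f-f_J\|_2\ge\zeta_n/2\}$, the bound $I(J;\data)\le\log\mathcal N(\varrho_n)+n\varrho_n^2/(2K^2)$ via the uniform mixture over the cover, and the Gaussian KL identity together give $\P\{\hat\jmath\neq J\}\ge 1/2$.

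What does not go through is your hedge on the constant, because the packing convention is not yours to choose. In the proof of Theorem~\ref{thm:minimax} the paper uses $\mathcal Q(2\varepsilon)\le\mathcal N(\varepsilon)\le\mathcal Q(\varepsilon)$. This pins down the usual convention, in which $\mathcal Q(\varepsilon)$ counts points at pairwise distance greater than $\varepsilon$. With your alternative of separation $2\varepsilon$ one only has $\mathcal N(\varepsilon)\le\mathcal Q(\varepsilon/2)$, and $\mathcal N(\varepsilon)\le\mathcal Q(\varepsilon)$ fails in general.

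Under the paper's convention, Steps 1 and 4 combine to $\mathcal M_{\mathrm{reg},n}(\mathcal F)\ge\frac{\zeta_n^2}{4}\cdot\frac12=\frac{\zeta_n^2}{8}$. Nothing in the argument improves this. Fano only controls the probability that $\hat f$ lies within $\zeta_n/2$ of $f_J$, and says nothing about how far beyond $\zeta_n/2$ the estimate lands when it errs. So your proof establishes the lemma with $\zeta_n^2/8$ in place of $\zeta_n^2/4$. The stated constant would need a stronger hypothesis, e.g.\ the packing condition at radius $\sqrt2\,\zeta_n$.

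This factor of $2$ is immaterial for the paper's purposes. Theorem~\ref{thm:minimax} is only claimed up to constants, and the factor is absorbed into the choice of $C_5$ there.
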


The proof hence reduces to establishing the lower and upper bounds on the metric entropy of the PSHAB space. To this end, we invoke the results for standard anisotropic Besov spaces provided in Proposition 10 of \citet{anisot-besov-3}.
To adapt these results to our setting, we introduce some new notation:
For any index set $S \subset [d]$ and $A\subseteq [0,1]^d$, we let $A_S = \{\bx_S \in [0,1]^{|S|} : \bx \in A\}$, that is the  projection of $A$ onto the coordinates in $S$.
Recall also that if $f$ is an $s$-sparse function with relevant index set $S$, we define $f_S$ by $f_S(\bx_S) = f(\bx)$. To simplify notation, we let $\Omega = [0,1]^d$ in the rest of Appendix \ref{section:proof-minimax-reg}.
We denote by $f\circ T_A$ the function obtained by precomposing $f:A\to \mathbb{R}$ with the affine map $T_A$ in Lemma~\ref{lemma:affine-map}.

\begin{lemma}[Covering number bound for anisotropic Besov spaces]
\label{lemma:covering-aniso-besov}
    Fix a subset of indices $S = \{i_1, \dots, i_s\} \subseteq [d]$ and let $\boldsymbol{\alpha} \in (0,1]^d$. Define the effective harmonic smoothness $\bar{\alpha}$ via the relation $\bar{\alpha}=\paren*{(1/s)\sum_{k=1}^s 1/\alpha_{i_k}}^{-1}$.
    Let $A \subset [0,1]^d$ be an axis-aligned rectangle satisfying the condition $\min_{j \in [d]} \ell_j(A)^{s/\bar{\alpha}} \geq C_1$ for some $C_1 > 0$. Suppose that $(1/2 + \bar{\alpha}/s)^{-1}<p\leq \infty$ and $0<q \leq \infty$. Then for any $\varepsilon>0$
    \begin{equation}
        \log \mathcal{N}\bigl(\varepsilon; B_{p,q}^{\boldsymbol{\alpha}}(A, \besov)_S, \|\cdot\|_{L^2(A)}\bigr) 
        \asymp \left( |A|^{\frac{1}{p} - \frac{1}{2}} \, \varepsilon/\besov \right)^{-s/\bar{\alpha}}.
    \end{equation}
\end{lemma}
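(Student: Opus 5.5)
The plan is to reduce, by a sparsity projection followed by an affine rescaling, to the known entropy asymptotics for anisotropic Besov balls on the unit cube (Proposition 10 of \citet{anisot-besov-3}). Then I track how the rescaling changes the radius and the norm.

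\emph{Step 1 (sparsity).} Since $f\in B_{p,q}^{\balpha}(A,\besov)_S$ depends only on $\bx_S$, and $A=A_S\times A_{S^c}$ is a rectangle, I write $f(\bx)=f_S(\bx_S)$. Then
\begin{equation*}
\norm{f}_{L^r(A)}=|A_{S^c}|^{1/r}\norm{f_S}_{L^r(A_S)}
\end{equation*}
for every $r$. Moreover, the directional moduli in directions $j\notin S$ vanish, while those in directions $j\in S$ pick up the same factor $|A_{S^c}|^{1/p}$. So the map $f\mapsto f_S$ identifies $B_{p,q}^{\balpha}(A,\besov)_S$ with the ball $B_{p,q}^{\balpha_S}(A_S,\besov|A_{S^c}|^{-1/p})$ on the $s$-dimensional rectangle $A_S$. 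Under this identification the $L^2(A)$ metric becomes $|A_{S^c}|^{1/2}$ times the $L^2(A_S)$ metric.

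\emph{Step 2 (rescaling to the cube).} I use the affine map $T_{A_S}:[0,1]^s\to A_S$ from Lemma~\ref{lemma:affine-map}. For $g=f_S\circ T_{A_S}$ we have $\norm{g}_{L^r([0,1]^s)}=|A_S|^{-1/r}\norm{f_S}_{L^r(A_S)}$. For the seminorm in direction $j$, substituting $h=\ell_j t$ in the modulus and in the $dt/t$ integral gives
\begin{equation*}
|g|_{B^{\alpha_j}_{j,p,q}}=\ell_j(A)^{\alpha_j}|A_S|^{-1/p}|f_S|_{B^{\alpha_j}_{j,p,q}(A_S)}.
\end{equation*}
Since $\ell_j(A)\le 1$ and $\alpha_j>0$, this yields the upper inclusion: the rescaled class lies in the ball of radius $|A_S|^{-1/p}|A_{S^c}|^{-1/p}\besov=|A|^{-1/p}\besov$. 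For the reverse inclusion I use $\ell_j(A)^{\alpha_j}\ge \ell_j(A)^{s/\bar\alpha}\ge C_1$, which holds because $\alpha_j\le 1\le s/\bar\alpha$ and $\ell_j\le 1$. This shows the class contains the ball of radius $c\,C_1|A|^{-1/p}\besov$. So, up to constants depending on $C_1$, the class is sandwiched between two Besov balls of radius $\asymp|A|^{-1/p}\besov$ on $[0,1]^s$. Meanwhile the metric satisfies
\begin{equation*}
\norm{\cdot}_{L^2(A)}=|A_{S^c}|^{1/2}|A_S|^{1/2}\norm{\cdot}_{L^2([0,1]^s)}=|A|^{1/2}\norm{\cdot}_{L^2([0,1]^s)}.
\end{equation*}

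\emph{Step 3 (invoke the cube result).} Proposition 10 of \citet{anisot-besov-3} gives
\begin{equation*}
\log\mathcal{N}\bigl(\varepsilon;B^{\balpha_S}_{p,q}([0,1]^s,R),L^2\bigr)\asymp(\varepsilon/R)^{-s/\bar\alpha}.
\end{equation*}
This requires the compact-embedding condition $\bar\alpha/s>1/p-1/2$, which is exactly the hypothesis $p>(1/2+\bar\alpha/s)^{-1}$. Covering the class at scale $\varepsilon$ in $L^2(A)$ is equivalent to covering at scale $\varepsilon|A|^{-1/2}$ in $L^2([0,1]^s)$. Applying the two-sided sandwich with $R\asymp|A|^{-1/p}\besov$ gives
\begin{equation*}
\log\mathcal{N}\asymp\bigl(\varepsilon|A|^{-1/2}|A|^{1/p}/\besov\bigr)^{-s/\bar\alpha},
\end{equation*}
which is the claim.

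The main obstacle is Step 2, specifically checking the rescaling of the moduli carefully. This includes the domain $\Omega(r,h\e_j)$ and the $q=\infty$ case, and confirming that the constant $C_1$ controls the reverse inclusion uniformly so that the lower bound is not lost. The ingredients doing this work are the side-length condition and $\alpha_j\le1$. I would first verify the seminorm identity for $q<\infty$ by the direct change of variables, then handle $q=\infty$ via the same computation with the supremum.
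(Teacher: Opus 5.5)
Your proposal is correct and follows essentially the same route as the paper. Both arguments sandwich the rescaled sparse class between two Besov balls of radius $\asymp |A|^{-1/p}\Lambda$ using the affine scaling identities and the side-length condition, convert $L^2(A)$ to $L^2$ on the unit cube via the factor $|A|^{1/2}$, and then invoke the unit-cube entropy asymptotics of Triebel/Suzuki--Nitanda. The only cosmetic difference is the order of the two reductions: you project onto the $s$ sparse coordinates first (tracking $|A_{S^c}|$ explicitly) and then rescale $A_S$, whereas the paper rescales in all $d$ coordinates and then uses the isometry between $S$-sparse functions on $[0,1]^d$ and functions on $[0,1]^s$.
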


\begin{proof}[Proof of Theorem \ref{thm:minimax}]

    By first restricting our attention to fixed sequences $\{S_1,\ldots,S_B\}$ and $\{\boldsymbol{\alpha}_1,\ldots,\boldsymbol{\alpha}_B\}$ such that $|S_b|=s$ and $H(S_b,\balpha_b)=\bar\alpha$ for $b=1,\ldots,B$, we can define the following specific set:
    \begin{equation*}
        \mathcal{B}\coloneq \braces*{f: f|_{G_b} \in \paren*{B_{p,q}^{\boldsymbol{\alpha}_b}(G_b,\besov)}_{S_b}, \forall b\in[B]}.
    \end{equation*}
    Given that $\mathcal{B} \subseteq \PSHABn$, deriving the minimax lower bound over $\mathcal{B}$ is sufficient. Let $\mathbf{v}_1 = (v_1, \ldots, v_B)$ be as defined in Definition \ref{def:quantities}.

    \textit{Step 1: Metric entropy bounds for local covering and packing nets.}
    By Lemma~\ref{lemma:covering-aniso-besov}, for each block $b \in [B]$, the covering number satisfies
    \begin{equation*}
        \log \mathcal{N}\bigl(\varepsilon; (B_{p,q}^{\boldsymbol{\alpha}_b}(G_b, \besov_b))_{S_b}; \|\cdot\|_{L^2(G_b)}\bigr) 
        \asymp 
        \left( |G_b|^{\frac{1}{p} - \frac{1}{2}} \besov_b^{-1}\varepsilon\right)^{-s/\bar{\alpha}}
        =\left( v_b^{-1}\varepsilon\right)^{-s/\bar{\alpha}}.
    \end{equation*}
    In light of the asymptotic equivalence $v_1 \asymp \dots \asymp v_B$, we normalize these quantities by setting $v \coloneqq \min_{b \in [B]} v_b$ and $w_b \coloneqq v_b/v$. This construction ensures that $v_b = w_b v$ with normalized weights satisfying $\min_{b \in [B]} w_b = 1$. Evaluating the above bound at the scaled radius $w_b B^{-1/2}t$ yields
    \begin{equation} \label{eq:local_entropy_bound}
        \log \mathcal{N}\bigl(w_b B^{-1/2}\varepsilon; (B_{p,q}^{\boldsymbol{\alpha}_b}(G_b, \besov_b))_{S_b}; \|\cdot\|_{L^2(G_b)}\bigr) 
        \asymp 
        \bigl( v^{-1}B^{-1/2}\varepsilon \bigr)^{-s/\bar{\alpha}}, \quad \forall b \in [B].
    \end{equation}
    Invoking the standard metric entropy relation $\mathcal{Q}(2\varepsilon; \funcclass; d) \leq \mathcal{N}(\varepsilon; \funcclass; d) \leq \mathcal{Q}(\varepsilon; \funcclass; d)$, which is valid for any function class $\funcclass$, radius $\varepsilon>0$, and metric $d$, it immediately follows that the packing numbers satisfy an analogous bound:
    \begin{equation} \label{eq:local_entropy_bound-2}
        \log \mathcal{Q}\bigl(w_b B^{-1/2}\varepsilon; (B_{p,q}^{\boldsymbol{\alpha}_b}(G_b, \besov_b))_{S_b}; \|\cdot\|_{L^2(G_b)}\bigr) 
        \asymp_{s,\bar\alpha} 
        \bigl( v^{-1}B^{-1/2}\varepsilon \bigr)^{-s/\bar{\alpha}}, \quad \forall b \in [B].
    \end{equation}

    \textit{Step 2: Proof for the lower bound.}
    To lift these local bounds to the global function space $\mathcal{B}$, we construct a global packing set by aggregating the local ones. 
    For each block $b \in [B]$, let $\mathcal{G}_b$ be a $(w_b B^{-1/2}\varepsilon)$-packing set in $L^2(G_b)$ with uniform cardinality $|\mathcal{G}_b|  \eqcolon W \geq  \exp\braces*{C_1(v^{-1}B^{-1/2}\varepsilon)^{-s/\bar{\alpha}} }$, indexed by the set $\mathcal{W} = \{1, \ldots, W\}$.
    Here $C_1$ depends only on $s$ and $\bar\alpha$.
    We define the global packing set $\mathcal{G}$ as the collection of functions whose restrictions to each block reside in the corresponding local packing sets; that is, $\mathcal{G} = \{f : f|_{G_b} \in \mathcal{G}_b, \forall b \in [B]\}$. This construction induces a natural bijection between any $f \in \mathcal{G}$ and an index vector $\mathcal{I}(f) = (\mathcal{I}_1(f), \ldots, \mathcal{I}_B(f)) \in \mathcal{W}^B$, where $\mathcal{I}_b(f) \in \mathcal{W}$ denotes the index of $f|_{G_b}$ within $\mathcal{G}_b$.
    
    Since the blocks $\{G_b\}_{b=1}^B$ are mutually disjoint, the squared $L^2$-distance between any two functions $f, g \in \mathcal{G}$ decomposes additively. Recalling the definition of the local packing sets and the constraint $\min_{b \in [B]} w_b = 1$, we obtain
    \begin{equation*}
        \norm{f-g}_{L^2(\Omega)}^2 = \sum_{b=1}^B \norm{f|_{G_b}-g|_{G_b}}_{L^2(G_b)}^2
        \geq \sum_{b=1}^B w_b^2B^{-1}\varepsilon^2 \1\{\mathcal{I}_b(f) \neq \mathcal{I}_b(g)\}
        \geq B^{-1}\varepsilon^2 d_{\mathrm{H}}\bigl(\mathcal{I}(f), \mathcal{I}(g)\bigr),
    \end{equation*}
    where $d_{\mathrm{H}}(\cdot, \cdot)$ denotes the Hamming distance on $\mathcal{W}^B$.
    
    Invoking Lemma~\ref{lemma:gv}, there exists a subset $\mathcal{T} \subseteq \mathcal{W}^B$ such that 
    \begin{equation*}
        \min_{x, y \in \mathcal{T}, x \neq y} d_{\mathrm{H}}(x, y) \geq \frac{B}{2}, \quad \text{and} \quad |\mathcal{T}| \geq W^{B(1 - H_W(1/2-1/B))},
    \end{equation*}
    where $H_W(\delta)$ denotes the $W$-ary entropy function defined in Lemma~\ref{lemma:gv}. This implies the existence of a $(2^{-1/2}\varepsilon)$-packing net for $\mathcal{G}$ with cardinality at least $W^{B(1 - H_W(1/2-1/B))}$. Consequently, the global metric entropy satisfies
    \begin{equation}\label{eq:global_entropy_sum}
        \begin{split}
            \log \mathcal{Q}\bigl(2^{-1/2}\varepsilon; \mathcal{B}; \|\cdot\|_{L^2(\Omega)}\bigr)
            & \geq  B\bigl(1 - H_W(1/2-1/B)\bigr)\log W \\
            & \geq  \bigl(1 - H_{2}(1/2)\bigr)v^{\frac{s}{\bar\alpha}}B^{\frac{s+2\bar\alpha}{2\bar\alpha}} \varepsilon^{-s/\bar{\alpha}},
        \end{split}
    \end{equation} 
    where the last inequality follows from Lemmas~\ref{lemma:H_Q-1} and \ref{lemma:H_Q-2}, along with the condition $W \geq 2$. Observing that $1 - H_{2}(1/2)$ is a strictly positive absolute constant, the right-hand side of \eqref{eq:global_entropy_sum} is bounded from below by 
    \begin{equation}\label{eq:minimax-lower-bound}
        \log \mathcal{Q}\bigl(\varepsilon; \mathcal{B}; \|\cdot\|_{L^2(\Omega)}\bigr)
        \gtrsim_{s,\bar\alpha} v^{\frac{s}{\bar\alpha}}B^{\frac{s+2\bar\alpha}{2\bar\alpha}} \varepsilon^{-s/\bar{\alpha}}
        \asymp \|\boldsymbol{v}_1\|_{\frac{s}{s+2\bar{\alpha}}}^{\frac{s}{2\bar{\alpha}}}\varepsilon^{-s/\bar{\alpha}},
    \end{equation}
    where the final asymptotic equivalence stems from the fact that $v_1 \asymp \dots \asymp v_B$.

    \textit{Step 3: Proof of the upper bound.}
    Proceeding directly from the local bounds in \eqref{eq:local_entropy_bound}, for each block $b \in [B]$, we can construct a $(w_b B^{-1/2}\varepsilon)$-covering net in $L^2(G_b)$, denoted by $\mathcal{G}_b'$. 
    We define the global covering net $\mathcal{G}'$ as the collection of functions whose restrictions to each block reside in the corresponding local covering nets; that is, $\mathcal{G}' = \{g : g|_{G_b} \in \mathcal{G}_b', \forall b \in [B]\}$. Consequently, for any $f \in \mathcal{B}$, there exists a function $g \in \mathcal{G}'$ such that
    \begin{equation*}
        \norm{f-g}_{L^2(\Omega)}^2 = \sum_{b=1}^B \norm{f|_{G_b}-g|_{G_b}}_{L^2(G_b)}^2
        \leq \sum_{b=1}^B w_b^2 B^{-1}\varepsilon^2 
        \leq C_2\varepsilon^2,
    \end{equation*}
    where the last inequality relies on the fact that $B^{-1}\sum_{b=1}^B w_b^2$ is bounded by a universal constant $C_2$. 
    
    This construction inherently implies that the global covering number is bounded above by the product of the local covering numbers. Setting $C_3= C_2^{1/2}$, we deduce that
    \begin{equation}\label{eq:minimax-upper-bound}
        \begin{split}
            \log \mathcal{N}\bigl(C_3\varepsilon; \mathcal{B}; \|\cdot\|_{L^2(\Omega)}\bigr)
            &\leq \log \paren*{\prod_{b=1}^B\mathcal{N}\bigl(w_b B^{-1/2}\varepsilon; (B_{p,q}^{\boldsymbol{\alpha}_b}(G_b, \besov_b))_{S_b}; \|\cdot\|_{L^2(G_b)}\bigr)}\\
            &\asymp v^{\frac{s}{\bar\alpha}}B^{\frac{s+2\bar\alpha}{2\bar\alpha}} \varepsilon^{-s/\bar{\alpha}}\\
            &\asymp \|\boldsymbol{v}_1\|_{\frac{s}{s+2\bar{\alpha}}}^{\frac{s}{2\bar{\alpha}}}\varepsilon^{-s/\bar{\alpha}},
        \end{split}
    \end{equation}
    where the first asymptotic equivalence follows directly from \eqref{eq:local_entropy_bound}.

    \textit{Step 4: Application of Lemma~\ref{lemma:minimax-suzuki}.}
    Under Assumption~\ref{assum:bound-density}(ii), the $L^2(\mu)$ norm is equivalent to the standard $L^2$ norm (up to constant factors). Consequently, the metric entropy and packing numbers of the class $\mathcal{B}$ satisfy the following bounds for any  $\varepsilon > 0$:
    \begin{align}
        \log \mathcal{N}\bigl(\varepsilon; \mathcal{B}; \|\cdot\|_{2}\bigr)
        &\lesssim_{s,\bar\alpha,c_{\min},c_{\max}} \|\boldsymbol{v}_1\|_{\frac{s}{s+2\bar{\alpha}}}^{\frac{s}{2\bar{\alpha}}} \varepsilon^{-s/\bar{\alpha}}, \label{eq:minimax-upper-entropy} \\
        \log \mathcal{Q}\bigl(\varepsilon; \mathcal{B}; \|\cdot\|_{2}\bigr)
        &\gtrsim_{s,\bar\alpha,c_{\min},c_{\max}} \|\boldsymbol{v}_1\|_{\frac{s}{s+2\bar{\alpha}}}^{\frac{s}{2\bar{\alpha}}} \varepsilon^{-s/\bar{\alpha}}. \label{eq:minimax-lower-packing}
    \end{align}
    
    We now instantiate the critical rates $\varrho_n$ and $\zeta_n$ as
    \begin{equation*}
        \varrho_n = C_4 \|\boldsymbol{v}_1\|_{\frac{s}{s+2\bar\alpha}}^{\frac{s}{2(s+2\bar\alpha)}} n^{-\frac{\bar\alpha}{s + 2\bar\alpha}}
        \quad \text{and} \quad 
        \zeta_n = C_5 \|\boldsymbol{v}_1\|_{\frac{s}{s+2\bar\alpha}}^{\frac{s}{2(s+2\bar\alpha)}} n^{-\frac{\bar\alpha}{s + 2\bar\alpha}}.
    \end{equation*}
    By appropriately selecting constants $C_4$  and $C_5$ that depend only on $s, \bar\alpha, c_{\min}, c_{\max}, K$, and invoking the general bounds \eqref{eq:minimax-upper-entropy} and \eqref{eq:minimax-lower-packing}, we can simultaneously satisfy the following chain of inequalities:
    \begin{equation*}
        \frac{n \varrho_n^2}{2K^2} 
        \leq \log \mathcal{N}\bigl(\varrho_n; \mathcal{B}; \|\cdot\|_{2}\bigr)
        \leq \frac{1}{8} \log \mathcal{Q}\bigl(\zeta_n; \mathcal{B}; \|\cdot\|_{2}\bigr).
    \end{equation*}
    With these conditions verified, the final claim follows directly from an application of Lemma~\ref{lemma:minimax-suzuki}.
\end{proof}

\subsection{Proof of Theorem \ref{thm:minimax-cls}}
\label{section:proof-minimax-cls}

In this section, we derive the minimax lower bound for the classification risk (Definition \ref{def:minimax_risk-cls}). 
We follow the general strategy of \citet{audibert2007fast} and \citet{sun2016stabilized}, which utilizes Assouad's Lemma \citep{audibert2004classification}.
Our proof of the minimax lower bound hence hinges on the construction of a $(t,w,b,b')$-hypercube of probability distributions as introduced in the following definition and lemma.

\begin{definition}[Definition 5.1 in \cite{audibert2004classification}]
\label{def:hypercube}
    Let $t$ be a positive integer, $w\in[0,1]$, $b\in(0,1)$ and $b'\in(0,1)$.
    We say that the collection
    \begin{equation*}
        \mathcal{H}=\Bigl\{\mu_{\boldsymbol{\sigma}}:\ \vec{\sigma}\stackrel{\Delta}{=}(\sigma_1,\ldots,\sigma_t)\in\{-1,+1\}^t\Bigr\}
    \end{equation*}
    of probability distributions $\mu_{\boldsymbol{\sigma}}$ of $(\bX,Y)$ on $\mathcal{Z} \coloneqq [0,1]^d \times \{0,1\}$ is a $(t,w,b,b')$-hypercube if there exists a partition $\{\Omega_j\}_{j=0}^t$ of the domain $\Omega = [0, 1]^d$ such that each $\mu_{\boldsymbol{\sigma}} \in \mathcal{H}$:
    \begin{enumerate}
        \item[(i)] for any $j\in\{0,\ldots,t\}$ and any $\bx\in\Omega_j$, we have
        \begin{equation*}
            \mu_{\boldsymbol{\sigma}}(Y=1\mid \bX=\bx)=\frac{1+\sigma_j\psi(\bx)}{2},
        \end{equation*}
        with $\sigma_0=1$ and $\psi:\Omega\to(0,1]$ satisfies, for any $j\in\{1,\ldots,t\}$,
        \begin{equation*}
            \paren*{1-\Bigl(\mathbb{E}_{\boldsymbol{\sigma}}\braces*{\sqrt{1-\psi^2(\bX)}\mid \bX\in\Omega_j}\Bigr)^2}^{1/2} = b,
        \qquad
        \E_{\boldsymbol{\sigma}}\braces*{\psi(\bX)\mid \bX\in\Omega_j} = b',
        \end{equation*}
        where $\mathbb{E}_{\boldsymbol{\sigma}}$ denotes the expectation with respect to $\boldsymbol{\sigma}$;
        \item[(ii)] its marginal on $\Omega$ is a fixed distribution $\nu$ with $\nu(\Omega_j)=w$ for $j\in\{1,\ldots,t\}$.
    \end{enumerate}
\end{definition}

\begin{lemma}[Lemma 5.1 in \citet{audibert2004classification}]\label{lemma:audibert-lemma}
    If a collection of probability distributions $\mathcal{Q}$ contains a $(t, w, b, b')$-hypercube,
    then for any measurable estimator $\hat f$ measurable with respect to $\mathcal{D}$ there exists a distribution $\mu \in \mathcal{Q}$ with
    \begin{equation*}
        \E\braces*{\excess_{\operatorname{cls}}(\hat f)}\geq 
        twb'(1-b\sqrt{nw})/2,
    \end{equation*}
     where the expectation is taken over $\data=\{(\bX_i,Y_i)\}_{i=1}^n$ with $(\bX_i,Y_i)\sim \mu$ sampled independently.
\end{lemma}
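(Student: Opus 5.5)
The plan is an Assouad-type argument: reduce the excess risk to a sum of $t$ two-point testing problems, one per cell $\Omega_j$, and control each through the Hellinger affinity of the $n$-fold product measures.

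\textbf{Step 1 (reduce to a sum over cells).} Since the supremum over $\mathcal{Q}$ dominates any average, it suffices to bound $2^{-t}\sum_{\boldsymbol{\sigma}}\E_{\boldsymbol{\sigma}}\braces{\excess_{\operatorname{cls}}(\hat f)}$ from below. Under $\mu_{\boldsymbol{\sigma}}$, on $\Omega_j$ we have $\abs{2\eta-1}=\psi$ and $f^*=\1\{\sigma_j=1\}$. By the identity \eqref{eq:bayes-cls}, discarding $\Omega_0$ gives
\[
\excess_{\operatorname{cls}}(\hat f)\;\ge\;\sum_{j=1}^t\int_{\Omega_j}\psi(\bx)\,\1\{\hat f(\bx)\neq \1\{\sigma_j=1\}\}\,d\nu(\bx).
\]

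\textbf{Step 2 (pair hypotheses).} Fix $j$ and pair each $\boldsymbol{\sigma}$ with $\boldsymbol{\sigma}^{(j)}$, which differs from $\boldsymbol{\sigma}$ only in coordinate $j$. For every fixed $\bx\in\Omega_j$, the event $\hat f(\bx)\ne1$ and the event $\hat f(\bx)\ne0$ are complementary. Hence
\[
P^n_{\boldsymbol{\sigma}}\bigl(\hat f(\bx)\ne 1\bigr)+P^n_{\boldsymbol{\sigma}^{(j)}}\bigl(\hat f(\bx)\ne 0\bigr)\;\ge\;1-\mathrm{TV}\bigl(P^n_{\boldsymbol{\sigma}},P^n_{\boldsymbol{\sigma}^{(j)}}\bigr).
\]
Multiply by $\psi(\bx)$, integrate over $\Omega_j$ using Fubini, and use $\int_{\Omega_j}\psi\,d\nu=wb'$ from Definition~\ref{def:hypercube}. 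Averaging over the pairs, cell $j$ then contributes at least $\tfrac12 wb'\,(1-\mathrm{TV})$. Summing over $j$ gives $twb'(1-\mathrm{TV})/2$, so it remains to show $\mathrm{TV}\le b\sqrt{nw}$.

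\textbf{Step 3 (TV bound).} The two single-observation laws share the marginal $\nu$ and differ only on $\Omega_j$, where the conditional laws of $Y$ are Bernoulli$((1\pm\psi)/2)$. The Bernoulli affinity equals $\sqrt{1-\psi^2}$, so the one-sample Hellinger affinity is $\rho=1-w(1-a)$ with $a=\E\{\sqrt{1-\psi^2}\mid\Omega_j\}=\sqrt{1-b^2}$. Affinity tensorizes, so the $n$-sample affinity is $\rho^n\ge 1-nw(1-a)$. Then $\mathrm{TV}\le\sqrt{1-\rho^{2n}}\le\sqrt{2nw(1-a)}$. Finally $1-a=b^2/(1+a)$, which is at most $b^2/2$ up to the factor $2/(1+a)$.

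\textbf{Main obstacle.} The only delicate point is obtaining exactly $b\sqrt{nw}$ rather than $b\sqrt{2nw}$ in Step 3. The bound $\mathrm{TV}\le\sqrt{2nw(1-a)}$ gives $b\sqrt{nw}\cdot\sqrt{2/(1+a)}$, which matches the claim exactly when $a$ is close to $1$. If the crude route loses a constant for general $a$, I would first try replacing $\sqrt{1-\rho^{2n}}$ by the sharper Le Cam inequality $1-\mathrm{TV}\ge\tfrac12\rho^{2n}$, or compute $\mathrm{TV}$ via Hellinger distance directly. If neither recovers the constant, I would accept a modified universal constant; this is harmless for the application to Theorem~\ref{thm:minimax-cls}, which only needs the bound up to constants.
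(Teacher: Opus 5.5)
The paper gives no proof of this lemma; it imports it from \citet{audibert2004classification}. So the question is whether your argument establishes it.

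\textbf{Steps 1--2 are correct.} They are the standard Assouad reduction and give $\E\{\excess_{\operatorname{cls}}(\hat f)\}\ge \tfrac12 twb'\bigl(1-\mathrm{TV}(P^n_{\boldsymbol\sigma},P^n_{\boldsymbol\sigma^{(j)}})\bigr)$, with the same TV for every pair.

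\textbf{The gap is in Step 3.} What you actually prove is $\mathrm{TV}\le b\sqrt{nw}\,\sqrt{2/(1+a)}$. That gives the lemma with $b\sqrt{nw}$ replaced by $b\sqrt{2nw/(1+a)}$, and neither proposed repair recovers the stated constant. The loss does not come from your final crude inequalities. It comes from applying $\mathrm{TV}\le\sqrt{1-\rho^{2n}}$ to the unconditional product measures. Already for $n=1$, Jensen gives $\rho^2=\bigl((1-w)+wa\bigr)^2\le 1-w+wa^2$, i.e.\ $1-\rho^2\ge wb^2$, strictly when $0<w<1$ and $a<1$. For any fixed $n$ and $a<1$, $1-\rho^{2n}=2nw(1-a)+O(w^2)$ exceeds $nwb^2=nw(1-a)(1+a)$ once $w$ is small, which is the regime where the lemma has content. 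Le Cam's inequality $1-\mathrm{TV}\ge\tfrac12\rho^{2n}$ gives a bound of a different form altogether.

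\textbf{The missing idea is to condition on the design, using condition (ii) of Definition~\ref{def:hypercube}.} Both hypotheses share the $\bX$-marginal $\nu$. Hence the index set $I=\{i:\bX_i\in\Omega_j\}$ has the same law under $P^n_{\boldsymbol\sigma}$ and $P^n_{\boldsymbol\sigma^{(j)}}$, with $N=|I|\sim\mathrm{Bin}(n,w)$. Condition on $I$ with $|I|=k$. The observations outside $I$ have identical laws under the two hypotheses. Those in $I$ are i.i.d.\ with $\bX\sim\nu(\cdot\mid\Omega_j)$ and labels Bernoulli$((1\pm\psi(\bX))/2)$, so each has affinity $\E\{\sqrt{1-\psi^2(\bX)}\mid\bX\in\Omega_j\}=a=\sqrt{1-b^2}$. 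The conditional affinity is therefore $a^k$, and the conditional TV is at most $\sqrt{1-(1-b^2)^k}\le b\sqrt{k}$. Since TV is convex under a common mixing law, Jensen then gives
\[
\mathrm{TV}\bigl(P^n_{\boldsymbol\sigma},P^n_{\boldsymbol\sigma^{(j)}}\bigr)\le \E\bigl\{b\sqrt{N}\bigr\}\le b\sqrt{\E N}=b\sqrt{nw}.
\]
This is why $b$ is defined as $(1-a^2)^{1/2}$, through the affinity conditional on $\Omega_j$ rather than the unconditional one. Conditioning on the full design instead gives $\mathrm{TV}\le\sqrt{nw\,\E\{\psi^2\mid\Omega_j\}}\le b\sqrt{nw}$, which also suffices.

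Your remark that a worse constant would be harmless for Theorem~\ref{thm:minimax-cls} is fair. It does not, however, prove the lemma as stated.
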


We structure the proof of the minimax lower bound into the following three primary steps:

\begin{itemize}
    \item[\textbf{(i)}] \textbf{Construction of the partition:} We first construct an $r^s$-grid on each component $G_b$, thereby inducing a partition $\{\Omega_0, \Omega_{11}, \dots, \Omega_{Bm}\}$ of the domain $[0,1]^d$, where the number of elements $m \leq r^s$ is a fixed constant to be determined later.
    Building upon this grid and a specific test function $\psi$, we define a $(t,w,b,b')$-hypercube $\mathcal{H}$.
    
    \item[\textbf{(ii)}] \textbf{Verification of assumptions:} For any distribution $\mu_{\boldsymbol{\sigma}} \in \mathcal{H}$, let $\eta_{\boldsymbol{\sigma}}(\mathbf{x}) = \mathbb{P}(Y=1 \mid \mathbf{X} = \mathbf{x})$.
    from the $(t,w,b,b')$-hypercube. 
    We verify that $\eta_{\boldsymbol{\sigma}}$ belongs to the PSHAB space. Furthermore, we demonstrate that $(\mu_{\boldsymbol{\sigma}}$ satisfies the Tsybakov margin condition (Assumption \ref{assum:tsybakov}) as well as the bounded density assumption (Assumption \ref{assum:bound-density}(i)).

    \item[\textbf{(iii)}] \textbf{Application of the reduction lemma:} By leveraging Lemma \ref{lemma:audibert-lemma} and carefully selecting the parameters $w, t$, and $r$, we derive the desired minimax lower bound.
\end{itemize}
    
\begin{proof}[Proof of Theorem \ref{thm:minimax-cls}]

    Let $\balpha=(\bar\alpha,\ldots,\bar\alpha)$ be a $d$-dimensional vector. We define the following class of isotropic functions:
    \begin{equation*} 
            \mathcal{B} \coloneq
            \braces*{f \in L^p([0,1]^d) \colon \forall~b \in [B],\exists S_b = (i_{bk})_{k=1}^s \subset [d] \text{ such that}~f|_{G_b} \in  \paren*{B_{\infty,\infty}^{\boldsymbol{\alpha}}(G_b,\Lambda_b)}_{S_b} }.
    \end{equation*} 
    Then it is evident that 
    $\mathcal{B}\subset\mathcal{B}_{\infty,\infty}^{\mathscr{S},\mathscr{A}}(\partition_*,\boldsymbol{\Lambda})$ 
    and thus 
    \begin{equation}\label{eq:cal-B-vs-Pshab}
        \mathcal{M}_{\operatorname{cls},n}(\mathcal{B}_{\infty,\infty}^{\mathscr{S},\mathscr{A}}(\partition_*,\boldsymbol{\Lambda}))\geq \mathcal{M}_{\operatorname{cls},n}(\mathcal{B}).
    \end{equation}
    In the remainder of the proof, we establish the minimax lower bound for $\eta$ over $\mathcal{B}$. The same lower bound then holds for $\mathcal{B}_{\infty,\infty}^{\mathscr{S},\mathscr{A}}(\partition_*,\boldsymbol{\Lambda})$ by \eqref{eq:cal-B-vs-Pshab}.

\textit{Step 1: Construction of hypercube $\mathcal{H}$ of distributions.} For an integer $r\geq 1$ and each block index $b \in [B]$, we construct a regular grid $V_b$ on the domain $G_b$, defined as
\begin{equation*}
    V_b \coloneq \braces*{ \paren*{ \frac{2t_1+1}{2r}\ell_{1}(G_b), \ldots, \frac{2t_s+1}{2r}\ell_{d}(G_b) }_{S_b} : t_i \in \{0, \ldots, r-1\}, \forall i \in \{1, \ldots, s\} }.
\end{equation*}
For any $\bx \in G_b$, let $n_b(\bx)$ denote the nearest neighbor of $\bx_{S_b}$ within the grid $V_b$. We assume $n_b(\bx)$ is unique; if there are multiple closest points in $V_b$, we define $n_b(\bx)$ to be the one closest to $0$. Fix $m \le r^s$. The grid $V_b$ canonically induces a partition of $G_b$ (that is, $\bx_1$ and $\bx_2$ belong to the same subset if and only if $n_b(\bx_1)=n_b(\bx_2)$); we select $m$ such regions, denoted by $\{\Omega_{b,1}, \ldots, \Omega_{b,m}\}$.
To complete the partition of $[0,1]^d$, define the residual set $\Omega_0 \coloneq [0,1]^d \setminus \bigcup_{b=1}^B \bigcup_{j=1}^{m} \Omega_{b,j}$. Consequently, the collection $\{\Omega_0\} \cup \{ \Omega_{b,j} : b \in [B], j \in [m] \}$ forms a disjoint partition of the domain.

We now define the family of distributions $\mathcal{H} = \{ \mu_{\boldsymbol{\sigma}} : \boldsymbol{\sigma} \in \{0,1\}^{Bm} \}$. 
For any $\mu_{\boldsymbol{\sigma}} \in \mathcal{H}$, the marginal distribution of $\bX$, i.e. $\nu$, is independent of $\boldsymbol{\sigma}$ and admits a density $p_{\bX}$ with respect to the Lebesgue measure, constructed as follows.
Fix a weight parameter $0 < w \leq (Bm)^{-1}$ and let $A_0 \subseteq \Omega_0$ be a measurable set with positive Lebesgue measure.
To explicitly capture the sparsity structure within each subdomain $G_b$ for $b \in [B]$, we introduce anisotropic scaling factors $\boldsymbol{\zeta}^{(b)} \in \mathbb{R}^d$ where $\zeta^{(b)}_j = r$ if $j \in S_b$ and $\zeta^{(b)}_j = 1$ otherwise. For any $\bx \in G_b$, define the rescaled coordinates $\bx^{(b)}$ element-wise by $x^{(b)}_j \coloneq \zeta^{(b)}_j x_j / \ell_j(G_b)$.
Associated with each grid point $z \in V_b$, we define a mapped ball $B_b(z, 1/4)$ in the original domain $G_b$ via the condition on rescaled coordinates:
\begin{equation*}
    B_b(z, 1/4) \coloneq \braces*{ \bx \in G_b : \norm*{ (\bx^{(b)} - z^{(b)})_{S_b} }_2 \le \frac{1}{4} }.
\end{equation*}
Finally, the marginal density $p_{\bX}(\bx)$ is defined as:
\begin{equation}
    p_{\bX}(\bx) = 
    \begin{cases}
        \frac{w}{|B_b(z, 1/4)|} & \text{if } \bx \in B_b(z, 1/4) \text{ for some } z \in V_b, b \in [B], \\
        \frac{1 - Bmw}{|A_0|} & \text{if } \bx \in A_0, \\
        0 & \text{otherwise}.
    \end{cases}
\end{equation}

\textit{Step 2: Construction of the regression function $\eta_{\boldsymbol{\sigma}}$.}
First, let $u: \mathbb{R}_+ \to [0,1]$ be a non-increasing, infinitely differentiable function satisfying $u(x) = 1$ for $x \in [0, 1/4]$ and $u(x) = 0$ for $x \ge 1/2$. An explicit construction of such a function can be found in Section~6.2 of \citet{audibert2007fast}. Based on $u$, we define the anisotropic bump function $\phi_b$ for each $b \in [B]$ as 
\begin{equation*}
    \phi_b(\bx) \coloneq C_\phi \norm{\bLambda}_{\infty} u(\norm{\bx_{S_b}}_{2}),
\end{equation*}
where the constant $C_\phi > 0$ is chosen sufficiently small to ensure that $|\phi_b(\bx)| \le \Lambda_b$. Crucially, as shown in \citet{audibert2007fast}, this choice also guarantees the smoothness condition:
\begin{equation}\label{eq:phi:holder}
    |\phi_b(\bx_1) - \phi_b(\bx_2)| 
    \le \Lambda_b \norm{(\bx_1 - \bx_2)_{S_b}}_2^{\bar\alpha} 
    \le \Lambda_b \norm{(\bx_1 - \bx_2)_{S_b}}_{\bar\alpha}^{\bar\alpha},
\end{equation}
for any $\bx_1, \bx_2 \in G_b$. We recall that $C_\phi$ can be chosen uniformly across $b$ due to the equivalence $\Lambda_1 \asymp \ldots \asymp \Lambda_B$.

Next, we specify the conditional distribution of $Y$ given $\bX$ for any index $\boldsymbol{\sigma} \in \mathcal{H}$. The regression function $\eta_{\boldsymbol{\sigma}}(\bx) = \mathbb{P}(Y=1 \mid \bX=\bx)$ is defined as:
\begin{equation*}
    \eta_{\boldsymbol{\sigma}}(\bx) = \frac{1 + \delta_{\boldsymbol{\sigma}}(\bx)}{2},
\end{equation*}
where the perturbation term $\delta_{\boldsymbol{\sigma}}(\bx)$ is given by
\begin{equation*}
    \delta_{\boldsymbol{\sigma}}(\bx) = 
    \begin{cases}
        \sigma_{b,j} \psi_b(\bx) & \text{if } \bx \in \Omega_{b,j} \text{ for some } b \in [B], j \in [m], \\
        0 & \text{if } \bx \in \Omega_0.
    \end{cases}
\end{equation*}
Here, $\boldsymbol{\sigma}$ is indexed as $(\sigma_{b,j})_{b,j}$ with $\sigma_{b,j} \in \{-1, 1\}$. The localized perturbation function $\psi_b$ is defined by rescaling and shifting the base bump $\phi_b$:
\begin{equation}
    \psi_b(\bx) \coloneq \bigl(r/\ell\bigr)^{-\bar\alpha} \phi_b \paren{\bx^{(b)} - n_b(\bx)^{(b)}},
\end{equation}
where $\ell \coloneq \min_{b \in [B]} \min_{j \in [d]} \ell_j(G_b)$ and $\bx^{(b)}$ denotes the rescaled coordinates defined in Step 1.
Recalling the geometric property $\ell_j(G_b) \asymp \ell \asymp B^{-1/d}$, we must ensure that the regression function $\eta_{\boldsymbol{\sigma}}$ remains within $[0,1]$. This requirement is satisfied provided $|\delta_{\boldsymbol{\sigma}}(\bx)| \le 1$, which imposes the following constraint on the scaling constants:
\begin{equation}\label{eq:condition-minimax-cls}
    C_\phi \norm{\bLambda}_{\infty} \le B^{\bar\alpha/d} r^{\bar\alpha},
\end{equation}
a condition that is verified in Step 6.

    \textit{Step 3: Verification of PSHAB membership.}
We now verify that the constructed regression function satisfies the smoothness constraints, i.e., $\eta_{\boldsymbol{\sigma}} \in \mathcal{B}$. Consider any two points $\bx_1, \bx_2 \in G_b$.

\noindent \textbf{Case 1:} $n_b(\bx_1) = n_b(\bx_2)$.
In this case, both points belong to the same local neighborhood associated with a single grid point. We have:
\begin{equation}\label{eq:minimax-cls-check-minimax-1}
    \begin{split}
        |\eta_{\boldsymbol{\sigma}}(\bx_1) - \eta_{\boldsymbol{\sigma}}(\bx_2)|
        &= \frac{1}{2} |\psi_b(\bx_1) - \psi_b(\bx_2)| \\
        &= \frac{1}{2} (r/\ell)^{-\bar\alpha} \abs*{ \phi_b \paren{\bx_1^{(b)} - n_b(\bx_1)^{(b)}} - \phi_b \paren{\bx_2^{(b)} - n_b(\bx_2)^{(b)}} } \\
        &\le \frac{1}{2} (r/\ell)^{-\bar\alpha} \Lambda_b \norm*{ (\bx_1^{(b)} - \bx_2^{(b)})_{S_b} }_{\bar\alpha}^{\bar\alpha} \\
        &\le C_{\bar\alpha} \norm{\bLambda}_{\infty} \norm*{ (\bx_1 - \bx_2)_{S_b} }_{\bar\alpha}^{\bar\alpha},
    \end{split}
\end{equation}
where the last second line uses \eqref{eq:phi:holder}, and the final inequality follows from the scaling definition $\bx^{(b)}_j \asymp (r/\ell) \bx_j$ and the property $\Lambda_b \asymp \norm{\bLambda}_{\infty}$.

\noindent \textbf{Case 2:} if $n_b(\bx_1) \neq n_b(\bx_2)$. Without loss of generality, we assume that $\bx_1 \in \Omega_{b,1}$ and $\bx_2 \in \Omega_{b,2}$ (the case where at least one of $\bx_1$ and $\bx_2$ lies in $\Omega_0$ follows a similar argument). Let $\bx_3$ and $\bx_4$ denote the intersection points of the line segment connecting $\bx_1$ and $\bx_2$ with the boundaries of $\Omega_{b,1}$ and $\Omega_{b,2}$, respectively. By the definition of $u(x)$, it is evident that $\psi(\bx_3) = \psi(\bx_4) = 0$, and thus
\begin{equation}\label{eq:minimax-cls-check-minimax-2}
    \begin{split}
        |\eta_{\boldsymbol{\sigma}}(\bx_1) - \eta_{\boldsymbol{\sigma}}(\bx_2)|
        &\le |\eta_{\boldsymbol{\sigma}}(\bx_1) - \eta_{\boldsymbol{\sigma}}(\bx_3)| + |\eta_{\boldsymbol{\sigma}}(\bx_4) - \eta_{\boldsymbol{\sigma}}(\bx_2)| \\
        &= \frac{1}{2}|\psi(\bx_1) - \psi(\bx_3)| + \frac{1}{2}|\psi(\bx_4) - \psi(\bx_2)| \\
        &\le C_{\bar\alpha} C_\phi \norm{\bLambda}_{\infty} \norm{(\bx_1 - \bx_3)_{S_b}}_{\bar\alpha}^{\bar\alpha} + C_{\bar\alpha} C_\phi \norm{\bLambda}_{\infty} \norm{(\bx_4 - \bx_2)_{S_b}}_{\bar\alpha}^{\bar\alpha} \\
        &\le 2 C_{\bar\alpha} C_\phi \norm{\bLambda}_{\infty} \norm{(\bx_1 - \bx_2)_{S_b}}_{\bar\alpha}^{\bar\alpha},
    \end{split}
\end{equation}
where the penultimate inequality follows from \eqref{eq:minimax-cls-check-minimax-1}. Combining \eqref{eq:minimax-cls-check-minimax-1} and \eqref{eq:minimax-cls-check-minimax-2} confirms that $\eta_{\boldsymbol{\sigma}} \in \mathcal{B}$ if $C_{\phi}$ is small enough.

\textit{Step 4: Verification of Assumption \ref{assum:tsybakov}.}
We now verify that the constructed distribution satisfies the margin assumption. 
Let $\bx_0 = \bigl(\ell_{1}(G_1)/(2r), \ldots, \ell_{d}(G_1)/(2r)\bigr)$ be the center of the first grid cell. For any $\boldsymbol{\sigma} \in \{-1,1\}^{Bm}$, denote the corresponding probability measure by $\mathbb{P}_{\boldsymbol{\sigma}}$. We evaluate the margin probability on the first block $G_1$:
\begin{equation*}
    \begin{split}
        &\mathbb{P}_{\boldsymbol{\sigma}}\paren*{0 < |\eta_{\boldsymbol{\sigma}}(\bX) - 1/2| \le t \mid \bX \in G_1} \\
        &= m \mathbb{P}_{\boldsymbol{\sigma}}\paren*{0 < \psi_1(\bX) \le 2t \mid \bX \in \Omega_{1,1}} \\
        &= m \mathbb{P}_{\boldsymbol{\sigma}}\paren*{0 < (r/\ell)^{-\bar\alpha} \phi_1(\bX^{(1)} - \bx_0^{(1)}) \le 2t \mid \bX \in \Omega_{1,1}} \\
        &= m \int_{B_1(\bx_0, 1/4)} \1\braces*{ 0 < \phi_1(\bx^{(1)} - \bx_0^{(1)}) \le 2t (r/\ell)^{\bar\alpha} } \frac{w}{|B_1(\bx_0, 1/4)|} \, d\bx \\
        &= \frac{mw}{|B(\mathbf{0}, 1/4)|} \int_{B(\mathbf{0}, 1/4)} \1\braces*{ 0 < \phi_1(\bz) \le 2t (r/\ell)^{\bar\alpha} } \, d\bz \quad (\text{via change of variables } \bz = \bx^{(1)} - \bx_0^{(1)}) \\
        &= mw \, \1\braces*{ t \ge \frac{C_\phi \norm{\bLambda}_{\infty}}{2 (r/\ell)^{\bar\alpha}} }.
    \end{split}
\end{equation*}
Aggregating over all blocks $b \in [B]$, we obtain:
\begin{equation}
    \mathbb{P}_{\boldsymbol{\sigma}}\paren*{0 < |\eta_{\boldsymbol{\sigma}}(\bX) - 1/2| \le t} 
    = Bmw \, \1\braces*{ t \ge \frac{C_\phi \norm{\bLambda}_{\infty}}{2 (r/\ell)^{\bar\alpha}} }.
\end{equation}
Recalling that $\ell \asymp B^{-1/d}$, Assumption \ref{assum:tsybakov} is satisfied provided that:
\begin{equation}\label{eq:condition-minimax-cls-2}
    Bmw \le C_1 \norm{\bLambda}_{\infty}^\rho (r B^{1/d})^{-\rho \bar\alpha},
\end{equation}
where $C_1$ is a constant depending only on $M$, $\rho$, $\bar\alpha$, and $C_\phi$. Condition \eqref{eq:condition-minimax-cls-2} will be verified in Step 5.

    \textit{Step 5: Parameter selection and application of Lemma \ref{lemma:audibert-lemma}.}
Invoking Lemma \ref{lemma:audibert-lemma}, for any classifier $\hat{f}$, the minimax risk is lower-bounded by
\begin{equation}\label{eq:minimax-cls-goal}
    \sup_{\mu \in \mathcal{H}} \mathbb{E} \braces*{ \excess_{\operatorname{cls}}(\hat{f}) } 
    \ge 
    \frac{1}{2} B m w b' (1 - b\sqrt{nw}),
\end{equation}
where  $b$ and $b'$ are defined and calculated as:
\begin{align*}
    b 
    &\coloneq \paren*{ 1 - \Bigl( \mathbb{E}_{\boldsymbol{\sigma}} \braces*{ \sqrt{1 - \psi^2(\bX)} \mid \bX \in \Omega_{b,j} } \Bigr)^2 }^{1/2} 
    = C_\phi \norm{\bLambda}_{\infty} (r/\ell)^{-\bar\alpha}, \\
    b' 
    &\coloneq \mathbb{E}_{\boldsymbol{\sigma}} \braces*{ \psi(\bX) \mid \bX \in \Omega_{b,j} } 
    = C_\phi \norm{\bLambda}_{\infty} (r/\ell)^{-\bar\alpha}.
\end{align*}

To satisfy the conditions of the lemma and optimize the bound, we select the set $A_0$ to be a Euclidean ball contained within $\Omega_0$. We set the number of bins $m = r^s/2$ and specify the scaling parameters $w$ and $r$ as follows:
\begin{align*}
    w &= C_2 \norm{\bLambda}_{\infty}^{-\frac{2s}{s+(2+\rho)\bar\alpha}} B^{-\frac{2(d-s)\bar\alpha}{d(s+(2+\rho)\bar\alpha)}} n^{-\frac{s+\rho\bar\alpha}{s+(2+\rho)\bar\alpha}}, \\
    r &= \biggl\lfloor C_3 \norm{\bLambda}_{\infty}^{\frac{2+\rho}{s+(2+\rho)\bar\alpha}} B^{-\frac{d+(2+\rho)\bar\alpha}{d(s+(2+\rho)\bar\alpha)}} n^{\frac{1}{s+(2+\rho)\bar\alpha}} \biggr\rfloor,
\end{align*}
where $C_2$ and $C_3$ are positive constants depending only on $s, \bar\alpha$, and $\rho$. 
By choosing $C_3$ sufficiently large and $C_2$ sufficiently small, we ensure that the constraints $0 < w \le 1$ and $r \ge 1$ are satisfied, and that the condition \eqref{eq:condition-minimax-cls-2} holds.

   Substituting the selected parameters back into \eqref{eq:minimax-cls-goal}, we obtain the lower bound:
    \begin{equation*}
        \sup_{\mu_{\boldsymbol{\sigma}}\in\mathcal{H}} \mathbb{E} \braces*{\excess_{\operatorname{cls}}(\hat f)} 
        \geq 
        C_4 \norm{\bLambda}_{\infty}^{\frac{(1+\rho)s}{s+(2+\rho)\bar\alpha}} \paren*{ \frac{B^{\frac{d-s}{d}}}{n} }^{\frac{(1+\rho)\bar\alpha}{s+(2+\rho)\bar\alpha}},
    \end{equation*}
    where $C_4$ is a positive constant depending only on $s, \bar\alpha$, and $\rho$. 
    Finally, the asserted bound \eqref{eq:minimax-conclude-cls} follows from the observation that the constraint $\log B \lesssim d/s$ implies $B^{-s/d} \geq c$ for some universal constant $c > 0$.
    
    \textit{Step 6: Verification of condition \eqref{eq:condition-minimax-cls} and Assumption \ref{assum:bound-density}(i).}
It remains to verify the compatibility conditions derived earlier. First, substituting the selected expressions for $w$ and $r$ into \eqref{eq:condition-minimax-cls}, we find that this condition implies a lower bound on the sample size:
\begin{equation*}
    n \ge C_5 B^{1-s/d} \norm{\bLambda}_{\infty}^{s/\bar\alpha},
\end{equation*}
where $C_5$ is a positive constant depending only on $s$, $\bar\alpha$, and $\rho$. Since $B^{-s/d} \le 1$, a sufficient condition for this to hold is $n \ge C_5 B \norm{\bLambda}_{\infty}^{s/\bar\alpha}$.

Next, we verify the bounded density assumption (Assumption \ref{assum:bound-density}(i)). Consider the density on the support of the perturbations. For any $\bx \in B_b(z, 1/4)$ with $z \in V_b$ and $b \in [B]$, the density is given by $\mu(\bx) = w / |B_b(z, 1/4)|$. By construction, the volume of the mapped ball scales as $|B_b(z, 1/4)| \asymp |G_b| r^{-s} \asymp B^{-1} r^{-s}$. Recalling that $m \asymp r^s$, we have
\begin{equation*}
    \mu(\bx) \asymp \frac{w}{B^{-1} r^{-s}} = B w r^s \asymp B m w.
\end{equation*}
Substituting the definitions of $w$ and $r$ into the expression for $Bmw$ yields:
\begin{equation}\label{eq:minimax-cls-check-minimax-3}
    Bmw = C_6 \paren*{ \norm{\bLambda}_{\infty}^{\frac{s}{\bar\alpha}} B^{\frac{d-s}{d}} n^{-1} }^{\frac{\rho\bar\alpha}{2+(2+\rho)\bar\alpha}},
\end{equation}
where the constant $C_6$ depends only on $s$, $\bar\alpha$, $\rho$, and the pre-factor $C_2$. Under the sample size condition $n \ge C_5 B^{1-s/d} \norm{\bLambda}_{\infty}^{s/\bar\alpha}$, the base term in parentheses is bounded. Consequently, by choosing the constant $C_2$ (in the definition of $w$) sufficiently small, we ensure that $C_6$ is small enough such that the right-hand side of \eqref{eq:minimax-cls-check-minimax-3} is strictly less than $1$ (and can be made arbitrarily small). This establishes a uniform upper bound $\mu(\bx) \le C_0$ on the union of the balls.

Finally, on the residual set $A_0$, we have $\mu(\bx) = (1 - Bmw) / |A_0| \le 1/|A_0|$. Since $A_0$ is a fixed set with positive Lebesgue measure, $\mu(\bx)$ is uniformly bounded on $A_0$. Thus, $\mu(\bx)$ is bounded uniformly over the entire domain $[0,1]^d$.
\end{proof}

\section{Auxiliary proofs}
\subsection{Proof of Remarks \ref{rmk:Interpretability-accuracy tradeoff} and \ref{rmk:Interpretability-accuracy tradeoff}}
\label{appendix:proof-remark}

\begin{proof}[Proof of Remark \ref{rmk:Interpretability-accuracy tradeoff}]
If $\Excess_{\mathrm{reg},L}\leq 2(\fbound+K)^2(L\log(nd)+u)/n$, then by \eqref{eq:thm-statement:regression-1}, taking $\delta=1/2$ yields
\begin{equation}
\begin{split}
\excess_{\mathrm{reg}}(\hat f_{L})
\;\;&\leq\;\;
3\,\Excess_{\mathrm{reg},L}
+6\,\frac{C(\fbound+K)^2\big(L\log(nd)+u\big)}{n}\\
&\leq
\Excess_{\mathrm{reg},L}
+(4+6C)\,\frac{C(\fbound+K)^2\big(L\log(nd)+u\big)}{n}.
\end{split}
\end{equation}
Taking square roots on both sides then yields \eqref{eq:reg_erm_opt_a}. Otherwise, \eqref{eq:thm-statement:regression-1} yields
\begin{equation*}
    \excess_{\mathrm{reg}}(\hat f_{L})\;\;\leq\;\;
    \Excess_{\mathrm{reg},L}+\frac{2}{1-\delta}\,\paren*{\delta\Excess_{\mathrm{reg},L}+\,\frac{C(\fbound+K)^2\big(L\log(nd)+u\big)}{\delta n}}+\frac{C(\fbound+K)^2\big(L\log(nd)+u\big)}{n}.
\end{equation*}
Letting $\delta = (\fbound+K)^2(L\log(nd)+u)/(\Excess_{\mathrm{reg},L}\, n)$, we have $\delta\leq 1/2$. It then follows from the above displays that
\begin{equation*}
\begin{split}
\excess_{\mathrm{reg}}(\hat f_{L})
\;\;&\leq\;\;
\Excess_{\mathrm{reg},L}
+4(C+1)\,\paren*{\Excess_{\mathrm{reg},L}\,
\frac{C(\fbound+K)^2\big(L\log(nd)+u\big)}{n}}^{1/2}
+\frac{C(\fbound+K)^2\big(L\log(nd)+u\big)}{n}\\
&\leq
\paren*{\Excess_{\mathrm{reg},L}^{1/2}
+C_1\paren*{\frac{C(\fbound+K)^2\big(L\log(nd)+u\big)}{n}}^{1/2}}^2.
\end{split}
\end{equation*}
Taking square roots on both sides yields \eqref{eq:reg_erm_opt_a}.
\end{proof}

\begin{proof}[Proof of Remark \ref{rmk:Interpretability-accuracy tradeoff-cls}]
This is analogous to the phenomenon described in Remark~\ref{rmk:Interpretability-accuracy tradeoff}.
\end{proof}

\subsection{Proofs for Section \ref{section:heavy-tailed}}
\label{section:proof-regression-heavy}

\begin{lemma}[High probability bound for finite maxima]
\label{lem:finite_max_bound}
Let $X_1, \dots, X_n$ be random variables in an Orlicz space $L^\Phi$ defined by a Young function $\Phi$. Let $U = \max_{1 \le i \le n} \|X_i\|_\Phi$. For any $\delta \in (0, 1)$, with probability at least $1 - \delta$, we have
$$ \max_{1 \le i \le n} |X_i| \le U \Phi^{-1}\left(\frac{n}{\delta}\right). $$
\end{lemma}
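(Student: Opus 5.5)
The plan is to combine a Markov-type inequality for each single variable with a union bound over the $n$ indices. Fix $\delta \in (0,1)$ and set $t \coloneqq U\,\Phi^{-1}(n/\delta)$. If $U = 0$, then every $X_i$ vanishes almost surely (since $\|X_i\|_\Phi = 0$ forces $X_i = 0$ a.s. for a Young function with $\Phi(t) > 0$ for $t > 0$), and there is nothing to prove. If $U = \infty$, the bound is vacuous. So I will assume $0 < U < \infty$.

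\textbf{Single-variable tail bound.} Take $\lambda_i \coloneqq \|X_i\|_\Phi \le U$. I will first check that the infimum in the Luxemburg norm is attained, i.e.\ $\E\{\Phi(|X_i|/\lambda_i)\} \le 1$. This follows by monotone convergence applied along $\lambda \downarrow \lambda_i$, using that $\Phi$ is continuous (being convex and finite on $[0,\infty)$) and increasing. Since $\Phi$ is increasing and $\lambda_i \le U$, we get $\E\{\Phi(|X_i|/U)\} \le 1$; the case $\lambda_i = 0$ is handled as in the $U=0$ case.

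Next, because $\Phi$ is strictly increasing, the events $\{|X_i| > t\}$ and $\{\Phi(|X_i|/U) > \Phi(t/U)\}$ coincide. Markov's inequality then gives
\[
\P\braces*{|X_i| > t} \le \frac{\E\{\Phi(|X_i|/U)\}}{\Phi(t/U)} \le \frac{1}{\Phi(\Phi^{-1}(n/\delta))} = \frac{\delta}{n}.
\]
Here $\Phi^{-1}(n/\delta)$ is well defined: $\Phi$ is a continuous strictly increasing bijection from $[0,\infty)$ onto $[0,\infty)$, because $\Phi(0) = 0$ and $\Phi(t) \to \infty$.

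\textbf{Union bound.} Summing the single-variable bound over $i \in [n]$ gives $\P\{\max_i |X_i| > t\} \le \delta$, which is the claim. No independence among the $X_i$ is needed.

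The only delicate point is the attainment of the Luxemburg infimum in the first step. If one prefers to avoid it, one can instead apply the same Markov argument with any $\lambda > \lambda_i$ (for which $\E\{\Phi(|X_i|/\lambda)\} \le 1$ holds by definition of the infimum) and then let $\lambda \downarrow U$ using continuity of probability.
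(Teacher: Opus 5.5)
Your proposal is correct and follows the same route as the paper: Markov's inequality applied to $\Phi(|X_i|/U)$ at level $\Phi(t/U)$, a union bound over $i \in [n]$, and the choice $t = U\Phi^{-1}(n/\delta)$. Your extra care adds detail the paper leaves implicit, namely the attainment of the Luxemburg infimum (which is what justifies $\E\{\Phi(|X_i|/U)\} \le 1$; the paper simply asserts this ``by definition''), the degenerate case $U = 0$, and the well-definedness of $\Phi^{-1}$.
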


\begin{proof}
Without loss of generality, assume $U > 0$. By the definition of the Luxemburg norm, we have $\mathbb{E}[\Phi(|X_i|/U)] \le 1$ for all $i$. For any $t > 0$, applying the union bound and Markov's inequality yields
\begin{align*}
    \mathbb{P}\left( \max_{1 \le i \le n} |X_i| > t \right) 
    &\le \sum_{i=1}^n \mathbb{P}\left( |X_i| > t \right) \\
    &= \sum_{i=1}^n \mathbb{P}\left( \Phi\left(\frac{|X_i|}{U}\right) > \Phi\left(\frac{t}{U}\right) \right) \\
    &\le \sum_{i=1}^n \frac{\mathbb{E}[\Phi(|X_i|/U)]}{\Phi(t/U)} 
    \le \frac{n}{\Phi(t/U)}.
\end{align*}
Setting the right-hand side equal to $\delta$, we obtain $\Phi(t/U) = n/\delta$. Solving for $t$, we choose $t = U \Phi^{-1}(n/\delta)$, which completes the proof.
\end{proof}

\begin{proof}[Proof of Theorem \ref{thm:regression-heavy}]
By Lemma \ref{lem:finite_max_bound}, we have
\begin{equation}
    \begin{split}
        \P\braces*{\max\{|\xi_1|,\ldots,|\xi_n|\}
        \leq K~|~\bX_1,\bX_2,\ldots,\bX_n}
        \geq 1-p_0
        \label{eq:thm-regress-psi-1}
    \end{split}
\end{equation}
for any choice of $\bX_1,\bX_2,\ldots,\bX_n$.
On this event, the conditional distributions of the noise variables are
sub-Gaussian with sub-Gaussian norm bounded by $K$. Consequently, \eqref{eq:thm-statement:regression-heavy} follows from Theorem~\ref{thm:regression}.
\end{proof}

\begin{proof}[Proof of Theorem \ref{thm:regression-exact-rate-heavy}]
The proof follows the same strategy as that of Theorem~\ref{thm:regression-heavy}. We condition on the event in \eqref{eq:thm-regress-psi-1}, under which the conditional distributions of the noise variables are sub-Gaussian, and then apply Theorem~\ref{thm:regression-exact-rate-PSHAB}.
\end{proof}

\subsection{Empirical equivalence between \texorpdfstring{$\mathcal{P}_L$}{P\_L} and \texorpdfstring{$\mathcal{P}_L^{\mathcal{X}}$}{P\_L\textasciicircum X}}
\label{section:understanding-tree-partition}

In this section, we show that the infinite set of all tree-based partitions $\partitions{\leaves}$ can be faithfully represented by the finite set of valid empirical partitions $\emppartitions{\leaves}$. This equivalence is crucial for establishing uniform concentration, as it allows us to bound the empirical complexity of the tree space using the finite cardinality of $\emppartitions{\leaves}$.

\begin{definition}
Fix a sample $\sample = \{\bX_1, \ldots, \bX_n\}$. Two cells $A$ and $A'$ are said to be \textit{$\sample$-equivalent} (denoted $A \sequiv A'$) if they contain the exact same subset of data points, i.e., $\1_A(\bX_i) = \1_{A'}(\bX_i)$ for all $i \in [n]$. 
Similarly, two partitions $\partition$ and $\partition'$ are \textit{$\sample$-equivalent} (denoted $\partition \sequiv \partition'$) if for every cell $A \in \partition$, there exists a cell $A' \in \partition'$ such that $A \sequiv A'$.
\end{definition}

If $A \sequiv A'$ or $\partition \sequiv \partition'$, they can be regarded as the same cell or tree partition empirically. This is because any potential splits for the two cells or partitions are identical in terms of their effect on the sample.

\begin{lemma}
\label{lemma:partition-equiv-sample-population}
For any tree-based partition $\partition \in \partitions{\leaves}$, there exists an integer $\leaves' \leq \leaves$ and a valid tree-based partition $\partition'\in \emppartitions{\leaves'}$ such that $\partition\sequiv\partition'$.
\end{lemma}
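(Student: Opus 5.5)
I will argue by induction on the number of refinements used to build $\partition$, and prove a slightly stronger statement so that the induction carries through. The stronger statement: for every tree-based partition $\partition$ obtained by $k$ refinements, there is a valid tree-based partition $\partition'$ with at most $k+1$ leaves, and a surjective assignment from the cells of $\partition$ onto those of $\partition'$. Each cell $A\in\partition$ is sent to a cell $A'$ with $A\sequiv A'$, and a cell $A\in\partition$ containing no sample points may be sent to any cell (or dropped). More concretely, I will maintain the following:
\begin{itemize}
\item every cell of $\partition$ containing at least one data point is $\sample$-equivalent to a cell of $\partition'$;
\item every cell of $\partition'$ is $\sample$-equivalent to some cell of $\partition$.
\end{itemize}
Since an empty cell of $\partition$ is $\sample$-equivalent to any other empty cell, I will also keep track of empty cells explicitly. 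The base case $\partition=\partition'=\{[0,1]^d\}$ is trivial.

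For the inductive step, suppose $\partition_{k+1}$ is obtained from $\partition_k$ by splitting a cell $A$ along coordinate $j$ at threshold $\tau$, and let $A'\in\partition'_k$ be the matched cell with $A\sequiv A'$. Let $T=\{X_{ij}: \bX_i\in A\}$. I distinguish two cases.
\begin{itemize}
\item If the split separates the sample points of $A$ nontrivially, i.e.\ some points have $X_{ij}\le\tau$ and some have $X_{ij}>\tau$, set $\tau'=\max\{t\in T: t\le\tau\}$, which is an observed coordinate value. Splitting $A'$ at $\tau'$ along coordinate $j$ produces the same partition of the sample points as splitting $A$ at $\tau$, because no observed value in $A$ lies in $(\tau',\tau]$. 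The points of $A'$ and $A$ coincide, so the two children are $\sample$-equivalent to $A_-$ and $A_+$. This is a valid refinement, so $\partition'_{k+1}$ is valid and has one more leaf.
\item If the split is trivial on the sample, meaning one child is empty, do not refine. Map the nonempty child to $A'$, and map the empty child to $A'$ if $A'$ is empty. Otherwise, the empty child needs some empty cell of $\partition'$.
\end{itemize}

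The main obstacle is this bookkeeping of empty cells. The definition of $\partition\sequiv\partition'$ requires every cell of $\partition$, including empty ones, to have an $\sample$-equivalent partner in $\partition'$, and $\partition'$ may contain no empty cell. I will handle this in one of two ways:
\begin{itemize}
\item Allow the construction to produce an empty cell in $\partition'$ the first time one is needed, by splitting some cell along a coordinate at threshold equal to the maximal observed value in that coordinate. This yields an empty right child and a cell that is $\sample$-equivalent to the parent, at the cost of one leaf, which is within budget because $\partition$ itself spent a leaf on the empty child.
\item If the maximal value is $1$, which happens with probability zero under a density, or the split is otherwise disallowed, I will instead note that empty cells contribute nothing to the empirical norm.
\end{itemize}
The induction gives $L'\le L$ leaves. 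Finally, I will check that the leaf count never exceeds that of $\partition$, since each step of $\partition'$ adds at most one leaf per refinement of $\partition$. This yields $\partition'\in\emppartitions{L'}$ with $L'\le L$ and $\partition\sequiv\partition'$.
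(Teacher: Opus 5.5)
Your core mechanism is the paper's. You snap each threshold down to the largest observed value not exceeding $\tau$; you maximize over sample points in the cell, the paper over all sample points, and either way no relevant point lies in $(\tau',\tau]$. You then discard splits that are trivial on the sample. Running this as an induction along the refinement sequence, rather than the paper's global ``adjust every threshold, then prune'' pass, is cosmetic.

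The substantive difference is your bookkeeping of sample-empty cells. The relation $P\sequiv P'$ is one-sided: every cell of $P$, including sample-empty ones, needs a partner in $P'$. The paper's pruning deletes all empty branches and asserts that equivalence is preserved. But the pruned $P'$ then has no sample-empty cell at all, so the literal definition fails whenever $P$ has one. Your fix creates one sample-empty cell in $P'$ by a split at $\max_i X_{ij}\in(0,1)$, funded by the leaf that $P$ spent on its empty child. This repairs the gap and keeps $L'\le L$. Split a cell of $P'$ whose $j$-th side reaches $1$; since all cells of $P'$ are sample-nonempty at that point, its lower endpoint lies below $\max_i X_{ij}$, and the new right child is a genuine cell. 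What the paper's argument actually establishes, with only nonempty cells matched, is all that its one application in Lemma~\ref{lem:subG_multiplier_sup_cond_exp} needs, since sample-empty cells contribute zero vectors to the spanned subspace.

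Two caveats. First, your fallback (option 2) does not prove the stated lemma. It proves the weaker version in which only cells containing sample points need partners. In degenerate configurations that weakening is unavoidable. With $d=1$ and $X_1=X_2=1$ there is no admissible threshold, so $\emppartitions{L}=\{\{[0,1]\}\}$. Then the sample-empty cell $[0,1/2]$ of $P=\{[0,1/2],(1/2,1]\}$ has no partner. State this explicitly rather than appealing to a density, which the oracle inequalities do not assume.

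Second, both your inductive step and the paper's need the snapped threshold to satisfy $0<\tau'<1$. If $\tau'=0$, the refinement is inadmissible, and the statement fails even for nonempty cells. For example, with $d=1$ and $\sample=\{0,1\}$, the partition $\{[0,1/2],(1/2,1]\}$ separates the two points, but no valid partition does. This is a boundary-convention issue shared with the paper, not a flaw in your strategy.
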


\begin{proof}
We construct $\partition'$ constructively from $\partition$ through a simple top-down modification of the decision tree that generates $\partition$.

First, for any internal node of the tree that splits a cell along coordinate $j$ at threshold $\tau$, we adjust the threshold to $\tau' = \max \{ X_{ij} : X_{ij} \leq \tau, i \in [n] \}$ (setting $\tau'=0$ if no such data point exists). Because the interval $(\tau', \tau]$ contains no observed data points in the $j$-th coordinate, the condition $x_j \leq \tau$ is empirically identical to $x_j \leq \tau'$ for all $\bx \in \sample$. Applying this adjustment to every split in the tree yields a new partition where all split thresholds belong to the observed data values, without altering the empirical assignment of any data point.

Second, we prune any empirically empty splits. If a split routes all of a cell's empirical data points to one child (leaving the other child empty), the split is redundant. We delete the split, assign the parent cell entirely to the non-empty child, and remove the empty branch. 

Because each threshold adjustment preserves $\sample$-equivalence, and each pruning step preserves $\sample$-equivalence while strictly decreasing the number of leaves, the resulting tree defines a valid data-driven partition $\partition' \in \emppartitions{\leaves'}$ with $\leaves' \leq \leaves$ and $\partition' \sequiv \partition$.
\end{proof}

\subsection{Proof of Lemma \ref{cor:approx-aniso-besov-piecewise}}



We define a collection of dyadic rectangles according to the given anisotropic smoothness $\boldsymbol{\alpha}$. 
For any fixed level $j \in \mathbb{N}$, let $\mathcal{G}_j^{\boldsymbol{\alpha}}$ denote the set of all dyadic rectangles $\times_{i=1}^d I_i \subset [0,1]^d$ such that, for all $1 \leq i \leq d$:
\begin{equation*}
   I_i=\Big[0,2^{-\lfloor j\alpha_{\min}/\alpha_i\rfloor}\Big] \quad\text{ or }
   \quad
   I_i=\Big(k_i2^{-\lfloor j\alpha_{\min}/\alpha_i\rfloor},
   (k_i+1)2^{-\lfloor j\alpha_{\min}/\alpha_i\rfloor}\Big],
\end{equation*}
where $k_i\in \Big\{1,\ldots,2^{\lfloor j\alpha_{\min}/\alpha_i\rfloor}-1\Big\}$. The set of all dyadic rectangles across all levels is defined as $\mathcal{G}^{\boldsymbol{\alpha}} \coloneq \cup_{j \in \mathbb{N}} \mathcal{G}^{\boldsymbol{\alpha}}_j$. 
In Section~2.2 of \citet{akakpo2012adaptation}, the author designs an algorithm that constructs a tree-based partition whose elements belong to $\mathcal{G}^{\boldsymbol{\alpha}}$, 
and establishes the optimal approximation theorem for anisotropic Besov spaces using piecewise dyadic constant functions. 
Specifically, given a partition $\partition = \{A_j\}_{j=1}^\leaves$, the class of piecewise dyadic constant functions based on the partition $\partition$ is defined as
\begin{equation*}
    \mathcal{S}_{\partition}(\leaves)\coloneq \bigg\{\sum_{j=1}^\leaves a_j\1_{A_j}:a_j\in\mathbb{R}\bigg\}.
\end{equation*}

\begin{lemma}[Corollary 1 in \citet{akakpo2012adaptation}]
\label{lemma:approx-aniso-besov-piecewise}
    Let $\boldsymbol{\alpha}\in (0,1)^d$, $0 < p \leq \infty$, and $1 \leq m \leq \infty$ such that
    \begin{equation*}
        \bar\alpha / d > (1/p - 1/m)_+.
    \end{equation*}
    Assume $f \in B_{p,q}^{\boldsymbol{\alpha}}([0,1]^d, \besov)$, where $(p,q)$ satisfy the one of the two following conditions:
    \begin{enumerate}
        \item $0\leq q\leq \infty$, $0<p\leq 1$ or  $m\leq p\leq\infty$.
        \item $0\leq q\leq p$, $1<p<m$.
    \end{enumerate}
    Then there exists a constant $C_1$ that depends only on $d$, $\alpha_{\min},\bar\alpha$, and $p$ and a tree-based partition $\partition$ whose elements all belong to $\mathcal{D}^{\boldsymbol{\alpha}}$ such that, 
    for any $\leaves \geq C_1$,
    \begin{equation*}
        \inf_{\tilde{f} \in \mathcal{S}_{\partition}(\leaves)}
        \norm{\tilde{f} - f}_{L^m([0,1]^d)} \leq C_{d,\alpha_{\min},\bar\alpha,p} \besov \leaves^{-\bar\alpha / d}.
    \end{equation*}
\end{lemma}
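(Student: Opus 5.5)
This lemma is quoted from \citet{akakpo2012adaptation}. To reprove it I would follow the classical Birman--Solomyak / DeVore adaptive-partition argument, adapted to the anisotropic dyadic family $\mathcal{G}^{\boldsymbol{\alpha}}$.

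\textbf{Step 1: local approximation bound.} For a rectangle $R\in\mathcal{G}_j^{\boldsymbol{\alpha}}$, the side lengths satisfy $\ell_i(R)^{\alpha_i}\asymp 2^{-j\alpha_{\min}}$ for every $i$, so all directions are balanced at level $j$. A Whitney-type estimate for constants (valid for $0<p\le\infty$, since $r_i=1$ when $\alpha_i<1$) gives
\[
e_p(f,R)\coloneqq\inf_{c\in\R}\norm{f-c}_{L^p(R)}\lesssim \sum_{i=1}^d \omega_{i,p}^{[1]}\bigl(f,\ell_i(R),R\bigr).
\]
Next I would sum the $p$-th powers over the disjoint rectangles of level $j$ and compare with the global directional moduli at step $2^{-\lfloor j\alpha_{\min}/\alpha_i\rfloor}$ (using averaged moduli so the local pieces add up). Then the Besov seminorms, via their dyadic discretisation of $\int t^{-\alpha_i q}\omega^q\,dt/t$, give
\[
\Bigl(\sum_{R\in\mathcal{G}^{\boldsymbol\alpha}_j}e_p(f,R)^p\Bigr)^{1/p}\lesssim \epsilon_j\,2^{-j\alpha_{\min}}\Lambda,\qquad \norm{(\epsilon_j)}_{\ell^q}\lesssim 1 .
\]

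\textbf{Step 2: adaptive partition.} Fix a threshold $\tau>0$. Starting from $[0,1]^d$, I refine a rectangle $R\in\mathcal{G}^{\boldsymbol\alpha}_j$ into its children in $\mathcal{G}^{\boldsymbol\alpha}_{j+1}$ whenever its error measured in $L^m$, namely $|R|^{1/m-1/p}\,\mathcal{E}(R)$, exceeds $\tau$. Here $\mathcal{E}(R)$ is a sub-additive majorant of $e_p(f,R)$, for instance $\sum_i|f|_{B^{\alpha_i}_{i,p,q}(R)}\,\ell_i(R)^{\alpha_i}$.

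Passing from level $j$ to level $j+1$ halves a subset of the coordinates. This is realised by at most $d$ successive axis-aligned midpoint splits, so the output is tree-based and the leaf count grows by at most a factor $2^d$ per refined cell. The error is then controlled leafwise:
\begin{itemize}
\item For $m\le p$, Hölder's inequality combines the leaf errors directly.
\item For $m>p$, I use the inequality $\norm{f-c_R}_{L^m(R)}\lesssim|R|^{1/m-1/p}\mathcal{E}(R)$. This requires an embedding $B^{\boldsymbol\alpha}_{p,q}(R)\hookrightarrow L^m(R)$ after rescaling, which is exactly where $\bar\alpha/d>1/p-1/m$ enters.
\end{itemize}
Together these give $\norm{f-\tilde f}_{L^m}\lesssim \tau\,N^{1/m}$, with the natural modification for $m=\infty$, where $N$ is the number of leaves.

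\textbf{Step 3: counting (the main obstacle).} I must bound the number $N(\tau)$ of refined rectangles by $(\Lambda/\tau)^{\gamma}$ with the exponent $\gamma$ such that choosing $\tau$ to make $N\asymp L$ yields error $\Lambda L^{-\bar\alpha/d}$.
\begin{itemize}
\item \emph{Level $j$.} A rectangle is refined only if $\mathcal{E}(R)^p\ge\tau^p|R|^{p/m-1}$, and $|R|\asymp 2^{-j\alpha_{\min}d/\bar\alpha}$. Markov's inequality applied to the sum from Step 1 bounds the count at level $j$ by
\[
\min\Bigl\{2^{jd\alpha_{\min}/\bar\alpha},\ \bigl(\epsilon_j2^{-j\alpha_{\min}}\Lambda/\tau\bigr)^p|R|^{1-p/m}\Bigr\}.
\]
\item \emph{Summing over levels.} The first term increases in $j$ and the second decreases geometrically exactly when $\bar\alpha/d>1/p-1/m$. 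Hence the sum is dominated by the crossover level, giving $N\lesssim(\Lambda/\tau)^{(\bar\alpha/d+1/m)^{-1}}$ and, after balancing, the rate $\Lambda L^{-\bar\alpha/d}$.
\end{itemize}
The delicate point is summing the sequence $\epsilon_j$ across levels when $1<p<m$. Sub-additivity of the $p$-th powers of local seminorms over nested rectangles only interacts correctly with the $\ell^q$ structure when $q\le p$. This is precisely why condition (ii) is imposed, while for $p\le1$ the $p$-triangle inequality suffices. I would handle this case first, using the wavelet/Haar-type characterisation of $B^{\boldsymbol\alpha}_{p,q}$ on the anisotropic dyadic rectangles if the modulus-based sub-additivity proves awkward.

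Finally, the requirement $L\ge C_1$ absorbs the constant-factor loss from the $2^d$ branching and from rounding $\tau$ to an integer leaf budget.
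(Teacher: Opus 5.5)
\paragraph{A gap in the counting step.}
The paper does not reprove this lemma. It quotes Corollary~1 of \citet{akakpo2012adaptation} and only remarks that Akakpo's partition-building algorithm is tree-based, that $p=\infty$ reduces to $B^{\boldsymbol\alpha}_{m,\infty}$ by embedding, and that the constants depend on $\boldsymbol\alpha$ only through $\alpha_{\min}$ and $\bar\alpha$. Your thresholding scheme on $\mathcal{G}^{\boldsymbol\alpha}$ is a reconstruction in the same spirit, but the counting in Step~3 does not work as written.

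You refine $R$ according to $\mathcal{E}(R)$, yet you count refined cells by applying Markov's inequality to the Step~1 sums $\sum_{R\in\mathcal{G}^{\boldsymbol\alpha}_j}e_p(f,R)^p$. Since $\mathcal{E}(R)$ only majorizes $e_p(f,R)$, a cell with small $e_p(f,R)$ but large $\mathcal{E}(R)$ is refined without being counted. What you actually need is the level-wise bound $\sum_{R\in\mathcal{G}^{\boldsymbol\alpha}_j}\mathcal{E}(R)^p\lesssim(2^{-j\alpha_{\min}}\Lambda)^p$.

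With $\mathcal{E}$ built from the local $B^{\alpha_i}_{i,p,q}(R)$ seminorms, this holds only for $q\le p$, via averaged moduli and Minkowski's inequality. For $q>p$ the inequality runs the wrong way: for $q=\infty$, a sum over cells of suprema in $t$ dominates the supremum of the sum. So case (i) with $0<p\le 1$, $p<m$, $q>p$ is not covered.

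The $p$-triangle inequality does not close this case. It controls $\|g+h\|_{L^p}^p$, not the summation of local seminorms over disjoint cells. Nor can you threshold $e_p(f,R)$ directly: for $m>p$ a single-scale quantity does not bound $\|f-c_R\|_{L^m(R)}$, which depends on the finer scales inside $R$. Conversely, the step you flag as delicate is harmless. The strict inequality $\bar\alpha/d>1/p-1/m$ already makes the level counts decay geometrically, so $(\epsilon_j)\in\ell^\infty$ suffices there.

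\paragraph{How to repair it.}
The missing idea is to spend a little of that slack on the smoothness index inside $\mathcal{E}$. Take $\boldsymbol\alpha'=(1-\varepsilon)\boldsymbol\alpha$ with $\varepsilon>0$ small enough that $(1-\varepsilon)\bar\alpha/d>1/p-1/m$, and threshold $\mathcal{E}'(R)=\sum_i\ell_i(R)^{\alpha_i'}|f|_{B^{\alpha_i'}_{i,p,p}(R)}$ instead. The local bound $\inf_c\|f-c\|_{L^m(R)}\lesssim|R|^{1/m-1/p}\mathcal{E}'(R)$ still follows by rescaling.

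The seminorm on $R$ only involves steps $t\le\ell_i(R)$. Writing $\ell_{i,j}=2^{-\lfloor j\alpha_{\min}/\alpha_i\rfloor}$, averaged moduli then give, for every $q$,
\[
\sum_{R\in\mathcal{G}^{\boldsymbol\alpha}_j}|f|^p_{B^{\alpha_i'}_{i,p,p}(R)}
\lesssim\int_0^{\ell_{i,j}}t^{-\alpha_i'p}\,\omega^{[1]}_{i,p}\bigl(f,t,[0,1]^d\bigr)^p\,\frac{dt}{t}
\lesssim\frac{\ell_{i,j}^{\varepsilon\alpha_i p}}{\varepsilon\alpha_i p}\,|f|^p_{B^{\alpha_i}_{i,p,\infty}([0,1]^d)} .
\]
Multiplying by $\ell_{i,j}^{\alpha_i'p}$ yields $\sum_{R\in\mathcal{G}^{\boldsymbol\alpha}_j}\mathcal{E}'(R)^p\lesssim_{\varepsilon}(2^{-j\alpha_{\min}}\Lambda)^p$. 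With this, your Steps~2--3 go through unchanged, for every $q$.

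\paragraph{Two smaller points.}
First, the case $m\le p$ needs no adaptivity: the uniform level-$j$ partition together with Step~1 already gives the rate. Second, $|R|^{1/m-1/p}\mathcal{E}(R)>\tau$ is equivalent to $\mathcal{E}(R)^p>\tau^p|R|^{1-p/m}$. The Markov count therefore carries $|R|^{p/m-1}$, not $|R|^{1-p/m}$. Your claim that the count decays geometrically exactly when $\bar\alpha/d>1/p-1/m$ is correct only with this corrected sign.
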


\begin{remark}
    There are several different key differences between the statements of Lemma \ref{lemma:approx-aniso-besov-piecewise} and Corollary 1 in \citet{akakpo2012adaptation}.
    \begin{itemize}
        \item Corollary~1 in \citet{akakpo2012adaptation} does not explicitly state that the partition $\partition$ is tree-based. 
        However, in their proof, $\partition$ is indeed constructed by the algorithm described in Section~2.2 of \citet{akakpo2012adaptation}, 
        which in fact yields a tree-based partition.
        \item Although Corollary~1 in \citet{akakpo2012adaptation} does not include the case $p = \infty$, 
        Lemma~\ref{lemma:approx-aniso-besov-piecewise} also covers the space $B_{\infty,\infty}^{\boldsymbol{\alpha}}([0,1]^d, \besov)$, 
        which corresponds precisely to the anisotropic H{\"o}lder space. 
        This follows from the relationship $\|f\|_{B_{m,\infty}^{\boldsymbol{\alpha}}([0,1]^d)}\lesssim \|f\|_{B_{\infty,\infty}^{\boldsymbol{\alpha}}[0,1]^d}$ for any $1\leq m\leq \infty$, 
        which yields the embedding 
        $B_{\infty,\infty}^{\boldsymbol{\alpha}}([0,1]^d,\besov)
        \subseteq B_{m,\infty}^{\boldsymbol{\alpha}}([0,1]^d,C\besov)$ for a universal constant, 
        and we note that the conclusion holds for $B_{m,\infty}^{\boldsymbol{\alpha}}([0,1]^d,\Lambda)$.
        \item Although the constants $C_1$ and $C_2$ in \citet{akakpo2012adaptation} are stated as depending on $\boldsymbol{\alpha}$, an inspection of the proof (see page~25 therein) reveals that they depend only on $\alpha_{\min}$ and $\bar\alpha$.
    \end{itemize}
\end{remark}

We now recall several embedding results for anisotropic Besov spaces. The next lemma shows that the anisotropic Besov space with smoothness parameter $\boldsymbol{\alpha}$ is embedded into the space with smoothness $\gamma\boldsymbol{\alpha}$, where $0<\gamma\leq 1$. Related embeddings can also be found in \cite{anisot-besov-3} and \cite{embedding2008embedding}.

\begin{lemma}[Proposition~1 in \citet{suzuki2021deep}]
\label{lemma:embedding-anisotropic-besov-1}
There exist the following relations between the spaces:
\begin{enumerate}
    \item Let $0 < p_1, p_2, q \leq \infty$, $p_1 \leq p_2$, and $\boldsymbol{\alpha} \in \mathbb{R}_{+}^{d}$\footnote{We let $\mathbb{R}_{+} \coloneq \{x \in \R : x > 0\}$.} with $\bar\alpha/d > (1/p_1 - 1/p_2)_+$. 
    Set $\gamma = 1 - (1/p_1 - 1/p_2)_+\cdot d/\bar\alpha$ and $\boldsymbol{\alpha'} = \gamma \boldsymbol{\alpha}$, then $B_{p_1,q}^{\boldsymbol{\alpha}}([0,1]^d,\besov)
    \hookrightarrow B_{p_2,q}^{\boldsymbol{\alpha'}}([0,1]^d,\besov)$\footnote{ The symbol $\hookrightarrow$ denotes a continuous embedding: for two normed spaces $X$ and $Y$, $X\hookrightarrow Y$ if $X\subseteq Y$ and $\exists C>0$, s.t. $\|x\|_Y\leq C\|x\|_X$ for all $x\in X$.}.
    \item Let $0 < p, q_1,q_2 \leq \infty$, $q_1 <q_2$, and $\boldsymbol{\alpha} \in \mathbb{R}_{++}^{d}$, then it holds
    $B_{p,q_1}^{\boldsymbol{\alpha}}([0,1]^d,\besov)
    \hookrightarrow B_{p,q_2}^{\boldsymbol{\alpha}}([0,1]^d,\besov)$.
\end{enumerate}
\end{lemma}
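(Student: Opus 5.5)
Part 2 can be proved directly from the definition of the seminorms. Part 1 I would reduce to a multilevel decomposition of $f$, a Nikolskii-type inequality on each level, and a reindexing of the levels. In both parts the $L^p$ part of the norm \eqref{eq:anisotropic_besov_norm} needs separate treatment: in part 2 it is unchanged, and in part 1 it is recovered at the end. The conclusion is read as a continuous embedding, i.e.\ the radius $\besov$ on the right may be inflated by a constant depending on $d,\balpha,p_1,p_2,q$.

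\textbf{Part 2.} Fix a direction $j$ and set $g(t)\coloneq t^{-\alpha_j}\omega^{[r_j]}_{j,p}(f,t,\Omega)$. The modulus $\omega^{[r_j]}_{j,p}(f,\cdot,\Omega)$ is nondecreasing in $t$, since it is a supremum over $0<h\le t$. For $s\in[t,2t]$ we therefore have $g(s)\ge 2^{-\alpha_j}g(t)$, which gives
\[
g(t)\le 2^{\alpha_j}(\log 2)^{-1/q_1}\Big(\int_t^{2t}g(s)^{q_1}\,\tfrac{ds}{s}\Big)^{1/q_1}.
\]
Hence $\sup_t g(t)\lesssim \|g\|_{L^{q_1}(dt/t)}$, which already settles the case $q_2=\infty$. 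For $q_2<\infty$, write $\int g^{q_2}\,\tfrac{dt}{t}\le (\sup_t g)^{q_2-q_1}\int g^{q_1}\,\tfrac{dt}{t}$. This yields $|f|_{B^{\alpha_j}_{j,p,q_2}}\lesssim |f|_{B^{\alpha_j}_{j,p,q_1}}$, and summing over $j$ gives part 2.

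\textbf{Part 1.} The first step is to invoke a discrete characterization of $B^{\balpha}_{p,q}([0,1]^d)$ that holds for all $0<p,q\le\infty$ (Leisner 2003; Triebel 2011). At level $k$, use anisotropic dyadic cells with side $2^{-k/\alpha_i}$ in direction $i$. Each such cell has volume $2^{-kd/\bar\alpha}$ and there are $\asymp 2^{kd/\bar\alpha}$ of them. The characterization gives $f=\sum_k f_k$, with $f_k$ piecewise polynomial of coordinate degree $<r_i$ on the level-$k$ cells, and
\[
\|f\|_{B^{\balpha}_{p,q}}\asymp \big\|\big(2^{k}\|f_k\|_{L^p}\big)_k\big\|_{\ell^q}.
\]
The $f_k$ can be built from local near-best polynomial approximations, whose errors are controlled by the moduli $\omega_{j,p}^{[r_j]}$.

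The second step is a Nikolskii inequality on each level. On one cell $Q$ of volume $|Q|$, equivalence of quasi-norms on the finite-dimensional polynomial space, together with rescaling, gives $\|P\|_{L^{p_2}(Q)}\lesssim |Q|^{1/p_2-1/p_1}\|P\|_{L^{p_1}(Q)}$. Summing over cells and using $\ell^{p_1}\subset\ell^{p_2}$ (valid because $p_1\le p_2$), we get
\[
\|f_k\|_{L^{p_2}}\lesssim 2^{k(d/\bar\alpha)(1/p_1-1/p_2)}\|f_k\|_{L^{p_1}}.
\]

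The third step is to reindex the levels. With $\balpha'=\gamma\balpha$ we have $\bar\alpha'=\gamma\bar\alpha$, so level $k'=\gamma k$ of the $\balpha'$-scale uses exactly the same cells, with sides $2^{-k'/(\gamma\alpha_i)}=2^{-k/\alpha_i}$. The choice of $\gamma$ gives
\[
2^{\gamma k}\|f_k\|_{L^{p_2}}\lesssim 2^{k}\|f_k\|_{L^{p_1}}.
\]
Reading the same decomposition as a $\balpha'$-decomposition and applying the characterization of $B^{\balpha'}_{p_2,q}$ in the converse direction then bounds $\|f\|_{B^{\balpha'}_{p_2,q}}\lesssim\|f\|_{B^{\balpha}_{p_1,q}}$. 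The $L^{p_2}$ term is controlled in the same way, because $\sum_k\|f_k\|_{L^{p_2}}$ is summable with geometric weights when $\gamma>0$, which is exactly the hypothesis $\bar\alpha/d>(1/p_1-1/p_2)_+$. Finally, since $\gamma<1$, each $r_j'=\lfloor\gamma\alpha_j\rfloor+1$ is at most $r_j$, so the polynomial degrees remain admissible for the target space.

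\textbf{Main obstacle.} The hard part is the two-sided discrete characterization in the quasi-Banach range $p<1$ on the bounded domain $[0,1]^d$, with boundary-adapted cells. The converse direction, which reassembles a Besov norm from a decomposition, needs a Jackson/Bernstein pair for piecewise polynomials in the anisotropic setting. I would quote this from Leisner (2003) rather than reprove it. If that route fails, the fallback is to work with the anisotropic wavelet characterization after extending $f$ to $\R^d$.
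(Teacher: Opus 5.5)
The paper gives no argument for this lemma; it imports it from Suzuki and Nitanda. Your proposal therefore has to stand on its own. Part~2 does: monotonicity of $\omega^{[r_j]}_{j,p}(f,\cdot,\Omega)$ gives $\sup_t g\lesssim\norm{g}_{L^{q_1}(dt/t)}$, and the interpolation step finishes the proof. Reading the conclusion with an inflated radius is also correct.

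Part~1, however, rests on a two-sided characterization that is false in the range you need. Your $f_k$ are discontinuous piecewise polynomials of coordinate degree $<r_i$; they are piecewise constants whenever $\alpha_i<1$. The half you use in the third step reassembles $\norm{f}_{B^{\balpha'}_{p_2,q}}$ from $\norm{(2^{\gamma k}\norm{f_k}_{L^{p_2}})_k}_{\ell^q}$, and this amounts to a Bernstein inequality $|f_k|_{B^{\alpha'_j}_{j,p_2,q}}\lesssim 2^{\gamma k}\norm{f_k}_{L^{p_2}}$.

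For a piece that jumps across cell faces, $\omega^{[r]}_{j,p_2}(f_k,t)$ decays no faster than $t^{1/p_2}$ as $t\to0$. So the Bernstein inequality fails as soon as $\alpha'_j\ge 1/p_2$, and for $p_2=\infty$ it fails for every $\alpha'_j>0$.

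A concrete counterexample: take $d=1$, $p_1=p_2=2$ (so $\gamma=1$) and $\alpha=0.8$. Then $\1_{[0,1/2)}$ is piecewise constant at a coarse level, so its sequence quasinorm is finite. Yet its $L^2$ modulus is $\asymp t^{1/2}$, so it lies in no $B^{0.8}_{2,q}$.

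Even for the near-best decomposition $f_k=S_kf-S_{k-1}f$, the inverse estimate only recovers smoothness up to $1/p_2$. Your argument therefore only places $f$ in the linear approximation space of order $\balpha'$ for piecewise polynomials in $L^{p_2}$, which is strictly larger than $B^{\balpha'}_{p_2,q}$ in this range.

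This is not an edge case. Corollary~\ref{cor:embedding} applies the lemma with $\max_j\alpha'_j$ close to $1$ and $p_2=p$ arbitrary, including $p=\infty$. That is exactly where discontinuous pieces cannot carry the target smoothness. So you cannot quote a Jackson/Bernstein pair for such piecewise polynomials from the literature: in this range none exists.

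Your three-step architecture survives if you switch to smooth building blocks. Take $f_k$ in the span of tensor-product B-splines of order $m>\max_j\alpha_j+1$ on the level-$k$ anisotropic grid, with mesh $2^{-\lfloor k/\alpha_i\rfloor}$ in direction $i$, produced by DeVore--Popov type quasi-interpolants.

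For such splines both Jackson and Bernstein hold for all $0<p,q\le\infty$ whenever each $\alpha_j<\min(m,m-1+1/p)$. This condition covers both $\balpha$ and $\balpha'$. The Nikolskii step then follows immediately from $L^p$-stability of the B-spline basis, $\norm{\sum_Q c_QN_Q}_{L^p}\asymp|Q|^{1/p}\norm{c}_{\ell^p}$ for all $0<p\le\infty$, together with $\ell^{p_1}\subset\ell^{p_2}$.

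Smooth compactly supported anisotropic wavelets, or band-limited Littlewood--Paley pieces after extension, would work equally well. That is your stated fallback, and it should be the main route.

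Two smaller points:
\begin{itemize}
\item $\gamma k$ is generally not an integer, so the $\balpha'$-characterization must be read with base $2^{\gamma}$. This is harmless, since the Bernstein summation works along any geometric sequence of scales.
\item One only has $\gamma\le 1$, not $\gamma<1$.
\end{itemize}
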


\begin{corollary}
\label{cor:embedding}
    Let $0 < p, q \leq \infty$, $\boldsymbol{\alpha} \in (0,1]^d$, and assume $\max_{1 \leq i \leq d} \alpha_i = 1$. 
For any $\epsilon_1>0$, define 
$\boldsymbol{\alpha'} = (1-\epsilon_1)\boldsymbol{\alpha}$ 
and $\epsilon_2 = p^2\bar\alpha\epsilon_1/(d+p\bar\alpha\epsilon_1)$. 
Then the embedding holds
\begin{equation*}
    B_{p-\epsilon_2,q}^{\boldsymbol{\alpha}}([0,1]^d,\besov)
    \hookrightarrow 
    B_{p,\, q}^{\boldsymbol{\alpha'}}([0,1]^d,\besov).
\end{equation*}

\end{corollary}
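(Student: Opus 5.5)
The plan is to obtain the corollary as a direct instance of part 1 of Lemma~\ref{lemma:embedding-anisotropic-besov-1}, taking $p_1 \coloneq p-\epsilon_2$, $p_2 \coloneq p$ and the same $q$. The work is to check that the exponent-loss factor $\gamma$ produced by that lemma equals $1-\epsilon_1$ exactly; then $\boldsymbol{\alpha'}=\gamma\boldsymbol{\alpha}=(1-\epsilon_1)\boldsymbol{\alpha}$, as claimed. Throughout, $\bar\alpha = H([d],\balpha)$ is the harmonic mean appearing in the lemma. I would also restrict to $0<\epsilon_1<1$ and $p<\infty$ as implicit assumptions. For $\epsilon_1\geq 1$ the vector $\boldsymbol{\alpha'}$ is not a valid smoothness vector, and for $p=\infty$ the formula for $\epsilon_2$ degenerates.

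First I check that $p_1$ is admissible. A short computation gives
\[
p-\epsilon_2 \;=\; p\cdot\frac{d+p\bar\alpha\epsilon_1-p\bar\alpha\epsilon_1}{d+p\bar\alpha\epsilon_1} \;=\; \frac{pd}{d+p\bar\alpha\epsilon_1}.
\]
Hence $0<p_1<p_2$, so the ordering $p_1\le p_2$ required by the lemma holds.

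Next I compute the loss in integrability. Using the previous display,
\[
\frac{1}{p_1}-\frac{1}{p_2} \;=\; \frac{d+p\bar\alpha\epsilon_1}{pd}-\frac{1}{p} \;=\; \frac{\bar\alpha\epsilon_1}{d}>0.
\]
The lemma's condition $\bar\alpha/d > (1/p_1-1/p_2)_+ = \bar\alpha\epsilon_1/d$ is therefore exactly the requirement $\epsilon_1<1$. The lemma's factor is then
\[
\gamma = 1-\frac{\bar\alpha\epsilon_1}{d}\cdot\frac{d}{\bar\alpha} = 1-\epsilon_1.
\]
Invoking part 1 of Lemma~\ref{lemma:embedding-anisotropic-besov-1} now yields $B_{p-\epsilon_2,q}^{\boldsymbol{\alpha}}([0,1]^d,\besov)\hookrightarrow B_{p,q}^{(1-\epsilon_1)\boldsymbol{\alpha}}([0,1]^d,\besov)$, which is the claim.

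The hypothesis $\max_i\alpha_i=1$ plays no role in the embedding itself. I read it as the situation where the corollary is used: since $\boldsymbol{\alpha'}\in(0,1)^d$, it allows Lemma~\ref{lemma:approx-aniso-besov-piecewise} to be applied to extend Lemma~\ref{cor:approx-aniso-besov-piecewise} to the boundary case.

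The only step I expect to need care is the bookkeeping of the radius $\besov$. Lemma~\ref{lemma:embedding-anisotropic-besov-1} is stated with the same radius on both sides, whereas a continuous embedding in general only gives a target radius $C\besov$, with $C$ depending on $d,\bar\alpha,p,\epsilon_1$. I would state the corollary in that form, namely as an inclusion of the $\besov$-ball into the $C\besov$-ball. This is harmless for its later use, because the approximation bound in Lemma~\ref{cor:approx-aniso-besov-piecewise} already carries a constant depending on these parameters.
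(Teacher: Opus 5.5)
Your argument is correct and is exactly the paper's proof, which simply invokes part~1 of Lemma~\ref{lemma:embedding-anisotropic-besov-1} with $p_1=p-\epsilon_2$ and $p_2=p$, so that $1/p_1-1/p_2=\epsilon_1\bar\alpha/d$ and $\gamma=1-\epsilon_1$; your added restrictions $0<\epsilon_1<1$ and $p<\infty$, and the target radius $C\besov$ in place of $\besov$, legitimately tighten the paper's loosely stated version. The one point I would not accept is your closing claim that the $\epsilon_1$-dependence of $C$ is harmless later: the proof of Lemma~\ref{cor:approx-aniso-besov-piecewise} takes $\epsilon_1<d/(\bar\alpha\log n)$, and $C$ is not uniform as $\epsilon_1\to 0$, so it is not absorbed into a constant depending only on $d,\alpha_{\min},\bar\alpha,p,m$ (when $\alpha_j=1$ the $j$-th seminorm switches from second to first differences, so $f(\bx)=x_j$, whose $B^{\balpha}_{p_1,q}$-norm is at most $1$ because all its relevant differences vanish, already has $B^{\boldsymbol{\alpha'}}_{p,q}$-norm of order $(\epsilon_1 q)^{-1/q}$ when $q<\infty$).
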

\begin{proof}
    Apply the first claim of Lemma~\ref{lemma:embedding-anisotropic-besov-1} with 
    $1/p - 1/p_2 = \epsilon_1 \bar\alpha / d$ directly.
\end{proof}

\begin{proof}[Proof of Lemma \ref{cor:approx-aniso-besov-piecewise}] 
    Let $C_1$ be the constant in Lemma~\ref{lemma:approx-aniso-besov-piecewise} depending only on $d$, $\alpha_{\min}$, and $\bar\alpha$. If $\leaves<C_1$, then \eqref{eq:conclud-approx-anis-besov} holds trivially since $\leaves^{-\bar\alpha/d}\geq C_{d,\alpha_{\min},\bar\alpha}$ and, by the triangle inequality, $\norm{\Lambda-f}_{L^m([0,1]^d)}\leq 2\Lambda$. We therefore restrict attention to the case $\leaves\geq C_1$.
    
    When $\boldsymbol{\alpha} \in (0,1)^d$, 
    Lemma~\ref{lemma:approx-aniso-besov-piecewise} already ensures the existence of a tree-based partition 
    composed of dyadic decision trees, 
    and then \eqref{eq:conclud-approx-anis-besov} holds. 
    Consequently, the result extends to functions in $\partfunc_\leaves$. 
    Since in the statement $p$ and $m$ are fixed, the value of $q$ is accordingly determined. 
    Thus, in the remainder of the proof, we fix $m$ and regard $q=q(p)$ as a function of $p$: $q(p)=\infty$ when $0<p\leq 1$ or  $m\leq p\leq\infty$; $q(p)=p$ when $1<p<m$.

    Now suppose $\max_{1 \leq i \leq d} \alpha_i = 1$. By Corollary~\ref{cor:embedding}, it follows that
    \begin{equation}\label{eq:embedding:q+epsilon}
        B_{p-\epsilon_2, q(p)}^{\boldsymbol{\alpha}}([0,1]^d,\besov)
        \hookrightarrow 
        B_{p, q(p)}^{\boldsymbol{\alpha'}}([0,1]^d,\besov),
    \end{equation}
    where $\epsilon_1$ is an arbitrary constant, $\boldsymbol{\alpha'}=(1-\epsilon_1)\boldsymbol{\alpha} \in (0,1)^d$ and $\epsilon_2=p^2\bar\alpha\epsilon_1/(d+p\bar\alpha\epsilon_1)$. By Lemma~\ref{lemma:approx-aniso-besov-piecewise}, for any 
    $f \in B_{p, q(p)}^{\boldsymbol{\alpha'}}([0,1]^d, \Lambda)$,
    \begin{equation*}
        \inf_{\tilde{f} \in \partfunc_\leaves}
        \norm{\tilde{f} - f}_{L^m([0,1]^d)} \leq C_2 \besov \leaves^{-(1-\epsilon_1)\bar\alpha / d}.
    \end{equation*}
    Here $C_2$ depends only on $d$, $\alpha_{\min},\bar\alpha$, $p$, and $m$.
    If we let $\epsilon_1< d/(\bar\alpha\log n)$, since $\leaves\leq n$, then we have
    \begin{equation}\label{eq:p<1-approx-error-anisot-besov}
        \inf_{\tilde{f} \in \partfunc_\leaves}
        \norm{\tilde{f} - f}_{L^m([0,1]^d)} \leq C_2 \besov   \leaves^{-\bar\alpha / d}  L^{\epsilon_1\bar\alpha/d}\leq C_2 e\besov   \leaves^{-\bar\alpha / d}.
    \end{equation}
    Therefore, by \eqref{eq:embedding:q+epsilon}, \eqref{eq:p<1-approx-error-anisot-besov} holds for any 
    $f \in B_{p-\epsilon_2, q(p)}^{\boldsymbol{\alpha}}([0,1]^d, \Lambda)$.
    
    We distinguish two cases.

    \textit{Case 1: $0<p<m$.}
    In this case we have $1/m < 1/p < 1/m + \bar\alpha/d$. 
    For any $0<\delta<p$ (Here $\delta$ can always be taken as $\delta=(1/m + \bar\alpha/d)^{-1}$), if 
    \[
    \epsilon_1 < \frac{d\delta}{(p-\delta)p\bar\alpha} \wedge \frac{d}{\bar\alpha\log n},
    \]
    then this condition simultaneously ensures that $\epsilon_2 < \delta$ and that
    \eqref{eq:p<1-approx-error-anisot-besov} holds for every 
    $f \in B_{p-\epsilon_2,\, q(p)}^{\boldsymbol{\alpha}}([0,1]^d, \Lambda)$ with $p$ satisfying 
    $1/m < 1/p < 1/m + \bar\alpha/d$. 
    Since $p$ varies over an open interval and $\epsilon_2$ can be made arbitrarily small by taking $\epsilon_1$ sufficiently small, we conclude that \eqref{eq:p<1-approx-error-anisot-besov} holds for every 
    $f \in B_{p,\, q(p)}^{\boldsymbol{\alpha}}([0,1]^d, \Lambda)$ whenever 
    $1/m < 1/p < 1/m + \bar\alpha/d$.
    
    \textit{Case 2: $p\geq m$.}
    Since \eqref{eq:p<1-approx-error-anisot-besov} holds for every 
    $f \in B_{p-\epsilon_2,\, q(p)}^{\boldsymbol{\alpha}}([0,1]^d, \Lambda)$ with $p \geq m$, it also holds for any 
    $f \in B_{p_0,\, q(p_0)}^{\boldsymbol{\alpha}}([0,1]^d, \Lambda)$ by choosing $p_0 = p + \epsilon_2$ and noting that $q(p_0) = q(p) = \infty$ when $p \geq 2$. we conclude that \eqref{eq:p<1-approx-error-anisot-besov} holds for every 
    $f \in B_{p,\, q(p)}^{\boldsymbol{\alpha}}([0,1]^d, \Lambda)$ whenever 
    $p\geq m$ as $p_0-\epsilon\geq m$.
    
    The claim then follows from Lemma~\ref{lemma:embedding-anisotropic-besov-1}, claim~2.
\end{proof}

\subsection{Proof of Lemma \ref{lemma:approx-aniso-besov-piecewise-1}}

Recall that if a function $f$ belongs to the PSHAB space $\PSHABn$ associated with a tree-based partition $\partition_*=\{G_b\}_{b=1}^B$, then there exist vectors $(\boldsymbol{\alpha}_1,\ldots,\boldsymbol{\alpha}_B)$ and $(S_1,\ldots,S_B)$ such that, on each cell $G_b$, the restriction $f|_{G_b}$ is $s$-sparse and satisfies $f|_{G_b}\in \paren*{B_{p,q}^{\boldsymbol{\alpha}_b}(G_b,\Lambda_b)}_{S_b}$.
 
To invoke Corollary~\ref{cor:approx-aniso-besov-piecewise} and derive the approximation error over the PSHAB space, we first study the best approximation partition on each piece $G_b$. The main tool is an affine mapping that reduces the approximation problem on $G_b$ to the canonical domain.

For any fixed index set $S \subset [d]$, let $A_S = \{\bx_S \in [0,1]^{|S|} : \bx \in A\}$. Furthermore, if $f$ is a sparse function with relevant index set $S$, we define $f_S$ by $f_S(\bx_S) = f(\bx)$. To simplify notation, we let $\Omega = [0,1]^d$ in the rest of Appendix \ref{appendix:proof-apprrox-PSHAB}.

\begin{lemma}\label{lemma:affine-map}
    Let $A = \prod_{j=1}^d [v_j, \, v_j+\ell_j(A)] \subseteq \Omega = [0,1]^d$ be an axis-aligned rectangle. Define the affine bijection $T_A: \Omega \to A$ by
    \begin{equation}
        T_A(\bx) = \big(v_j + \ell_j(A) x_j \big)_{j=1}^d.
    \end{equation}
    For any $f: A \to \mathbb{R}$, consider the pullback $f \circ T_A: \Omega \to \mathbb{R}$. Let $p, q \in (0, \infty]$ and $\boldsymbol{\alpha} \in (0, \infty)^d$. Then:
    \begin{enumerate}
        \item[(1)] $\norm{f\circ T_A}_{L^p(\Omega)} = |A|^{-1/p} \norm{f}_{L^p(A)}$.
        \item[(2)] The Besov norm satisfies the scaling inequalities:
        \begin{equation*}
             |A|^{-1/p} \min_{j\in[d]}\ell_j(A)^{\alpha_j} \norm{f}_{B_{p,q}^{\boldsymbol{\alpha}}(A)} 
             \le \norm{f\circ T_A}_{B_{p,q}^{\boldsymbol{\alpha}}(\Omega)} 
             \le |A|^{-1/p} \norm{f}_{B_{p,q}^{\boldsymbol{\alpha}}(A)}.
        \end{equation*}
    \end{enumerate}
\end{lemma}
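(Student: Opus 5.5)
The plan is a direct change of variables through $T_A$, handling the $L^p$ part and the directional moduli separately.

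\textbf{Part (1).} Since $T_A$ is affine with Jacobian determinant $\prod_j \ell_j(A)=|A|$, substituting $\by=T_A(\bx)$ gives
\[
\int_\Omega |f(T_A\bx)|^p\,d\bx=|A|^{-1}\int_A|f(\by)|^p\,d\by .
\]
Taking $p$-th roots gives (1). The case $p=\infty$ is immediate, since essential suprema are preserved under bijective affine maps.

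\textbf{Modulus of smoothness.} For part (2) I first relate the directional moduli. For $h>0$ we have
\[
\Delta^r_{h\e_j}(f\circ T_A)(\bx)=\Delta^r_{h\ell_j(A)\e_j}f(T_A\bx),
\]
and $T_A$ maps $\Omega(r,h\e_j)$ bijectively onto $A(r,h\ell_j(A)\e_j)$. The same Jacobian computation as in (1) then yields
\[
\norm{\Delta^r_{h\e_j}(f\circ T_A)}_{L^p(\Omega(r,h\e_j))}=|A|^{-1/p}\norm{\Delta^r_{h\ell_j\e_j}f}_{L^p(A(r,h\ell_j\e_j))}.
\]
Taking the supremum over $0<h\le t$ gives
\[
\omega^{[r_j]}_{j,p}(f\circ T_A,t,\Omega)=|A|^{-1/p}\,\omega^{[r_j]}_{j,p}(f,\ell_j t,A),\qquad \ell_j=\ell_j(A).
\]

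\textbf{Seminorm scaling.} Substituting $t'=\ell_j t$ in the Besov integral leaves $dt/t$ invariant and turns $t^{-\alpha_j}$ into $\ell_j^{\alpha_j}t'^{-\alpha_j}$. Hence
\[
|f\circ T_A|_{B^{\alpha_j}_{j,p,q}(\Omega)}=|A|^{-1/p}\,\ell_j(A)^{\alpha_j}\,|f|_{B^{\alpha_j}_{j,p,q}(A)},
\]
and the same identity holds for $q=\infty$ with the supremum in place of the integral.

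\textbf{Combining.} Summing the $L^p$ term and the $d$ seminorm terms, and using $\ell_j(A)\le 1$ so that $\ell_j^{\alpha_j}\le 1$, gives the upper inequality
\[
\norm{f\circ T_A}_{B^{\balpha}_{p,q}(\Omega)}\le |A|^{-1/p}\norm{f}_{B^{\balpha}_{p,q}(A)}.
\]
For the lower inequality, every coefficient is at least $\min_j \ell_j(A)^{\alpha_j}$: the $L^p$ term has coefficient $1$, which dominates this minimum because $\ell_j^{\alpha_j}\le1$. This gives the stated lower bound.

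The main point needing care is the domain bookkeeping in the modulus step. We must check that $T_A(\Omega(r,h\e_j))=A(r,h\ell_j\e_j)$ exactly; this holds because $T_A$ acts coordinatewise and is order preserving. We must also check that the supremum ranges $0<h\le t$ and $0<h'\le \ell_j t$ correspond under $h'=\ell_j h$, which is a simple reparametrization. Everything else is routine.
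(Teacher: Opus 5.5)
Your proposal is correct and follows essentially the same route as the paper: a Jacobian change of variables for (1), then the scaling identity $\Delta^r_{h\e_j}(f\circ T_A)(\bx)=\Delta^r_{h\ell_j(A)\e_j}f(T_A(\bx))$ and the substitution $t'=\ell_j(A)t$ (which leaves $dt/t$ invariant) to get $|f\circ T_A|_{B^{\alpha_j}_{j,p,q}(\Omega)}=|A|^{-1/p}\ell_j(A)^{\alpha_j}|f|_{B^{\alpha_j}_{j,p,q}(A)}$, and finally bounding the coefficients between $\min_j\ell_j(A)^{\alpha_j}$ and $1$. Your version is slightly more careful than the paper's in two places: you take the supremum over $0<h\le t$ to form the modulus before integrating, and you state the domain correspondence $T_A(\Omega(r,h\e_j))=A(r,h\ell_j(A)\e_j)$ explicitly, whereas the paper integrates the difference norm directly and its domain notation is loose.
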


\begin{proof}
    \textit{Step 1: $L^p$ norm scaling.}
    The case $p=\infty$ is trivial. For $p < \infty$, observe that the Jacobian determinant of $T_A$ is $|\det J_{T_A}| = \prod_{j=1}^d \ell_j(A) = |A|$. By the change-of-variables formula with $\by = T_A(\bx)$, we have $d\by = |A| \, d\bx$, and thus
    \begin{equation}\label{eq:affine-pnorm}
        \int_{\Omega} |f(T_A(\bx))|^p \, d\bx
        = \frac{1}{|A|} \int_A |f(\by)|^p \, d\by.
    \end{equation}
    Taking the $1/p$-th power yields $\norm{f\circ T_A}_{L^p(\Omega)} = |A|^{-1/p} \norm{f}_{L^p(A)}$.

    \textit{Step 2: Besov norm scaling.}
    We focus on the case $q < \infty$ (the case $q=\infty$ follows similarly). Recall that the Besov norm on $\Omega$ is composed of the $L^p$ norm and directional semi-norms. For the $j$-th direction, let $r = \lfloor \alpha_j \rfloor + 1$. The finite difference operator satisfies the scaling relation:
    \begin{equation*}
        \Delta_{h \be_j}^r (f \circ T_A)(\bx) = \Delta_{h \ell_j(A) \be_j}^r f (T_A(\bx)).
    \end{equation*}
    Using the change of variables as in Step 1, the $L^p$-modulus of smoothness on $\Omega$ relates to that on $A$ via:
    \begin{equation}\label{eq:modulus-scaling}
        \| \Delta_{h \be_j}^r (f \circ T_A) \|_{L^p(\Omega')}
        = |A|^{-1/p} \| \Delta_{h \ell_j(A) \be_j}^r f \|_{L^p(A')},
    \end{equation}
    where $\Omega'=\Omega(r,(h/\ell_j(A))\e_j)$ and $A'=A(r,h\e_j)$ denote the appropriate domains where the differences are defined.
    Substituting this into the definition of the directional semi-norm $|f \circ T_A|_{B_{j,p,q}^{\alpha_j}(\Omega)}$ and applying the variable change $u = h \ell_j(A)$ (noting $dh/h = du/u$), we obtain:
    \begin{equation*}
        \begin{split}
            |f \circ T_A|_{B_{j,p,q}^{\alpha_j}(\Omega)} 
            &= \left( \int_0^\infty \left( h^{-\alpha_j} \| \Delta_{h \be_j}^r (f \circ T_A) \|_{L^p(\Omega')} \right)^q \frac{dh}{h} \right)^{1/q} \\
            &= |A|^{-1/p} \left( \int_0^\infty \left( \left(\frac{u}{\ell_j(A)}\right)^{-\alpha_j} \| \Delta_{u \be_j}^r f \|_{L^p(A')} \right)^q \frac{du}{u} \right)^{1/q} \\
            &= |A|^{-1/p} \ell_j(A)^{\alpha_j} |f|_{B_{j,p,q}^{\alpha_j}(A)}.
        \end{split}
    \end{equation*}
    Now, combine the $L^p$ part and the semi-norm parts. Since $\norm{g}_{B_{p,q}^{\boldsymbol{\alpha}}} = \norm{g}_{L^p} + \sum_{j=1}^d |g|_{B_{j,p,q}^{\alpha_j}}$, we have:
    \begin{equation}\label{eq:besov-combined}
        \norm{f\circ T_A}_{B_{p,q}^{\boldsymbol{\alpha}}(\Omega)} 
        = |A|^{-1/p} \left( \norm{f}_{L^p(A)} + \sum_{j=1}^d \ell_j(A)^{\alpha_j} |f|_{B_{j,p,q}^{\alpha_j}(A)} \right).
    \end{equation}
    Since $A \subseteq \Omega$, we have $\ell_j(A) \le 1$, and thus $\ell_{\min}^{\boldsymbol{\alpha}} \le \ell_j(A)^{\alpha_j} \le 1$, where $\ell_{\min}^{\boldsymbol{\alpha}} = \min_{j} \ell_j(A)^{\alpha_j}$. Applying these bounds to \eqref{eq:besov-combined} immediately yields
    \begin{equation*}
        |A|^{-1/p} \ell_{\min}^{\boldsymbol{\alpha}} \norm{f}_{B_{p,q}^{\boldsymbol{\alpha}}(A)} 
        \le \norm{f\circ T_A}_{B_{p,q}^{\boldsymbol{\alpha}}(\Omega)} 
        \le |A|^{-1/p} \norm{f}_{B_{p,q}^{\boldsymbol{\alpha}}(A)}.
    \end{equation*}
    This completes the proof.
\end{proof}

\begin{proof}[Proof of Lemma \ref{lemma:approx-aniso-besov-piecewise-1}]
    We apply the affine map in Lemma~\ref{lemma:affine-map} so that $f(T(\bx)) = f\circ T_A(\bx)$. Since $f$ is $s$-sparse, $f\circ T_A$ is also $s-$sparse and thus $f_S(\bx_S) = f(\bx)$ 
    and $f_S(T(\bx_S)) = f_S\circ T_A(\bx_S)$.

    By Lemma \ref{lemma:affine-map}, we have 
    \begin{equation}\label{eq:affine-norm}
        \norm{f_S\circ T_A}_{B_{p,q}^{\boldsymbol{\alpha}}(\Omega_S)}=
         \norm{f\circ T_A}_{B_{p,q}^{\boldsymbol{\alpha}}(\Omega)}\leq |A|^{-1/p}  \besov.
    \end{equation}
    Then we apply Corollary \ref{cor:approx-aniso-besov-piecewise} for $f_S\circ T_A$ on $\Omega_S$, there is a function $f_1\in \partfunc_\leaves(\Omega_S):\Omega\to \mathbb{R}$, such that
    \begin{equation}\label{eq:affine-approx-1}
        \norm{f_1 - f_S\circ T_A}_{L^m(\Omega_S)} 
        \leq 
        C_{s,\alpha_{\min},\bar\alpha,p,m} |A|^{-1/p} \besov   \leaves^{-\bar\alpha / s}.
    \end{equation}
    Consider $f_2:[0,1]^d \to \mathbb{R}$ as an $s-$sparse function such that $f_2(\bx)=f_1(\bx_S)$. Let $f_3:A\to \mathbb{R}$ such that after an affine map the equivalence holds: $f_3(T(\bx))=f_2(\bx)$. 
    Then by first claim of Lemma \ref{lemma:affine-map}:
    \begin{equation}\label{eq:affine-approx-2}
        \norm{f_3 - f}_{L^m(A)}= |A|^{1/m}\norm{f_2 - f\circ T_A}_{L^m(\Omega)} 
        =|A|^{1/m}\norm{f_1 - f_S\circ T_A}_{L^m(\Omega_S)}.
    \end{equation}
    Combining \eqref{eq:affine-approx-1}, \eqref{eq:affine-approx-2}:
    \begin{equation}
        \norm{f_3- f}_{L^m(A)}\leq C_{s,\alpha_{\min},\bar\alpha,p,m} |A|^{1/m-1/p} \besov   \leaves^{-\bar\alpha / s}.
    \end{equation}
    Since $f_1\in \partfunc_\leaves(\Omega_S),$ it is evident that $f_3\in \partfunc_\leaves(A)$. The claim then follows.
\end{proof}

\subsection{Proof for Appendix  \ref{appendix:proof-apprrox-PSHAB}}
\begin{lemma}\label{lemma:opimization}
Let $v_b > 0$ and $L_b > 0$ for $b = 1, \dots, B$. For a fixed constant $\theta > 0$ and $L > 0$, consider the constrained optimization problem:
\begin{equation*}
    \min_{\{L_b\}_{b=1}^B} \sum_{b=1}^B v_b L_b^{-\theta} \quad \text{subject to} \quad \sum_{b=1}^B L_b = L.
\end{equation*}
The unique global minimum is attained at
\begin{equation} \label{eq:optimal_L}
    L_b^* = \frac{v_b^{1/(\theta+1)}}{\sum_{j=1}^B v_j^{1/(\theta+1)}} L, \quad b=1, \dots, B.
\end{equation}
Furthermore, the minimum value of the objective function is given by
\begin{equation*}
    \left( \sum_{b=1}^B v_b^{1/(\theta+1)} \right)^{\theta+1} L^{-\theta}.
\end{equation*}
\end{lemma}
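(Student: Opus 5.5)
The plan is a direct application of H\"older's inequality, which yields both the lower bound on the objective and the minimizer. Set $\gamma \coloneqq 1/(\theta+1)$. For any feasible allocation with $L_b>0$ and $\sum_b L_b = L$, write
\[
v_b^{\gamma} = \bigl(v_b L_b^{-\theta}\bigr)^{\gamma} L_b^{\theta\gamma}.
\]
Sum over $b$ and apply H\"older with conjugate exponents $1/\gamma = \theta+1$ and $1/(1-\gamma) = (\theta+1)/\theta$. This gives
\[
\sum_{b=1}^B v_b^{\gamma} \le \Bigl(\sum_{b=1}^B v_b L_b^{-\theta}\Bigr)^{\gamma}\Bigl(\sum_{b=1}^B L_b\Bigr)^{\theta\gamma} = \Bigl(\sum_{b=1}^B v_b L_b^{-\theta}\Bigr)^{\gamma} L^{\theta\gamma}.
\]
Raising both sides to the power $\theta+1$ and rearranging gives
\[
\sum_{b=1}^B v_b L_b^{-\theta} \ge \Bigl(\sum_{b=1}^B v_b^{\gamma}\Bigr)^{\theta+1} L^{-\theta}
\]
for every feasible allocation. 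This is the claimed minimum value.

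Next I would substitute the candidate allocation $L_b^*$ from \eqref{eq:optimal_L}. Write $V \coloneqq \sum_j v_j^{\gamma}$. Then $v_b (L_b^*)^{-\theta} = v_b^{1-\theta\gamma} V^{\theta} L^{-\theta} = v_b^{\gamma} V^{\theta} L^{-\theta}$, since $1-\theta\gamma=\gamma$. Summing over $b$ gives exactly $V^{\theta+1}L^{-\theta}$, so $L^*$ attains the lower bound and is therefore a global minimizer.

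For uniqueness, I would use the equality case of H\"older's inequality. Equality holds if and only if the vectors $(v_bL_b^{-\theta})_b$ and $(L_b)_b$ are proportional in the sense $v_b L_b^{-\theta} = c L_b$ for some constant $c>0$. This is equivalent to $L_b \propto v_b^{1/(\theta+1)}$, and the constraint $\sum_b L_b = L$ then fixes the proportionality constant, forcing $L_b = L_b^*$. Alternatively, one can note that the objective is strictly convex on the open positive orthant, since each map $L_b\mapsto v_bL_b^{-\theta}$ is strictly convex for $v_b>0$ and $\theta>0$, and the constraint set is affine. A strictly convex function has at most one minimizer on a convex set, which again gives uniqueness.

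There is no real obstacle here. The only step needing care is the equality characterization in H\"older's inequality, or equivalently the strict convexity argument; either one suffices for uniqueness. As a cross-check, a Lagrange multiplier computation gives $\theta v_b L_b^{-\theta-1} = \lambda$, which recovers $L_b\propto v_b^{1/(\theta+1)}$.
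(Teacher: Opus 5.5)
Your proof is correct, and it takes a different route from the paper.

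\textbf{The paper's route.} The paper argues through calculus. It notes that the objective is strictly convex on the positive orthant, with diagonal Hessian entries $\theta(\theta+1)v_bL_b^{-(\theta+2)}>0$. It then treats the Lagrange stationarity condition $\theta v_bL_b^{-(\theta+1)}=\lambda$ as necessary and sufficient, solves it, and eliminates $\lambda$ using $\sum_b L_b=L$ to obtain $L_b^*$. The minimum value comes from substituting $L_b^*$ back in, and uniqueness comes from strict convexity.

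\textbf{Your route.} You reverse the order. You first derive the minimum value as a H\"older lower bound that holds for every feasible allocation. You then check by direct substitution that $L^*$ attains it, and you read uniqueness off the H\"older equality case, with strict convexity as a backup. The Lagrange computation is demoted to a heuristic for guessing $L^*$.

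\textbf{What each approach buys.} Your argument produces an explicit certificate of optimality. It does not need the convex-analysis fact that a stationary point of a convex function on an affine slice of the open orthant is a global minimizer. The paper's Lagrangian computation is more mechanical and would carry over to other separable convex objectives where no clean H\"older pairing is available.

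\textbf{How the lemma is used downstream.} In the proofs of Theorem~\ref{thm:approx-PSHAB-reg} and Theorem~\ref{thm:approx-PSHAB-cls}, only the identity $\sum_b v_b(L_b^*)^{-\theta}=\bigl(\sum_b v_b^{1/(\theta+1)}\bigr)^{\theta+1}L^{-\theta}$ is used, applied to a rounded allocation. Your substitution step supplies exactly this. The minimality and uniqueness parts of the lemma are not actually needed there.
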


\begin{proof}
Let $f(L_1,\ldots,L_B) = \sum_{b=1}^B v_b L_b^{-\theta}$. The objective function $f$ is strictly convex on the positive orthant $\mathbb{R}_{++}^B$ since its Hessian is diagonal with strictly positive entries $\frac{\partial^2 f}{\partial L_b^2} = \theta(\theta+1) v_b L_b^{-(\theta+2)} > 0.$

Given that the constraint set is a convex simplex, the first-order conditions are both necessary and sufficient for a global minimum. We define the Lagrangian $\mathcal{L}(L_1, \dots, L_B, \lambda) = \sum_{b=1}^B v_b L_b^{-\theta} + \lambda \left( \sum_{b=1}^B L_b - L \right)$.
Setting the partial derivatives with respect to $L_b$ to zero:
\begin{equation*}
    \frac{\partial \mathcal{L}}{\partial L_b} = -\theta v_b L_b^{-(\theta+1)} + \lambda = 0,
\end{equation*}
which yields
\begin{equation*}
    L_b = \left( \frac{\theta v_b}{\lambda} \right)^{1/(\theta+1)}
\end{equation*}
Summing over $b$ to satisfy the constraint $\sum_{b=1}^B L_b = L$, we obtain $\left( \theta/\lambda \right)^{1/(\theta+1)} \sum_{b=1}^B v_b^{1/(\theta+1)} = L $, or equivalently:
\begin{equation*}
    \left( \frac{\theta}{\lambda} \right)^{1/(\theta+1)} = \frac{L}{\sum_{j=1}^B v_j^{1/(\theta+1)}}.
\end{equation*}
Substituting this back into the expression for $L_b$ gives the result in \eqref{eq:optimal_L}. Finally, substituting $L_b^*$ into the objective function completes the proof.
\end{proof}

\begin{lemma} \label{lemma:optimization-2}
Let $v_b > 0$ for $b = 1, \dots, B$. For a fixed constant $\theta > 0$ and $L > 0$, consider the minimax optimization problem:
\begin{equation*}
    \min_{\{L_b\}_{b=1}^B} \max_{b \in \{1, \dots, B\}} v_b L_b^{-\theta} \quad \text{subject to} \quad \sum_{b=1}^B L_b = L, \quad L_b > 0.
\end{equation*}
The optimal allocation is given by:
\begin{equation} \label{eq:minimax_L}
    L_b^* = \frac{v_b^{1/\theta}}{\sum_{j=1}^B v_j^{1/\theta}} L, \quad b=1, \dots, B.
\end{equation}
The resulting minimum value of the maximum objective is:
\begin{equation*}
    \left( \frac{\sum_{b=1}^B v_b^{1/\theta}}{L} \right)^\theta.
\end{equation*}
\end{lemma}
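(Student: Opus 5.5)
The minimax problem will be handled by an equalization argument rather than Lagrange multipliers, since the objective $\max_b v_b L_b^{-\theta}$ is not differentiable. Set $\Sigma \coloneqq \sum_{j=1}^B v_j^{1/\theta}$ and take the candidate allocation $L_b^* = v_b^{1/\theta} L/\Sigma$. Each $L_b^*$ is strictly positive, and the $L_b^*$ sum to $L$, so the candidate is feasible. At this allocation every term is equal:
\[
v_b (L_b^*)^{-\theta} = v_b \cdot v_b^{-1} \paren*{\Sigma/L}^{\theta} = \paren*{\Sigma/L}^{\theta}.
\]
Hence the maximum at $L^*$ equals $(\Sigma/L)^\theta$, which is the claimed value.

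For optimality, take any feasible $(L_1,\ldots,L_B)$ and let $t \coloneqq \max_b v_b L_b^{-\theta}$. For every $b$ we have $v_b L_b^{-\theta} \le t$, which rearranges to $L_b \ge (v_b/t)^{1/\theta}$. Summing over $b$ and using the constraint gives
\[
L = \sum_b L_b \ge t^{-1/\theta}\Sigma,
\]
that is, $t \ge (\Sigma/L)^\theta$. So no feasible allocation does better than $L^*$, and $L^*$ attains the minimum.

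If uniqueness is wanted, it follows from the same chain. Suppose $t = (\Sigma/L)^\theta$. Then the summed inequality holds with equality, and since each termwise inequality $L_b \ge (v_b/t)^{1/\theta}$ points the same way, every one of them must be an equality. This forces $L_b = (v_b/t)^{1/\theta} = L_b^*$ for all $b$.

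I expect no real obstacle here. The one point needing care is that $t \mapsto t^{-1/\theta}$ is decreasing, so the inequalities flip direction correctly when going from $v_b L_b^{-\theta} \le t$ to $L_b \ge (v_b/t)^{1/\theta}$; this uses $\theta > 0$ and $v_b > 0$.
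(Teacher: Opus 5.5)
Your proof is correct, and it takes a different route from the paper's.

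\textbf{The paper's route.} The paper argues by perturbation. At an optimal allocation all terms $v_b L_b^{-\theta}$ must coincide. Otherwise one could slightly enlarge the $L_i$ attaining the maximum, at the expense of some $L_j$ whose term is strictly smaller, and lower the objective. It then solves this equalization condition together with $\sum_b L_b = L$, and asserts uniqueness from the strict monotonicity of each term. This is a necessary-condition argument, so it tacitly presupposes that a minimizer exists. The paper never verifies this. It is true, since the objective is continuous on the open simplex and blows up as any $L_b \to 0$, but it is a gap in the written proof.

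\textbf{Your route.} You bound every feasible allocation from below. From $v_b L_b^{-\theta} \le t$ you get $L_b \ge (v_b/t)^{1/\theta}$, and summing yields $t \ge (\Sigma/L)^\theta$, which $L^*$ attains. Tightness of the summed inequality then forces $L_b = L_b^*$ term by term. This delivers existence, optimality and uniqueness at once, with no perturbation or compactness step. It is both shorter and more rigorous.

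\textbf{What each buys.} The paper's perturbation viewpoint mainly serves as intuition for why equalization is the right ansatz. Your argument is the actual certificate of optimality.
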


\begin{proof}
Let $f(\mathbf{L}) = \max_{b} v_b L_b^{-\theta}, \boldsymbol{L}=(L_1,\ldots,L_B)$. 
First, we observe that at the optimal solution $\mathbf{L}^*=(L_1^*,\ldots,L_B^*)$, we must have $v_1 (L_1^*)^{-\theta} = \dots = v_B (L_B^*)^{-\theta}$. Suppose, for contradiction, that they are not all equal. Let $\mathcal{I}$ be the set of indices such that $v_i (L_i^*)^{-\theta} = f(\mathbf{L}^*)$ for $i \in \mathcal{I}$, and $\mathcal{J}$ be the complement where $v_j (L_j^*)^{-\theta} < f(\mathbf{L}^*)$. 
If $\mathcal{J}$ is non-empty, we can decrease the objective value by slightly increasing $L_i$ for all $i \in \mathcal{I}$ (which decreases the maximum) and decreasing $L_j$ for some $j \in \mathcal{J}$. Since $v_j (L_j^*)^{-\theta}$ is strictly less than the maximum, a sufficiently small perturbation will not make any $j \in \mathcal{J}$ the new maximum. This contradicts the optimality of $\mathbf{L}^*$.

Thus, for some constant $K$, we have $v_b L_b^{-\theta} = K$ and thus $ L_b = \left( \frac{v_b}{K} \right)^{1/\theta} $ for all of $b=1,\ldots,B$.
Summing over $b$ to satisfy the constraint $\sum_{b=1}^B L_b = L$, we get $\sum_{b=1}^B \left( \frac{v_b}{K} \right)^{1/\theta} = L $ and thus:
\begin{equation*}
    K = \left( \frac{\sum_{j=1}^B v_j^{1/\theta}}{L} \right)^\theta.
\end{equation*}
Substituting $K^{-1/\theta}$ back into the expression for $L_b$ yields \eqref{eq:minimax_L}.
Since $v_b L_b^{-\theta}$ is strictly decreasing in $L_b$ and the constraint set is a simplex, this equalizing solution is the unique global minimum.
\end{proof}

\begin{lemma}[Allocation]\label{lemma:allocation}
Let $L, B$ be positive integers with $L \geq B$, and let $w = (w_1, \dots, w_B)$ be a sequence of non-negative weights such that $\sum_{b=1}^B w_b = 1$. There exists a sequence of positive integers $L_1, \dots, L_B$ such that $\sum_{b=1}^B L_b = L$, $L_b \geq 1$ for all $b=1, \dots, B$, and
\begin{equation*}
    (L-B)w_b < L_b \leq (L-B)w_b + 2.
\end{equation*}
\end{lemma}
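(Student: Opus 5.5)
Set $N\coloneqq L-B\geq 0$. The plan is to take integer parts of the scaled weights and then hand out the leftover units by a largest-remainder scheme. First I will reserve one unit for each of the $B$ pieces, which guarantees $L_b\geq 1$. The remaining $N$ units will be distributed in proportion to the weights. For each $b$, define the integer part $a_b\coloneqq\lfloor N w_b\rfloor$ and the fractional remainder $r_b\coloneqq Nw_b-a_b\in[0,1)$. Then
\[
\sum_{b=1}^B a_b = N-\sum_{b=1}^B r_b .
\]
It follows that $R\coloneqq N-\sum_b a_b=\sum_b r_b$ is a nonnegative integer with $R<B$.

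Next I will choose a set $\mathcal{J}\subseteq[B]$ of exactly $R$ indices, for instance the $R$ indices with the largest remainders, ties broken arbitrarily. The allocation is then
\[
L_b\coloneqq 1+a_b+\1\braces{b\in\mathcal{J}}.
\]
These are positive integers, and they sum to $B+\sum_b a_b+R=B+N=L$.

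It remains to verify the two-sided bound. For the upper bound, $L_b\leq 1+Nw_b+1=(L-B)w_b+2$, which holds since $a_b\leq Nw_b$. For the lower bound, $L_b\geq 1+a_b>a_b+r_b=Nw_b=(L-B)w_b$, which holds since $r_b<1$. So the strict inequality $(L-B)w_b<L_b$ always holds, because of the reserved unit. Weights $w_b=0$ cause no trouble: they give $a_b=r_b=0$ and $L_b\in\{1,2\}$, and the bound still holds.

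There is no genuine obstacle in this argument. The one point needing care is the integrality check that the leftover $R$ is an integer in $[0,B)$, so that $\mathcal{J}$ can actually be chosen inside $[B]$. This follows because $N$ and $\sum_b a_b$ are integers while each $r_b\in[0,1)$.

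The downstream use in the proofs of Theorems \ref{thm:approx-PSHAB-reg} and \ref{thm:approx-PSHAB-cls} only needs the lower bound $L_b\geq (L-B)w_b\geq Lw_b/2$, valid when $L\geq 2B$. That lower bound is exactly what the reserved unit per piece supplies.
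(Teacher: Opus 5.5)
Your proposal is correct and is essentially the paper's proof. Both set $L_b^* = \lfloor (L-B)w_b\rfloor + 1$, observe that the residual $R = L - \sum_b L_b^*$ is an integer in $[0,B)$, and add one unit to $R$ of the indices; the paper simply takes $b \le R$, so your largest-remainder choice of $\mathcal{J}$ is harmless but not needed.
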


\begin{proof}
Let $L_b^* = \lfloor (L-B)w_b \rfloor + 1$. Since $L \geq B$ and $w_b \geq 0$, we have $L_b^* \ge 1$. Summing over $b$ and using $\sum w_b = 1$ yields
\begin{equation*}
    L - B \le \sum_{b=1}^B L_b^* \le L.
\end{equation*}
Define the residual $R = L - \sum_{b=1}^B L_b^*$, where $0 \leq R < B$. We construct the final allocation as $L_b = L_b^* + \mathbb{I}(b \leq R)$, which ensures $\sum_{b=1}^B L_b = L$ and $L_b \ge 1$.
Finally, the inequality $x - 1 < \lfloor x \rfloor \leq x$ implies
\begin{equation*}
    (L-B)w_b < L_b^* \le (L-B)w_b + 1.
\end{equation*}
Adding $0 \leq \mathbb{I}(b \leq R) \leq 1$ to the inequalities gives $(L-B)w_b < L_b \leq (L-B)w_b + 2$, concluding the proof.
\end{proof}

\subsection{Proof for Appendix \ref{section:proof-minimax-reg}}

\subsubsection{Helpful lemmas}
\begin{lemma}[Gilbert-Varshamov bound]
\label{lemma:gv}
Let $\mathcal{W} = \{0, 1, \dots, W-1\}^B$ with $W \geq 2$, and let $d_{\mathrm{H}}(\cdot, \cdot)$ denote the Hamming distance. For any $\delta \in (0, 1 - 1/W)$, there exists a subset $\mathcal{T} \subset \mathcal{W}$ such that 
\begin{equation*}
    \min_{x, y \in \mathcal{T}, x \neq y} d_{\mathrm{H}}(x, y) \geq \delta B, \quad \text{and} \quad |\mathcal{T}| \geq W^{B(1 - H_W(\delta-1/B))},
\end{equation*}
where $H_W(\delta) \coloneqq \delta \log_W(W-1) - \delta \log_W \delta - (1-\delta) \log_W(1-\delta)$ denotes the $W$-ary entropy function.
\end{lemma}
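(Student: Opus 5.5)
The plan is the standard greedy (Gilbert--Varshamov) construction, combined with an entropy bound on the volume of Hamming balls in $\mathcal{W}=\{0,\dots,W-1\}^B$. Set $d_0\coloneqq\lceil \delta B\rceil$ and $r\coloneqq d_0-1$. For $x\in\mathcal{W}$ write $\mathrm{Ball}(x,r)\coloneqq\{y\in\mathcal{W}: d_{\mathrm H}(x,y)\le r\}$, and let $V(B,r)\coloneqq\sum_{k=0}^{r}\binom{B}{k}(W-1)^k$ be its cardinality, which does not depend on $x$.

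\textbf{Step 1 (greedy construction).} Build $\mathcal{T}$ greedily: pick any point of $\mathcal{W}$, delete the ball of radius $r$ around it, and repeat until nothing is left. Any two chosen points are then at Hamming distance at least $r+1=d_0\ge \delta B$. The balls around the chosen points cover $\mathcal{W}$, so
\[
|\mathcal{T}|\ \ge\ W^B/V(B,r).
\]

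\textbf{Step 2 (volume bound).} Show that $V(B,r)\le W^{B H_W(r/B)}$ whenever $r/B\le 1-1/W$. Put $\tau\coloneqq r/B$. Since $\tau\le 1-1/W$, we have $\tau/((W-1)(1-\tau))\le 1$, so every term with $k\le r$ satisfies
\[
\binom{B}{k}(W-1)^k \tau^k(1-\tau)^{B-k}\ \ge\ \binom{B}{k}(W-1)^k\paren*{\frac{\tau}{W-1}}^{r}(1-\tau)^{B-r}.
\]
Summing over $k\le r$ and using the multinomial identity $\sum_{k=0}^{B}\binom{B}{k}(W-1)^k\tau^k(1-\tau)^{B-k}\cdot(W-1)^{-k}\cdot(W-1)^{k}=\bigl(\tau+(1-\tau)\bigr)^B=1$ (i.e.\ the probabilities $\binom{B}{k}\tau^k(1-\tau)^{B-k}$ sum to $1$) gives
\[
V(B,r)\,\paren*{\frac{\tau}{W-1}}^{r}(1-\tau)^{B-r}\ \le\ 1.
\]
Taking $\log_W$ of both sides yields exactly $\log_W V(B,r)\le B H_W(\tau)$.

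\textbf{Step 3 (matching the exponent).} It remains to compare $H_W(r/B)$ with $H_W(\delta-1/B)$. Since $\delta<1-1/W$, both $r/B$ and $\delta-1/B$ lie in $[0,1-1/W]$, and $H_W$ is increasing on this interval (its derivative $\log_W\bigl((W-1)(1-t)/t\bigr)$ is nonnegative there). When $\delta B$ is an integer, $r=\delta B-1$, so $r/B=\delta-1/B$ and the stated bound follows immediately.

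\textbf{Main obstacle: non-integer $\delta B$.} If $\delta B\notin\mathbb{Z}$, then $r=\lfloor\delta B\rfloor>\delta B-1$. Monotonicity of $H_W$ then runs the wrong way: it gives only $|\mathcal{T}|\ge W^{B(1-H_W(\lfloor \delta B\rfloor/B))}$, which is weaker than the stated $H_W(\delta-1/B)$ exponent. I do not have a fix for this case and would not claim the stated exponent for general $\delta$.

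There are two options, and I would pursue them in this order. The first is to restrict the lemma to $\delta B\in\mathbb{Z}$. This covers the paper's use in Theorem~\ref{thm:minimax}, where $\delta=1/2$, provided $B$ is even. The odd-$B$ case would need separate treatment (for example, discarding one block, which costs only a constant factor in the entropy since $1-H_2(1/2)$ enters as a positive constant). The second option is to try to sharpen Step 2 when $\tau$ is bounded away from $1-1/W$, via the ratio bound $\binom{B}{r}(W-1)^r/\bigl(\binom{B}{r-1}(W-1)^{r-1}\bigr)\ge c>1$. Such a bound could absorb a single extra shell of the ball into a multiplicative constant. Even so, it would only give the exponent up to an additive $O(1/B)$, not exactly $H_W(\delta-1/B)$.
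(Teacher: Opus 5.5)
The obstacle you flag in the non-integer case is not a weakness of your method. The statement itself is false there, so neither of your proposed fixes can reach the stated exponent. In particular, no sharpening of Step 2 can.

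\textbf{Counterexample.} Take $\delta$ slightly larger than $1/B$, which is allowed whenever $1/B<1-1/W$.
\begin{itemize}
\item Hamming distances are integers, so $d_{\mathrm{H}}(x,y)\ge\delta B$ forces $d_{\mathrm{H}}(x,y)\ge 2$. Deleting the last coordinate is then injective on $\mathcal{T}$, hence $|\mathcal{T}|\le W^{B-1}$.
\item On the other hand, $H_W(\delta-1/B)\to 0$ as $\delta\downarrow 1/B$, so the claimed lower bound tends to $W^B$.
\end{itemize}
Concretely, take $W=2$, $B=3$, $\delta=0.35$. Then $\delta-1/B=1/60$ and $H_2(1/60)<0.13<1/3$, so the lemma asserts $|\mathcal{T}|>2^{2}=4$. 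But any subset of $\{0,1\}^3$ with pairwise distance at least $2$ has at most $4$ elements. For $\delta<1/B$ the exponent is not even defined.

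\textbf{Relation to the paper.} The paper gives no proof of this lemma; it is quoted as the classical Gilbert--Varshamov bound. Your Steps 1--3 are the standard proof of the correct version,
\[
|\mathcal{T}|\ \ge\ W^{B\left(1-H_W\left((\lceil\delta B\rceil-1)/B\right)\right)}\ \ge\ W^{B(1-H_W(\delta))},
\]
valid for all $\delta\in(0,1-1/W)$. This agrees with the stated bound precisely when $\delta B\in\mathbb{Z}$.

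\textbf{A slip in Step 2.} Your termwise inequality carries an extra factor $(W-1)^k$ on the left. Your ``multinomial identity'' is also not an identity: as written, the left-hand sides sum to $(1+(W-2)\tau)^B$, not $1$, which for $W\ge 3$ would not produce $H_W$. The correct chain is, for $0\le k\le r$,
\[
\binom{B}{k}\tau^k(1-\tau)^{B-k}=\binom{B}{k}(W-1)^k(1-\tau)^B\Bigl(\frac{\tau}{(W-1)(1-\tau)}\Bigr)^{k}\ \ge\ \binom{B}{k}(W-1)^k\Bigl(\frac{\tau}{W-1}\Bigr)^{r}(1-\tau)^{B-r}.
\]
Summing over $k\le r$ and using $\sum_{k=0}^B\binom{B}{k}\tau^k(1-\tau)^{B-k}=1$ gives $V(B,r)\,(\tau/(W-1))^r(1-\tau)^{B-r}\le 1$, as you intended.

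\textbf{Your remarks on the application.} With $\delta=1/2$, the hypothesis $\delta<1-1/W$ requires $W\ge 3$.
\begin{itemize}
\item Once $W\ge 3$, no parity condition on $B$ and no discarding of blocks is needed. The classical form already gives $|\mathcal{T}|\ge W^{B(1-H_W(1/2))}$. By monotonicity in $W$ (Lemma~\ref{lemma:H_Q-1}), $H_W(1/2)\le H_3(1/2)=\tfrac32\log_3 2<1$.
\item Your claim that $1-H_2(1/2)$ ``enters as a positive constant'' is wrong: $H_2(1/2)=1$, so that quantity is $0$.
\end{itemize}
The same slip occurs in the paper's proof of Theorem~\ref{thm:minimax}, and the repair there is the same.
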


\begin{lemma}\label{lemma:H_Q-1}
For any fixed $\delta \in (0, 1)$, $H_W(\delta)$ is strictly decreasing in $W$ for all integers $W \geq \max\{2, (1-\delta)^{-1}\}$.
\end{lemma}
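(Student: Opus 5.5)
Fix $\delta\in(0,1)$ and write $g(W)\coloneqq H_W(\delta)$. Since $\log_W x=\ln x/\ln W$, we have
\[
g(W)=\frac{h(\delta)+\delta\ln(W-1)}{\ln W},
\qquad
h(\delta)\coloneqq-\delta\ln\delta-(1-\delta)\ln(1-\delta)>0 .
\]
The plan is to extend $g$ to real $W$ in the stated range, show $g'(W)<0$ there, and restrict back to integers.

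Differentiating gives
\[
g'(W)=\frac{\frac{\delta}{W-1}\ln W-\bigl(h(\delta)+\delta\ln(W-1)\bigr)\frac{1}{W}}{(\ln W)^2},
\]
so the sign of $g'$ is the sign of
\[
N(W)\coloneqq \frac{\delta W\ln W}{W-1}-\delta\ln(W-1)-h(\delta).
\]
Write $N(W)=\delta\,\phi(W)-h(\delta)$ with
\[
\phi(W)=\frac{W\ln W}{W-1}-\ln(W-1)=\ln\frac{W}{W-1}+\frac{\ln W}{W-1}.
\]
Each summand of $\phi$ is decreasing in $W>1$; for the second, $\frac{d}{dW}\frac{\ln W}{W-1}\le 0$ is equivalent to $1-1/W\le\ln W$, which holds since $\ln W\ge 1-1/W$. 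Hence $N$ is decreasing, and $N(W)<0$ on the range follows from $N(W_0)<0$ at the left endpoint $W_0=\max\{2,(1-\delta)^{-1}\}$.

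We check the endpoint in two cases.

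\emph{Case $\delta\le 1/2$}, so $W_0=2$. Here $N(2)=2\delta\ln 2-h(\delta)$, and $h(\delta)>2\delta\ln 2$ on $(0,1/2)$. This holds because $h(\delta)-2\delta\ln2$ is concave, vanishes at $0$ and $1/2$, and is therefore positive in between. At $\delta=1/2$ we get $N(2)=0$ exactly, so only $g'(2)\le 0$ holds there.

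\emph{Case $\delta>1/2$}, so $W_0=1/(1-\delta)$ and $W_0-1=\delta/(1-\delta)$. Substituting,
\[
N(W_0)=-\ln(1-\delta)-\delta\ln\frac{\delta}{1-\delta}-h(\delta)=\ln\delta-\delta\ln\delta+\dots
\]
Direct algebra collapses this: since $h(\delta)=-\delta\ln\delta-(1-\delta)\ln(1-\delta)$, we get $N(W_0)=-\delta\ln(1-\delta)+\delta\ln(1-\delta)\cdots$. The main obstacle is doing this algebra carefully; I expect $N(W_0)=0$, reflecting that $W_0=1/(1-\delta)$ is the maximizer of the entropy in $\delta$. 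So the endpoint is again a zero of $N$.

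In both cases $N(W_0)\le 0$. Because $N$ is strictly decreasing, $N(W)<0$ for every $W>W_0$. Thus $g$ is strictly decreasing on $[W_0,\infty)$, which gives strict decrease along the integers $W\ge W_0$: an endpoint where $g'=0$ does not break strictness. If $W_0$ is not an integer, the integers lie in $(W_0,\infty)$ where $g'<0$. Either way the lemma follows.
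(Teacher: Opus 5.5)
Your argument is correct in structure, but the endpoint evaluation in the case $\delta>1/2$ is left unfinished. The two expressions you display after the first equality (``$\ln\delta-\delta\ln\delta+\dots$'' and ``$-\delta\ln(1-\delta)+\delta\ln(1-\delta)\cdots$'') are also not correct. The step is easy to close. With $W_0=1/(1-\delta)$ you have $\delta W_0/(W_0-1)=1$, so
\[
N(W_0)=-\ln(1-\delta)-\delta\ln\delta+\delta\ln(1-\delta)-h(\delta)=-\delta\ln\delta-(1-\delta)\ln(1-\delta)-h(\delta)=0 .
\]
Once you know $N\bigl(1/(1-\delta)\bigr)=0$ for every $\delta\in(0,1)$, the case split is unnecessary. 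The point $1/(1-\delta)>1$ lies in the range where you showed $N$ is strictly decreasing, and $W_0\ge 1/(1-\delta)$ always holds. Hence $N(W)<0$ for all $W>1/(1-\delta)$, which covers all $W\ge 2$ when $\delta<1/2$. The concavity check $h(\delta)>2\delta\ln 2$ is therefore redundant.

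With that fixed, your route differs from the paper's mainly in which variable is held fixed.

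The paper relaxes $W$ to a real $x$, fixes $x$, and views the same bracket as a function of $\delta$. It is strictly convex in $\delta$, with second derivative $1/\delta+1/(1-\delta)$. It vanishes at $\delta=0$ and at $\delta=1-1/x$, so it is negative on $(0,1-1/x)$, which is exactly the condition $x>1/(1-\delta)$.

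You instead fix $\delta$ and show the bracket is strictly decreasing in $W$, using $\ln W\ge 1-1/W$, with its zero at $W=1/(1-\delta)$. Both arguments rest on the same identity: the bracket vanishes on the curve $\delta=1-1/W$.

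The paper's convexity argument treats all $\delta$ uniformly and needs no monotonicity computation in $W$. Yours shows directly that $W=1/(1-\delta)$ is the unique sign change of $\partial_W H_W(\delta)$. It follows that $x\mapsto H_x(\delta)$ increases on $(1,1/(1-\delta))$, so the threshold is sharp for the continuous relaxation.
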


\begin{proof}
We relax the integer base $W$ to a continuous variable $x \geq 2$. Using the natural logarithm, we have
\begin{equation*}
    H_x(\delta) = \frac{\delta \ln(x-1) + h(\delta)}{\ln x},
\end{equation*}
where $h(\delta) \coloneq -\delta \ln \delta - (1-\delta) \ln (1-\delta)$. Differentiating $H_x(\delta)$ with respect to $x$ yields
\begin{equation*}
    \frac{\partial H_x(\delta)}{\partial x} = \frac{1}{x(\ln x)^2} \braces*{ \delta \left( \frac{x}{x-1} \ln x - \ln(x-1) \right) - h(\delta) }.
\end{equation*}
To determine the sign of the derivative, let $g(\delta)$ denote the term in the braces. The second derivative of $g(\delta)$ with respect to $\delta$ is
\begin{equation*}
    g''(\delta) = -h''(\delta) = \frac{1}{\delta} + \frac{1}{1-\delta},
\end{equation*}
which is strictly positive for all $\delta \in (0, 1)$. Thus, $g(\delta)$ is strictly convex in $\delta$. 

We evaluate $g(\delta)$ at the boundaries of the interval $[0, 1 - 1/x]$. As $\delta \to 0$, $h(\delta) \to 0$, yielding $g(0) = 0$. At the upper boundary $\delta = 1 - 1/x $, a direct calculation gives $g(\delta)$ yields $g(1 - 1/x) = 0$. 
Because $g(\delta)$ is strictly convex and vanishes at both endpoints, it must be strictly negative on the interior: $g(\delta) < 0$ for all $\delta \in (0, 1 - 1/x)$. Consequently, $\frac{\partial H_x(\delta)}{\partial x} < 0$, proving that $H_x(\delta)$ is strictly decreasing in $x$. Restricting $x$ to integer values completes the proof.
\end{proof}

\begin{lemma}\label{lemma:H_Q-2}
For any integer $W \geq 2$, the $W$-ary entropy function is strictly increasing in $\delta$ on the interval $(0, 1 - 1/W)$.
\end{lemma}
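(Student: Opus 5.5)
We need to show that $H_W(\delta)$ is strictly increasing in $\delta$ on $(0,1-1/W)$. The plan is to use elementary calculus on the explicit formula
\[
H_W(\delta) = \delta \log_W(W-1) - \delta\log_W\delta - (1-\delta)\log_W(1-\delta).
\]
Since $W\geq 2$, $\ln W>0$, so it suffices to work with natural logarithms and study $g(\delta)\coloneqq \delta\ln(W-1)+h(\delta)$, where $h(\delta) = -\delta\ln\delta-(1-\delta)\ln(1-\delta)$ as in the proof of Lemma~\ref{lemma:H_Q-1}. Then $H_W(\delta)=g(\delta)/\ln W$, so $H_W$ and $g$ have the same monotonicity.

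First, differentiate:
\[
g'(\delta)=\ln(W-1)-\ln\delta-1+\ln(1-\delta)+1=\ln\frac{(W-1)(1-\delta)}{\delta}.
\]
The derivative is therefore strictly positive exactly when $(W-1)(1-\delta)>\delta$. This rearranges to $W-1>W\delta$, that is, $\delta<1-1/W$. Hence $g'(\delta)>0$ for every $\delta\in(0,1-1/W)$. Since $g$ is continuously differentiable on this open interval, the mean value theorem gives that $g$, and therefore $H_W$, is strictly increasing there.

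There is no real obstacle here. The only care needed is to check that the logarithms are well defined on the interval: we have $\delta\in(0,1)$, and $W-1\geq 1$, so $\ln(W-1)\geq 0$. For $W=2$ the $\ln(W-1)$ term vanishes, and the argument reduces to the binary entropy being increasing on $(0,1/2)$, which matches the formula. As a remark, $g'$ vanishes at $\delta=1-1/W$, where $H_W$ attains its maximum value $1$. This is consistent with how the lemma is used in \eqref{eq:global_entropy_sum}, namely to bound $H_W(1/2-1/B)\leq H_W(1/2)\leq H_2(1/2)$ together with Lemma~\ref{lemma:H_Q-1}.
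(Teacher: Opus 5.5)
Your proof is correct and is essentially the paper's argument. You compute $H_W'(\delta)=\log_W\bigl((W-1)(1-\delta)/\delta\bigr)$ and observe that it is positive exactly when $\delta<1-1/W$; the domain checks and the mean value theorem step you add are harmless extra detail. One aside on your closing remark: since $H_W$ attains its maximum $1$ at $\delta=1-1/W$, you have $H_2(1/2)=1$, so the chain $H_W(1/2-1/B)\leq H_2(1/2)$ used in \eqref{eq:global_entropy_sum} only yields the vacuous factor $1-H_2(1/2)=0$. This is a defect in the paper's application of the lemma, not in the lemma itself.
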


\begin{proof}
Taking the first derivative of $H_W(\delta)$ with respect to $\delta$ yields
\begin{equation*}
    H_W'(\delta)
    =\log_W \left( \frac{(W-1)(1-\delta)}{\delta} \right).
\end{equation*}
Therefore, for all $\delta \in (0, 1 - 1/W)$, we have $H_W'(\delta) > 0$. This establishes that $H_W(\delta)$ is strictly increasing on the given interval.
\end{proof}
\subsubsection{Proof of Lemma \ref{lemma:covering-aniso-besov}}
\begin{proof}[Proof of Lemma \ref{lemma:covering-aniso-besov}]
First, by Theorem 4 of \citet{suzuki2021deep} (see also Proposition 10 of \citet{anisot-besov-3}), the metric entropy of the unit ball in the $s$-dimensional anisotropic Besov space satisfies
\begin{equation} \label{eq:entropy_s_dim}
    \log \mathcal{N}\bigl(\varepsilon; B_{p,q}^{\boldsymbol{\alpha}_S}([0,1]^s, 1), \|\cdot\|_{L^2([0,1]^s)}\bigr) \asymp \varepsilon^{-s/\bar{\alpha}},
\end{equation}
provided that $\bar{\alpha}/s > (1/p - 1/2)_+$. 
The class of $S$-sparse functions on $\Omega=[0,1]^d$, equipped with the $L^2(\Omega)$ and Besov norms, is isometric to the corresponding space on $[0,1]^s$ under the canonical restriction map. Consequently, the covering number of the $S$-sparse class on $\Omega$ satisfies
\begin{equation}\label{eq:covering number-anis-besov}
    \log \mathcal{N}\bigl(\varepsilon; (B_{p,q}^{\boldsymbol{\alpha}}(\Omega, 1))_S, \|\cdot\|_{L^2(\Omega)}\bigr) \asymp \varepsilon^{-s/\bar{\alpha}}.
\end{equation}

Let $T_A:\Omega\to A$ be the affine map and consider the pullback operator $f\mapsto f\circ T_A$.

\medskip
\noindent
\textit{Step 1: Lower bound.}

By Lemma \ref{lemma:affine-map}, 
\begin{equation} \label{eq:norm_scaling}
    \|f \circ T_A\|_{B_{p,q}^{\boldsymbol{\alpha}}(\Omega)} 
    \geq |A|^{-1/p} \min_{j \in [d]} \ell_j(A)^{\alpha_j} \|f\|_{B_{p,q}^{\boldsymbol{\alpha}}(A)}
    \geq
     \kappa \|f\|_{B_{p,q}^{\boldsymbol{\alpha}}(A)},
\end{equation}
where $\kappa \coloneq |A|^{-1/p} \min_{j \in [d]} \ell_j(A)$. Define 
\[
\mathcal{F}\coloneq\left\{ f \circ T_A : f \in (B_{p,q}^{\boldsymbol{\alpha}}(A, \besov))_S \right\}.
\]
Then
\[
    \mathcal{F}
    \supseteq 
    (B_{p,q}^{\boldsymbol{\alpha}}\bigl(\Omega, \kappa\besov \bigr))_S.
\]

Moreover, the $L^2$-norm satisfies $\|f \circ T_A\|_{L^2(\Omega)} = |A|^{-1/2} \|f\|_{L^2(A)}$. Hence,
\begin{align} \label{eq:metric-entropy-derivation}
    \log \mathcal{N}\bigl(\varepsilon; (B_{p,q}^{\boldsymbol{\alpha}}(A, \besov))_S, \|\cdot\|_{L^2(A)}\bigr)
    & = \log \mathcal{N}\bigl(|A|^{-1/2}\varepsilon; \mathcal{F}, \|\cdot\|_{L^2(\Omega)}\bigr) \nonumber \\
    & \geq \log \mathcal{N}\bigl(|A|^{-1/2}\varepsilon; (B_{p,q}^{\boldsymbol{\alpha}}(\Omega, \kappa \besov))_S, \|\cdot\|_{L^2(\Omega)}\bigr) \nonumber \\
    & = \log \mathcal{N}\bigl( (\kappa \besov)^{-1} |A|^{-1/2} \varepsilon ; (B_{p,q}^{\boldsymbol{\alpha}}(\Omega, 1))_S, \|\cdot\|_{L^2(\Omega)}\bigr).
\end{align}

Substituting the expression of $\kappa$ and combining with \eqref{eq:covering number-anis-besov} yields that the right-hand side of \eqref{eq:metric-entropy-derivation} is of order
\[
     \left( \frac{|A|^{\frac{1}{p} - \frac{1}{2}} \varepsilon}{\besov \min_{j \in [d]} \ell_j(A)} \right)^{-s/\bar{\alpha}}.
\]
Invoking the assumption $\min_{j \in [d]} \ell_j(A)^{s/\bar{\alpha}} \geq C_1$ completes the proof of the lower bound.

\medskip
\noindent
\textit{Step 2: Upper bound.}

Again by Lemma \ref{lemma:affine-map},
\begin{equation} \label{eq:norm_scaling-2}
    \|f \circ T_A\|_{B_{p,q}^{\boldsymbol{\alpha}}(\Omega)} 
    \leq |A|^{-1/p}  \|f\|_{B_{p,q}^{\boldsymbol{\alpha}}(A)}.
\end{equation}
Hence,
\[
    \left\{ f \circ T_A : f \in (B_{p,q}^{\boldsymbol{\alpha}}(A, \besov))_S \right\} 
    \subseteq 
    (B_{p,q}^{\boldsymbol{\alpha}}\bigl(\Omega, |A|^{-1/p}\besov \bigr))_S.
\]

Proceeding as above,
\begin{align} \label{eq:metric-entropy-derivation-2}
    \log \mathcal{N}\bigl(\varepsilon; (B_{p,q}^{\boldsymbol{\alpha}}(A, \besov))_S, \|\cdot\|_{L^2(A)}\bigr)
    & = \log \mathcal{N}\bigl(|A|^{-1/2}\varepsilon; \mathcal{F}, \|\cdot\|_{L^2(\Omega)}\bigr) \nonumber \\
    & \leq \log \mathcal{N}\bigl(|A|^{-1/2}\varepsilon; (B_{p,q}^{\boldsymbol{\alpha}}(\Omega, |A|^{-1/p} \besov))_S, \|\cdot\|_{L^2(\Omega)}\bigr) \nonumber \\
    & = \log \mathcal{N}\bigl( (|A|^{-1/p} \besov)^{-1} |A|^{-1/2} \varepsilon ; (B_{p,q}^{\boldsymbol{\alpha}}(\Omega, 1))_S, \|\cdot\|_{L^2(\Omega)}\bigr).
\end{align}
The desired upper bound follows from \eqref{eq:covering number-anis-besov}.
\end{proof}

\section{Related work on ERM trees}
\label{section:relatedwork}

\paragraph{Theoretical guarantees.} 
Despite the widespread use of decision trees, rigorous theoretical analysis of the empirical risk minimization (ERM) paradigm remains limited. Most existing literature focuses on \emph{dyadic} ERM trees, where splits are restricted to the midpoints of cells. In regression, \citet{donoho1997cart} showed dyadic trees attain optimal rates for certain bivariate anisotropic functions, with subsequent works extending these ideas \citep{binev2005universal-class-1, binev2007universal, chatterjee2021adaptive}. In classification, dyadic ERM trees have been studied by \citet{scott2006minimax, blanchard2007optimal}, and \citet{AOS2014erm}. However, dyadic partitions are rarely used in practice because they are less adaptive than non-dyadic ERM trees, which allow splits at arbitrary data points. Yet, theoretical guarantees for non-dyadic ERM trees are sparse; \citet{nobel1996histogram} established basic consistency, and \citet{chatterjee2021adaptive} provided oracle inequalities and optimal rates for bounded variation functions, but only under a restrictive, fixed-design lattice setting.  

\paragraph{Greedy and non-adaptive variants.}
Because exact ERM tree optimization can be NP-hard, historical theoretical focus has often shifted to approximations. Purely non-adaptive trees—such as Mondrian trees \citep{mourtada2017universal}—offer consistency but fail to fully capture complex spatial heterogeneity due to their lack of data-driven splitting. 
Conversely, analyses of greedy algorithms like CART \citep{scornet2016random,klusowski2020sparse,klusowski2024large,chi2022asymptotic, Mazumder2024} typically require strong assumptions (e.g., Sufficient Impurity Decrease) to prove consistency and rarely achieve minimax optimality due to the path-dependence of the greedy heuristics.

\paragraph{Algorithmic advances in optimization.}
The recent empirical interest in ERM trees has been driven by breakthroughs in exact optimization. Formulations utilizing mixed-integer programming (MIP) \citep{bertsimas2017optimal, verwer2019learning, LIU2024106629} and SAT solvers \citep{narodytska2018learning, Schidler_Szeider_2021} have demonstrated that globally optimal trees strictly improve the interpretability-accuracy trade-off. More recently, customized dynamic programming and branch-and-bound strategies have significantly improved the scalability of exact tree optimization for both classification \citep{hu2019optimal,lin2020generalized,demirovic2022murtree} and regression \citep{zhang2023optimal,bos2024piecewise,he2025foundational}. These computational advances underscore the critical need for the general statistical theory developed in this paper.

\end{document}